\theoremstyle{plain}
\newtheorem{theorem}{Theorem}[section]
\newtheorem{proposition}[theorem]{Proposition}
\newtheorem{lemma}[theorem]{Lemma}
\theoremstyle{definition}
\newtheorem{definition}[theorem]{Definition}
\newtheorem{assumption}[theorem]{Assumption}
\theoremstyle{remark}
\def\1{\bm{1}}
\def\gdphi{{\nabla\Phi}}
\def\hatphi{{\hat{\nabla}\phi}}
\def\gdx{{\nabla_{x}}}
\def\gdy{{\nabla_{y}}}
\def\gdxy{{\nabla_{xx}^2}}
\def\gdxy{{\nabla_{xy}^2}}
\def\gdyy{{\nabla_{yy}^2}}
\def\gdf{{\bar{\nabla}f}}
\def\hm{{\hat{m}}}
\def\hv{{\hat{v}}}
\def\hu{{\hat{u}}}
\def\hl{{\hat{L}}}
\def\tx{{\tilde{x}}}
\def\ty{{\tilde{y}}}
\def\tm{{\tilde{m}}}
\def\tu{{\tilde{u}}}
\def\tg{{\tilde{G}}}
\def\tv{{\tilde{V}}}
\def\tl{{\tilde{L}}}
\DeclareMathOperator{\sq}{sq}
\newcommand{\alphasq}{\alpha^{\sq}}
\newcommand{\alphasqt}{\alpha_t^{\sq}}
\newcommand{\betasq}{\beta_{\sq}}
\def\pr{{\mathrm{Pr}}}
\def\init{{\text{init}}}
\def\diag{{\text{diag}}}
\def\vw{{\bm{w}}}
\def\vx{{\bm{x}}}
\def\vz{{\bm{z}}}
\DeclareMathAlphabet{\mathsfit}{\encodingdefault}{\sfdefault}{m}{sl}
\SetMathAlphabet{\mathsfit}{bold}{\encodingdefault}{\sfdefault}{bx}{n}
\def\gD{{\mathcal{D}}}
\def\gE{{\mathcal{E}}}
\def\gF{{\mathcal{F}}}
\def\gH{{\mathcal{H}}}
\newcommand{\E}{\mathbb{E}}
\newcommand{\R}{\mathbb{R}}
\DeclareMathOperator*{\argmin}{arg\,min}
\crefname{assumption}{Assumption}{Assumptions}
\newcommand{\crefdefpart}[2]{%
  \hyperref[#2]{\namecref{#1}~\labelcref*{#1}(\ref*{#2})}%
}
\newcommand{\refdefpart}[2]{%
  \hyperref[#2]{\labelcref*{#1}(\ref*{#2})}%
}
\newcommand{\halfcheck}{\ding{51}\textsuperscript{\kern-0.55em\ding{55}}}
\icmltitlerunning{On the Convergence of Adam-Type Algorithm for Bilevel Optimization under Unbounded Smoothness}
\begin{document}

\twocolumn[
\icmltitle{On the Convergence of Adam-Type Algorithm for Bilevel Optimization \\ under Unbounded Smoothness}



\icmlsetsymbol{equal}{*}

\begin{icmlauthorlist}
\icmlauthor{Xiaochuan Gong}{yyy}
\icmlauthor{Jie Hao}{yyy}
\icmlauthor{Mingrui Liu}{yyy}
\end{icmlauthorlist}

\icmlaffiliation{yyy}{Department of Computer Science, George Mason University}

\icmlcorrespondingauthor{Mingrui Liu}{mingruil@gmu.edu}

\icmlkeywords{Machine Learning, ICML}

\vskip 0.3in
]



\printAffiliationsAndNotice{}  

\begin{abstract}

Adam has become one of the most popular optimizers for training modern deep neural networks, such as transformers. However, its applicability is largely restricted to single-level optimization problems. In this paper, we aim to extend vanilla Adam to tackle bilevel optimization problems, which have important applications in machine learning, such as meta-learning. In particular, we study stochastic bilevel optimization problems where the lower-level function is strongly convex and the upper-level objective is nonconvex with potentially unbounded smoothness. This unbounded smooth objective function covers a broad class of neural networks, including transformers, which may exhibit non-Lipschitz gradients. In this work, we introduce AdamBO, a single-loop Adam-type method that achieves $\widetilde{O}(\epsilon^{-4})$ oracle complexity to find $\epsilon$-stationary points, where the oracle calls involve stochastic gradient or Hessian/Jacobian-vector product evaluations. The key to our analysis is a novel randomness decoupling lemma that provides refined control over the lower-level variable. We conduct extensive experiments on various machine learning tasks involving bilevel formulations with recurrent neural networks (RNNs) and transformers, demonstrating the effectiveness of our proposed Adam-type algorithm.

\end{abstract}

\section{Introduction}

The Adam algorithm~\citep{kingma2014adam} is one of the most popular optimizers for training modern deep neural networks due to their computational efficiency and minimal need for hyperparameter tuning. For example, Adam has become the default choice for training transformers~\citep{vaswani2017attention,devlin2018bert} and vision transformers (ViT)~\citep{dosovitskiy2021an}. Practitioners favor Adam and adaptive gradient methods in general because they significantly outperform stochastic gradient descent (SGD) for certain models, such as transformers~\citep{zhang2019adaptive,crawshaw2022robustness,kunstner2023noise,ahn2023linear}. Recently, there is a line of work analyzing the convergence of Adam under various assumptions~\citep{guo2021novel,defossez2020simple,wang2022provable,zhang2022adam,li2023convergence}.

Despite the empirical and theoretical advances of Adam, it is only applicable for single-level optimization problems such as the empirical risk minimization. However, there is a huge class of machine learning problems which are inherently bilevel optimization problems~\citep{bracken1973mathematical,dempe2002foundations}, including meta-learning~\citep{franceschi2018bilevel,rajeswaran2019meta}, reinforcement learning~\citep{konda2000actor}, hyperparameter optimization~\citep{franceschi2018bilevel,feurer2019hyperparameter} and continual learning~\citep{borsos2020coresets,hao2023bilevelcoreset}. Therefore, an important question arises: \textbf{How can we extend the applicability of vanilla Adam to solve bilevel optimization problems, while ensuring both provable theoretical convergence guarantees and strong empirical performance for machine learning applications?} 

In this paper, we provide a positive answer to this question, under the setting of bilevel optimization under unbounded smoothness~\citep{hao2024bilevel,gong2024a}. In particular, the bilevel optimization in this setting has the following form:
\begin{equation} \label{eq:eq1}
    \begin{aligned}
        &\min_{x\in\mathbb{R}^{d_x}} \Phi(x):=f(x,y^*(x)), \\
        &\text{s.t.,} \ \  y^*(x) \in \mathop{\arg \min}_{y\in \mathbb{R}^{d_y}} g(x, y),
    \end{aligned}
\end{equation}
where $f$ and $g$ are upper- and lower-level functions respectively, and $f$ satisfies a unbounded smoothness condition (see \cref{ass:relax-smooth}) and $g$ is a strongly-convex function in $y$. One example satisfying this particular setting is meta-learning~\citep{finn2017model,franceschi2018bilevel} with certain machine learning models such as RNNs~\citep{elman1990finding} or transformers~\citep{vaswani2017attention}, where $x$ represents all layers except for the prediction head, $y$ represents the prediction head, and the goal is to learn the shared model parameter $x$ to find a common representation such that it can quickly adapt to various tasks by simply adjusting the task-specific prediction head $y$. The unbounded smoothness condition for the upper-level function $f$ is particularly relevant in this paper for two main reasons. First, recent studies have demonstrated that the gradient's Lipschitz constant (i.e., the smoothness constant) is unbounded in various modern neural networks, including RNNs and transformers~\citep{zhang2019gradient,crawshaw2022robustness,hao2024bilevel}. Second, Adam is empirically successful on training these neural networks~\citep{vaswani2017attention,kunstner2023noise} and its convergence under unbounded smoothness was recently proved within the single-level optimization framework~\citep{li2023convergence}. Therefore it is natural and imperative to design new Adam-type algorithms, building on the vanilla Adam approach, to solve bilevel optimization problems in the unbounded smoothness setting. 

We introduce an Adam-type algorithm for such bilevel optimization problems with provable convergence guarantees. Our algorithm is called Adam for Bilevel Optimization (AdamBO). AdamBO begins by running a few iterations of SGD to warm-start the lower-level variable, after which it simultaneously applies vanilla Adam updates to the upper-level variable and SGD updates to the lower-level variable.  The primary challenge for the convergence analysis of AdamBO is tackling the complicated dependency between the upper-level hypergradient bias and the lower-level estimation error when the upper-level performs the vanilla Adam update. The convergence analysis of AdamBO for unbounded smooth upper-level functions builds upon the insight of regarding bilevel optimization as a stochastic optimization problem under distributional drift~\citep{gong2024a}, but with a few important differences. First, our analysis incorporates a novel randomness decoupling lemma for lower-level error control, which arises from using Adam updates for the upper-level variable. Second,  unlike~\citep{hao2024bilevel,gong2024a}, the lower-level error in our setting is not necessarily small across iterations, requiring a more refined analysis to handle the hypergradient bias and establish convergence guarantees. 
Our main contributions are summarized as follows.

\begin{itemize}

\item We design a variant of Adam, called AdamBO, for solving bilevel optimization problems under the unbounded smoothness setting. We prove that AdamBO converges to $\epsilon$-stationary points with $\widetilde{O}(\epsilon^{-4})$ oracle complexity. 


\item We develop a novel randomness decoupling lemma for lower-level error control and a refined analysis for the hypergradient bias, which are of independent interest and could be applied to analyzing the convergence of other adaptive optimizers in bilevel optimization.

\item We conduct experiments on meta-learning and deep AUC maximization for text classification tasks with RNNs and transformers to verify the effectiveness of the proposed Adam-type algorithms. We show that AdamBO consistently outperforms other bilevel algorithms during the training process. Notably, for the transformer model, they improve the training (testing) AUC by at least 14\% (7\%) over other baselines. The running time results indicate that our algorithms converge much faster than baselines.

\end{itemize}

\section{Related Work}

\textbf{Convergence Analysis of Adam.} Adam was proposed by~\citep{kingma2014adam} and the convergence guarantee was established under the framework of online convex optimization.~\cite{reddi2019convergence} identified a divergence example of Adam under fixed hyperparameters and designed new variants to fix the divergence issue of Adam. Recently, there is a line of work analyzing the convergence of Adam under various assumptions and problem-dependent hyperparameter choices~\citep{zhou2018adashift,guo2021novel,defossez2020simple,wang2022provable,zhang2022adam,li2023convergence}. The most related work to our paper is~\citep{li2023convergence}, which studied the convergence of Adam under relaxed assumptions (i.e., generalized smoothness as defined by~\citep{li2023convergence}). However, all of these works only consider Adam within the single-level optimization framework and are not applicable for bilevel optimization problems.

\textbf{Bilevel Optimization.} Bilevel optimization was extensively studied in the literature, most of which focus on asymptotic convergence guarantees~\citep{bracken1973mathematical,vicente1994descent,anandalingam1990solution,white1993penalty}.~\cite{ghadimi2018approximation} studied bilevel optimization algorithms with non-asymptotic convergence guarantees when the lower-level function is strongly convex. The complexity results were later improved by a series of work~\citep{hong2023two,ji2021bilevel,chen2021single,dagreou2022framework,kwon2023fully,chen2023optimal}. When each realization of the functions has a Lipschitz stochastic gradient, several works incorporate momentum-based variance reduction techniques~\citep{cutkosky2019momentum} to further improve the convergence rate~\citep{khanduri2021near,guo2021randomized,yang2021provably}. Recently,~\citep{hao2024bilevel,gong2024a,gong2024accelerated} considered bilevel optimization with unbounded smoothness for the upper-level function and designed stochastic algorithms with convergence guarantees. However, none of these works use the Adam update under the bilevel optimization setting.

\textbf{Relaxed Smoothness.}~\cite{zhang2019gradient} initiated the convergence analysis of the gradient clipping algorithms under the relaxed smoothness condition, which was motivated by the loss landscape of RNNs and LSTMs. The work of~\citep{zhang2019gradient} inspired a line of work focusing on designing various algorithms under the relaxed smoothness condition~\citep{zhang2020improved,jin2021non,liu2022communication,crawshaw2023episode,crawshaw2023federated,faw2023beyond,wang2023convergence,li2023convergence,li2023convex}, some of them achieved improved convergence rates~\citep{liu2023near,reisizadeh2023variance,li2023convergence}. Several variants of relaxed smoothness were considered in~\citep{crawshaw2022robustness,chen2023generalized,hao2024bilevel,gong2024a,gong2024accelerated}. This work considered the same problem setting as in~\citep{hao2024bilevel,gong2024a,gong2024accelerated}, focusing on designing Adam-type algorithms for bilevel optimization with unbounded smooth upper-level functions.

\section{Preliminaries, Notations and Problem Setup}
\vspace*{-0.05in}

Denote $\langle\cdot,\cdot\rangle$ and $\|\cdot\|$ as the inner product and Euclidean norm of a vector or spectral norm of a matrix. For any vectors $x$ and $y$, denote $x^2,\sqrt{x}, |x|, x\odot y, x/y$ as the coordinate-wise square, square root, absolute value, product and quotient, respectively.
We write $x\preceq y$ to denote the coordinate-wise inequality between $x$ and $y$. We use $\widetilde{O}(\cdot)$, $\widetilde{\Theta}(\cdot)$, $\widetilde{\Omega}(\cdot)$ to denote asymptotic notations that hide polylogarithmic factors of $1/\epsilon$. Define $f, g:\R^{d_x}\times\R^{d_y}\rightarrow \R$ as the upper- and lower-level functions, where $f(x,y)=\E_{\xi\sim\mathcal{D}_f}[F(x,y;\xi)]$ and $g(x,y)=\E_{\zeta\sim\mathcal{D}_g}[G(x,y;\zeta)]$, with $\gD_f$ and $\gD_g$ being the underlying data distributions, respectively. 
When the lower-level function is strongly convex, the hypergradient has the following form~\citep{ghadimi2018approximation}:
\begin{equation*}
    \begin{aligned}
        &\gdphi(x) = \gdx f(x,y^*(x)) \\
        &\quad- \gdxy g(x,y^*(x))[\nabla_{yy}^{2} g(x,y^*(x))]^{-1}\gdy f(x,y^*(x)).
    \end{aligned}
\end{equation*}
The goal of this paper is to design Adam-type algorithms that can find $\epsilon$-stationary points of function $\Phi$ (i.e., finding an $x$ such that $\|\gdphi(x)\|\leq \epsilon$). For a given $(x,y)$, we estimate the hypergradient $\gdphi(x)$ using Neumann series approach~\citep{ghadimi2018approximation} with the following formulation:
\begin{equation*}
\label{eq:neumannseries}
    \begin{aligned}
        &\hatphi(x,y;\Bar{\xi}) = \gdx F(x,y;\xi) - \gdxy G(x,y;\zeta^{(0)}) \\
        &\left[\frac{1}{l_{g,1}}\sum_{q=0}^{Q-1}\prod_{j=1}^{q}\left(I - \frac{\gdyy G(x,y;\zeta^{(q,j)})}{l_{g,1}}\right)\right]\gdy F(x,y;\xi),
    \end{aligned}
\end{equation*}
where $ \Bar{\xi}\coloneqq \{\xi, \zeta^{(0)}, \Bar{\zeta}^{(0)}, \dots, \Bar{\zeta}^{(Q-1)}\}$ and $\Bar{\zeta}^{(q)}\coloneqq \{\zeta^{(q,1)}, \dots, \zeta^{(q,q)}\}$ for $q\geq0$.


\begin{definition}[{$(L_{x,0}, L_{x,1}, L_{y,0}, L_{y,1})$-Smoothness~\citep[Assumption 1]{hao2024bilevel}}] \label{ass:relax-smooth}
Let $z=(x,y)$ and $z'=(x',y')$, there exists $L_{x,0}, L_{x,1}, L_{y,0}, L_{y,1} > 0$ such that for all $z,z'$, if $\|z-z'\| \leq 1/\sqrt{L_{x,1}^2+L_{y,1}^2}$, then $ \|\gdx f(z)-\gdx f(z')\| \leq (L_{x,0}+L_{x,1}\|\gdx f(z)\|)\|z-z'\|$ and $\|\gdy f(z)-\gdy f(z')\| \leq (L_{y,0}+L_{y,1}\|\gdy f(z)\|)\|z-z'\|$.
\end{definition}

\textbf{Remark}: This definition characterizes the unbounded smoothness of the upper-level function $f$ and has also been used in previous works~\citep{hao2024bilevel,gong2024a,gong2024accelerated}. It can be regarded as a generalization of the relaxed smooth assumption in~\citep{zhang2019gradient} and the coordinate-wise relaxed smoothness assumption in~\citep{crawshaw2022robustness}. Moreover, it has been empirically verified for bilevel formulations with RNNs~\citep{hao2024bilevel}.

\begin{assumption} \label{ass:bilevel-assumption}
Suppose functions $f$ and $g$ satisfy: 
(i) $f$ is continuously differentiable and $(L_{x,0}, L_{x,1}, L_{y,0}, L_{y,1})$-smooth in $(x, y)$; (ii) For every $x$, $\|\gdy f(x,y^*(x))\| \leq l_{f,0}$; (iii) For every $x$, $g(x, y)$ is $\mu$-strongly convex in $y$ for $\mu>0$; (iv) $g$ is continuously differentiable and $l_{g,1}$-smooth jointly in $(x, y)$; (v) $g$ is twice continuously differentiable, and $\gdxy g, \gdyy g$ are $l_{g,2}$-Lipschitz jointly in $(x, y)$; (vi) Objective function $\Phi$ is bounded from below by $\Phi^*$.
\end{assumption}

\textbf{Remark:} \cref{ass:bilevel-assumption} is standard in the bilevel optimization literature~\citep{kwon2023fully,ghadimi2018approximation,hao2024bilevel}. Under this assumption, the objective function $\Phi$ is $(L_0,L_1)$-smooth, see \cref{lm:Phi-relax-smooth} in \cref{app:tech_lemmas} for definitions of $L_0, L_1$ and more details.

\begin{assumption} \label{ass:noise}
Suppose the following stochastic estimators are unbiased and satisfy: (i) $\|\gdx F(x,y;\xi) - \gdx f(x,y)\| \leq \sigma_f$; (ii) $\|\gdy F(x,y;\xi) - \gdy f(x,y)\| \leq \sigma_f$; (iii) $\pr(\|\gdy G(x,y;\zeta) - \gdy g(x,y)\| \geq s) \leq 2\exp(-2s^2/\sigma_{g,1}^2)$; (iv) $\|\gdxy G(x,y;\zeta) - \gdxy g(x,y)\| \leq \sigma_{g,2}$; (v) $\|\gdyy G(x,y;\zeta) - \gdyy g(x,y)\| \leq \sigma_{g,2}$.
\end{assumption}

\textbf{Remark:} \cref{ass:noise} assumes the noise in the stochastic gradient and Hessian/Jacobian is almost-surely bounded or light-tailed. This is an standard assumption in the literature of optimization for single-level relaxed smooth functions~\citep{zhang2019gradient,zhang2020improved}, as well as for bilevel optimization under unbounded smooth upper-level functions~\citep{hao2024bilevel,gong2024a,gong2024accelerated}.

\begin{assumption} \label{ass:bilevel-additional}
(i) Let $z=(x,y)$ and $z'=(x',y')$, if $\|z-z'\| \leq 1/\sqrt{L_{x,1}^2+L_{y,1}^2}$, then for every $\xi$, $\|\gdy F(z;\xi)-\gdy F(z';\xi)\| \leq (L_{y,0}+L_{y,1}\|\gdy f(z)\|)\|z-z'\|$; (ii) For every $\zeta$, $G(x,y;\zeta)$ satisfy \Cref{ass:bilevel-assumption} (iv) and (v).
\end{assumption}

\textbf{Remark}: \cref{ass:bilevel-additional} (i) requires that certain properties of the second argument (i.e., the lower-level variable $y$) in the upper-level function at the population level also hold almost surely for each random realization. \cref{ass:bilevel-additional} (ii) requires each random realization of the lower-level function satisfies the same property as in the population level. Similar assumptions were made implicitly in the bilevel optimization literature~\citep{ghadimi2018approximation}. Note that this assumption does not assume any properties in terms of the upper-level variable $x$ under each random realization.

\section{AdamBO and Convergence Analysis}

\subsection{Algorithm Design and Technique Overview}

\textbf{Algorithm Design.} Our proposed Adam-type algorithm AdamBO is presented in \cref{alg:bi-adam}. It consists of the following components. First, the algorithm requires several warm-start steps for updating the lower-level variable $y$ for a given initialization of the upper-level variable $x_0$ (line 2), which is designed to obtain a good estimate of the optimal lower-level variable at the very beginning and shares the same spirit of the bilevel algorithms introduced in~\citep{hao2024bilevel,gong2024a,gong2024accelerated}. Second, the algorithm updates both the upper- and lower-level variables simultaneously: the lower-level variable $y$ is updated by SGD, and the upper-level variable $x$ is updated by the vanilla Adam algorithm (lines $3\sim 9$). Therefore, the upper-level update benefits from the coordinate-wise adaptive learning rate. In contrast, the existing bilevel optimization algorithms under the unbounded smoothness setting use normalized SGD with momentum to update the upper-level variable~\citep{hao2024bilevel,gong2024a,gong2024accelerated}, which use a universal learning rate for every coordinate.

\textbf{Main Challenges.} The main challenges for the convergence analysis of AdamBO are listed as follows. First, the analysis of vanilla Adam in the single-level generalized smooth optimization setting~\citep{li2023convergence} is not directly applicable for bilevel problems. This is because the hypergradient estimator in bilevel optimization may have a non-negligible bias due to inaccurate estimation of the lower-level variable, whereas the single-level analysis in~\citep{li2023convergence} does not need to account for this issue. Second, the existing algorithms and analyses for bilevel optimization with unbounded smooth upper-level functions require the lower-level error to be small~\citep{hao2024bilevel,gong2024a,gong2024accelerated}, which may not hold for AdamBO. In particular, the existing analysis crucially relies on a fixed update length for the upper-level variable at every iteration (due to normalization): the analysis in~\citep{hao2024bilevel,gong2024a,gong2024accelerated} views the update of the upper-level variable as a fixed distributional drift for the lower-level function, which is crucial to show that the lower-level error is small and the hypergradient bias is negligible. However, such an argument is not true for AdamBO: the Adam update for the lower-level variable does not have a fixed update size and it depends on randomness from both upper-level and lower-level random variables in the stochastic setting, which make the lower-level error control more challenging.

\textbf{Technique Overview.} To address these challenges, one of our main technical contributions is the introduction of a novel randomness decoupling lemma for controlling the lower-level error when the upper-level variable is updated by Adam, as illustrated in \cref{sec:random-decoupling}. This lemma provide a high probability guarantee for the lower-level error control when the upper-level update rule satisfies certain conditions (which are satisfied by the vanilla Adam update rule for the upper-level variable). The key novelty of this lemma lies in the randomness-decoupling fact: 
the high-probability bound depends solely on the randomness $\{\zeta_t\}_{t=1}^{T}$ from the lower-level random variables, and it holds for any fixed sequence of upper-level variables $\{x_t\}_{t=1}^{T}$ and any fixed upper-level random variables $\{\bar{\xi}_t\}_{t=1}^{T}$ that respect the Adam updates. To describe the condition that Adam satisfies and to prove this lemma, we introduce an auxiliary sequence (defined in~\eqref{eq:virtualsequence}) that separates the randomness in the upper- and lower-level random variables, which is new and has not been leveraged in previous bilevel optimization literature.

\begin{algorithm}[t]
    \caption{\textsc{AdamBO}}\label{alg:bi-adam}
    \begin{algorithmic}[1]
        \STATE \textbf{Input:} $\beta,\betasq,\eta,\gamma,\lambda, T_0, T, x_1, y_0$
        \STATE \textbf{Initialize} $y_1=\texttt{SGD}(x_1, y_0, \gamma, T_0)$, $\hm_1=\hatphi(x_1,y_1;\Bar{\xi}_1)$ and $\hv_1=(\hatphi(x_1,y_1;\Bar{\xi}_1))^2$
	\FOR{$t=1,\dots,T$}
            \STATE $y_{t+1} = y_t - \gamma\gdy G(x_t,y_t;\zeta_t)$
    	\STATE $m_t = (1-\beta)m_{t-1} + \beta \hatphi(x_t,y_t;\Bar{\xi}_t)$
    	\STATE $v_t=(1-\betasq)v_{t-1} + \betasq (\hatphi(x_t,y_t;\Bar{\xi}_t))^2$
            \STATE $\hm_t = \frac{m_t}{1-(1-\beta)^t}$
            \STATE $\hv_t = \frac{v_t}{1-(1-\betasq)^t}$
    	\STATE $x_{t+1} = x_t-\frac{\eta}{\sqrt{\hv_t}+\lambda}\odot \hm_t$
	\ENDFOR
    \end{algorithmic}
\end{algorithm}

\subsection{Main Results}

We first introduce some notations and technical definitions. Denote $\sigma(\cdot)$ as the $\sigma$-algebra generated by the random variables within the argument. Let $\gF_{\init}$ be the filtration for updating $y_1$ (see \cref{alg:sgd}): $\gF_{\init} = \sigma(\pi_0,\dots,\pi_{T_0-1})$. For any $t \geq 2$, define $\gF_t^x, \gF_t^y$ and $\gF_t$ as $\gF_t^x = \sigma(\Bar{\xi}_1,\dots,\Bar{\xi}_{t-1})$, $\gF_t^y = \sigma(\zeta_1,\dots,\zeta_{t-1})$ and $\gF_t = \sigma(\gF_{\init} \cup \gF_t^x \cup \gF_t^y)$. We use $\E_t[\cdot]$ to denote the conditional expectation $\E[\cdot \mid \gF_t]$. We also use $c_1,c_2,c_3$ to denote small enough constants and $C_1,C_2$ to denote large enough constants, all of which are independent of $\epsilon$ and $\delta$, where $\epsilon$ denotes the target gradient norm and $\delta$ denotes the failure probability. The definitions of problem-dependent constants $\sigma_{\phi}, C_{\phi,0}, C_{\phi,1}, \Delta_1, L_0, L_1, L, C_{\beta}$ are comprehensively listed in \cref{sec:adambo-notations}.

\begin{theorem} \label{thm:main}
Suppose \cref{ass:bilevel-assumption,ass:noise,ass:bilevel-additional} hold. Let $G$ be a constant satisfying $G \geq \max\left\{4\lambda, 2\sigma_{\phi}, 4C_{\phi,0}, \frac{C_{\phi,1}}{L_1}, \sqrt{\frac{C_1\Delta_1L_0}{C_L}}, \frac{C_1\Delta_1L_1}{C_L}\right\}$. Given any $\epsilon>0$ and $\delta\in(0,1)$, choose $0\leq \betasq \leq 1$, $\beta = \widetilde{\Theta}(\epsilon^2)$, $\gamma = \widetilde{\Theta}(\epsilon^2)$, $\eta = \widetilde{\Theta}(\epsilon^2)$, $Q = \widetilde{\Theta}(1)$, $T_0 = \widetilde{\Theta}(\epsilon^{-2})$. Run \cref{alg:bi-adam} for $T=\max\left\{\frac{1}{\beta^2}, \frac{C_2\Delta_1G}{\eta\epsilon^2}\right\}=\widetilde{O}(\epsilon^{-4})$ iterations. Then with probability at least $1-\delta$ over the randomness in $\gF_{T+1}$, we have $\frac{1}{T}\sum_{t=1}^{T}\|\gdphi(x_t)\|\leq \epsilon^2$.
\end{theorem}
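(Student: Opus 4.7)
My plan is to combine a descent-type analysis for $\Phi$ (which is $(L_0,L_1)$-smooth by \cref{ass:bilevel-assumption}) with a careful decomposition of the hypergradient estimation error into a deterministic Neumann-truncation bias, a stochastic zero-mean noise term, and a lower-level tracking error term. The first step is to write
\[
\Phi(x_{t+1}) - \Phi(x_t) \;\leq\; \langle \gdphi(x_t), x_{t+1}-x_t\rangle + \tfrac{1}{2}(L_0+L_1\|\gdphi(x_t)\|)\|x_{t+1}-x_t\|^2
\]
valid when $\|x_{t+1}-x_t\|\leq 1/L_1$, and then substitute the Adam update $x_{t+1}-x_t = -\eta\,\hm_t/(\sqrt{\hv_t}+\lambda)$. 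Following the spirit of~\citep{li2023convergence}, I would split the inner product into a ``signal'' term that yields $\|\gdphi(x_t)\|^2$ after summation and several ``error'' terms that quantify the gap between $\hm_t$ and $\gdphi(x_t)$ as well as the gap between $\sqrt{\hv_t}+\lambda$ and an oracle scaling.

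\textbf{Lower-level error control via randomness decoupling.} The crux is to bound $\|y_t - y^*(x_t)\|$ uniformly in $t$ with high probability, without needing the update size $\|x_{t+1}-x_t\|$ to be uniformly controlled (which fails for Adam). I would invoke the randomness decoupling lemma advertised in \Cref{sec:random-decoupling} together with the auxiliary sequence in~\eqref{eq:virtualsequence}, which replaces the true Adam-driven trajectory $\{x_t\}$ by one that treats the upper-level randomness $\{\bar\xi_t\}$ as arbitrary but fixed, so that the resulting concentration bound depends only on the lower-level randomness $\{\zeta_t\}$. Combined with the warm-start $y_1 = \texttt{SGD}(x_1,y_0,\gamma,T_0)$ with $T_0=\widetilde\Theta(\epsilon^{-2})$, strong convexity of $g(x,\cdot)$, and smoothness of $y^*(\cdot)$, a one-step recursion on $\|y_t-y^*(x_t)\|^2$ with drift proportional to $\|x_{t+1}-x_t\|^2$ gives a high-probability bound that does not require the lower-level error to be $O(\epsilon)$ at every step; instead, only a weaker bound on the moving-average that is still sufficient for controlling the hypergradient bias.

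\textbf{Adam-specific moment analysis.} With the lower-level error under control, the next step is to analyze the first and second moment estimators. For $\hm_t$, I use a standard momentum contraction $\hm_t - \gdphi(x_t) = (1-\beta)(\hm_{t-1}-\gdphi(x_{t-1})) + (1-\beta)(\gdphi(x_{t-1})-\gdphi(x_t)) + \beta(\hatphi(x_t,y_t;\bar\xi_t)-\gdphi(x_t))$, where the drift $\gdphi(x_{t-1})-\gdphi(x_t)$ is bounded using the $(L_0,L_1)$-smoothness together with $\|x_t-x_{t-1}\|\leq \eta\|\hm_{t-1}\|/\lambda$, and the last term decomposes as the lower-level bias plus a martingale difference to which a Freedman/Azuma-type inequality applies after truncating to the event where the lower-level error is small. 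For $\hv_t$, I would show that $\sqrt{\hv_t}+\lambda$ is sandwiched between constants times $\|\gdphi(x_t)\|$ plus additive terms on a high-probability event, so that the coordinate-wise adaptive learning rate behaves like a properly normalized step for the signal term and does not blow up the error terms. Summing the descent inequality from $t=1$ to $T$, telescoping, and choosing $\eta,\beta,\gamma,Q,T_0$ as in the theorem statement yields $\frac{1}{T}\sum_t \|\gdphi(x_t)\| \leq \epsilon$ with $T=\widetilde O(\epsilon^{-4})$.

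\textbf{Main obstacle.} The hardest part is the cross-coupling between the Adam step, whose magnitude and direction depend on $\hv_t$ (hence on the entire history of hypergradient estimates, which in turn depends on the lower-level iterates), and the lower-level SGD, whose error depends on how large the Adam steps are. Previous unbounded-smoothness bilevel analyses~\citep{hao2024bilevel,gong2024a,gong2024accelerated} sidestep this via normalization, which forces $\|x_{t+1}-x_t\|=\eta$ deterministically; that device is unavailable here. Making the randomness decoupling lemma give a bound that is simultaneously (i) strong enough to dominate the hypergradient bias on average and (ii) agnostic to the adaptive, data-dependent step sizes of Adam is what forces the new auxiliary-sequence construction and the refined bias analysis, and this is where I expect most of the technical effort to go.
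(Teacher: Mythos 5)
Your proposal captures most of the high-level strategy the paper uses: a descent inequality under $(L_0,L_1)$-smoothness of $\Phi$, the randomness decoupling lemma with an auxiliary momentum sequence to separate upper- and lower-level randomness, a momentum-contraction recursion for $\hm_t$, and a martingale concentration step. However, there are two gaps worth flagging.

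First, your proposal nowhere introduces a \emph{stopping time}. The paper defines $\tau=\min\{t : \Phi(x_t)-\Phi^*>\psi\}\wedge(T+1)$ and works conditionally on $t<\tau$ throughout, and for a good reason: the randomness decoupling bound (\ref{eq:any-sequence-main}) contains the quantity $\tl_i=L_0+L_1\max_{k\le i}\|\gdphi(\tx_k)\|$, which is \emph{a priori unbounded}. The same is true for the auxiliary momentum $\hu_t$, whose boundedness is needed to control the Adam step size. Without the stopping-time device you cannot turn the decoupling lemma into a usable bound, nor can you justify $\|x_{t+1}-x_t\|\leq r$ (which is required to even invoke the $(L_0,L_1)$ descent inequality). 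The paper then closes the loop with a contradiction argument: assuming the good event $\gE_0\cap\gE_y\cap\gE_x$, the accumulated inequality $I_1\le I_2$ would be violated if $\tau\le T$, so $\tau=T+1$. This bootstrap is the load-bearing structural idea that your write-up leaves implicit, and simply ``telescoping from $t=1$ to $T$'' does not substitute for it.

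Second, your treatment of $\hv_t$ is both more ambitious and less clearly correct than what the paper actually needs. You propose to sandwich $\sqrt{\hv_t}+\lambda$ between constants times $\|\gdphi(x_t)\|$ plus additive terms. In general $\hv_t$ is a weighted second moment of past stochastic hypergradient estimates and need not track $\|\gdphi(x_t)\|$ coordinate-wise (it can lag badly when the gradient changes direction). The paper sidesteps this entirely with the crude but sufficient two-sided uniform bound $\eta/(2G)\cdot I\preceq H_t\preceq (\eta/\lambda)\cdot I$, obtained from $\hv_t\preceq(C_{u,0}+C_{u,1}\varrho)^2$ on the good event and from the trivial $\sqrt{\hv_t}+\lambda\ge\lambda$. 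If you attempt a genuinely $\|\gdphi(x_t)\|$-dependent sandwich you will be fighting an unnecessary battle and likely an unwinnable one without further assumptions. Finally, a small notational point: the theorem statement records $\tfrac{1}{T}\sum_t\|\gdphi(x_t)\|\le\epsilon^2$, but the actual conclusion of the proof is $\tfrac{1}{T}\sum_t\|\gdphi(x_t)\|^2\le\epsilon^2$; your stated conclusion $\tfrac{1}{T}\sum_t\|\gdphi(x_t)\|\le\epsilon$ is what this implies via Jensen, so your version is the cleaner reading.
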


\textbf{Remark 1}: The full statement of \cref{thm:main} with detailed parameter choices is deferred to \cref{thm:main-appendix} in \cref{sec:proof-thm}. \cref{thm:main} provides the convergence guarantee for \cref{alg:bi-adam}: AdamBO converges to $\epsilon$-stationary points with $T_0+QT=\widetilde{O}(\epsilon^{-4})$ oracle complexity. This complexity result matches that of non-adaptive bilevel optimization algorithms in~\citep{hao2024bilevel,gong2024a} when the upper-level function exhibits unbounded smoothness, as well as the complexity of Adam for single-level optimization with generalized smooth functions~\citep{li2023convergence}. It is also worth noting that we choose a larger learning rate $\eta=\widetilde{\Theta}(\epsilon^2)$ for the upper-level updates, compared to $\eta=\widetilde{\Theta}(\epsilon^3)$ used in the SLIP algorithm~\citep{gong2024a}. 
See \cref{tab:bilevel} for a comparison of bilevel optimization algorithms under the unbounded smoothness setting.

\textbf{Remark 2}: In Theorem~\ref{thm:main}, we require the momentum parameter $\beta$ to be small.
Note that the default choice of $\beta_1$ in \cite{kingma2014adam} is $0.9$, which corresponds to $\beta=0.1$ in our algorithm. This seemingly different choice of $\beta$ (i.e., $\beta=\widetilde{\Theta}(\epsilon^2)$ in Theorem~\ref{thm:main} versus $\beta=0.1$ in~\cite{kingma2014adam}) is due to the problem setting. In practice, Adam is typically used to minimize functions with finite-sum structure~\cite{zhang2022adam}, while our paper considers a more challenging stochastic optimization setting. In stochastic optimization setting, constant $\beta$ makes Adam diverge. For example, \citep[Theorem 3]{reddi2019convergence} has shown that there is a stochastic convex optimization problem for which Adam does not converge for any constant $\beta$. We believe a small $\beta=\widetilde{\Theta}(\epsilon^2)$ is a reasonable surrogate for $\beta=0.1$ under stochastic optimization setting: such a choice of $\beta$ is also used in the analysis of Adam under the single-level stochastic optimization setting with a generalized smooth upper-level function~\cite{li2023convergence}. Moreover, existing convergence analyses of (single-level) Adam that do not need such choice of $\beta$ require other strong assumptions for the objective function, which is incompatible to our setting. Please see discussion and \cref{tab:adam} in \cref{app:table} for details. 

\textbf{Remark 3}: One limitation of our complexity dependence on $\lambda$ is $O(\lambda^{-2})$, which can be large since $\lambda$ is typically small in practice. To address this concern, we conduct additional experiments in \cref{fig:lambda_sensitivity} to evaluate the empirical sensitivity of our algorithm to $\lambda$. Although the default choice of $\lambda$ is $10^{-8}$ \citep{kingma2014adam}, increasing it up to $10^{-4}$ only causes minor differences in AUC maximization, and increasing it up to $10^{-3}$ leads to minor changes in hyper-representation performance with BERT \citep{devlin2018bert}. 


\subsection{Proof Sketch}
In this section, we provide a proof sketch for \cref{thm:main}. The detailed proof can be found in \cref{app:proof_biadam}. Let $y_t^*=y^*(x_t)$. The key idea is to provide a high probability bound of lower-level estimation error $\|y_t-y_t^*\|$ when the upper-level variable $x$ is updated by the vanilla Adam. \cref{lm:event-y} provides such a guarantee: the lower-level error $\|y_t-y_t^*\|$ is bounded by a function of the initial estimation error $\|y_1-y_1^*\|$, the variance term $\sigma_{g,1}^2$, and an auxiliary momentum estimator of the hypergradient $\|\hu_t\|$ (see definition of $\hu_t$ in \eqref{eq:aux}). Based on \cref{lm:event-y}, we introduce \cref{lem:upperlevel} and \ref{lem:biasvirtual}, which incorporate the lower-level error into the upper-level problems and adapt the stopping time technique of Adam~\citep{li2023convergence} to prove the convergence. The proof of \cref{lm:event-y} is a direct application of the randomness decoupling lemma (i.e., \cref{lm:any-sequence-main} in \cref{sec:random-decoupling}). All of the proofs in this section are based on \cref{ass:bilevel-assumption,ass:noise,ass:bilevel-additional}. 
The full statements and proofs of \cref{lm:any-sequence-main,lm:warm-start-main,lm:event-y,lem:upperlevel,lem:biasvirtual} are provided in \cref{sec:proof-randomness-decouple,sec:proof-warm-start,sec:proof-event-y,sec:proof-momentum,sec:proof-biasvirtual}.

Let $\alpha_t = \frac{\beta}{1-(1-\beta)^t}$ and $\alphasqt = \frac{\betasq}{1-(1-\betasq)^t}$. Inspired by \citep{li2023convergence}, we provide an equivalent yet simpler update rule of lines 5-8 of \cref{alg:bi-adam} (see \cref{prop:bi-adam-equiv} for more details):
\begin{equation*}
    \begin{aligned}
        \hm_t &= (1-\alpha_t)\hm_{t-1} + \alpha_t \hatphi(x_t,y_t;\Bar{\xi}_t), \\
        \hv_t &= (1-\alphasqt)\hv_{t-1}+\alphasqt (\hatphi(x_t,y_t;\Bar{\xi}_t))^2.
    \end{aligned}
\end{equation*}

\subsubsection{Randomness Decoupling Lemma}
\label{sec:random-decoupling}
In this section, we introduce the random decoupling lemma (\cref{lm:any-sequence-main}) for the lower-level error control. The rationale is as follows: for any given upper-level variable sequence and any given randomness from the upper-level updates that satisfy certain conditions and are consistent with the AdamBO updates, we can bound the lower-level error with high probability, where the randomness is taken solely from lower-level random variables. Specifically, for any given sequence $\{\tx_t\}$, define $\tilde{\zeta}_t$ and $\hat{\xi}_t$ as the random variables from the lower-level and upper-level, respectively, at the $t$-th iteration (see \eqref{eq:hat-xi-def} for definition). We consider the following update rule for $\{\ty_t\}$, which is exactly SGD and corresponds to line 5 of \Cref{alg:bi-adam}:
\begin{equation} \label{eq:yt-any-update-main}
    \ty_{t+1} = \ty_t - \gamma\gdy G(\tx_t,\ty_t;\tilde{\zeta}_t).
\end{equation}

Let $\ty_t^*=y^*(\tx_t)$ and $\tilde{\gF}_t^y = \sigma(\tilde{\zeta}_1,\dots,\tilde{\zeta}_{t-1})$. Denote $\tg_t \coloneqq \max_{k\leq t}\|\gdphi(\tx_k)\|$, $\tl_t \coloneqq L_0 + L_1 \tilde{G}_t$. We also introduce the following auxiliary sequences $\{\tm_t\}$ and $\{\tu_t\}$ for our analysis:
\begin{equation}
\label{eq:virtualsequence}
    \begin{aligned}
        \tm_t &= (1-\alpha_t)\tm_{t-1} + \alpha_t \hatphi(\tx_t,\ty_t;\hat{\xi}_t), \\
        \tu_t &= (1-\alpha_t)\tu_{t-1} + \alpha_t\hatphi(\tx_t,\ty_t^*;\hat{\xi}_t).
    \end{aligned}
\end{equation}

\begin{lemma}[Randomness Decoupling Lemma] \label{lm:any-sequence-main}
Given any sequence $\{\tx_t\}$ and any randomness $\{\hat{\xi}_t\}$ such that
\begin{equation}
\label{eq:adamproperty}
    \begin{aligned}
        \|\tx_{t+1}-\tx_t\|^2
        \leq \frac{2\eta^2}{\lambda^2}\left(\|\tu_t\|^2 + \tl_t^2\sum_{j=1}^{t}d_{t,j}\|\ty_j-\ty_j^*\|^2\right),
    \end{aligned}
\end{equation}
where $\{d_{t,j}\}_{j=1}^{t}$ is defined in \eqref{eq:dj-def}. Let $\{\ty_t\}$ be the iterates generated by the update rule \eqref{eq:yt-any-update-main} with $\gamma\leq 1/2l_{g,1}$ and choose $\gamma=2\beta/\mu$. For any given $\delta\in(0,1)$ and all $t\geq1$, the following holds with probability at least $1-\delta$ over the randomness in $\tilde{\gF}_{T+1}^y$:
\begin{equation} \label{eq:any-sequence-main}
\small
    \begin{aligned}
        &\|\ty_t-\ty_t^*\|^2 
        \leq \left(1-\frac{\mu\gamma}{2}\right)^{t-1}\|\ty_1-\ty_1^*\|^2 + \text{(Variance)} + \text{(Drift)}, \\
        &\text{(Variance)}
        = \frac{8\gamma\sigma_{g,1}^2}{\mu}\ln\frac{eT}{\delta}, \\
        &\text{(Drift)}
        = \frac{4\eta^2l_{g,1}^2}{\lambda^2\mu^3\gamma}\sum_{i=1}^{t-1}\left(1-\frac{\mu\gamma}{2}\right)^{t-1-i}\|\tu_i\|^2 \\
        &+ \frac{64\eta^4l_{g,1}^4}{\lambda^4\mu^8\gamma^4}\sum_{i=1}^{t-1}\left(1-\frac{\mu\gamma}{2}\right)^{t-1-i}\alpha_i\tl_i^2\|\tu_i\|^2 \\
        &+ \frac{\eta^2}{\lambda^2}\left(\frac{4l_{g,1}^2}{\mu^3\gamma}\|\ty_1-\ty_1^*\|^2 + \frac{16l_{g,1}^2\sigma_{g,1}^2}{\mu^4}\right)\sum_{i=1}^{t-1}\left(1-\frac{\mu\gamma}{2}\right)^{t-1-i}\tl_i^2.
    \end{aligned}
\end{equation}
\end{lemma}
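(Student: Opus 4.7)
The plan is to derive a perturbed strongly-convex SGD contraction for $\|\ty_{t+1}-\ty_{t+1}^*\|^2$, unroll it geometrically, and control the resulting lower-level noise via concentration, all while treating the upper-level trajectory $\{\tx_t,\hat{\xi}_t\}$ as frozen data. Writing $\ty_{t+1}=\ty_t - \gamma\gdy g(\tx_t,\ty_t) - \gamma\eta_t^g$ with noise $\eta_t^g := \gdy G(\tx_t,\ty_t;\tilde{\zeta}_t)-\gdy g(\tx_t,\ty_t)$, the $\mu$-strong convexity and $l_{g,1}$-smoothness of $g(\tx_t,\cdot)$ together with $\gamma\leq 1/(2l_{g,1})$ yield
$$\|\ty_{t+1}-\ty_t^*\|^2 \leq (1-\mu\gamma)\|\ty_t-\ty_t^*\|^2 - 2\gamma\langle \ty_t - \gamma\gdy g(\tx_t,\ty_t) - \ty_t^*,\, \eta_t^g\rangle + \gamma^2\|\eta_t^g\|^2.$$
I would then apply a Young step $\|\ty_{t+1}-\ty_{t+1}^*\|^2 \leq (1+c)\|\ty_{t+1}-\ty_t^*\|^2 + (1+c^{-1})\|\ty_{t+1}^*-\ty_t^*\|^2$ with $c\asymp \mu\gamma$, and invoke the standard $(l_{g,1}/\mu)$-Lipschitzness of $y^*(\cdot)$ to convert the optimum-drift into $\|\tx_{t+1}-\tx_t\|^2$, producing the clean one-step recursion
$$\|\ty_{t+1}-\ty_{t+1}^*\|^2 \leq \bigl(1-\tfrac{\mu\gamma}{2}\bigr)\|\ty_t-\ty_t^*\|^2 + N_t + \tfrac{2l_{g,1}^2}{\mu^3\gamma}\|\tx_{t+1}-\tx_t\|^2,$$
where $N_t$ aggregates the inner-product martingale term and $\gamma^2\|\eta_t^g\|^2$.

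Next I would inject the hypothesized bound~\eqref{eq:adamproperty} on $\|\tx_{t+1}-\tx_t\|^2$ and iterate the recursion back to $t=1$. The $\|\tu_i\|^2$ component of the drift immediately produces the first summand of $\text{(Drift)}$ with coefficient $\tfrac{4\eta^2 l_{g,1}^2}{\lambda^2\mu^3\gamma}$. The $\tl_i^2\sum_j d_{i,j}\|\ty_j-\ty_j^*\|^2$ component, however, introduces a feedback loop in which the current lower-level error depends on past lower-level errors. I would close this loop by strong induction on $t$: assuming the stated bound already holds at every index less than $t$, substitute it into each $\|\ty_j-\ty_j^*\|^2$ factor on the right-hand side; after exchanging orders of summation, the weights $d_{i,j}$ (fixed in the earlier lemma that produced~\eqref{eq:dj-def}) are precisely tuned so that the resulting double sums telescope into single geometric sums matching the stated form. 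The $\alpha_i$ factor in the second $\text{(Drift)}$ summand emerges from this telescoping, while feeding the initial-error and variance parts of the inductive hypothesis back into the drift yields the third summand.

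Finally I would obtain the high-probability control of $\sum_{i}(1-\mu\gamma/2)^{t-1-i}N_i$ using only the lower-level randomness. Under \cref{ass:noise}(iii) each $\eta_t^g$ is $\sigma_{g,1}$-sub-Gaussian, and under the decoupling convention $\tx_t$ and $\hat{\xi}_t$ are deterministic while $\ty_t$ is $\tilde{\gF}_t^y$-measurable, so $\eta_t^g$ is a sub-Gaussian martingale difference with respect to $\tilde{\gF}_t^y$. A sub-Gaussian Azuma (or Freedman-type) bound handles the inner-product martingale term, and a sub-exponential tail bounds the sum of squared noises; a union bound over $t=1,\dots,T$ yields the uniform $\text{(Variance)}$ bound $\tfrac{8\gamma\sigma_{g,1}^2}{\mu}\ln(eT/\delta)$. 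The crucial randomness-decoupling observation is that throughout the entire argument, $\{\hat{\xi}_i\}$ enter only through the deterministic quantities $\tu_i$ and $\tl_i$, so the final probability statement is taken solely over $\{\tilde{\zeta}_i\}$ and holds uniformly over every sequence $\{\tx_i,\hat{\xi}_i\}$ consistent with~\eqref{eq:adamproperty}. I expect the main obstacle to lie in the induction step of the previous paragraph: the weights $d_{i,j}$, the exponential factors $(1-\mu\gamma/2)^{\cdot}$, and the powers of $\gamma,\eta,\lambda,\mu,l_{g,1}$ must be matched exactly so that the nested sums collapse into the stated single-summation form with the extra $\alpha_i$ factor, and this algebraic bookkeeping — not any new probabilistic ingredient — is where most of the technical effort will go.
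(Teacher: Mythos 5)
Your skeleton — a perturbed strongly-convex SGD contraction via \cref{lm:distance recursion}, $(l_{g,1}/\mu)$-Lipschitzness of $y^*$, injection of \eqref{eq:adamproperty}, geometric unrolling, and an induction that exploits the $d_{i,j}$ weights — is exactly the paper's structure, and your observation that $\{\hat{\xi}_t\}$ enter only through the frozen quantities $\tu_i,\tl_i$ is the correct decoupling insight. However, the concentration step is where your plan has a genuine gap.

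You propose to bound $\sum_i(1-\mu\gamma/2)^{t-1-i}N_i$ by a sub-Gaussian Azuma bound on the inner-product martingale plus a sub-exponential tail on the squared noises. But the martingale increment from \cref{lm:distance recursion} has the form $2\gamma\langle\tilde\varepsilon_i,\tilde v_i\rangle\,\|\ty_i-\ty_i^*\|$: it carries a \emph{random} multiplier $\|\ty_i-\ty_i^*\|$ that is $\tilde\gF_i^y$-measurable but neither a priori bounded nor independent of the history — it is precisely the quantity you are trying to control. A plain Azuma or Freedman bound requires a deterministic (or at least a priori known) envelope on the increments, which here would presuppose the conclusion, so the argument is circular as stated. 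Relatedly, a ``strong induction on the high-probability bound'' is delicate: conditioning on the good event at indices $<t$ distorts the distribution of the noises, and carrying union-bound probabilities through the induction requires either a stopping-time device or a structural trick you have not specified.

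The paper resolves both issues at once with a recursive control of moment generating functions (\cref{prop:MGF-recursive control}, following Cutler et al.). Writing $V_t=\|\ty_t-\ty_t^*\|^2$, the problematic term appears as $D_t\sqrt{V_t}$ with $D_t$ mean-zero sub-Gaussian conditional on $\gH_t=\tilde\gF_t^y$; inside the MGF, Cauchy--Schwarz turns this into $\exp(\theta^2\sigma_t^2 V_t)$, which is absorbed into the contraction factor $\rho_t V_t$ provided $\theta\le (1-\rho_t)/(2\sigma_t^2)$. The induction (\cref{lm:induction-proof}) is then performed on the deterministic MGF inequality, not on a high-probability event, so there is no probability accumulation; Markov plus a union bound over $t$ are applied once at the very end to extract the $\ln(eT/\delta)$ factor. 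Your algebraic bookkeeping with $d_{t,j}$, $\alpha_i$, and the geometric weights is indeed what closes the induction — but you need to run it at the MGF level, or else substitute a stopping-time argument, to avoid the circularity in the martingale term.
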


\textbf{Remark}: \cref{lm:any-sequence-main} shows that, when \eqref{eq:adamproperty} holds for any sequence $\{\tilde{x}_t\}$ and any $\{\hat{\xi}_t\}$ (as satisfied by the vanilla Adam update for the upper-level variable), the lower-level error can be controlled with high probability as in~\eqref{eq:any-sequence-main}. In addition, the high probability is taken over the randomness solely from the lower-level filtration $\tilde{\gF}_{T+1}^y$. This lemma provides a technical tool to control the lower-level error without concerns about the dependency issues from the upper-level randomness. In particular, the right-hand side of~\eqref{eq:any-sequence-main} consists of two parts: the standard variance term, which does not involve the update of $\{\tx_t\}$ over $t$; and the drift terms, which account for the update of $\{\tx_t\}$ over time.

\subsubsection{Applications of the Randomness Decoupling Lemma and Remaining Proof}
Given a large enough constant $G$, denote $L = L_0 + L_1G$ and $\psi=C_LG^2/2L$, where $G$ is defined in \cref{thm:main} and $C_L$ is defined in \eqref{eq:Cu-def}. 
Now we formally define the stopping time $\tau$ as 
$\tau \coloneqq \min\{t \mid \Phi(x_t)-\Phi^* > \psi\} \wedge (T+1)$.
Based on \cref{lm:reverse-PL}, we know that if $t<\tau$, we have both $\Phi(x_t)-\Phi^*\leq \psi$ and $\|\gdphi(x_t)\|\leq G$. 
Similar to \cref{sec:random-decoupling}, we introduce the following auxiliary sequence $\{\hu_t\}$ for our analysis: 
\begin{equation} 
\label{eq:aux}
    \begin{aligned}
        \hu_t 
        = (1-\alpha_t)\hu_{t-1} + \alpha_t\hatphi(x_t,y_t^*;\Bar{\xi}_t) .
    \end{aligned}
\end{equation}


\begin{lemma}[Warm-Start] \label{lm:warm-start-main}
Choose $\gamma\leq 1/2l_{g,1}$. With probability at least $1-\delta/4$ over the randomness in $\gF_{\init}$ (denote this event as $\gE_0$) that: 
$\|y_1-y_1^*\|^2 \leq \left(1-\frac{\mu\gamma}{2}\right)^{T_0}\|y_0-y_0^*\|^2 + \frac{8\gamma\sigma_{g,1}^2}{\mu}\ln\frac{4e}{\delta}$.
\end{lemma}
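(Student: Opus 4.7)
The plan is to carry out a standard high-probability convergence analysis of $T_0$ steps of SGD applied to the $\mu$-strongly convex, $l_{g,1}$-smooth function $g(x_1,\cdot)$, with the upper-level variable held fixed at $x_1$ throughout the warm-start. Let $y^{(k)}$ denote the intermediate warm-start iterates so that $y^{(0)}=y_0$ and $y^{(T_0)}=y_1$, and let $e_k=\gdy G(x_1,y^{(k)};\pi_k)-\gdy g(x_1,y^{(k)})$ denote the stochastic noise, which is sub-Gaussian with parameter $\sigma_{g,1}$ by \cref{ass:noise} (iii). Crucially, $y^*(x_1)$ does not depend on $k$ during the warm-start, so $y_0^*=y_1^*=y^*(x_1)$ can be used interchangeably as the target.

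First I would expand $\|y^{(k+1)}-y_1^*\|^2$ using the SGD update and apply the co-coercivity inequality for $g(x_1,\cdot)$. Under the stepsize condition $\gamma\leq 1/(2l_{g,1})\leq 2/(\mu+l_{g,1})$, the deterministic $\|\gdy g\|^2$ term arising from the expansion has a non-positive coefficient and can be dropped, while the contraction coefficient $1-\tfrac{2\mu l_{g,1}\gamma}{\mu+l_{g,1}}$ is bounded above by $1-\mu\gamma/2$ using $\mu\leq l_{g,1}$. This yields
\begin{equation*}
\|y^{(k+1)}-y_1^*\|^2 \leq (1-\mu\gamma/2)\|y^{(k)}-y_1^*\|^2 + M_k + \gamma^2\|e_k\|^2,
\end{equation*}
where $M_k=-2\gamma\langle e_k,\,y^{(k)}-y_1^*\rangle+2\gamma^2\langle e_k,\,\gdy g(x_1,y^{(k)})\rangle$ is a martingale difference with respect to the filtration generated by $\pi_0,\dots,\pi_{k-1}$. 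Unrolling from $k=0$ to $T_0-1$ gives
\begin{equation*}
\|y_1-y_1^*\|^2 \leq (1-\mu\gamma/2)^{T_0}\|y_0-y_0^*\|^2 + \sum_{k=0}^{T_0-1}(1-\mu\gamma/2)^{T_0-1-k}\bigl(M_k+\gamma^2\|e_k\|^2\bigr),
\end{equation*}
which already matches the first term of the advertised bound.

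The remaining task is to bound the stochastic sum above by $\frac{8\gamma\sigma_{g,1}^2}{\mu}\ln(4e/\delta)$ with probability at least $1-\delta/4$. With weights $w_k=(1-\mu\gamma/2)^{T_0-1-k}$ satisfying $\sum_k w_k \leq 2/(\mu\gamma)$, and $\mathbb{E}\|e_k\|^2=O(\sigma_{g,1}^2)$ by sub-Gaussianity, the mean of $\sum_k w_k\gamma^2\|e_k\|^2$ is $O(\gamma\sigma_{g,1}^2/\mu)$. I would apply a single-shot Freedman/Bernstein-type concentration inequality: sub-Gaussianity of $e_k$ controls the martingale part $\sum_k w_k M_k$, and the sub-exponential tail of $\|e_k\|^2$ concentrates $\sum_k w_k\gamma^2\|e_k\|^2$ around its mean, adding an $O(\ln(1/\delta))$ multiplicative factor that yields the advertised constants after absorbing numerical factors.

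The main obstacle is that the martingale increment $M_k$ depends on $y^{(k)}-y_1^*$, whose norm is itself random and recursively coupled to past noise, so a straightforward Azuma bound is not directly applicable. A clean resolution is to perform an induction in which one simultaneously maintains a high-probability upper envelope on $\|y^{(k)}-y_1^*\|$ and on the cumulative martingale deviation, or to invoke a time-uniform Freedman inequality whose conditional variance is expressed via the preceding iterates and then self-bounded. Because only the final-iterate error is required (rather than a uniform-in-$k$ bound), no union bound over $k$ enters, which is precisely why the tail factor in the conclusion appears as $\ln(4e/\delta)$ rather than $\ln(T_0/\delta)$ as in \cref{lm:any-sequence-main}.
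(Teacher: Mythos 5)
Your plan is substantively correct and in fact amounts to re-deriving from scratch the result that the paper obtains by a one-line citation. The paper's proof of this lemma simply invokes Theorem~30 of Cutler et al.\ (2023), which is precisely a high-probability final-iterate bound for constant-stepsize SGD on a strongly convex, smooth objective with sub-Gaussian gradient noise; the paper sets the target failure probability to $\delta/4$ and reads off the bound. Your sketch, by contrast, reconstructs the content of that theorem: the one-step contraction via co-coercivity, the martingale/noise-variance decomposition, the observation that only the final iterate is needed (hence $\ln(4e/\delta)$ rather than $\ln(T_0/\delta)$), and the recognition that the martingale increments are recursively coupled to the random iterate error. That last point is the real subtlety, and the way you resolve it -- a ``time-uniform Freedman inequality whose conditional variance is ... self-bounded,'' or equivalently an induction that co-maintains an envelope -- is exactly what Proposition~29 of Cutler et al.\ (reproduced in this paper as the recursive-MGF-control Proposition in Appendix~C) formalizes: one propagates a bound on the moment generating function of $\|y^{(k)}-y_1^*\|^2$ step by step, using the sub-Gaussianity of the martingale increment and the sub-exponential tail of $\|e_k\|^2$, so that at the end one reads off a final-iterate tail bound. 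Your plan leaves this propagation at the sketch level, so if you were to write it out you would want to instantiate that MGF recursion explicitly; but you have identified the right mechanism, and your derivation of the $1-\mu\gamma/2$ contraction factor (via $\gamma \le 1/(2l_{g,1}) \le 2/(\mu+l_{g,1})$ and $\mu \le l_{g,1}$) is valid, if slightly different in bookkeeping from the cited source.
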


\begin{lemma} \label{lm:event-y}
Under the parameter choices in \cref{lm:general-recursion}, apply \cref{lm:any-sequence-main} with $\{\tx_t\}=\{x_t\}$, $\{\ty_t\}=\{y_t\}, \{\tu_t\}=\{\hu_t\}$ and $\{\tl_t\}=\{\hl_t\}$, then \eqref{eq:any-sequence-main} holds with probability at least $1-\delta/4$ over the randomness in $\gF_{T+1}^y$ (denote this event as $\gE_{y}$).
\end{lemma}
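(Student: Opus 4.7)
The plan is to reduce the lemma to a purely pathwise verification: Lemma~\ref{lm:any-sequence-main} is a deterministic-hypothesis-to-high-probability-conclusion statement, so once its hypothesis \eqref{eq:adamproperty} is checked pathwise for the AdamBO iterates under the identifications $\{\tx_t\}=\{x_t\}$, $\{\ty_t\}=\{y_t\}$, $\{\tu_t\}=\{\hu_t\}$, $\{\tl_t\}=\{\hl_t\}$, invoking it with failure probability $\delta/4$ yields the conclusion \eqref{eq:any-sequence-main} on an event $\gE_y$ of probability at least $1-\delta/4$ measurable with respect to $\gF_{T+1}^y$. The only compatibility check to make is that line~5 of \cref{alg:bi-adam} matches \eqref{eq:yt-any-update-main} with $\tilde{\zeta}_t=\zeta_t$, which is immediate.

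To verify \eqref{eq:adamproperty}, I would use three ingredients. First, from the Adam step $x_{t+1}=x_t-(\eta/(\sqrt{\hv_t}+\lambda))\odot \hm_t$ and the coordinate-wise bound $\sqrt{\hv_t}+\lambda\ge \lambda$, I get the deterministic inequality $\|x_{t+1}-x_t\|^2\le (\eta^2/\lambda^2)\|\hm_t\|^2$. Second, using the equivalent update rule from \cref{prop:bi-adam-equiv}, the difference $\hm_t-\hu_t$ satisfies $\hm_t-\hu_t=(1-\alpha_t)(\hm_{t-1}-\hu_{t-1})+\alpha_t[\hatphi(x_t,y_t;\bar{\xi}_t)-\hatphi(x_t,y_t^*;\bar{\xi}_t)]$, which unrolls to a convex combination $\hm_t-\hu_t=\sum_{j=1}^{t}d_{t,j}\bigl[\hatphi(x_j,y_j;\bar{\xi}_j)-\hatphi(x_j,y_j^*;\bar{\xi}_j)\bigr]$ with weights $d_{t,j}=\alpha_j\prod_{k=j+1}^{t}(1-\alpha_k)$ summing to one, which are precisely the quantities defined in \eqref{eq:dj-def}. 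Third, I would establish the per-sample Lipschitz-in-$y$ bound $\|\hatphi(x_j,y;\bar{\xi}_j)-\hatphi(x_j,y';\bar{\xi}_j)\|\le \hl_j\|y-y'\|$ for each $j$, with $\hl_j=L_0+L_1\max_{k\le j}\|\gdphi(x_k)\|$, by invoking \cref{ass:bilevel-additional} and the Neumann-series definition of $\hatphi$. Applying Jensen's inequality on the convex combination, together with the monotonicity $\hl_j\le \hl_t$ for $j\le t$, gives $\|\hm_t-\hu_t\|^2\le \hl_t^2\sum_{j=1}^{t}d_{t,j}\|y_j-y_j^*\|^2$. Chaining through $\|\hm_t\|^2\le 2\|\hu_t\|^2+2\|\hm_t-\hu_t\|^2$ and the first inequality reproduces \eqref{eq:adamproperty} exactly.

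The hardest step is the third: obtaining a pathwise (per-sample) Lipschitz-in-$y$ bound for the Neumann-series estimator $\hatphi(x,\cdot;\bar{\xi})$ with the specific constant $\hl_j$, independent of the truncation depth $Q$. Because $\hatphi$ is built from a product of random Hessian terms $\bigl(I-\nabla_{yy}^2 G(x,y;\zeta^{(q,j)})/l_{g,1}\bigr)$, naive bounds risk compounding with $Q$; the fix is to combine the pathwise contractivity $\|I-\nabla_{yy}^2 G(x,y;\zeta)/l_{g,1}\|\le 1-\mu/l_{g,1}<1$ granted by \cref{ass:bilevel-additional}(ii) (yielding a geometric-series estimate uniform in $Q$) with the sample-wise $y$-smoothness of $\nabla_y F$ from \cref{ass:bilevel-additional}(i) and the joint Lipschitzness of the cross and lower-level Hessians. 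A secondary but essential bookkeeping check is that the coefficients $d_{t,j}$ produced by unrolling the momentum recursion agree verbatim with those in \eqref{eq:dj-def}; if they matched only up to a constant, the hypothesis of Lemma~\ref{lm:any-sequence-main} would fail to apply in the stated form.
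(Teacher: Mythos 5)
Your proposal follows the same route as the paper: reduce the lemma to verifying the hypothesis \eqref{eq:adamproperty} pathwise for the AdamBO iterates, unroll the momentum recursion to express $\hm_t-\hu_t$ as a convex combination with weights $d_{t,j}$, control each summand by a per-sample Lipschitz-in-$y$ bound on $\hatphi$ (the paper's \cref{lm:hyper-stoc-bias}), apply Jensen's inequality and the monotonicity $\hl_j\le\hl_t$, and chain through $\|x_{t+1}-x_t\|^2\le(\eta^2/\lambda^2)\|\hm_t\|^2\le(2\eta^2/\lambda^2)(\|\hu_t\|^2+\|\hm_t-\hu_t\|^2)$ before invoking \cref{lm:any-sequence-main}. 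All of that is correct and matches the paper.

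The one step you skip is not cosmetic: the per-sample Lipschitz bound $\|\hatphi(x_j,y_j;\bar\xi_j)-\hatphi(x_j,y_j^*;\bar\xi_j)\|\le(L_0+L_1\|\gdphi(x_j)\|)\|y_j-y_j^*\|$ in \cref{lm:hyper-stoc-bias} is \emph{local}, requiring $\|y_j-y_j^*\|\le r$, because the upper-level function is only relaxed-smooth (both population-level and per-sample, cf.~\cref{ass:relax-smooth} and \cref{ass:bilevel-additional}(i)). The paper's proof therefore opens with a bootstrapping verification (deferred to \cref{lm:para-verify}, the ``Verification for $\varrho\le\min\{r,1/4L_1\}$'' step) that the chosen $\eta,\gamma,\beta,T_0$ keep $\|y_t-y_t^*\|\le r$ for all $t$, which is what licenses applying \cref{lm:hyper-stoc-bias} at each $j$. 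Without this, your claimed ``per-sample Lipschitz-in-$y$ bound with constant $\hl_j$'' is not available: the Neumann-series estimator inherits only local Lipschitzness in $y$ from the relaxed-smoothness assumptions, and the constant $\hl_j$ is only valid inside the ball of radius $r$ around $y_j^*$. Your discussion of the $Q$-uniform contraction is correct and is indeed one of the ingredients, but it is orthogonal to this locality issue; you need to explicitly carry the $\|y_t-y_t^*\|\le r$ hypothesis and show the parameter choices enforce it.
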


\textbf{Remark}: \cref{lm:warm-start-main} and \cref{lm:event-y} together provide a high probability bound for the lower-level error, where the randomness is taken only from the lower-level filtrations $\mathcal{F}_{\init}$ and $\mathcal{F}_{T+1}^y$. \cref{lm:event-y} is a direct application of \cref{lm:any-sequence-main} to the actual sequence $\{x_t\}$ and $\{y_t\}$ in \cref{alg:bi-adam}.

\begin{lemma} 
\label{lem:upperlevel}
If $t<\tau$, we have $\|\gdphi(x_t)\| \leq G$, $\|\hu_t\| \leq C_{u,0}$; under event $\gE_0\cap\gE_{y}$, if $t<\tau$, we have $\|\hm_t\| \leq C_{u,0} + C_{u,1}\varrho$, $\hv_t \preceq (C_{u,0} + C_{u,1}\varrho)^2$, where constants $C_{u,0}, C_{u,1}, \varrho$ are defined in \eqref{eq:Cu-def} and \eqref{eq:recur-2}, respectively.
\end{lemma}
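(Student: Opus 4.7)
The claim splits into a deterministic piece (the bounds $\|\gdphi(x_t)\|\le G$ and $\|\hu_t\|\le C_{u,0}$ whenever $t<\tau$) and a high-probability piece (the bounds on $\|\hm_t\|$ and $\hv_t$ that additionally require the good event $\gE_0\cap\gE_y$). I would handle them in this order, leaning on the crucial structural fact that $\hu_t$ is formed from hypergradient estimates evaluated at $y_t^*$, not at $y_t$, so its bound decouples from the lower-level estimation error. The first bound $\|\gdphi(x_t)\|\le G$ is immediate from the definition of $\tau$: when $t<\tau$, we have $\Phi(x_t)-\Phi^*\le \psi=C_LG^2/(2L)$, and plugging into \cref{lm:reverse-PL} yields $\|\gdphi(x_t)\|\le G$ by the calibration of $C_L$ and $G$ in \eqref{eq:Cu-def}.

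For the bound on $\|\hu_t\|$, I would first invoke the equivalent bias-corrected recursion of \cref{prop:bi-adam-equiv} to rewrite $\hu_t=\sum_{i=1}^{t}w_{i,t}\hatphi(x_i,y_i^*;\bar\xi_i)$ as a convex combination with nonnegative weights summing to one (which follows from $\alpha_t\in(\beta,1]$ together with a short induction). It then suffices to bound each summand uniformly. Using the Neumann-series definition of $\hatphi$, \Cref{ass:bilevel-assumption}(ii),(iv)-(v) and \Cref{ass:noise}, the Neumann factor has spectral norm at most $1/\mu$ almost surely, $\|\gdy F(x_i,y_i^*;\xi_i)\|\le l_{f,0}+\sigma_f$ since $\|\gdy f(x_i,y_i^*)\|\le l_{f,0}$, and $\|\gdx F(x_i,y_i^*;\xi_i)\|\le \|\gdphi(x_i)\|+(l_{g,1}/\mu)\,l_{f,0}+\sigma_f\le G+\mathrm{const}$ by the explicit hypergradient formula. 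Combining gives $\|\hatphi(x_i,y_i^*;\bar\xi_i)\|\le C_{u,0}$, hence $\|\hu_t\|\le C_{u,0}$. Note this argument uses only $\|\gdphi(x_i)\|\le G$ for each $i\le t<\tau$ and no conditioning on the lower-level randomness.

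For the bounds on $\|\hm_t\|$ and $\hv_t$ under $\gE_0\cap\gE_y$, I would form the difference $\hm_t-\hu_t = (1-\alpha_t)(\hm_{t-1}-\hu_{t-1})+\alpha_t[\hatphi(x_t,y_t;\bar\xi_t)-\hatphi(x_t,y_t^*;\bar\xi_t)]$, which is again a convex combination of per-step differences. By the Lipschitz continuity of $\hatphi$ in its second argument (obtained from \Cref{ass:bilevel-assumption}(i),(iv)-(v) together with \Cref{ass:bilevel-additional}, expanded through the Neumann product), each per-step difference is at most $C_{u,1}\|y_i-y_i^*\|$ for an appropriate constant. On $\gE_0\cap\gE_y$, \cref{lm:warm-start-main} and \cref{lm:event-y} (combined with the definition in \eqref{eq:recur-2}) supply a uniform bound $\max_{i\le t}\|y_i-y_i^*\|\le\varrho$, and the triangle inequality yields $\|\hm_t\|\le\|\hu_t\|+\|\hm_t-\hu_t\|\le C_{u,0}+C_{u,1}\varrho$. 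The bound on $\hv_t$ follows similarly: its update is a coordinate-wise convex combination of coordinate-wise squares, so $\hv_t\preceq\max_{i\le t}(\hatphi(x_i,y_i;\bar\xi_i))^2 \preceq \max_{i\le t}\|\hatphi(x_i,y_i;\bar\xi_i)\|^2\,\mathbf{1}\preceq(C_{u,0}+C_{u,1}\varrho)^2\,\mathbf{1}$ by the same per-iterate bound used above.

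The main obstacle is the Step 3 Lipschitz bound: $y\mapsto\hatphi(x,y;\bar\xi)$ is nonlinear through the Neumann product involving $\gdyy G$, so getting an honest constant $C_{u,1}$ requires summing the $Q$ telescoped contributions from expanding the product together with the $l_{f,0}$, $l_{g,1}$, $l_{g,2}$ constants, all in a manner consistent with the definition of $\varrho$ that is used both in bootstrapping \cref{lm:event-y} and in the subsequent hypergradient-bias analysis of \cref{lem:biasvirtual}. Keeping the constants $C_{u,0}, C_{u,1}, \varrho$ self-consistent throughout this chain, rather than any single inequality, is the genuine bookkeeping challenge.
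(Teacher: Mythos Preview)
Your proposal is correct and follows essentially the same approach as the paper: use \cref{lm:reverse-PL} for $\|\gdphi(x_t)\|\le G$, bound $\|\hatphi(x_i,y_i^*;\bar\xi_i)\|\le C_{u,0}$ per-iterate and pass through the convex-combination structure of $\hu_t$, then under $\gE_0\cap\gE_y$ use the uniform bound $\|y_i-y_i^*\|\le\varrho$ together with the Lipschitz property of $\hatphi$ in $y$ (the paper's \cref{lm:hyper-stoc-bias}) to control $\hm_t$ and $\hv_t$ by induction. The only cosmetic difference is that the paper bounds $\|\hatphi(x_t,y_t;\bar\xi_t)\|$ directly via \cref{lm:stoc-hyper-error} (which yields exactly $C_{u,0}+C_{u,1}\varrho$ since $C_{u,1}=C_{\phi,1}+L_1G$), whereas you route through $\|\hu_t\|+\|\hm_t-\hu_t\|$; since the Lipschitz constant $L_0+L_1G$ from \cref{lm:hyper-stoc-bias} is no larger than $C_{u,1}$, your route also lands at the stated bound.
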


\textbf{Remark}: \cref{lem:upperlevel} generalizes the stopping time analysis from the single-level setting~\citep{li2023convergence} to the bilevel setting and is useful for upper-level analysis. It shows that the momentum estimators of the hypergradient remains bounded when $t<\tau$ and $\gE_0\cap\gE_y$ holds. This implies that $x_{t+1}$ and $x_t$ remains close for small enough $\eta$, allowing us to apply \cref{lm:Phi-relax-smooth,lm:descent-inequality}.

\begin{lemma}
\label{lem:biasvirtual}
Under event $\gE_0\cap\gE_{y}$ and the parameter choices in \cref{lm:general-recursion}, we have 
$\sum_{t=1}^{\tau-1}\|\hm_t-\hu_t\|^2 \leq O(\sqrt{T}) + O(1)\sum_{t=1}^{\tau-1}(\|\epsilon_t\|^2 + \|\gdphi(x_t)\|^2)$.
\end{lemma}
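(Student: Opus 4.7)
The plan is to control the gap $\hm_t-\hu_t$ by the lower-level error $\|y_t-y_t^*\|$ pathwise, then invoke \cref{lm:event-y} to bound the cumulative lower-level error, and finally trade the drift contribution involving $\|\hu_i\|^2$ for the target quantity $\|\gdphi(x_i)\|^2+\|\epsilon_i\|^2$. The entire argument is deterministic once we condition on $\gE_0\cap\gE_y$, since this event already absorbs all lower-level randomness.

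I would begin by subtracting the recursions for $\hm_t$ and $\hu_t$ from \eqref{eq:aux} to get
\[
\hm_t-\hu_t=(1-\alpha_t)(\hm_{t-1}-\hu_{t-1})+\alpha_t\bigl(\hatphi(x_t,y_t;\Bar{\xi}_t)-\hatphi(x_t,y_t^*;\Bar{\xi}_t)\bigr).
\]
Under $t<\tau$, \cref{lem:upperlevel} gives $\|\gdphi(x_t)\|\leq G$, so the Lipschitz-in-$y$ property of $\hatphi$ (which follows from \cref{ass:bilevel-additional} together with the Neumann construction and the bound on $\hl_t\leq L$) yields a per-step bound $\|\hatphi(x_t,y_t;\Bar{\xi}_t)-\hatphi(x_t,y_t^*;\Bar{\xi}_t)\|\lesssim L\|y_t-y_t^*\|$. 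Unrolling the recursion and applying Jensen's inequality to the convex combination with weights $\alpha_j\prod_{i=j+1}^t(1-\alpha_i)$ produces $\|\hm_t-\hu_t\|^2\lesssim L^2\sum_{j=1}^t\alpha_j\prod_{i=j+1}^t(1-\alpha_i)\|y_j-y_j^*\|^2$. Swapping the order of summation, the total mass $\sum_{t\geq j}\alpha_j\prod_{i=j+1}^t(1-\alpha_i)$ is $O(1)$, hence
\[
\sum_{t=1}^{\tau-1}\|\hm_t-\hu_t\|^2\;\lesssim\;L^2\sum_{t=1}^{\tau-1}\|y_t-y_t^*\|^2.
\]

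Next I would invoke \cref{lm:event-y}, which on $\gE_0\cap\gE_y$ bounds $\|y_t-y_t^*\|^2$ by three contributions: (i) the geometrically decaying warm-start term $(1-\mu\gamma/2)^{t-1}\|y_1-y_1^*\|^2$, which after summing over $t$ contributes $O(1/(\mu\gamma))\cdot\|y_1-y_1^*\|^2$, controlled by \cref{lm:warm-start-main}; (ii) the variance term $\tfrac{8\gamma\sigma_{g,1}^2}{\mu}\ln(eT/\delta)$, whose sum over $t\leq T$ is $O(T\gamma\log T)=\widetilde{O}(\epsilon^{-2})=O(\sqrt{T})$ under the scaling $\gamma=\widetilde{\Theta}(\epsilon^2)$, $T=\widetilde{O}(\epsilon^{-4})$; and (iii) the drift terms, which are discrete convolutions of $\|\hu_i\|^2$ (and of $\alpha_i\hl_i^2\|\hu_i\|^2$) against the geometric kernel $(1-\mu\gamma/2)^{t-1-i}$. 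After swapping the order of summation, the kernel contributes a factor $O(1/(\mu\gamma))$ per outer sum, and under the stopping time $\hl_i\leq L$, so the drift contribution is $\leq C\sum_{i=1}^{\tau-1}\|\hu_i\|^2$ for a constant $C=O(1)$ after the parameter choices in \cref{lm:general-recursion} cancel the prefactors in \eqref{eq:any-sequence-main}. Finally, by the definition $\epsilon_t\coloneqq\hu_t-\gdphi(x_t)$, one has $\|\hu_t\|^2\leq 2\|\gdphi(x_t)\|^2+2\|\epsilon_t\|^2$, which converts the drift contribution into the claimed $O(1)\sum(\|\epsilon_t\|^2+\|\gdphi(x_t)\|^2)$.

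The main obstacle will be Step (iii) above: the drift in \eqref{eq:any-sequence-main} carries the prefactors $\tfrac{\eta^2 l_{g,1}^2}{\lambda^2\mu^3\gamma}$ and $\tfrac{\eta^4 l_{g,1}^4}{\lambda^4\mu^8\gamma^4}$, which are not individually $O(1)$ — in particular $1/\gamma^4=\widetilde{O}(\epsilon^{-8})$ is very large. The plan is to exploit two sources of cancellation: the extra $\alpha_i=\widetilde{\Theta}(\epsilon^2)$ attached to the fourth-power term, and the factor $1/(\mu\gamma)$ that appears once more when we sum the convolution kernel over $t$. Together with $\eta=\widetilde{\Theta}(\epsilon^2)$, $\beta=\widetilde{\Theta}(\epsilon^2)$, these cancel all the negative powers of $\epsilon$, yielding an $O(1)$ coefficient on $\sum_i\|\hu_i\|^2$. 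Bookkeeping these cancellations carefully, and verifying that the Neumann-truncation bias (of order $(1-\mu/l_{g,1})^Q$) enters the $\|\epsilon_t\|^2$ term rather than creating a separate unbounded contribution, is the only nontrivial computation; once done, combining with (i) and (ii) yields the stated bound.
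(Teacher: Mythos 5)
Your plan matches the paper's proof in all essential respects: bound $\|\hm_t-\hu_t\|^2$ by $L^2$ times the $d_{t,j}$-weighted average of lower-level errors (the paper derives this exact inequality via Jensen's inequality inside the proof of \cref{lm:general-recursion}), apply the high-probability bound from \cref{lm:event-y}, and trade $\|\hu_i\|^2$ for $\|\epsilon_i\|^2+\|\gdphi(x_i)\|^2$, with the parameter choices making the drift coefficient $O(1)$. The paper keeps the $d_{t,j}$ weights through the double sum rather than first collapsing to $\sum_t\|y_t-y_t^*\|^2$, but the arithmetic lands in the same place.

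Two bookkeeping points need correction. First, the paper's $\epsilon_t$ is $\hu_t-\E_t[\hatphi(x_t,y_t^*;\Bar{\xi}_t)]$ (see \eqref{eq:epst-def}), not $\hu_t-\gdphi(x_t)$; with the correct definition the Young step reads $\|\hu_t\|^2\leq 2\|\epsilon_t\|^2+4\|\E_t[\hatphi(x_t,y_t^*;\Bar{\xi}_t)]-\gdphi(x_t)\|^2+4\|\gdphi(x_t)\|^2$, and the middle term is a deterministic Neumann-truncation bias (\cref{lm:neumann-error}) that, under the choice of $Q$, is swept into the $O(\sqrt T)$ bucket rather than folded into $\|\epsilon_t\|^2$. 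Second, the assertion that ``the extra $\alpha_i=\widetilde{\Theta}(\epsilon^2)$'' supplies the needed cancellation on the fourth-power drift term is not correct pathwise: $\alpha_1=1$ and $\alpha_i$ stays $\Theta(1)$ for $i\lesssim 1/\beta$, so one cannot replace $\alpha_i$ by $\widetilde{\Theta}(\epsilon^2)$ term by term. The mechanism the paper actually invokes is \cref{lm:algebra-sum-beta-alphai}, which bounds the convolution $\sum_i(1-\beta)^{t-i}\alpha_i$ by $O(\ln(1/\beta))$; this polylog factor (together with the similar $O(\ln(1/\beta))$ factor hidden in your ``total mass $O(1)$'' step, since $\sum_{t\geq j}d_{t,j}=\sum_{t\geq j}\alpha_t(1-\beta)^{t-j}$ is $\Theta(\ln(1/\beta))$ for small $j$) is then absorbed by the $\max\{1,\ln(1/\beta),\ldots\}$ factor in the denominator of the allowed range for $\eta$.
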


\textbf{Remark}: \cref{lem:biasvirtual} provides a bound for the difference between the actual momentum $\hat{m}_t$ versus the virtual momentum $\hat{u}_t$ under the good event $\gE_0 \cap \gE_y$, which is essential for establishing the convergence guarantees for AdamBO.

\vspace*{-0.1in}
\section{Experiments}


\subsection{Hyper-representation learning}

\begin{figure*}[t]
\begin{center}
\subfigure[\scriptsize Train ACC]{\includegraphics[width=0.24\linewidth]{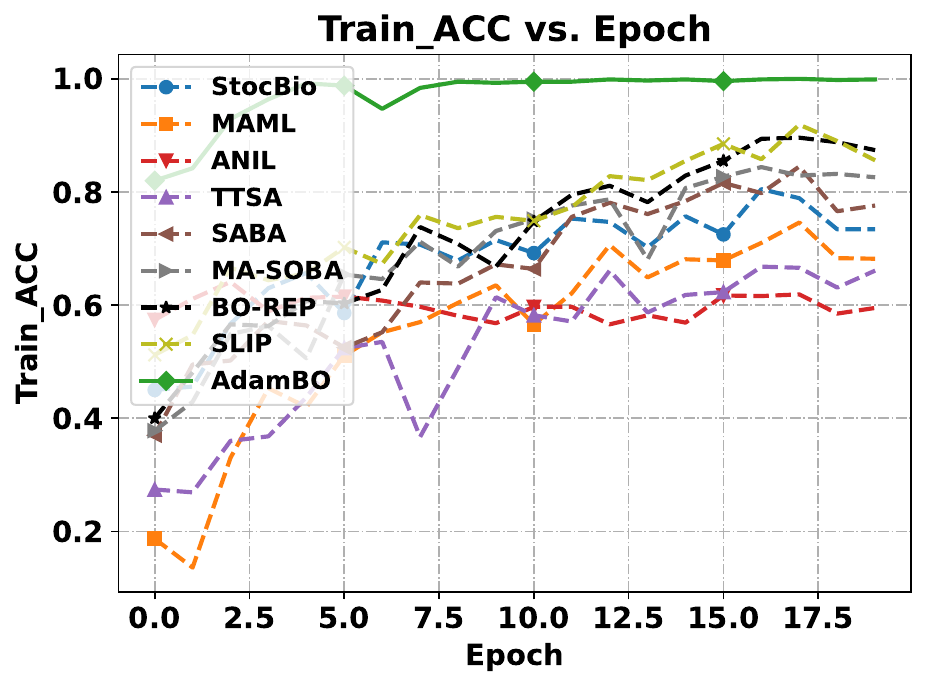}}   \   
\subfigure[\scriptsize Test ACC ]{\includegraphics[width=0.24\linewidth]{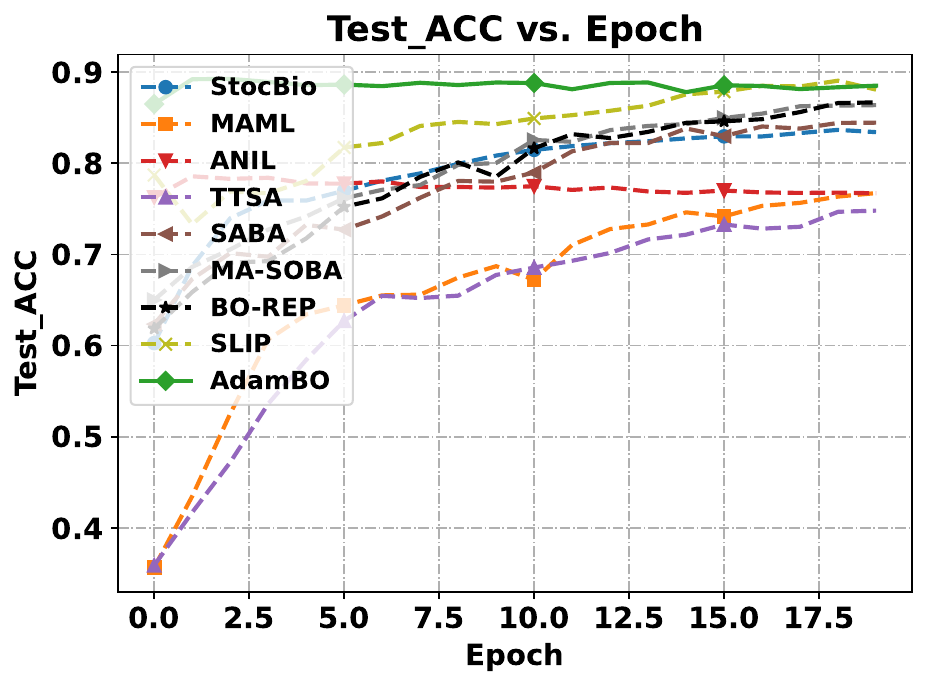}} \ 
\subfigure[\scriptsize Train ACC vs. Time]{\includegraphics[width=0.24\linewidth]{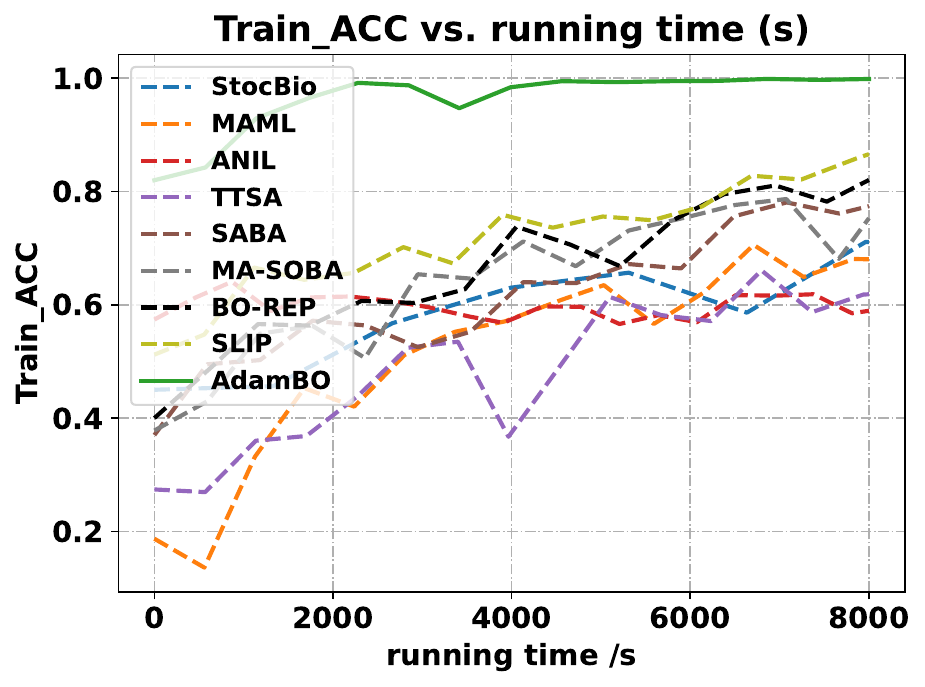}} \
\subfigure[\scriptsize Test ACC vs. Time]{\includegraphics[width=0.24\linewidth]{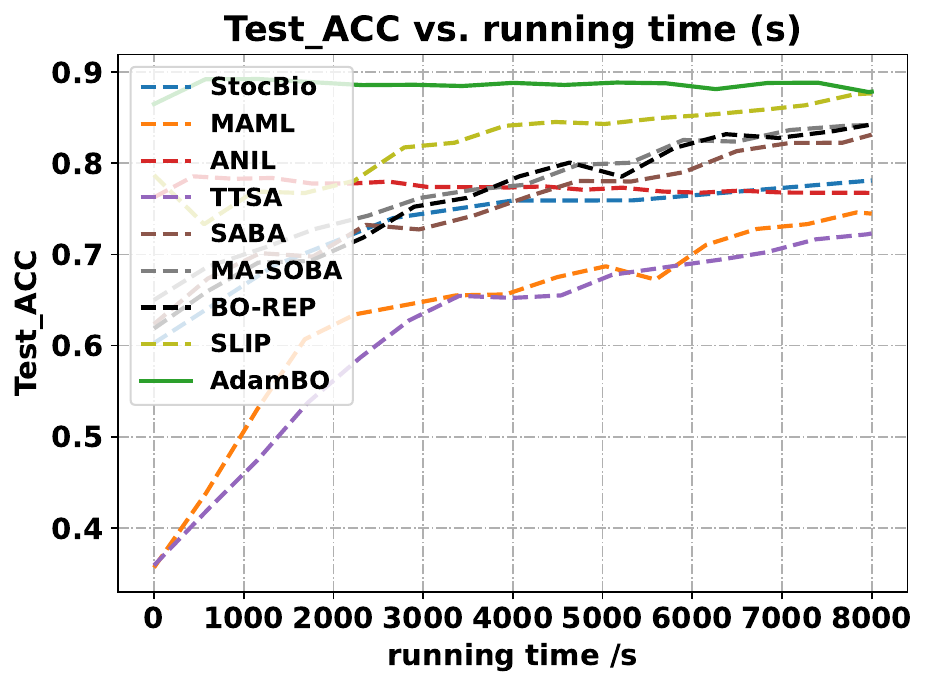}}
\end{center}
\vspace*{-0.15in}
\caption{Comparison with bilevel optimization baselines on BERT for hyper-representation.}
\label{fig:acc_HR_bert}
\end{figure*}

\begin{figure*}[t]
\begin{center}
\subfigure[\scriptsize Train ACC]{\includegraphics[width=0.24\linewidth]{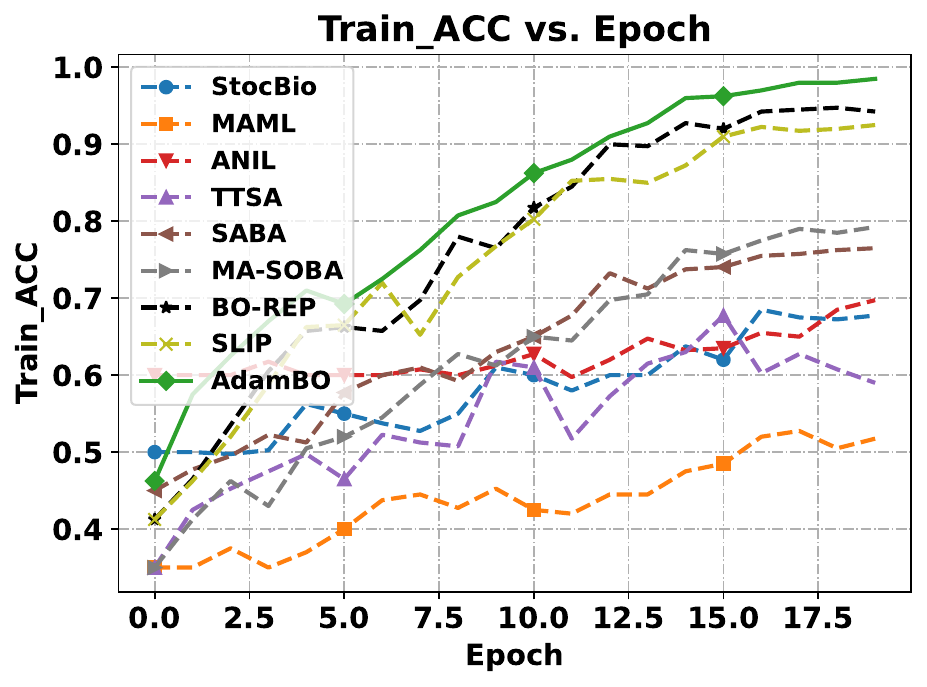}}   \   
\subfigure[\scriptsize Test ACC ]{\includegraphics[width=0.24\linewidth]{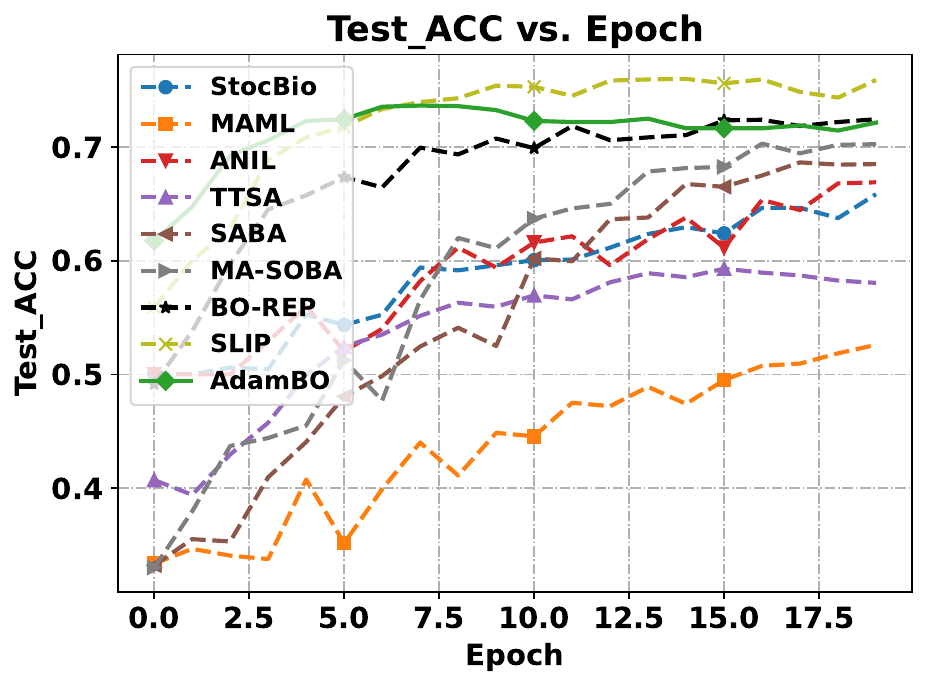}} \ 
\subfigure[\scriptsize Train ACC vs. Time]{\includegraphics[width=0.24\linewidth]{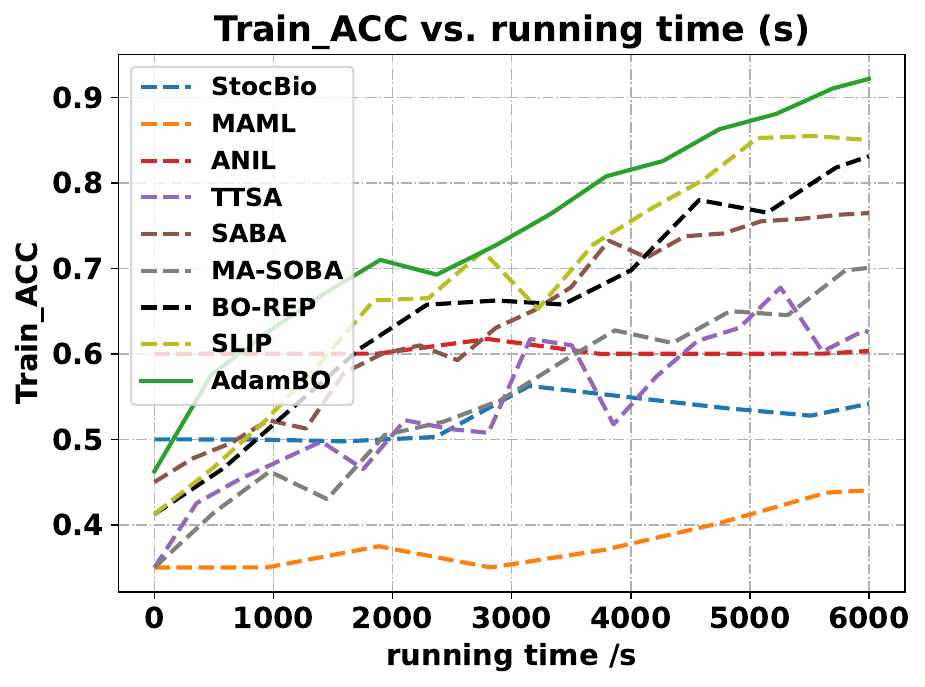}} \
\subfigure[\scriptsize Test ACC vs. Time]{\includegraphics[width=0.24\linewidth]{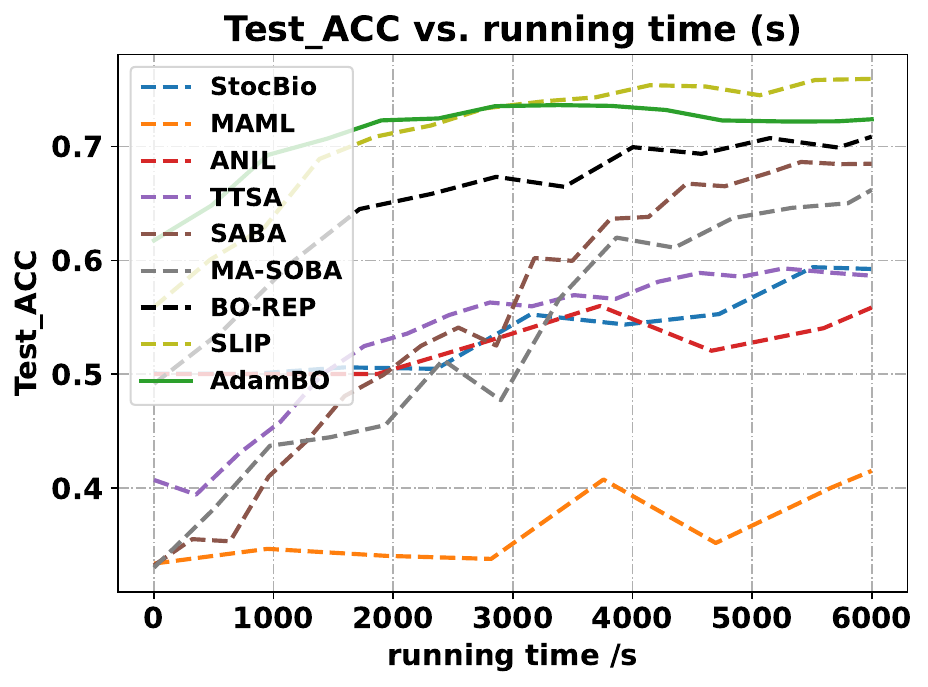}}
\end{center}
\vspace*{-0.15in}
\caption{Comparison with bilevel optimization baselines on RNN for hyper-representation.}
\label{fig:acc_HR}
\end{figure*}

Hyper-representation learning, i.e., meta-learning~\citep{finn2017model}, aims to find a good meta learner parameterized by $x$, such that it can quickly adapt to a new task $i$ by fine-tuning the corresponding adapter $y_i$. Consider a meta-learning task consisting of $K$ tasks with the training set $\{\mathcal{D}_i^{tr}\;|\;i=1,\ldots, K\}$ and validation set  $\{\mathcal{D}_i^{val}\;|\; i=1,\ldots, K\}$. Each task has a loss function  $\mathcal{L}(x, y_i;\xi_i)$ over each sample $\xi_i$. This meta-learning problem can be reformulated as a bilevel optimization, where the lower-level objective function tries to find an optimal task-specific adapter $y_i^*(x)$ on training data $\mathcal{D}_{i}^{tr}$, and the upper-level minimizes the objective function on validation data  $\mathcal{D}_{i}^{val}$ by finding the optimal meta-learner $x$ with a set of adapters $y=\{y_1^*(x), y_2^*(x),\ldots, y_K^*(x)\}$. We have the following formulation:
\begin{equation*}\label{eq:hr}
    \begin{aligned}
        &\min_{x}\frac{1}{K}\sum_{i=1}^{K}\frac{1}{|\mathcal{D}^{val}_i|}\sum_{\xi \in \mathcal{D}_{i}^{val}}\mathcal{L}(x, y^*(x); \xi), \\
        &\text{s.t.,}\; 
        \ y^*(x) = \argmin_{y}\frac{1}{K}\sum_{i=1}^{K}\mathcal{L}_{\mathcal{D}_{i}^{tr}}(x, y_i; \zeta) + \frac{\mu}{2}\|y_i\|^2,
    \end{aligned}
\end{equation*}
where $\mathcal{L}_{\mathcal{D}_{i}^{tr}}(x, y_i; \zeta) = \frac{1}{|\mathcal{D}_{i}^{tr}|}\sum_{\zeta\in \mathcal{D}^{tr}_i}\mathcal{L}(x, y_i; \zeta)$. The adapter (parameterized by $y_i$) is typically instantiated as the last linear layer, and the meta learner (parameterized by $x$) is the remaining layers of model, which guarantees that the lower-level function to be strongly-convex when $\mu>0$. 

We conduct meta-learning experiments on a larger language model, specifically an 8-layer BERT \citep{devlin2018bert} model. The experiments are performed on a widely-used question classification dataset TREC \citep{li-roth-2002-learning}, which contains 6 coarse-grained categories. To evaluate our approach on meta-learning, we construct $K=500$ meta tasks, where the training data $\mathcal{D}_i^{tr}$ and validation data $\mathcal{D}_i^{val}$ for the $i$-th task are randomly sampled from two disjoint categories, with $5$ examples per category. A BERT model, with 8 self-attention layers and a fully-connected layer, is used in our experiment. The self-attention layers serve as representation layers (with their parameters treated as upper-level variables) and the fully-connected layer (with its parameters treated as lower-level variables) serves as an adapter, where each self-attention layer consists of 8 self-attention heads with the hidden size being 768. The fully-connected layer acts as a classifier, with the input dimension of 768 and the output dimension of 6 (corresponding to the 6 categories). Our bilevel optimization algorithm trains the representation layers and the adapter on the meta tasks ($\mathcal{D}^{tr}$ and $\mathcal{D}^{val}$) from scratch, and then evaluate it on the test set $\mathcal{D}^{te}$. During the evaluation phase, we fix the parameters of representation layers and just fine-tune the adapters. We train the models for 20 epochs and compare it with other bilevel optimization baseline algorithms. 

We compare with typical meta-learning algorithms, MAML \citep{rajeswaran2019meta} and ANIL \citep{raghu2019rapid}, and recent bilevel optimization algorithms, StocBio \citep{ji2021bilevel}, TTSA \citep{hong2023two}, SABA \citep{dagreou2022framework}, MA-SOBA \citep{chen2023optimal}, BO-REP~\citep{hao2024bilevel}, SLIP \citep{gong2024a}. The comparison results of training and testing accuracy are shown in \cref{fig:acc_HR_bert}. AdamBO achieves fast convergence to the best training and test results among all baselines. One can refer to \cref{app:hyper_param} for detailed hyper-parameter choices and experimental settings. All the experiments are run on an single NVIDIA A6000 (48GB memory) GPU and a AMD EPYC 7513 32-Core CPU.

\begin{figure*}[!t]
\begin{center}
\subfigure[\scriptsize Train AUC]{\includegraphics[width=0.24\linewidth]{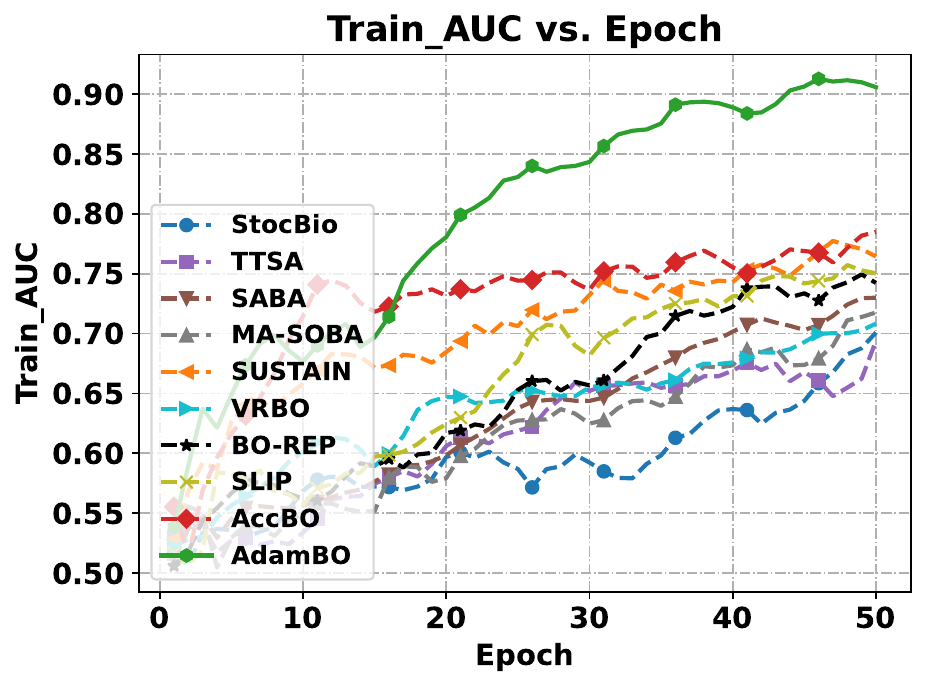}}   
\subfigure[\scriptsize Test AUC]{\includegraphics[width=0.24\linewidth]{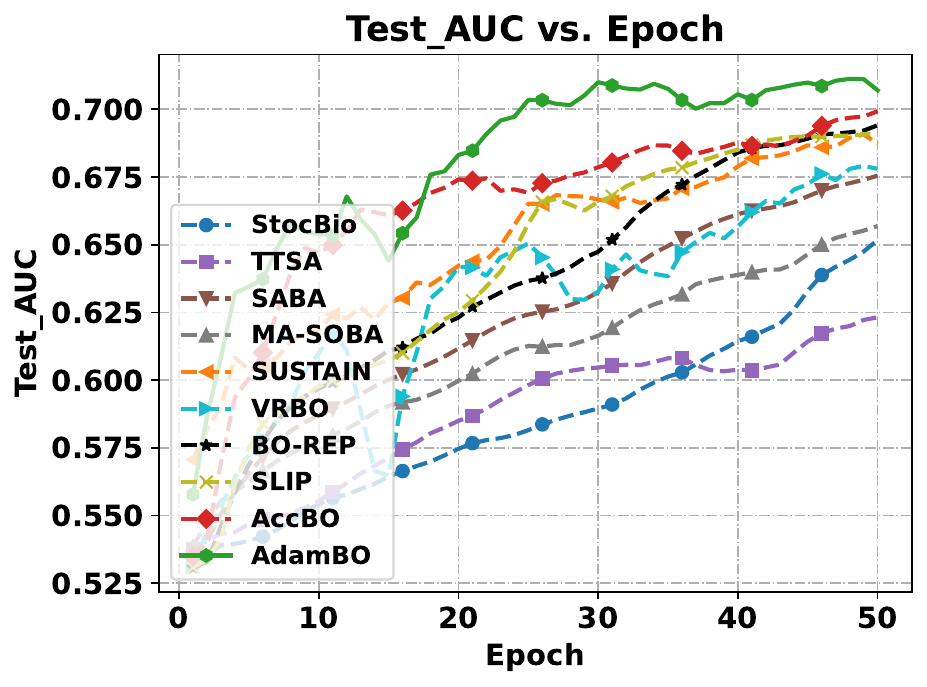}}   
\subfigure[\scriptsize Train AUC vs. Time]{\includegraphics[width=0.24\linewidth]{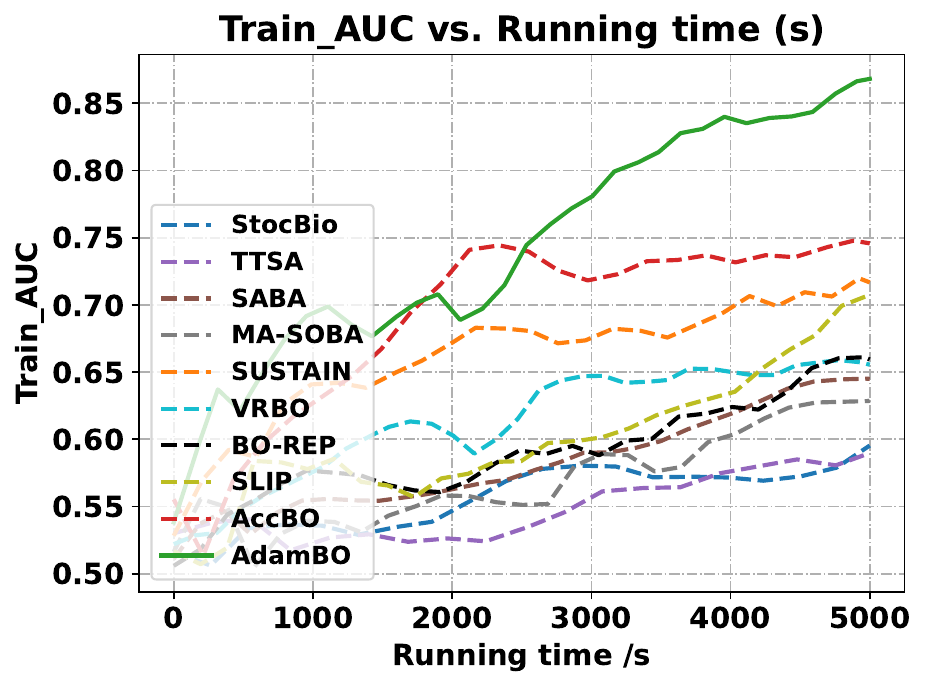}}  
\subfigure[\scriptsize Test AUC vs. Time]{\includegraphics[width=0.24\linewidth]{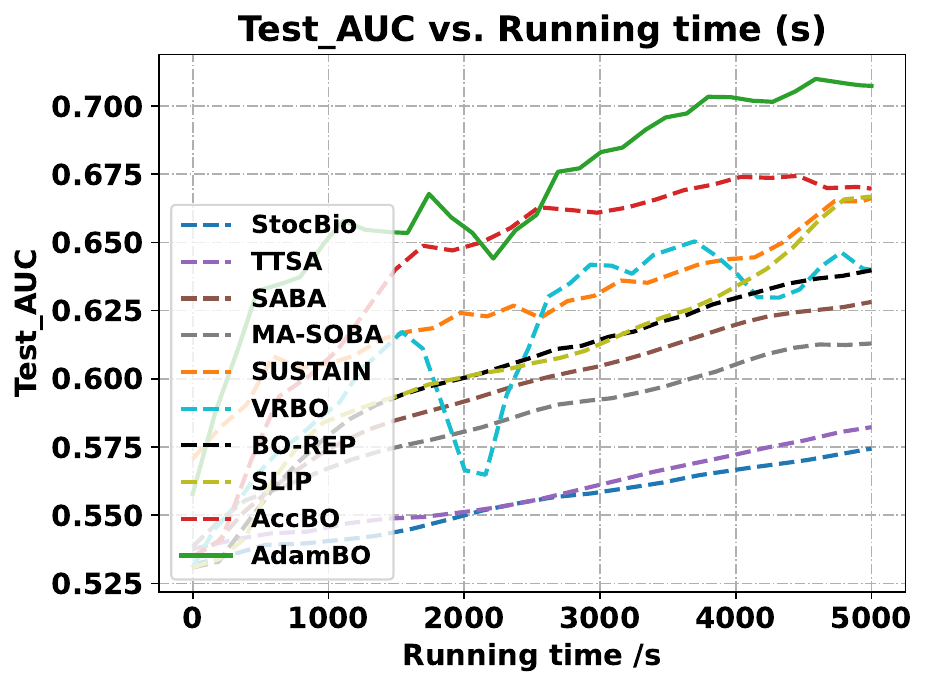}}
\end{center}
\vspace*{-0.15in}
\caption{Transformer for AUC maximization on Sentiment140 dataset with imbalance ratio of 0.9. }
\label{fig:auc_trans}
\end{figure*}

We also conduct the meta-learning experiments on RNN for text classification on dataset Stanford Natural Language Inference (SNLI)~\citep{bowman2015large}, which consists of 570k pairs of sentences with 3 classes. We construct $K=500$ tasks, where each task $\mathcal{D}_i^{tr}$ and $\mathcal{D}_i^{val}$ randomly sample two disjoint categories from the original data, respectively. Empirically, we use mini-batches of meta-tasks for training, with a task batch size of 25. A 3-layer recurrent network is used as representation layers and a fully-connected layer as an adapter. The input dimension, hidden dimension and output dimension are set to be 300, 4096, and 3, respectively. The comparison results of training and testing accuracy are shown in \cref{fig:acc_HR}. AdamBO outperforms other baselines on the training set and exhibits faster convergence rate.



\vspace*{-0.05in}
\subsection{Deep AUC Maximization with RNNs/Transformers}


The Area Under the ROC Curve (AUC)~\citep{hanley1983method} is a widely used metric for evaluating the effectiveness of binary classification models, especially in the imbalanced data scenarios. It is defined as the probability that the prediction score of a positive example is higher than that of a negative example~\citep{hanley1982meaning}. Deep AUC maximization~\citep{liu2019stochastic,ying2016stochastic} can be formulated as a min-max optimization problem~\citep{liu2019stochastic}: $
\min_{\vw\in\mathbb{R}^{d}, (a,b)\in \mathbb{R}^2} \max_{\alpha\in \mathbb{R}} f(\vw, a, b, \alpha)\ \coloneqq \mathbb{E}_{\vz}[F(\vw, a, b, \alpha; \vz)]
$, where $F(\vw, a, b, \alpha; \vz) = (1-p)(h(\vw; \vx)-a )^2 \mathbb{I}_{[c=1]} + p (h(\vw;\vx)-b)^2\mathbb{I}_{[c=-1]}+ 2(1+\alpha)(p h(\vw;\vx)\mathbb{I}_{[c=-1]}-(1-p)h(\vw; \vx)\mathbb{I}_{[c=1]}) - p(1-p)\alpha^2$, $\vw$ denotes the model parameter of a deep neural network, and $\vz = (\vx, c)$ represents a random training data sample ($\vx$ represents the feature vector and $c \in \{+1, -1\}$ represents the class label), the function $h(\vw, \vx)$ is a scoring function for the sample with feature $\vx$, and $p = \pr(c=1)$ indicates the proportion of positive samples in the population. This min-max problem can be reformulated as the form of a bilevel optimization problem with lower-level objective function $g = -f$:
\begin{equation*}
\label{formula:deepauc}
    \begin{aligned}
        &\min_{\vw\in\mathbb{R}^{d}, (a,b)\in \mathbb{R}^2} \mathbb{E}_{\vz}[F(\vw, a, b, \alpha^*(\vw,a,b); \vz)] \\
        &\text{s.t.,} \quad
        \alpha^*(\vw, a, b) \in \arg\min_{\alpha \in \mathbb{R}} \ -\mathbb{E}_{\vz}[F(\vw, a, b, \alpha; \vz)].
    \end{aligned}
\end{equation*}
In above, $(\vw, a, b)$ is the upper-level variable, and $\alpha$ is the lower-level variable. The lower-level problem is a strongly convex one-dimensional quadratic function with respect to $\alpha$, while the upper-level objective is non-convex and can exhibit unbounded smoothness when using a recurrent neural network or a transformer as the predictive model~\citep{crawshaw2022robustness,zhang2019gradient}.

In our experiment, we focus on tackling an imbalanced text classification task by maximizing the AUC metric. Specifically, we conduct experiments using deep AUC maximization on the imbalanced Sentiment140 dataset~\citep{go2009twitter}, a binary text classification benchmark. Following the approach in \citep{yuan2021large}, we introduce imbalance in the training set using a pre-specified imbalance ratio ($p$) while keeping the test set distribution unchanged. For a given $p$, we randomly remove positive samples (labeled as 1) from the training set until the desired proportion of positive examples is achieved. In our experiment, we set $p$ to 0.8 (0.9), meaning that 80$\%$ (90$\%$) of the training samples are positive examples. We run the experiment using two different models, a two-layer transformer, and a two-layer recurrent neural network (RNN) with the same input dimension of 300,  hidden dimension of 4096, and an output dimension of 2.

To evaluate the effectiveness of our proposed bilevel optimization algorithm, we compare with recent bilevel optimization baselines, including StocBio \citep{ji2021bilevel}, TTSA \citep{hong2023two}, SABA \citep{dagreou2022framework}, MA-SOBA \citep{chen2023optimal}, SUSTAIN~\citep{khanduri2021near}, VRBO \citep{yang2021provably}, BO-REP~\citep{hao2024bilevel}, SLIP \citep{gong2024a}, and AccBO \citep{gong2024accelerated}. The training and testing results of the transformer model over 50 epochs are presented in \cref{fig:auc_rnn} (a) and (b), while the corresponding running times are shown in \cref{fig:auc_rnn} (c) and (d). Our proposed Adam-type algorithms, AdamBO, shows the faster convergence rate and significantly outperform other baselines. In particular, the performance on the training AUC (testing AUC) is better by at least 14\% (7\%) over other baselines. The running time results indicate that AdamBO converges much faster to a high AUC value compared to the other baselines. We also perform the AUC maximization on a RNN model with imbalance rario of 0.8, and the results are presented in \cref{app:rnn_results}.
More detailed parameter tuning and selection can be found in \cref{app:hyper_param}.



\vspace*{-0.02in}
\section{Conclusion}


In this paper, we propose an Adam-type algorithm termed AdamBO for solving bilevel optimization problems under the unbounded smoothness setting. AdamBO is a single-loop algorithm with $\widetilde{O}(\epsilon^{-4})$ oracle complexity to find $\epsilon$-stationary points. We conduct experiments on meta-learning and deep AUC maximization for text classification using transformers. The experimental results demonstrate the superior performance of our proposed method. One limitation of our analysis is that the complexity bound of AdamBO depends on $O(\lambda^{-2})$, which can be large when $\lambda$ is small. However, our empirical sensitivity analysis indicates that AdamBO's performance remains largely unaffected by the choice of $\lambda$ within a reasonable range. In the future, we plan to improve the dependency on $\lambda$ in the complexity bound.





\section*{Impact Statement}

This paper presents work whose goal is to advance the field of Machine Learning. There are many potential societal consequences of our work, none which we feel must be specifically highlighted here.


\bibliography{ref}

\begin{thebibliography}{65}
\providecommand{\natexlab}[1]{#1}
\providecommand{\url}[1]{\texttt{#1}}
\expandafter\ifx\csname urlstyle\endcsname\relax
  \providecommand{\doi}[1]{doi: #1}\else
  \providecommand{\doi}{doi: \begingroup \urlstyle{rm}\Url}\fi

\bibitem[Ahn et~al.(2023)Ahn, Cheng, Song, Yun, Jadbabaie, and
  Sra]{ahn2023linear}
Ahn, K., Cheng, X., Song, M., Yun, C., Jadbabaie, A., and Sra, S.
\newblock Linear attention is (maybe) all you need (to understand transformer
  optimization).
\newblock \emph{arXiv preprint arXiv:2310.01082}, 2023.

\bibitem[Anandalingam \& White(1990)Anandalingam and
  White]{anandalingam1990solution}
Anandalingam, G. and White, D.
\newblock A solution method for the linear static stackelberg problem using
  penalty functions.
\newblock \emph{IEEE Transactions on automatic control}, 35\penalty0
  (10):\penalty0 1170--1173, 1990.

\bibitem[Borsos et~al.(2020)Borsos, Mutny, and Krause]{borsos2020coresets}
Borsos, Z., Mutny, M., and Krause, A.
\newblock Coresets via bilevel optimization for continual learning and
  streaming.
\newblock \emph{Advances in neural information processing systems},
  33:\penalty0 14879--14890, 2020.

\bibitem[Bowman et~al.(2015)Bowman, Angeli, Potts, and
  Manning]{bowman2015large}
Bowman, S.~R., Angeli, G., Potts, C., and Manning, C.~D.
\newblock A large annotated corpus for learning natural language inference.
\newblock \emph{arXiv preprint arXiv:1508.05326}, 2015.

\bibitem[Bracken \& McGill(1973)Bracken and McGill]{bracken1973mathematical}
Bracken, J. and McGill, J.~T.
\newblock Mathematical programs with optimization problems in the constraints.
\newblock \emph{Operations research}, 21\penalty0 (1):\penalty0 37--44, 1973.

\bibitem[Chen et~al.(2021)Chen, Sun, and Yin]{chen2021single}
Chen, T., Sun, Y., and Yin, W.
\newblock A single-timescale stochastic bilevel optimization method.
\newblock \emph{arXiv preprint arXiv:2102.04671}, 2021.

\bibitem[Chen et~al.(2023{\natexlab{a}})Chen, Xiao, and
  Balasubramanian]{chen2023optimal}
Chen, X., Xiao, T., and Balasubramanian, K.
\newblock Optimal algorithms for stochastic bilevel optimization under relaxed
  smoothness conditions.
\newblock \emph{arXiv preprint arXiv:2306.12067}, 2023{\natexlab{a}}.

\bibitem[Chen et~al.(2023{\natexlab{b}})Chen, Zhou, Liang, and
  Lu]{chen2023generalized}
Chen, Z., Zhou, Y., Liang, Y., and Lu, Z.
\newblock Generalized-smooth nonconvex optimization is as efficient as smooth
  nonconvex optimization.
\newblock \emph{arXiv preprint arXiv:2303.02854}, 2023{\natexlab{b}}.

\bibitem[Crawshaw et~al.(2022)Crawshaw, Liu, Orabona, Zhang, and
  Zhuang]{crawshaw2022robustness}
Crawshaw, M., Liu, M., Orabona, F., Zhang, W., and Zhuang, Z.
\newblock Robustness to unbounded smoothness of generalized signsgd.
\newblock \emph{Advances in neural information processing systems}, 2022.

\bibitem[Crawshaw et~al.(2023{\natexlab{a}})Crawshaw, Bao, and
  Liu]{crawshaw2023episode}
Crawshaw, M., Bao, Y., and Liu, M.
\newblock Episode: Episodic gradient clipping with periodic resampled
  corrections for federated learning with heterogeneous data.
\newblock In \emph{The Eleventh International Conference on Learning
  Representations}, 2023{\natexlab{a}}.

\bibitem[Crawshaw et~al.(2023{\natexlab{b}})Crawshaw, Bao, and
  Liu]{crawshaw2023federated}
Crawshaw, M., Bao, Y., and Liu, M.
\newblock Federated learning with client subsampling, data heterogeneity, and
  unbounded smoothness: A new algorithm and lower bounds.
\newblock In \emph{Thirty-seventh Conference on Neural Information Processing
  Systems}, 2023{\natexlab{b}}.

\bibitem[Cutkosky \& Orabona(2019)Cutkosky and Orabona]{cutkosky2019momentum}
Cutkosky, A. and Orabona, F.
\newblock Momentum-based variance reduction in non-convex sgd.
\newblock \emph{Advances in neural information processing systems}, 32, 2019.

\bibitem[Cutler et~al.(2023)Cutler, Drusvyatskiy, and
  Harchaoui]{cutler2023stochastic}
Cutler, J., Drusvyatskiy, D., and Harchaoui, Z.
\newblock Stochastic optimization under distributional drift.
\newblock \emph{Journal of Machine Learning Research}, 24\penalty0
  (147):\penalty0 1--56, 2023.

\bibitem[Dagr{\'e}ou et~al.(2022)Dagr{\'e}ou, Ablin, Vaiter, and
  Moreau]{dagreou2022framework}
Dagr{\'e}ou, M., Ablin, P., Vaiter, S., and Moreau, T.
\newblock A framework for bilevel optimization that enables stochastic and
  global variance reduction algorithms.
\newblock \emph{Advances in Neural Information Processing Systems},
  35:\penalty0 26698--26710, 2022.

\bibitem[De et~al.(2018)De, Mukherjee, and Ullah]{de2018convergence}
De, S., Mukherjee, A., and Ullah, E.
\newblock Convergence guarantees for rmsprop and adam in non-convex
  optimization and an empirical comparison to nesterov acceleration.
\newblock \emph{arXiv preprint arXiv:1807.06766}, 2018.

\bibitem[D{\'e}fossez et~al.(2020)D{\'e}fossez, Bottou, Bach, and
  Usunier]{defossez2020simple}
D{\'e}fossez, A., Bottou, L., Bach, F., and Usunier, N.
\newblock A simple convergence proof of adam and adagrad.
\newblock \emph{arXiv preprint arXiv:2003.02395}, 2020.

\bibitem[Dempe(2002)]{dempe2002foundations}
Dempe, S.
\newblock \emph{Foundations of bilevel programming}.
\newblock Springer Science \& Business Media, 2002.

\bibitem[Devlin et~al.(2018)Devlin, Chang, Lee, and Toutanova]{devlin2018bert}
Devlin, J., Chang, M.-W., Lee, K., and Toutanova, K.
\newblock Bert: Pre-training of deep bidirectional transformers for language
  understanding.
\newblock \emph{arXiv preprint arXiv:1810.04805}, 2018.

\bibitem[Dosovitskiy et~al.(2021)Dosovitskiy, Beyer, Kolesnikov, Weissenborn,
  Zhai, Unterthiner, Dehghani, Minderer, Heigold, Gelly, Uszkoreit, and
  Houlsby]{dosovitskiy2021an}
Dosovitskiy, A., Beyer, L., Kolesnikov, A., Weissenborn, D., Zhai, X.,
  Unterthiner, T., Dehghani, M., Minderer, M., Heigold, G., Gelly, S.,
  Uszkoreit, J., and Houlsby, N.
\newblock An image is worth 16x16 words: Transformers for image recognition at
  scale.
\newblock In \emph{International Conference on Learning Representations}, 2021.

\bibitem[Elman(1990)]{elman1990finding}
Elman, J.~L.
\newblock Finding structure in time.
\newblock \emph{Cognitive science}, 14\penalty0 (2):\penalty0 179--211, 1990.

\bibitem[Faw et~al.(2023)Faw, Rout, Caramanis, and Shakkottai]{faw2023beyond}
Faw, M., Rout, L., Caramanis, C., and Shakkottai, S.
\newblock Beyond uniform smoothness: A stopped analysis of adaptive sgd.
\newblock \emph{arXiv preprint arXiv:2302.06570}, 2023.

\bibitem[Feurer \& Hutter(2019)Feurer and Hutter]{feurer2019hyperparameter}
Feurer, M. and Hutter, F.
\newblock Hyperparameter optimization.
\newblock \emph{Automated machine learning: Methods, systems, challenges}, pp.\
   3--33, 2019.

\bibitem[Finn et~al.(2017)Finn, Abbeel, and Levine]{finn2017model}
Finn, C., Abbeel, P., and Levine, S.
\newblock Model-agnostic meta-learning for fast adaptation of deep networks.
\newblock In \emph{International conference on machine learning}, pp.\
  1126--1135. PMLR, 2017.

\bibitem[Franceschi et~al.(2018)Franceschi, Frasconi, Salzo, Grazzi, and
  Pontil]{franceschi2018bilevel}
Franceschi, L., Frasconi, P., Salzo, S., Grazzi, R., and Pontil, M.
\newblock Bilevel programming for hyperparameter optimization and
  meta-learning.
\newblock In \emph{International conference on machine learning}, pp.\
  1568--1577. PMLR, 2018.

\bibitem[Ghadimi \& Wang(2018)Ghadimi and Wang]{ghadimi2018approximation}
Ghadimi, S. and Wang, M.
\newblock Approximation methods for bilevel programming.
\newblock \emph{arXiv preprint arXiv:1802.02246}, 2018.

\bibitem[Go et~al.(2009)Go, Bhayani, and Huang]{go2009twitter}
Go, A., Bhayani, R., and Huang, L.
\newblock Twitter sentiment classification using distant supervision.
\newblock \emph{CS224N project report, Stanford}, 1\penalty0 (12):\penalty0
  2009, 2009.

\bibitem[Gong et~al.(2024{\natexlab{a}})Gong, Hao, and Liu]{gong2024a}
Gong, X., Hao, J., and Liu, M.
\newblock A nearly optimal single loop algorithm for stochastic bilevel
  optimization under unbounded smoothness.
\newblock In \emph{Forty-first International Conference on Machine Learning},
  2024{\natexlab{a}}.

\bibitem[Gong et~al.(2024{\natexlab{b}})Gong, Hao, and
  Liu]{gong2024accelerated}
Gong, X., Hao, J., and Liu, M.
\newblock An accelerated algorithm for stochastic bilevel optimization under
  unbounded smoothness.
\newblock \emph{arXiv preprint arXiv:2409.19212}, 2024{\natexlab{b}}.

\bibitem[Guo et~al.(2021{\natexlab{a}})Guo, Hu, Zhang, and
  Yang]{guo2021randomized}
Guo, Z., Hu, Q., Zhang, L., and Yang, T.
\newblock Randomized stochastic variance-reduced methods for multi-task
  stochastic bilevel optimization.
\newblock \emph{arXiv preprint arXiv:2105.02266}, 2021{\natexlab{a}}.

\bibitem[Guo et~al.(2021{\natexlab{b}})Guo, Xu, Yin, Jin, and
  Yang]{guo2021novel}
Guo, Z., Xu, Y., Yin, W., Jin, R., and Yang, T.
\newblock A novel convergence analysis for algorithms of the adam family.
\newblock \emph{arXiv preprint arXiv:2112.03459}, 2021{\natexlab{b}}.

\bibitem[Hanley \& McNeil(1982)Hanley and McNeil]{hanley1982meaning}
Hanley, J.~A. and McNeil, B.~J.
\newblock The meaning and use of the area under a receiver operating
  characteristic (roc) curve.
\newblock \emph{Radiology}, 143\penalty0 (1):\penalty0 29--36, 1982.

\bibitem[Hanley \& McNeil(1983)Hanley and McNeil]{hanley1983method}
Hanley, J.~A. and McNeil, B.~J.
\newblock A method of comparing the areas under receiver operating
  characteristic curves derived from the same cases.
\newblock \emph{Radiology}, 148\penalty0 (3):\penalty0 839--843, 1983.

\bibitem[Hao et~al.(2023)Hao, Ji, and Liu]{hao2023bilevelcoreset}
Hao, J., Ji, K., and Liu, M.
\newblock Bilevel coreset selection in continual learning: A new formulation
  and algorithm.
\newblock \emph{Advances in Neural Information Processing Systems}, 36, 2023.

\bibitem[Hao et~al.(2024)Hao, Gong, and Liu]{hao2024bilevel}
Hao, J., Gong, X., and Liu, M.
\newblock Bilevel optimization under unbounded smoothness: A new algorithm and
  convergence analysis.
\newblock In \emph{The Twelfth International Conference on Learning
  Representations}, 2024.

\bibitem[Hong et~al.(2023)Hong, Wai, Wang, and Yang]{hong2023two}
Hong, M., Wai, H.-T., Wang, Z., and Yang, Z.
\newblock A two-timescale stochastic algorithm framework for bilevel
  optimization: Complexity analysis and application to actor-critic.
\newblock \emph{SIAM Journal on Optimization}, 33\penalty0 (1):\penalty0
  147--180, 2023.

\bibitem[Ji et~al.(2021)Ji, Yang, and Liang]{ji2021bilevel}
Ji, K., Yang, J., and Liang, Y.
\newblock Bilevel optimization: Convergence analysis and enhanced design.
\newblock In \emph{International conference on machine learning}, pp.\
  4882--4892. PMLR, 2021.

\bibitem[Jin et~al.(2021)Jin, Zhang, Wang, and Wang]{jin2021non}
Jin, J., Zhang, B., Wang, H., and Wang, L.
\newblock Non-convex distributionally robust optimization: Non-asymptotic
  analysis.
\newblock \emph{Advances in Neural Information Processing Systems},
  34:\penalty0 2771--2782, 2021.

\bibitem[Khanduri et~al.(2021)Khanduri, Zeng, Hong, Wai, Wang, and
  Yang]{khanduri2021near}
Khanduri, P., Zeng, S., Hong, M., Wai, H.-T., Wang, Z., and Yang, Z.
\newblock A near-optimal algorithm for stochastic bilevel optimization via
  double-momentum.
\newblock \emph{Advances in neural information processing systems},
  34:\penalty0 30271--30283, 2021.

\bibitem[Kingma \& Ba(2014)Kingma and Ba]{kingma2014adam}
Kingma, D.~P. and Ba, J.
\newblock Adam: A method for stochastic optimization.
\newblock \emph{International Conference on Learning Representations (ICLR)},
  2014.

\bibitem[Konda \& Tsitsiklis(2000)Konda and Tsitsiklis]{konda2000actor}
Konda, V.~R. and Tsitsiklis, J.~N.
\newblock Actor-critic algorithms.
\newblock In \emph{Advances in neural information processing systems
  (NeurIPS)}, pp.\  1008--1014, 2000.

\bibitem[Kunstner et~al.(2023)Kunstner, Chen, Lavington, and
  Schmidt]{kunstner2023noise}
Kunstner, F., Chen, J., Lavington, J.~W., and Schmidt, M.
\newblock Noise is not the main factor behind the gap between sgd and adam on
  transformers, but sign descent might be.
\newblock \emph{arXiv preprint arXiv:2304.13960}, 2023.

\bibitem[Kwon et~al.(2023)Kwon, Kwon, Wright, and Nowak]{kwon2023fully}
Kwon, J., Kwon, D., Wright, S., and Nowak, R.~D.
\newblock A fully first-order method for stochastic bilevel optimization.
\newblock In \emph{International Conference on Machine Learning}, pp.\
  18083--18113. PMLR, 2023.

\bibitem[Li et~al.(2023{\natexlab{a}})Li, Jadbabaie, and
  Rakhlin]{li2023convergence}
Li, H., Jadbabaie, A., and Rakhlin, A.
\newblock Convergence of adam under relaxed assumptions.
\newblock \emph{arXiv preprint arXiv:2304.13972}, 2023{\natexlab{a}}.

\bibitem[Li et~al.(2023{\natexlab{b}})Li, Qian, Tian, Rakhlin, and
  Jadbabaie]{li2023convex}
Li, H., Qian, J., Tian, Y., Rakhlin, A., and Jadbabaie, A.
\newblock Convex and non-convex optimization under generalized smoothness.
\newblock \emph{arXiv preprint arXiv:2306.01264}, 2023{\natexlab{b}}.

\bibitem[Li \& Roth(2002)Li and Roth]{li-roth-2002-learning}
Li, X. and Roth, D.
\newblock Learning question classifiers.
\newblock In \emph{{COLING} 2002: The 19th International Conference on
  Computational Linguistics}, 2002.
\newblock URL \url{https://www.aclweb.org/anthology/C02-1150}.

\bibitem[Liu et~al.(2020)Liu, Yuan, Ying, and Yang]{liu2019stochastic}
Liu, M., Yuan, Z., Ying, Y., and Yang, T.
\newblock Stochastic auc maximization with deep neural networks.
\newblock \emph{International Conference on Learning Representations}, 2020.

\bibitem[Liu et~al.(2022)Liu, Zhuang, Lei, and Liao]{liu2022communication}
Liu, M., Zhuang, Z., Lei, Y., and Liao, C.
\newblock A communication-efficient distributed gradient clipping algorithm for
  training deep neural networks.
\newblock \emph{Advances in Neural Information Processing Systems},
  35:\penalty0 26204--26217, 2022.

\bibitem[Liu et~al.(2023)Liu, Jagabathula, and Zhou]{liu2023near}
Liu, Z., Jagabathula, S., and Zhou, Z.
\newblock Near-optimal non-convex stochastic optimization under generalized
  smoothness.
\newblock \emph{arXiv preprint arXiv:2302.06032}, 2023.

\bibitem[Raghu et~al.(2019)Raghu, Raghu, Bengio, and Vinyals]{raghu2019rapid}
Raghu, A., Raghu, M., Bengio, S., and Vinyals, O.
\newblock Rapid learning or feature reuse? towards understanding the
  effectiveness of maml.
\newblock \emph{arXiv preprint arXiv:1909.09157}, 2019.

\bibitem[Rajeswaran et~al.(2019)Rajeswaran, Finn, Kakade, and
  Levine]{rajeswaran2019meta}
Rajeswaran, A., Finn, C., Kakade, S.~M., and Levine, S.
\newblock Meta-learning with implicit gradients.
\newblock \emph{Advances in neural information processing systems}, 32, 2019.

\bibitem[Reddi et~al.(2019)Reddi, Kale, and Kumar]{reddi2019convergence}
Reddi, S.~J., Kale, S., and Kumar, S.
\newblock On the convergence of adam and beyond.
\newblock \emph{arXiv preprint arXiv:1904.09237}, 2019.

\bibitem[Reisizadeh et~al.(2023)Reisizadeh, Li, Das, and
  Jadbabaie]{reisizadeh2023variance}
Reisizadeh, A., Li, H., Das, S., and Jadbabaie, A.
\newblock Variance-reduced clipping for non-convex optimization.
\newblock \emph{arXiv preprint arXiv:2303.00883}, 2023.

\bibitem[Vaswani et~al.(2017)Vaswani, Shazeer, Parmar, Uszkoreit, Jones, Gomez,
  Kaiser, and Polosukhin]{vaswani2017attention}
Vaswani, A., Shazeer, N., Parmar, N., Uszkoreit, J., Jones, L., Gomez, A.~N.,
  Kaiser, {\L}., and Polosukhin, I.
\newblock Attention is all you need.
\newblock \emph{Advances in neural information processing systems}, 30, 2017.

\bibitem[Vicente et~al.(1994)Vicente, Savard, and
  J{\'u}dice]{vicente1994descent}
Vicente, L., Savard, G., and J{\'u}dice, J.
\newblock Descent approaches for quadratic bilevel programming.
\newblock \emph{Journal of optimization theory and applications}, 81\penalty0
  (2):\penalty0 379--399, 1994.

\bibitem[Wang et~al.(2022)Wang, Zhang, Zhang, Meng, Ma, Liu, and
  Chen]{wang2022provable}
Wang, B., Zhang, Y., Zhang, H., Meng, Q., Ma, Z.-M., Liu, T.-Y., and Chen, W.
\newblock Provable adaptivity in adam.
\newblock \emph{arXiv preprint arXiv:2208.09900}, 2022.

\bibitem[Wang et~al.(2023)Wang, Zhang, Ma, and Chen]{wang2023convergence}
Wang, B., Zhang, H., Ma, Z., and Chen, W.
\newblock Convergence of adagrad for non-convex objectives: Simple proofs and
  relaxed assumptions.
\newblock In \emph{The Thirty Sixth Annual Conference on Learning Theory}, pp.\
   161--190. PMLR, 2023.

\bibitem[White \& Anandalingam(1993)White and Anandalingam]{white1993penalty}
White, D.~J. and Anandalingam, G.
\newblock A penalty function approach for solving bi-level linear programs.
\newblock \emph{Journal of Global Optimization}, 3:\penalty0 397--419, 1993.

\bibitem[Yang et~al.(2021)Yang, Ji, and Liang]{yang2021provably}
Yang, J., Ji, K., and Liang, Y.
\newblock Provably faster algorithms for bilevel optimization.
\newblock \emph{Advances in Neural Information Processing Systems},
  34:\penalty0 13670--13682, 2021.

\bibitem[Ying et~al.(2016)Ying, Wen, and Lyu]{ying2016stochastic}
Ying, Y., Wen, L., and Lyu, S.
\newblock Stochastic online auc maximization.
\newblock In \emph{Advances in Neural Information Processing Systems}, pp.\
  451--459, 2016.

\bibitem[Yuan et~al.(2021)Yuan, Yan, Sonka, and Yang]{yuan2021large}
Yuan, Z., Yan, Y., Sonka, M., and Yang, T.
\newblock Large-scale robust deep auc maximization: A new surrogate loss and
  empirical studies on medical image classification.
\newblock In \emph{Proceedings of the IEEE/CVF International Conference on
  Computer Vision}, pp.\  3040--3049, 2021.

\bibitem[Zhang et~al.(2020{\natexlab{a}})Zhang, Jin, Fang, and
  Wang]{zhang2020improved}
Zhang, B., Jin, J., Fang, C., and Wang, L.
\newblock Improved analysis of clipping algorithms for non-convex optimization.
\newblock \emph{Advances in Neural Information Processing Systems},
  2020{\natexlab{a}}.

\bibitem[Zhang et~al.(2019)Zhang, Karimireddy, Veit, Kim, Reddi, Kumar, and
  Sra]{zhang2019adaptive}
Zhang, J., Karimireddy, S.~P., Veit, A., Kim, S., Reddi, S.~J., Kumar, S., and
  Sra, S.
\newblock Why are adaptive methods good for attention models?
\newblock \emph{arXiv preprint arXiv:1912.03194}, 2019.

\bibitem[Zhang et~al.(2020{\natexlab{b}})Zhang, He, Sra, and
  Jadbabaie]{zhang2019gradient}
Zhang, J., He, T., Sra, S., and Jadbabaie, A.
\newblock Why gradient clipping accelerates training: A theoretical
  justification for adaptivity.
\newblock \emph{International Conference on Learning Representations},
  2020{\natexlab{b}}.

\bibitem[Zhang et~al.(2022)Zhang, Chen, Shi, Sun, and Luo]{zhang2022adam}
Zhang, Y., Chen, C., Shi, N., Sun, R., and Luo, Z.-Q.
\newblock Adam can converge without any modification on update rules.
\newblock \emph{Advances in neural information processing systems},
  35:\penalty0 28386--28399, 2022.

\bibitem[Zhou et~al.(2018)Zhou, Zhang, Lu, Wang, Zhang, and
  Yu]{zhou2018adashift}
Zhou, Z., Zhang, Q., Lu, G., Wang, H., Zhang, W., and Yu, Y.
\newblock Adashift: Decorrelation and convergence of adaptive learning rate
  methods.
\newblock \emph{arXiv preprint arXiv:1810.00143}, 2018.

\end{thebibliography}
\bibliographystyle{icml2025}

\newpage
\appendix
\onecolumn



\newpage

\section{Equivalent Update Rule of AdamBO (\Cref{alg:bi-adam})}
\label{app:equiv_biadam}
\begin{algorithm}[!t]
    \caption{\textsc{SGD}} \label{alg:sgd}
    \begin{algorithmic}[1]
        \STATE \textbf{Input:} $x, y_0, \gamma, T_0$ \hfill \# \texttt{SGD}$(x, y_0, \gamma, T_0)$
        \STATE \textbf{Initialize} $y_0^{\init} = y_0$
        \FOR{$t=0, 1, \dots, T_0-1$}
            \STATE Sample $\pi_t$ from distribution $\gD_g$
            \STATE $y_{t+1}^{\init} = y_t^{\init} - \gamma\gdy G(x, y_t^{\init};\pi_t)$
        \ENDFOR
    \end{algorithmic}
\end{algorithm}


\begin{algorithm}[!t]
    \caption{\textsc{AdamBO} (Equivalent update rule of \Cref{alg:bi-adam})} \label{alg:bi-adam-equiv}
    \begin{algorithmic}[1]
        \STATE \textbf{Input:} $\beta,\betasq,\eta,\gamma,\lambda, T_0, T, x_1, y_0$
        \STATE \textbf{Initialize} $y_1=\texttt{SGD}(x_1, y_0, \gamma, T_0)$, $\hm_1=\hatphi(x_1,y_1;\Bar{\xi}_1)$ and $\hv_1=(\hatphi(x_1,y_1;\Bar{\xi}_1))^2$
	\FOR{$t=1,\dots,T$}
            \STATE $\alpha_t = \frac{\beta}{1-(1-\beta)^t}$, $\alphasqt = \frac{\betasq}{1-(1-\betasq)^t}$
            \STATE Draw new samples and perform the following updates
            \STATE $y_{t+1} = y_t - \gamma\gdy G(x_t,y_t;\zeta_t)$
    	\STATE $\hm_t = (1-\alpha_t)\hm_{t-1} + \alpha_t \hatphi(x_t,y_t;\Bar{\xi}_t)$
    	\STATE $\hv_t=(1-\alphasqt)\hv_{t-1}+\alphasqt (\hatphi(x_t,y_t;\Bar{\xi}_t))^2$
    	\STATE $x_{t+1} = x_t-\frac{\eta}{\sqrt{\hv_t}+\lambda}\odot \hm_t$
	\ENDFOR
    \end{algorithmic}
\end{algorithm}


In this section, we aim to provide a simplified version of the bias correction steps (lines 7-8) of \cref{alg:bi-adam}. Inspired by \citep[Appendix C.1]{li2023convergence}, we present an equivalent yet simpler update rule of \cref{alg:bi-adam} in the following \cref{prop:bi-adam-equiv}. The detailed equivalent framework is also outlined in \Cref{alg:bi-adam-equiv}.

\begin{proposition} \label{prop:bi-adam-equiv}
Let $\alpha_t = \frac{\beta}{1-(1-\beta)^t}$ and $\alphasqt = \frac{\betasq}{1-(1-\betasq)^t}$. Then the update rule in Bi-Adam (\Cref{alg:bi-adam}) is equivalent to that in \Cref{alg:bi-adam-equiv}:
\begin{equation} \label{eq:adam-equiv}
    \begin{aligned}
        &y_{t+1} = y_t - \gamma\gdy G(x_t,y_t;\zeta_t), \\
        &\hm_t = (1-\alpha_t)\hm_{t-1} + \alpha_t \hatphi(x_t,y_t;\Bar{\xi}_t), \\
        &\hv_t = (1-\alphasqt)\hv_{t-1}+\alphasqt (\hatphi(x_t,y_t;\Bar{\xi}_t))^2, \\
        &x_{t+1} = x_t-\frac{\eta}{\sqrt{\hv_t}+\lambda}\odot \hm_t,
    \end{aligned}
\end{equation}
where initially we set $\hm_1=\hatphi(x_1,y_1;\Bar{\xi}_1)$ and $\hv_1=(\hatphi(x_1,y_1;\Bar{\xi}_1))^2$. There is no need to define $\hm_0$ and $\hv_0$ since $1-\alpha_1=1-\alphasq_1=0$.
\end{proposition}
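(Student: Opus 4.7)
The plan is to prove Proposition A.1 by a direct algebraic manipulation (really just an induction on $t$) that rewrites the two-step procedure of first updating the biased quantities $m_t, v_t$ and then applying bias correction as a single-step recursion on the already bias-corrected quantities $\hat{m}_t, \hat{v}_t$. The two recursions are completely decoupled (one involves $\beta$, the other $\beta_{\sq}$) and have the same structure, so I would prove them in parallel; the $y_{t+1}$ and $x_{t+1}$ updates are identical in the two algorithms and require no argument.

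First, I would handle the base case $t=1$. In Algorithm 1 one has $m_1 = \beta \hat{\nabla}\phi(x_1,y_1;\bar{\xi}_1)$ (with $m_0=0$) so $\hat{m}_1 = m_1/(1-(1-\beta)) = \hat{\nabla}\phi(x_1,y_1;\bar{\xi}_1)$, which matches the initialization in Algorithm 3; and since $\alpha_1 = \beta/(1-(1-\beta)) = 1$, the Algorithm 3 recursion at $t=1$ reads $\hat{m}_1 = 0\cdot \hat{m}_0 + 1\cdot \hat{\nabla}\phi(x_1,y_1;\bar{\xi}_1)$, which is why $\hat{m}_0$ never has to be defined. The same remark handles $\hat{v}_1$ with $\alpha^{\sq}_1 = 1$.

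For the inductive step, I would use the definitional identity $m_t = (1-(1-\beta)^t)\hat{m}_t$. Substituting this on both sides of $m_t = (1-\beta)m_{t-1} + \beta\,\hat{\nabla}\phi(x_t,y_t;\bar{\xi}_t)$ and dividing by $1-(1-\beta)^t$ gives
\begin{equation*}
\hat{m}_t = \frac{(1-\beta)\bigl(1-(1-\beta)^{t-1}\bigr)}{1-(1-\beta)^t}\,\hat{m}_{t-1} + \frac{\beta}{1-(1-\beta)^t}\,\hat{\nabla}\phi(x_t,y_t;\bar{\xi}_t).
\end{equation*}
The key identity I would then verify is
\begin{equation*}
\frac{(1-\beta)\bigl(1-(1-\beta)^{t-1}\bigr)}{1-(1-\beta)^t} \;=\; \frac{1-(1-\beta)^t - \beta}{1-(1-\beta)^t} \;=\; 1 - \alpha_t,
\end{equation*}
which is immediate from expanding $(1-\beta)(1-(1-\beta)^{t-1}) = (1-\beta) - (1-\beta)^t$. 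This yields exactly line 7 of Algorithm 3. Repeating the identical argument with $\beta$ replaced by $\beta_{\sq}$ and $\hat{\nabla}\phi$ replaced by $(\hat{\nabla}\phi)^2$ proves the corresponding identity for $\hat{v}_t$.

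There is no real obstacle here; the only thing worth being careful about is the $t=1$ boundary, where the induction needs to start with initialized values rather than with an artificial $\hat{m}_0,\hat{v}_0$. Since $\alpha_1 = \alpha^{\sq}_1 = 1$, the coefficient multiplying $\hat{m}_{t-1}$ (resp.\ $\hat{v}_{t-1}$) at $t=1$ is zero, so the recursion in Algorithm 3 is well-posed without specifying those quantities — exactly as noted in the statement of the proposition. Once the base case and inductive step are in place, equivalence of the full algorithms follows because the $x_{t+1}$ update in both algorithms uses the same $\hat{m}_t,\hat{v}_t$, and the $y_{t+1}$ update is identical.
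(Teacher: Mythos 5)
Your proof is correct and essentially identical to the paper's: both substitute $m_t = (1-(1-\beta)^t)\hat{m}_t$ into the biased recursion, divide through, and use the same key algebraic identity $(1-\beta)(1-(1-\beta)^{t-1}) = 1-(1-\beta)^t - \beta$ to recover the $(1-\alpha_t)$ coefficient, then check the $t=1$ base case from $m_1 = \beta\,\hat{\nabla}\phi(x_1,y_1;\bar{\xi}_1)$. The only cosmetic difference is that you label the argument as an induction while the paper presents it as a direct algebraic rewrite, but the content is the same.
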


\begin{proof}[Proof of \Cref{prop:bi-adam-equiv}]
We follow the same proof as in \citep[Proposition E.1]{li2023convergence}, but replace the stochastic gradient $\nabla f(x_t,\xi_t)$ in \citep{li2023convergence} with the stochastic hypergradient estimator $\hatphi(x_t,y_t;\Bar{\xi}_t)$ in our setting. We still provide the proof here for completeness.

Let $Z_t = 1-(1-\beta)^t$. Then we know that $\alpha_t=\beta/Z_t$ and $m_t=Z_t\hm_t$. By line 6 of \Cref{alg:bi-adam} (the momentum update rule for $m_t$), we have
\begin{equation*}
    Z_t\hm_t = (1-\beta)Z_{t-1}\hm_{t-1} + \beta\hatphi(x_t,y_t;\Bar{\xi}_t).
\end{equation*}
Note that $Z_t$ satisfies the following property
\begin{equation*}
    (1-\beta)Z_{t-1} = 1-\beta - (1-\beta)^t = Z_t - \beta.
\end{equation*}
Then we have
\begin{equation*}
    \begin{aligned}
        \hm_t 
        &= \frac{Z_t-\beta}{Z_t}\hm_{t-1} + \frac{\beta}{Z_t}\hatphi(x_t,y_t;\Bar{\xi}_t) \\
        &= (1-\alpha_t)\hm_{t-1} + \alpha_t\hatphi(x_t,y_t;\Bar{\xi}_t).
    \end{aligned}
\end{equation*}
Next, we verify the initial condition. By \Cref{alg:bi-adam}, since we set $m_0=0$, then we have $m_1=\beta\hatphi(x_1,y_1;\Bar{\xi}_1)$. Therefore, we have $\hm_1=m_1/Z_1=\hatphi(x_1,y_1;\Bar{\xi}_1)$ since $Z_1=\beta$. Then the proof is completed by applying the same analysis on $v_t$ and $\hv_t$.
\end{proof}

\section{Technical Lemmas}
\label{app:tech_lemmas}
In this section, we present several useful algebraic facts (\cref{sec:algebraic-facts}), probabilistic lemmas (\cref{sec:prob-lemma}), and auxiliary lemmas for bilevel optimization under the unbounded smoothness setting (\cref{sec:auxiliary-bilevel}).

\subsection{Useful Algebraic Facts}
\label{sec:algebraic-facts}

In this section, we will frequently use $\alpha_t$ and $\alphasqt$, so we restate their definitions here for the reader's convenience:
\begin{equation} \label{eq:alphat-def}
    \alpha_t = \frac{\beta}{1-(1-\beta)^t}
    \quad\quad\text{and}\quad\quad
    \alphasqt = \frac{\betasq}{1-(1-\betasq)^t}.
\end{equation}

The following two lemmas, i.e., \cref{lm:prod-diff,lm:sum-geometric-sequence}, are useful for bounding the norm of the difference between Neumann series approximation matrices in \cref{sec:auxiliary-bilevel}.

\begin{lemma} \label{lm:prod-diff}
For any matrix sequences $\{A_i\}_{i=1}^{k}$ and $\{B_i\}_{i=1}^{k}$ (where $k\geq 1$), it holds that
\begin{equation*}
    \left\|\prod_{i=1}^{k}A_i - \prod_{i=1}^{k}B_i\right\| = \sum_{i=1}^{k}\|B_1\|\cdots \|B_{i-1}\|\|A_i-B_i\|\|A_{i+1}\|\cdots \|A_k\|,
\end{equation*}
where we use the convention $A_{k+1}=B_0=I$.
\end{lemma}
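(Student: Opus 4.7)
The plan is to establish, by induction on $k$, the underlying telescoping identity
\[
\prod_{i=1}^{k} A_i - \prod_{i=1}^{k} B_i \;=\; \sum_{i=1}^{k} B_1 \cdots B_{i-1}\,(A_i - B_i)\,A_{i+1} \cdots A_k,
\]
with the conventions $A_{k+1} = B_0 = I$, and then to take operator norms of both sides, applying the triangle inequality to the sum and submultiplicativity to each of the products $B_1 \cdots B_{i-1}$ and $A_{i+1} \cdots A_k$. The right-hand side of the stated display is exactly what this final step produces.

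For the base case $k=1$ the identity reads $A_1 - B_1 = A_1 - B_1$ (the conventions render the boundary products trivial), and taking norms gives $\|A_1 - B_1\|$, which agrees with the single-term sum on the right (again, $B_0 = A_2 = I$ have unit norm or are absent per the convention).

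For the inductive step, I would factor out the first index by writing
\[
\prod_{i=1}^{k} A_i - \prod_{i=1}^{k} B_i \;=\; (A_1 - B_1)\prod_{i=2}^{k} A_i \;+\; B_1\Bigl(\prod_{i=2}^{k} A_i - \prod_{i=2}^{k} B_i\Bigr),
\]
then apply the inductive hypothesis to the second bracket (a product of $k-1$ matrices after reindexing). Combining the resulting $k-1$ terms with the explicit first term and pulling $B_1$ inside the sum gives the full telescoping identity at level $k$. Taking $\|\cdot\|$ and applying triangle inequality plus submultiplicativity produces the displayed right-hand side as an upper bound on the left-hand side, which is how this expression is used in the later appendix arguments (e.g., to bound the Neumann-series approximation error). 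I would also briefly note that the identity, being term-by-term, is independent of the choice of (any submultiplicative) norm, so the derivation goes through for operator/spectral norm without modification.

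The argument involves no genuine obstacle: the only care needed is bookkeeping at the boundary, specifically checking that the conventions $A_{k+1} = B_0 = I$ correctly account for the first ($i=1$) and last ($i=k$) terms of the sum, and that the reindexing after invoking the inductive hypothesis aligns the product $B_1 \cdot (B_2 \cdots B_{i-1})$ into the single product $B_1 \cdots B_{i-1}$ appearing in the target expression.
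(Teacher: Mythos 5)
Your proposal is correct and takes essentially the same route as the paper: establish the telescoping identity $\prod A_i - \prod B_i = \sum_i B_1\cdots B_{i-1}(A_i-B_i)A_{i+1}\cdots A_k$ and then apply the triangle inequality together with submultiplicativity of the operator norm; the paper simply writes out the telescoping expansion in full rather than packaging it as an induction. You also correctly read the displayed relation as an upper bound rather than an equality, which is what the proof actually delivers and how the lemma is later used (the ``$=$'' in the statement is a typo for ``$\leq$'').
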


\begin{proof}[Proof of \cref{lm:prod-diff}]
It is easy to check that
\begin{equation*}
    \begin{aligned}
        \prod_{i=1}^{k}A_i - \prod_{i=1}^{k}B_i
        &= A_1\cdots A_k - B_1\cdots B_k \\
        &= (A_1-B_1)A_2\cdots A_k + B_1(A_2-B_2)A_3\cdots A_k + \cdots + B_1\cdots B_{k-1}(A_k-B_k) \\
        &= \sum_{i=1}^{k}B_1\cdots B_{i-1}(A_i-B_i)A_{i+1}\cdots A_k,
    \end{aligned}
\end{equation*}
where we set $A_{k+1}=B_0=I$ in the last equality. The result follows by noting that the operator norm is submultiplicative.
\end{proof}


\begin{lemma} \label{lm:sum-geometric-sequence}
For any $Q\geq 1$ and $a\in(0,1)$, we have
\begin{equation*}
    \begin{aligned}
        \sum_{q=0}^{Q-1}q\cdot a^{q-1} \leq \frac{1}{(1-a)^2}.
    \end{aligned}
\end{equation*}
\end{lemma}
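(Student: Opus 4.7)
The plan is to recognize the sum $\sum_{q=0}^{Q-1} q \cdot a^{q-1}$ as a truncation of a classical power series whose closed form I already know. Specifically, starting from the geometric series identity $\sum_{q=0}^{\infty} a^q = \frac{1}{1-a}$, which is valid for $a \in (0,1)$, I would differentiate both sides with respect to $a$ term-by-term (legal because of uniform convergence on compact subsets of $(-1,1)$) to obtain
\begin{equation*}
    \sum_{q=1}^{\infty} q \cdot a^{q-1} = \frac{1}{(1-a)^2}.
\end{equation*}
The $q=0$ term contributes nothing, so the same identity holds with the sum starting from $q=0$.

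Next, I would observe that every term of the series is nonnegative, since $a \in (0,1)$ implies $a^{q-1} > 0$ and the coefficient $q$ is nonnegative. Hence the partial sum over $q = 0, \ldots, Q-1$ is bounded above by the full infinite series:
\begin{equation*}
    \sum_{q=0}^{Q-1} q \cdot a^{q-1} \le \sum_{q=0}^{\infty} q \cdot a^{q-1} = \frac{1}{(1-a)^2},
\end{equation*}
which is the desired inequality.

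There is essentially no obstacle here; the only subtlety is justifying term-by-term differentiation of a power series on its open disk of convergence, which is a standard fact from real analysis. If I wanted to avoid differentiation entirely, an alternative would be the elementary manipulation $S_Q - a S_Q = \sum_{q=0}^{Q-1} a^q - Q a^{Q-1}$, giving a closed form for $S_Q$ that is visibly $\le 1/(1-a)^2$; but the differentiation argument is the cleanest and shortest, so I would present that one.
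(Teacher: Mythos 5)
Your proof is correct. The key observation — every term $q\cdot a^{q-1}$ is nonnegative for $a\in(0,1)$, so the finite partial sum is dominated by the full infinite series $\sum_{q=0}^{\infty}q\cdot a^{q-1}=1/(1-a)^2$ — immediately gives the bound, and the closed form of the infinite series follows from term-by-term differentiation of the geometric series on $(-1,1)$.

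The paper takes the other route you mention in passing: it writes down the exact closed form of the \emph{finite} sum,
\begin{equation*}
    \sum_{q=0}^{Q-1}q\cdot a^{q-1} = \frac{1-Qa^{Q-1}+(Q-1)a^Q}{(1-a)^2},
\end{equation*}
then uses $a^Q\leq a^{Q-1}$ to get $1-Qa^{Q-1}+(Q-1)a^Q\leq 1-a^{Q-1}\leq 1$. Both arguments are elementary and short. Your infinite-series version avoids carrying the exact finite formula but implicitly relies on the differentiation fact; the paper's version is self-contained algebra once you accept the finite closed form. Since only the upper bound $1/(1-a)^2$ is used downstream (in bounding $\|P-P^*\|$ in Lemma B.8), neither approach gives anything the other loses, and your choice to present the shorter infinite-series argument is reasonable.
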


\begin{proof}[Proof of \Cref{lm:sum-geometric-sequence}]
We obtain the result by simple calculation:
\begin{equation*}
    \begin{aligned}
        \sum_{q=0}^{Q-1}q\cdot a^{q-1}
        &= \frac{1-Qa^{Q-1}+(Q-1)a^Q}{(1-a)^2}
        \leq \frac{1-Qa^{Q-1}+(Q-1)a^{Q-1}}{(1-a)^2} \\
        &= \frac{1-a^{Q-1}}{(1-a)^2} 
        \leq \frac{1}{(1-a)^2}.
    \end{aligned}
\end{equation*}
\end{proof}

The next four lemmas, \cref{lm:dj-def,lm:algebra-log,lm:algebra-talphat,lm:algebra-sum-beta-alphai}, are useful for controlling the lower-level estimation error and for proving the randomness decoupling lemma (i.e., \cref{lm:any-sequence-main}) in \cref{app:any_sequence}.

\begin{lemma} \label{lm:dj-def}
For any $t\geq 1$, define $\{d_{t,j}\}_{j=0}^{t}$ as the following:
\begin{equation} \label{eq:dj-def}
    d_{t,j} = 
    \begin{cases}
        \prod_{i=1}^{t}(1-\alpha_i), & j=0 \\
        \alpha_j\prod_{i=j+1}^{t}(1-\alpha_i), & 1\leq j \leq t-1 \\
        \alpha_t, & j=t.
    \end{cases}
\end{equation}
Then $\{d_{t,j}\}_{j=0}^{t}$ has the following properties:
\begin{itemize}
    \item For $j=0$, $d_{t,j}=0$.
    \item For $1\leq j \leq t$, $d_{t,j}=\alpha_t(1-\beta)^{t-j}$.
    \item $\sum_{j=0}^{t}d_{t,j} = \sum_{j=1}^{t}d_{t,j} = 1$.
\end{itemize}
\end{lemma}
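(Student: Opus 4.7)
The plan is to prove the three claimed properties of $\{d_{t,j}\}$ by direct computation from the definitions~\eqref{eq:dj-def} and~\eqref{eq:alphat-def}, with telescoping as the central trick.

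First, for the claim $d_{t,0}=0$: I would simply observe that $\alpha_1=\beta/(1-(1-\beta)^1)=1$, so $1-\alpha_1=0$, which kills the product $\prod_{i=1}^{t}(1-\alpha_i)$ for every $t\geq 1$.

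Next, for the closed form $d_{t,j}=\alpha_t(1-\beta)^{t-j}$ when $1\leq j\leq t$: the $j=t$ case is immediate since $(1-\beta)^0=1$. For $1\leq j\leq t-1$ I would rewrite
\[
1-\alpha_i \;=\; 1-\frac{\beta}{1-(1-\beta)^i} \;=\; \frac{(1-\beta)-(1-\beta)^i}{1-(1-\beta)^i} \;=\; (1-\beta)\cdot\frac{1-(1-\beta)^{i-1}}{1-(1-\beta)^i},
\]
and then take the product from $i=j+1$ to $t$. The factors of $(1-\beta)$ combine into $(1-\beta)^{t-j}$, while the remaining fractions telescope so that only the numerator $1-(1-\beta)^j$ at $i=j+1$ and the denominator $1-(1-\beta)^t$ at $i=t$ survive. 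Multiplying by $\alpha_j=\beta/(1-(1-\beta)^j)$ cancels the factor $1-(1-\beta)^j$ and leaves $\beta/(1-(1-\beta)^t)\cdot(1-\beta)^{t-j}=\alpha_t(1-\beta)^{t-j}$, as desired.

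Finally, for the normalization $\sum_{j=0}^{t}d_{t,j}=1$: using $d_{t,0}=0$ and the closed form just established, the sum reduces to a finite geometric series
\[
\sum_{j=1}^{t}\alpha_t(1-\beta)^{t-j} \;=\; \alpha_t\sum_{k=0}^{t-1}(1-\beta)^k \;=\; \alpha_t\cdot\frac{1-(1-\beta)^t}{\beta} \;=\; 1,
\]
by the definition of $\alpha_t$. There is no real obstacle here; the whole lemma is a careful bookkeeping exercise, and the only mildly nontrivial step is identifying the correct telescoping form of $1-\alpha_i$ before multiplying out the product.
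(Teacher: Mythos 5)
Your proof is correct and follows essentially the same route as the paper: observe $\alpha_1=1$ for the $j=0$ case, rewrite $1-\alpha_i=(1-\beta)\tfrac{1-(1-\beta)^{i-1}}{1-(1-\beta)^i}$ and telescope for the closed form, and sum the resulting geometric series for the normalization. Nothing further to add.
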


\begin{proof}[Proof of \Cref{lm:dj-def}]
Recall the definition of $\alpha_t$ in \Cref{alg:bi-adam-equiv}, we have
\begin{equation*}
    \alpha_t = \frac{\beta}{1-(1-\beta)^t}
    \quad\quad\text{and}\quad\quad
    1-\alpha_t = \frac{1-(1-\beta)^{t-1}}{1-(1-\beta)^t}(1-\beta).
\end{equation*}
It is obvious to see $\alpha_1=1$, then for $j=0$ we have
\begin{equation*}
    d_{t,0} = \prod_{i=1}^{t}(1-\alpha_i) = (1-\alpha_t)\cdots(1-\alpha_1) = 0.
\end{equation*}
For $1\leq j \leq t-1$ we have
\begin{equation*}
    \begin{aligned}
        d_{t,j} 
        = \alpha_j\prod_{i=j+1}^{t}(1-\alpha_i) = \frac{\beta}{1-(1-\beta)^j}\prod_{i=j+1}^{t}\frac{1-(1-\beta)^{i-1}}{1-(1-\beta)^i}(1-\beta) 
        = \alpha_t(1-\beta)^{t-j}.
    \end{aligned}
\end{equation*}
For $j=t$ we have
\begin{equation*}
    d_{t,t} = \alpha_t = \frac{\beta}{1-(1-\beta)^t} = \alpha_t(1-\beta)^{t-t}.
\end{equation*}
For the last result of the lemma, we have
\begin{equation*}
    \sum_{j=0}^{t}d_{t,j} = \sum_{j=1}^{t}d_{t,j} = \sum_{j=1}^{t}\alpha_t(1-\beta)^{t-j} = \frac{1-(1-\beta)^t}{\beta}\alpha_t = 1,
\end{equation*}
where we use $d_{t,0}=0$ in the first equality.
\end{proof}


\begin{lemma} \label{lm:algebra-log}
For any $x\in(0,1]$, we have
\begin{equation*}
    1-\frac{1}{x} \leq \ln x \leq x-1.
\end{equation*}
Consequently, for any $\beta\in[0,1)$ we have
\begin{equation*}
    \begin{aligned}
        -\frac{\beta}{1-\beta} \leq \ln(1-\beta) \leq -\beta 
        \quad\quad\text{and}\quad\quad
        \beta \leq -\ln(1-\beta) \leq \frac{\beta}{1-\beta}.
    \end{aligned}
\end{equation*}
\end{lemma}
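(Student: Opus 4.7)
The plan is to first establish the two-sided bound on $\ln x$ for $x \in (0,1]$ by a standard calculus argument, then derive the two displayed consequences by the substitution $x = 1-\beta$.

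For the upper bound $\ln x \leq x-1$, I would define $h(x) = x - 1 - \ln x$ on $(0,\infty)$. Then $h'(x) = 1 - 1/x$, so $h'(x) < 0$ on $(0,1)$, $h'(1) = 0$, and $h'(x) > 0$ on $(1,\infty)$; hence $x=1$ is the unique global minimizer of $h$. Since $h(1) = 0$, we conclude $h(x) \geq 0$ for all $x > 0$, which gives $\ln x \leq x - 1$ (in particular on $(0,1]$). For the lower bound $1 - 1/x \leq \ln x$, the cleanest route is to apply the upper bound already proved to the point $1/x$ (which is in $[1,\infty) \subset (0,\infty)$ whenever $x \in (0,1]$): this yields $\ln(1/x) \leq 1/x - 1$, and since $\ln(1/x) = -\ln x$, rearranging gives $\ln x \geq 1 - 1/x$.

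For the consequences, I would plug $x = 1-\beta$ with $\beta \in [0,1)$ (so $x \in (0,1]$) into the two-sided bound. The upper bound becomes $\ln(1-\beta) \leq (1-\beta) - 1 = -\beta$, and the lower bound becomes $\ln(1-\beta) \geq 1 - 1/(1-\beta) = -\beta/(1-\beta)$. Negating both inequalities (and reversing directions) then yields $\beta \leq -\ln(1-\beta) \leq \beta/(1-\beta)$.

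There is no real obstacle here: the proof is entirely routine one-variable calculus, and the only subtle point is remembering to apply the upper bound to $1/x$ rather than trying to set up a separate monotonicity argument for the lower bound. I would keep the write-up short and invoke these as textbook facts.
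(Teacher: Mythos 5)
Your proof is correct and complete. The paper itself omits the proof of this lemma, stating only that it is a well-known logarithm inequality, so there is no paper argument to compare against; your calculus argument (minimizing $h(x)=x-1-\ln x$ for the upper bound, then applying it to $1/x$ for the lower bound, then substituting $x=1-\beta$) is a clean and standard way to fill that gap.
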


\begin{proof}[Proof of \cref{lm:algebra-log}]
This is a well-known logarithm inequality, so we omit the proof here.
\end{proof}


\begin{lemma} \label{lm:algebra-talphat}
For any $t\geq 1$, we have
\begin{equation*}
    \begin{aligned}
        t\alpha_t(1-\beta)^{t-1} \leq 1.
    \end{aligned}
\end{equation*}
\end{lemma}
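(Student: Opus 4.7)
The plan is to reduce the inequality to a familiar geometric-series comparison. Substituting the definition $\alpha_t = \beta/(1-(1-\beta)^t)$ (from \eqref{eq:alphat-def}), the claim $t\alpha_t(1-\beta)^{t-1} \le 1$ is equivalent to
\begin{equation*}
t\beta(1-\beta)^{t-1} \;\le\; 1 - (1-\beta)^t.
\end{equation*}
Setting $u := 1-\beta \in [0,1)$ and using the standard factorization $1-u^t = (1-u)\sum_{k=0}^{t-1} u^k$, the right-hand side becomes $(1-u)\sum_{k=0}^{t-1}u^k$, so after canceling the common factor $(1-u)=\beta>0$ the inequality reduces further to
\begin{equation*}
t\,u^{t-1} \;\le\; \sum_{k=0}^{t-1} u^k.
\end{equation*}

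Next I would finish with the trivial monotonicity observation: since $u\in[0,1)$ and $k\le t-1$, we have $u^{t-1} \le u^k$ for every $k=0,1,\ldots,t-1$. Summing these $t$ inequalities yields $t\,u^{t-1} \le \sum_{k=0}^{t-1}u^k$, which is exactly what is needed. Unwinding the substitutions gives the stated bound.

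Finally, I would briefly check the degenerate cases so that the reader is not left wondering: if $\beta=0$ the definition of $\alpha_t$ is vacuous (and the algorithm requires $\beta\in(0,1)$), while if $\beta=1$ we get $\alpha_1=1$ and $(1-\beta)^{t-1}=0$ for $t\ge 2$, so both sides match at $t=1$ and the left side is $0$ afterwards. The entire argument is a few lines of elementary algebra, and I do not anticipate any real obstacle — the only step requiring slight care is ensuring $\beta>0$ before dividing by $(1-u)$, and this is guaranteed by the parameter ranges chosen in \cref{thm:main}.
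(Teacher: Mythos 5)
Your proof is correct and is a genuinely different (and arguably cleaner) argument than the paper's. The paper treats $t$ as a continuous variable, defines $f(t)=\beta t(1-\beta)^{t-1}/(1-(1-\beta)^t)$, differentiates it, and reduces $f'(t)\le 0$ to the sign of an auxiliary function $g(t)=1-(1-\beta)^t+t\ln(1-\beta)$, which in turn is controlled via the logarithm inequality of \cref{lm:algebra-log}; the conclusion is $f(t)\le f(1)=1$. Your proof instead clears the denominator, invokes the finite geometric-series factorization $1-u^t=(1-u)\sum_{k=0}^{t-1}u^k$ with $u=1-\beta$, cancels $1-u=\beta>0$, and finishes by noting that each of the $t$ summands $u^k$ dominates $u^{t-1}$. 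This avoids calculus and the dependence on \cref{lm:algebra-log} entirely, and is tailored to the integer range $t\ge 1$ that the lemma actually covers; the paper's derivative argument has the mild virtue of establishing monotonicity of $f$ over all real $t\ge 1$, but that extra generality is not used anywhere. Your handling of the edge cases $\beta\in\{0,1\}$ is sensible and matches the parameter regime $\beta\in(0,1)$ assumed throughout.
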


\begin{proof}[Proof of \Cref{lm:algebra-talphat}]
By definition of $\alpha_t$, we have
\begin{equation*}
    t\alpha_t(1-\beta)^{t-1} = \frac{\beta t(1-\beta)^{t-1}}{1-(1-\beta)^t}.
\end{equation*}
Let $f : \R \rightarrow \R$ be
\begin{equation*}
    f(t) = \frac{\beta t(1-\beta)^{t-1}}{1-(1-\beta)^t}.
\end{equation*}
Then we have
\begin{equation*}
    f'(t) = \frac{\beta(1-\beta)^{t-1}}{(1-(1-\beta)^t)^2}(1 - (1-\beta)^t + t\ln(1-\beta)).
\end{equation*}
Let $g : \R \rightarrow \R$ be
\begin{equation*}
    g(t) = 1 - (1-\beta)^t + t\ln(1-\beta).
\end{equation*}
Then we have
\begin{equation*}
    g'(t) = (1 - (1-\beta)^t)\ln(1-\beta) \leq 0.
\end{equation*}
Note that \Cref{lm:algebra-log} gives $g(1) = \beta + \ln(1-\beta) \leq 0$, then for any $t\geq 1$ we have $g(t)\leq g(1)\leq 0$, and
\begin{equation*}
    f'(t) = \frac{\beta(1-\beta)^{t-1}}{(1-(1-\beta)^t)^2}g(t) \leq 0.
\end{equation*}
Therefore, for any $t\geq 1$ we conclude that
\begin{equation*}
    t\alpha_t(1-\beta)^{t-1} = f(t) \leq f(1) = 1.
\end{equation*}
\end{proof}


\begin{lemma} \label{lm:algebra-sum-beta-alphai}
For any $t\geq 1$ and $0<\beta\leq 1/2$, we have
\begin{equation*}
    \sum_{i=1}^{t}(1-\beta)^{t-i}\alpha_i \leq 32 + 16\ln\frac{1}{\beta}.
\end{equation*}
\end{lemma}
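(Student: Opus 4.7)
The plan is to split the sum at the threshold $i_0=\lfloor 1/\beta \rfloor$, on which the behavior of $\alpha_i=\beta/(1-(1-\beta)^i)$ changes qualitatively. For $i\beta \leq 1$ the denominator is small and $\alpha_i$ behaves like $1/i$, producing the logarithmic term; for $i\beta > 1$ the denominator is bounded away from zero and $\alpha_i$ is comparable to $\beta$, producing a geometric tail that gives only a constant contribution.

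For the small-index regime $1\leq i \leq i_0$, the first step is to combine the two elementary inequalities $(1-\beta)^i \leq e^{-\beta i}$ and $1-e^{-x} \geq x/2$ for $x\in[0,1]$ (easily verified by checking endpoints and the sign of the derivative of $1-e^{-x}-x/2$). This yields $1-(1-\beta)^i \geq \beta i /2$ and hence $\alpha_i \leq 2/i$. Bounding $(1-\beta)^{t-i}\leq 1$ and using $H_{i_0}\leq 1+\ln i_0$, the contribution of this range is at most $2+2\ln(1/\beta)$.

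For the large-index regime $i_0 < i \leq t$, I would again use $(1-\beta)^i\leq e^{-\beta i}<e^{-1}$ to get $1-(1-\beta)^i > 1-e^{-1} > 1/2$, so $\alpha_i < 2\beta$. The factor $(1-\beta)^{t-i}$ now matters and produces the geometric series bound
\[
\sum_{i=i_0+1}^{t} 2\beta\,(1-\beta)^{t-i} \leq 2\beta \sum_{k=0}^{\infty}(1-\beta)^k = 2.
\]
Summing the two pieces gives the total bound $4+2\ln(1/\beta)$, which is well within the claimed $32+16\ln(1/\beta)$.

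I do not expect a serious obstacle: the constraint $\beta\leq 1/2$ is only used to guarantee $i_0\geq 2$ and that the elementary inequalities are in their valid range, so no delicate boundary analysis is needed. The only mildly finicky step is verifying $1-e^{-x}\geq x/2$ on $[0,1]$, which is routine. The slack between my bound $4+2\ln(1/\beta)$ and the stated $32+16\ln(1/\beta)$ suggests the authors may use looser but simpler estimates (for instance replacing the exponential comparison by Bernoulli-type bounds, or $(1-\beta)^{1/\beta}\geq \text{const}$), so the proof could equally be executed with cruder constants if a cleaner algebraic chain is preferred.
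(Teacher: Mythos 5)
Your proof is correct and follows the same decomposition as the paper's: split at $i_0\approx 1/\beta$, obtain a harmonic (hence logarithmic) contribution from $i\leq i_0$ via $\alpha_i\leq 2/i$, and a bounded geometric tail from $i>i_0$ via $\alpha_i<2\beta$. The only substantive difference is how the factor $(1-\beta)^{t-i}$ is handled: the paper first pulls out $\beta(1-\beta)^t$ and works with $(1-\beta)^{-i}/(1-(1-\beta)^i)$, which forces the auxiliary bound $(1-\beta)^{-i}\leq e^2$ (valid only for $i<1/\beta$ and needing $\beta\leq 1/2$) that is then effectively undone when $(1-\beta)^t\leq 1$ is reinstated at the end; you simply discard $(1-\beta)^{t-i}\leq 1$ in the first range and keep the geometric decay only in the tail. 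Your version is cleaner, does not actually require $\beta\leq 1/2$, and yields the tighter constant $4+2\ln(1/\beta)$, compared with the paper's $2e^2\bigl(1+\ln(1/\beta)\bigr)+e/(e-1)$ --- both of which sit comfortably below the stated $32+16\ln(1/\beta)$.
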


\begin{proof}[Proof of \Cref{lm:algebra-sum-beta-alphai}]
We split the summation as the following:
\begin{equation*}
    \begin{aligned}
        \sum_{i=1}^{t}(1-\beta)^{t-i}\alpha_i
        &= \beta\sum_{i=1}^{t}\frac{(1-\beta)^{t-i}}{1-(1-\beta)^{i}} 
        = \beta(1-\beta)^{t}\sum_{i=1}^{t}\frac{(1-\beta)^{-i}}{1-(1-\beta)^{i}} \\
        &= \beta(1-\beta)^{t}\left(\sum_{1\leq i< 1/\beta}\frac{(1-\beta)^{-i}}{1-(1-\beta)^{i}} + \sum_{1/\beta\leq i\leq t}\frac{(1-\beta)^{-i}}{1-(1-\beta)^{i}}\right).
    \end{aligned}
\end{equation*}
Note that when $i < 1/\beta$, we have 
\begin{equation*}
    \begin{aligned}
        (1-\beta)^i\leq 1-\frac{1}{2}\beta i
        \quad\Longrightarrow\quad
        1-(1-\beta)^i \geq \frac{1}{2}\beta i
        \quad\Longrightarrow\quad
        \frac{1}{1-(1-\beta)^i} \leq \frac{2}{\beta i},
    \end{aligned}
\end{equation*}
and by \Cref{lm:algebra-log} and $\beta\leq 1/2$ we know that
\begin{equation*}
    \begin{aligned}
        (1-\beta)^{-i} = \exp(-i\ln(1-\beta)) \leq \exp\left(\frac{i\beta}{1-\beta}\right) \leq \exp\left(\frac{1}{1-\beta}\right) \leq e^2.
    \end{aligned}
\end{equation*}
Then for the first part of the summation we have
\begin{equation} \label{eq:sum-case-1}
    \begin{aligned}
        \sum_{1\leq i< 1/\beta}\frac{(1-\beta)^{-i}}{1-(1-\beta)^{i}}
        &\leq \frac{2e^2}{\beta}\sum_{1\leq i< 1/\beta}\frac{1}{i} 
        \leq \frac{2e^2}{\beta}\left(1+\ln\frac{1}{\beta}\right). 
    \end{aligned}
\end{equation}
Also note that when $i\geq 1/\beta$, we have
\begin{equation*}
    \begin{aligned}
        (1-\beta)^i\leq \frac{1}{e}
        \quad\Longrightarrow\quad
        1-(1-\beta)^i \geq 1-\frac{1}{e}
        \quad\Longrightarrow\quad
        \frac{1}{1-(1-\beta)^i} \leq \frac{e}{e-1}.
    \end{aligned}
\end{equation*}
Then for the second part of the summation we have
\begin{equation} \label{eq:sum-case-2}
    \begin{aligned}
        \sum_{1/\beta\leq i\leq t}\frac{(1-\beta)^{-i}}{1-(1-\beta)^{i}}
        &\leq \frac{e}{e-1}\sum_{1/\beta\leq i\leq t}(1-\beta)^{-i} 
        \leq \frac{e}{e-1}\sum_{1\leq i\leq t}(1-\beta)^{-i} 
        \leq \frac{e(1-\beta)^{-t}}{(e-1)\beta}.
    \end{aligned}
\end{equation}
Combining \eqref{eq:sum-case-1} and \eqref{eq:sum-case-2} we obtain that
\begin{equation*}
    \begin{aligned}
        \sum_{i=1}^{t}(1-\beta)^{t-i}\alpha_i
        &\leq \beta(1-\beta)^{t}\left(\sum_{1\leq i< 1/\beta}\frac{(1-\beta)^{-i}}{1-(1-\beta)^{i}} + \sum_{1/\beta\leq i\leq t}\frac{(1-\beta)^{-i}}{1-(1-\beta)^{i}}\right) \\
        &\leq \beta(1-\beta)^{t}\left(\frac{2e^2}{\beta}\left(1+\ln\frac{1}{\beta}\right) + \frac{e(1-\beta)^{-t}}{(e-1)\beta}\right) \\
        &= 2e^2(1-\beta)^{t}\left(1+\ln\frac{1}{\beta}\right) + \frac{e}{e-1} \\
        &\leq 2e^2\left(1+\ln\frac{1}{\beta}\right) + \frac{e}{e-1} \\
        &\leq 32 + 16\ln\frac{1}{\beta}.
    \end{aligned}
\end{equation*}
\end{proof}

Finally, we provide a useful lemma regarding the time-dependent re-scaled momentum parameters in \eqref{eq:adam-equiv} and \cref{alg:bi-adam-equiv} for upper-level analysis.

\begin{lemma}[{\citep[Lemma C.3]{li2023convergence}}] \label{lm:algebra-sum-alphasqt}
Let $\alpha_t=\frac{\beta}{1-(1-\beta)^t}$, then for all $T\geq 2$, we have 
\begin{equation*}
    \sum_{t=2}^{T}\alpha_t^2 \leq 3(1+\beta^2T).
\end{equation*}
\end{lemma}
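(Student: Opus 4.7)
The plan is to exhibit a clean per-index upper bound on $\alpha_t$ that combines a constant floor of $\beta$ with an $O(1/t)$ decaying part, then square and sum. Intuitively, $\alpha_t = \beta/(1-(1-\beta)^t)$ transitions from $\alpha_1=1$ (when $(1-\beta)^t$ is close to $1$, so $1-(1-\beta)^t\approx \beta t$ and $\alpha_t\approx 1/t$) to $\alpha_\infty=\beta$ (once $(1-\beta)^t$ is negligible). These two regimes can be captured simultaneously by a single inequality, which avoids splitting the sum into cases.

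First, I would establish the key per-iterate estimate
\[
\alpha_t \;\le\; \beta + \frac{1}{t}\qquad \text{for all }t\ge 1.
\]
Since $1-(1-\beta)^t>0$, this is equivalent to $(1+\beta t)(1-\beta)^t \le 1$. Using the elementary inequality $1-\beta\le e^{-\beta}$, it suffices to check that $(1+x)e^{-x}\le 1$ for all $x\ge 0$, where $x=\beta t$. The function $h(x)=(1+x)e^{-x}$ satisfies $h(0)=1$ and $h'(x)=-xe^{-x}\le 0$, so $h$ is non-increasing on $[0,\infty)$, giving the claim.

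Next, I would square the bound using $(a+b)^2\le 2a^2+2b^2$ to obtain
\[
\alpha_t^2 \;\le\; 2\beta^2 + \frac{2}{t^2},
\]
and then sum over $t\in\{2,\dots,T\}$:
\[
\sum_{t=2}^{T}\alpha_t^2 \;\le\; 2\beta^2(T-1) + 2\sum_{t=2}^{T}\frac{1}{t^2} \;\le\; 2\beta^2 T + 2\left(\frac{\pi^2}{6}-1\right) \;\le\; 2\beta^2 T + 2.
\]
Finally, I would absorb constants via $2\beta^2 T + 2 \le 3 + 3\beta^2 T = 3(1+\beta^2 T)$, which completes the proof.

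There is no real obstacle here; the only nontrivial step is spotting the uniform envelope $\alpha_t\le \beta + 1/t$, which succinctly captures both asymptotic behaviors of $\alpha_t$. Everything else is routine algebra and the standard bound $\sum_{t\ge 2} t^{-2}<1$. As a sanity check, for $\beta=0$ (so $\alpha_t=1/t$) the inequality reduces to $\sum_{t=2}^T 1/t^2 \le 3$, which is comfortably true, and for $\beta$ close to $1$ (so $\alpha_t\approx \beta$) the right-hand side $3(1+\beta^2 T)$ correctly dominates the $\Theta(\beta^2 T)$ growth of the left-hand side.
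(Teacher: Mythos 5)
Your proof is correct. The key envelope $\alpha_t \le \beta + 1/t$ is exactly right: it is equivalent to $(1+\beta t)(1-\beta)^t \le 1$, which follows from $1-\beta \le e^{-\beta}$ and the monotonicity of $h(x)=(1+x)e^{-x}$; squaring, summing, and using $\sum_{t\ge 2}t^{-2}<1$ then gives $2\beta^2 T + 2 \le 3(1+\beta^2 T)$ as you state.

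The paper does not reproduce a proof of this lemma --- it simply cites it as \citep[Lemma C.3]{li2023convergence} --- so there is no in-paper argument to compare against directly. That said, the standard route to such estimates (and the one this paper itself uses for the closely analogous \cref{lm:algebra-sum-beta-alphai}) is to split the index range at $t \approx 1/\beta$: use $1-(1-\beta)^t \gtrsim \beta t$ when $t < 1/\beta$ (giving $\alpha_t \lesssim 1/t$), and $1-(1-\beta)^t \ge 1-1/e$ when $t \ge 1/\beta$ (giving $\alpha_t \lesssim \beta$). Your single inequality $\alpha_t \le \beta + 1/t$ captures both regimes simultaneously and eliminates the case analysis, at the modest cost of a slightly looser constant from the Young-type step $(\beta+1/t)^2 \le 2\beta^2 + 2/t^2$. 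This is a genuinely cleaner derivation of the same bound; both approaches produce the $3(1+\beta^2 T)$ scaling, which is all the downstream analysis needs.
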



\subsection{Probabilistic Lemmas}
\label{sec:prob-lemma}

In this section, we provide a well-known probabilistic lemma without proof.

\begin{lemma}[Optional Stopping Theorem] \label{lm:optional-stopping-thm}
Let $\{Z_t\}_{t\geq1}$ be a martingale with respect to a filtration $\{\gF_t\}_{t\geq0}$. Let $\tau$ be a bounded stopping time with respect to the same filtration. Then we have $\E[Z_{\tau}]=\E[Z_0]$.
\end{lemma}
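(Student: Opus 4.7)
The plan is to use the standard telescoping argument together with the martingale difference property, leveraging the boundedness of $\tau$. First I would invoke the hypothesis that $\tau$ is bounded, so let $N$ be a deterministic constant with $\tau \leq N$ almost surely. I would then express $Z_\tau - Z_0$ as a telescoping sum in which only the increments up to time $\tau$ survive, namely $Z_\tau - Z_0 = \sum_{t=1}^{N} (Z_t - Z_{t-1}) \mathbf{1}\{\tau \geq t\}$.

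Next I would take expectations on both sides and handle each summand individually. The key observation is that $\{\tau \geq t\} = \{\tau \leq t-1\}^c$ is $\gF_{t-1}$-measurable, because $\tau$ is a stopping time with respect to $\{\gF_t\}_{t \geq 0}$. Applying the tower property then gives $\E[(Z_t - Z_{t-1}) \mathbf{1}\{\tau \geq t\}] = \E\bigl[\mathbf{1}\{\tau \geq t\} \, \E[Z_t - Z_{t-1} \mid \gF_{t-1}]\bigr] = 0$, where the final equality uses the martingale property $\E[Z_t \mid \gF_{t-1}] = Z_{t-1}$. Summing over $t = 1, \dots, N$ and combining with the telescoping identity yields $\E[Z_\tau] = \E[Z_0]$, as claimed.

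The main obstacle, such as it is, lies in correctly identifying the measurability of $\{\tau \geq t\}$ with respect to $\gF_{t-1}$ and in freely interchanging the finite sum with the expectation; the boundedness of $\tau$ both makes the sum finite and removes any integrability concerns, so no uniform integrability argument or dominated convergence is needed. An equivalent and perhaps cleaner route would be to first show that the stopped process $Z_t^\tau := Z_{t \wedge \tau}$ is itself a martingale with respect to $\{\gF_t\}$---this follows from $Z_{t \wedge \tau} - Z_{(t-1) \wedge \tau} = (Z_t - Z_{t-1}) \mathbf{1}\{\tau \geq t\}$ and the same measurability argument---and then evaluate at $t = N$ so that $\E[Z_N^\tau] = \E[Z_0^\tau] = \E[Z_0]$, noting that $\tau \leq N$ almost surely gives $Z_N^\tau = Z_\tau$.
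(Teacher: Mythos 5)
Your proof is correct and complete. The paper itself gives no proof for this lemma --- it is explicitly introduced as ``a well-known probabilistic lemma without proof'' --- so there is no author argument to compare against; your telescoping argument using the $\gF_{t-1}$-measurability of $\{\tau\geq t\}$ (equivalently, that the stopped process $Z_{t\wedge\tau}$ is a martingale) is the standard textbook proof of optional stopping for bounded stopping times, and the boundedness hypothesis does exactly the work you say it does in making the sum finite and sidestepping uniform integrability.
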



\subsection{Auxiliary Lemmas for Bilevel Optimization}
\label{sec:auxiliary-bilevel}

In this section, we provide several useful lemmas for bilevel optimization under the unbounded smoothness setting, including the properties of the objective function $\Phi$ (\cref{sec:property-obj}), the Neumann series approximation error (\cref{sec:neumann-approx}), and the hypergradient estimation error (\cref{sec:hyper-estimation-error}).


\subsubsection{Properties of the Objective Function}
\label{sec:property-obj}


\begin{lemma}[{\citep[Lemma 8]{hao2024bilevel}}] \label{lm:lip-y}
Under \Cref{ass:bilevel-assumption}, we have
\begin{enumerate}[(I)]
    \item $y^*(x)$ is $(l_{g,1}/\mu)$-Lipschitz continuous.
    \item $\|\gdx f(x,y^*(x))\| \leq \|\gdphi(x)\| + l_{g,1}l_{f,0}/\mu$.
\end{enumerate}
\end{lemma}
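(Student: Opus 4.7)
The plan is to prove the two properties separately, with part~(I) following from strong convexity and smoothness of the lower-level function $g$, and part~(II) following directly from the hypergradient formula together with the assumed bound $\|\gdy f(x,y^*(x))\|\leq l_{f,0}$.

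For part~(I), I would argue without invoking the implicit function theorem, purely from first-order optimality. By \cref{ass:bilevel-assumption}(iii), $g(x,\cdot)$ is $\mu$-strongly convex, so for every $x$ the minimizer $y^*(x)$ satisfies $\gdy g(x,y^*(x))=0$. Given any $x_1,x_2$, strong convexity yields
\[
\mu\|y^*(x_1)-y^*(x_2)\|^2 \leq \langle \gdy g(x_1,y^*(x_1))-\gdy g(x_1,y^*(x_2)),\, y^*(x_1)-y^*(x_2)\rangle.
\]
Since $\gdy g(x_1,y^*(x_1))=0=\gdy g(x_2,y^*(x_2))$, I can rewrite the first term on the right as $\gdy g(x_2,y^*(x_2))-\gdy g(x_1,y^*(x_2))$, then apply Cauchy--Schwarz and the joint $l_{g,1}$-smoothness of $g$ from \cref{ass:bilevel-assumption}(iv) to get $\|\gdy g(x_2,y^*(x_2))-\gdy g(x_1,y^*(x_2))\|\leq l_{g,1}\|x_1-x_2\|$. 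Dividing by $\mu\|y^*(x_1)-y^*(x_2)\|$ (the degenerate case is trivial) gives the desired $(l_{g,1}/\mu)$-Lipschitz bound.

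For part~(II), I would simply start from the hypergradient formula stated in the preliminaries,
\[
\gdphi(x) = \gdx f(x,y^*(x)) - \gdxy g(x,y^*(x))[\gdyy g(x,y^*(x))]^{-1}\gdy f(x,y^*(x)),
\]
rearrange to isolate $\gdx f(x,y^*(x))$, and apply the triangle inequality. The second term's norm is bounded by $\|\gdxy g\|\cdot\|[\gdyy g]^{-1}\|\cdot\|\gdy f(x,y^*(x))\|$. Using $\|\gdxy g\|\leq l_{g,1}$ from the joint $l_{g,1}$-smoothness of $g$, $\|[\gdyy g]^{-1}\|\leq 1/\mu$ from $\mu$-strong convexity, and $\|\gdy f(x,y^*(x))\|\leq l_{f,0}$ from \cref{ass:bilevel-assumption}(ii), I obtain the bound $\|\gdx f(x,y^*(x))\|\leq \|\gdphi(x)\| + l_{g,1}l_{f,0}/\mu$.

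Neither part presents a real obstacle; both are standard consequences of the bilevel assumptions. The only mild subtlety is ensuring that the operator-norm bound $\|\gdxy g\|\leq l_{g,1}$ is indeed implied by the joint $l_{g,1}$-smoothness (it follows because the full Hessian block matrix has spectral norm at most $l_{g,1}$, and any off-diagonal block inherits this bound). Since the lemma is attributed to prior work, I would keep the write-up short and primarily referential.
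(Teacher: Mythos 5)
Your proof is correct, and both parts are the standard arguments: part (I) via strong monotonicity of $\gdy g(x_1,\cdot)$ combined with first-order optimality at $y^*(x_1)$ and $y^*(x_2)$ and joint $l_{g,1}$-smoothness, and part (II) via rearranging the hypergradient identity and applying the triangle inequality with the bounds $\|\gdxy g\|\le l_{g,1}$, $\|[\gdyy g]^{-1}\|\le 1/\mu$, and $\|\gdy f(x,y^*(x))\|\le l_{f,0}$. The paper itself does not reproduce a proof (it cites Lemma~8 of \citet{hao2024bilevel}), and the argument there is the same one you give, so there is nothing further to compare.
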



\begin{lemma}[$(L_0,L_1)$-smoothness~{\citep[Lemma 9]{hao2024bilevel}}] \label{lm:Phi-relax-smooth}
Under \Cref{ass:bilevel-assumption}, for any $x, x'\in\R^{d_x}$ we have
\begin{align}
    \|\gdphi(x)-\gdphi(x')\| &\leq (L_0 + L_1\|\gdphi(x')\|)\|x-x'\| \notag \\
    &\text{if} \quad
    \|x-x'\| \leq r \coloneqq \frac{1}{\sqrt{(1+l_{g,1}^2/\mu^2)(L_{x,1}^2+L_{y,1}^2)}}, \label{eq:r-def}
\end{align}
where the $(L_0,L_1)$-smoothness constants $L_0$ and $L_1$ are defined as
\begin{equation} \label{eq:L0-L1}
    \begin{aligned}
        L_0 &= \sqrt{1+\frac{l_{g,1}^2}{\mu^2}}\left(L_{x,0} + L_{x,1}\frac{l_{g,1}l_{f,0}}{\mu} + \frac{l_{g,1}}{\mu}(L_{y,0}+L_{y,1}l_{f,0}) + l_{f,0}\frac{l_{g,1}l_{g,2}+\mu l_{g,2}}{\mu^2}\right), \\
        L_1 &= \sqrt{1+\frac{l_{g,1}^2}{\mu^2}}L_{x,1}.
    \end{aligned}
\end{equation}
\end{lemma}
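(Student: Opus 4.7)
The plan is to expand $\nabla\Phi(x)-\nabla\Phi(x')$ via the hypergradient formula and bound each resulting difference by reducing it to the primitive Lipschitz/smoothness properties of $f$ and $g$. Writing $z=(x,y^*(x))$ and $z'=(x',y^*(x'))$, the first step is to show via \cref{lm:lip-y}(I) that
\begin{equation*}
    \|z-z'\| \;\leq\; \sqrt{1+l_{g,1}^2/\mu^2}\,\|x-x'\|,
\end{equation*}
so that the assumed bound $\|x-x'\|\leq r$ automatically guarantees $\|z-z'\|\leq 1/\sqrt{L_{x,1}^2+L_{y,1}^2}$, which is exactly the range in which \cref{ass:relax-smooth} is applicable.

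Next, I would decompose $\nabla\Phi(x)-\nabla\Phi(x')$ into four pieces by splitting the product $\nabla_{xy}^2g\cdot[\nabla_{yy}^2g]^{-1}\cdot\nabla_y f$ one factor at a time:
\begin{equation*}
\begin{aligned}
    \nabla\Phi(x)-\nabla\Phi(x')
    &= \bigl[\nabla_x f(z)-\nabla_x f(z')\bigr] \\
    &\quad - \bigl[\nabla_{xy}^2 g(z)-\nabla_{xy}^2 g(z')\bigr][\nabla_{yy}^2 g(z)]^{-1}\nabla_y f(z) \\
    &\quad - \nabla_{xy}^2 g(z')\bigl[[\nabla_{yy}^2 g(z)]^{-1}-[\nabla_{yy}^2 g(z')]^{-1}\bigr]\nabla_y f(z) \\
    &\quad - \nabla_{xy}^2 g(z')[\nabla_{yy}^2 g(z')]^{-1}\bigl[\nabla_y f(z)-\nabla_y f(z')\bigr].
\end{aligned}
\end{equation*}
For each of these I would apply the appropriate bound: the first term is controlled by \cref{ass:relax-smooth} giving $(L_{x,0}+L_{x,1}\|\nabla_x f(z')\|)\|z-z'\|$; the second and fourth terms use the $l_{g,2}$-Lipschitzness of $\nabla_{xy}^2 g$ and the relaxed smoothness of $\nabla_y f$ in combination with $\|\nabla_y f(z)\|\leq l_{f,0}$ (\cref{ass:bilevel-assumption}(ii)) and $\|[\nabla_{yy}^2 g]^{-1}\|\leq 1/\mu$; the third term uses the resolvent identity $A^{-1}-B^{-1}=A^{-1}(B-A)B^{-1}$ combined with $\|\nabla_{xy}^2 g\|\leq l_{g,1}$ and the $l_{g,2}$-Lipschitzness of $\nabla_{yy}^2 g$.

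Summing the four bounds and substituting $\|z-z'\|\leq\sqrt{1+l_{g,1}^2/\mu^2}\,\|x-x'\|$, I would collect a term proportional to $\|x-x'\|$ and a term proportional to $\|\nabla_x f(z')\|\cdot\|x-x'\|$. For the latter I invoke \cref{lm:lip-y}(II), which bounds $\|\nabla_x f(z')\|\leq \|\nabla\Phi(x')\|+l_{g,1}l_{f,0}/\mu$. This separates the bound into a constant-coefficient piece and a piece proportional to $\|\nabla\Phi(x')\|$, yielding the stated formulas for $L_0$ and $L_1$.

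The routine part is a bookkeeping exercise; the main obstacle is organizing the constants correctly so that the $L_{y,1}\|\nabla_y f(z')\|$ contribution arising from the relaxed smoothness of $\nabla_y f$ gets absorbed cleanly using $\|\nabla_y f(z')\|\leq l_{f,0}$ (so it contributes only to $L_0$, not to $L_1$), and so that the prefactor $\sqrt{1+l_{g,1}^2/\mu^2}$ cleanly multiplies every term. A secondary subtlety is making sure the domain condition $\|z-z'\|\leq 1/\sqrt{L_{x,1}^2+L_{y,1}^2}$ required for applying \cref{ass:relax-smooth} to both $\nabla_x f$ and $\nabla_y f$ follows from the single scalar condition on $\|x-x'\|$, which is why $r$ has the particular form given in \eqref{eq:r-def}.
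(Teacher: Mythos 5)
Your proposal is correct and follows essentially the same telescoping-product decomposition and constant bookkeeping that the paper uses in the analogous arguments (e.g., the bound \eqref{eq:third-bound} in the proof of \cref{lm:stoc-hyper-error} and the final bound in the proof of \cref{lm:hyper-stoc-bias}, both of which produce precisely the constants $L_0,L_1$ of \eqref{eq:L0-L1} via this route). The key ingredients you identify — $\|z-z'\|\leq\sqrt{1+l_{g,1}^2/\mu^2}\,\|x-x'\|$ from \cref{lm:lip-y}(I), the four-term split of the hypergradient difference, the resolvent identity for $[\nabla_{yy}^2 g]^{-1}$, and \cref{lm:lip-y}(II) to convert $\|\gdx f(z')\|$ into $\|\gdphi(x')\|$ — match the argument underlying the cited result.
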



\begin{lemma}[Descent Inequality~{\citep[Lemma 10]{hao2024bilevel}}] \label{lm:descent-inequality}
Under \Cref{ass:bilevel-assumption}, for any $x, x'\in\R^{d_x}$ we have
\begin{equation*}
    \begin{aligned}
        \Phi(x) \leq \Phi(x') &+ \langle \gdphi(x'),x-x' \rangle + \frac{L_0+L_1\|\gdphi(x')\|}{2}\|x-x'\|^2 \\
        &\text{if} \quad
        \|x-x'\| \leq r = \frac{1}{\sqrt{(1+l_{g,1}^2/\mu^2)(L_{x,1}^2+L_{y,1}^2)}}.
    \end{aligned}
\end{equation*}
\end{lemma}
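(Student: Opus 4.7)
The plan is to derive the descent inequality directly from the $(L_0,L_1)$-smoothness of $\Phi$ already established in \Cref{lm:Phi-relax-smooth}, via the standard fundamental-theorem-of-calculus argument along the line segment connecting $x'$ to $x$. The scheme is completely classical for Lipschitz-smoothness-type bounds; the only subtlety is that the gradient estimate in \Cref{lm:Phi-relax-smooth} is a \emph{conditional} inequality valid only when the two points lie within distance $r$ of each other, so I need to verify that the entire segment between $x'$ and $x$ stays inside that neighborhood.

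First I would parametrize the segment by $x_t := x' + t(x - x')$ for $t \in [0,1]$, and use the continuous differentiability of $\Phi$ (guaranteed by \Cref{ass:bilevel-assumption}) to write
\begin{equation*}
\Phi(x) - \Phi(x') = \int_0^1 \langle \gdphi(x_t), x - x' \rangle \, dt = \langle \gdphi(x'), x - x' \rangle + \int_0^1 \langle \gdphi(x_t) - \gdphi(x'), x - x' \rangle \, dt.
\end{equation*}
Then I would bound the remainder by Cauchy--Schwarz, reducing the task to estimating $\|\gdphi(x_t) - \gdphi(x')\|$ for $t \in [0,1]$.

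Next, I would apply \Cref{lm:Phi-relax-smooth} pointwise in $t$. For each $t \in [0,1]$ the displacement satisfies $\|x_t - x'\| = t\|x - x'\| \leq \|x - x'\| \leq r$, which is precisely the hypothesis required by \Cref{lm:Phi-relax-smooth}. Hence
\begin{equation*}
\|\gdphi(x_t) - \gdphi(x')\| \leq (L_0 + L_1 \|\gdphi(x')\|) \, t \|x - x'\|.
\end{equation*}
Substituting this estimate into the integral remainder and using $\int_0^1 t \, dt = 1/2$ yields the claimed descent inequality.

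The main (and essentially only) obstacle is the path-condition check described above. Once $\|x_t - x'\| \leq r$ is verified for every $t \in [0,1]$, which follows immediately from the hypothesis $\|x - x'\| \leq r$ together with $t \in [0,1]$, the rest is a routine one-line computation. I would therefore expect no technical difficulty beyond this verification, making the lemma a direct corollary of \Cref{lm:Phi-relax-smooth}.
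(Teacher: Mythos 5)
Your proof is correct and is the standard fundamental-theorem-of-calculus argument. The paper does not reproduce a proof here (it cites Lemma 10 of \citep{hao2024bilevel}), but your route is exactly how that result is established: integrate the gradient along the segment, bound the remainder via \Cref{lm:Phi-relax-smooth} pointwise using the observation that $\|x_t-x'\|=t\|x-x'\|\le r$ for $t\in[0,1]$, and integrate $t$ over $[0,1]$ to obtain the factor $\tfrac{1}{2}$.
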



\subsubsection{Neumann Series Approximation}
\label{sec:neumann-approx}

Throughout the paper, for given $(x,y)\in\R^{d_x}\times\R^{d_y}$, we estimate the hypergradient $\gdphi(x)$ using Neumann series approach and the following formulation:
\begin{equation*}
    \begin{aligned}
        \hatphi(x,y;\Bar{\xi}) = \gdx F(x,y;\xi) - \gdxy G(x,y;\zeta^{(0)})\left[\frac{1}{l_{g,1}}\sum_{q=0}^{Q-1}\prod_{j=1}^{q}\left(I - \frac{\gdyy G(x,y;\zeta^{(q,j)})}{l_{g,1}}\right)\right]\gdy F(x,y;\xi),
    \end{aligned}
\end{equation*}
where the randomness $\Bar{\xi}$ is defined as 
\begin{equation*}
    \Bar{\xi}\coloneqq \{\xi, \zeta^{(0)}, \Bar{\zeta}^{(0)}, \dots, \Bar{\zeta}^{(Q-1)}\}, 
    \quad\quad\text{with}\quad
    \Bar{\zeta}^{(q)}\coloneqq \{\zeta^{(q,1)}, \dots, \zeta^{(q,q)}\}.
\end{equation*}
For simplicity, denote $P$ as the Neumann series approximation matrix for the Hessian inverse, then $P$ and $\E_{\Bar{\xi}}[P]$ can be written as:
\begin{equation} \label{eq:P-def}
    \begin{aligned}
        P = \frac{1}{l_{g,1}}\sum_{q=0}^{Q-1}\prod_{j=1}^{q}\left(I - \frac{\gdyy G(x,y;\zeta^{(q,j)})}{l_{g,1}}\right)
        \quad\text{and}\quad
        \E_{\Bar{\xi}}[P] = \frac{1}{l_{g,1}}\sum_{q=0}^{Q-1}\left(I - \frac{\gdyy G(x,y)}{l_{g,1}}\right)^q.
    \end{aligned}
\end{equation}
Hence the simplified version of the hypergradient estimator and its expectation are
\begin{equation} \label{eq:hyper-estimator-def}
    \begin{aligned}
        &\hatphi(x,y;\Bar{\xi}) = \gdx F(x,y;\xi) - \gdxy G(x,y;\zeta^{(0)})P\gdy F(x,y;\xi), \\
        &\E_{\Bar{\xi}}[\hatphi(x,y;\Bar{\xi})] = \gdx f(x,y) - \gdxy g(x,y)\E_{\Bar{\xi}}[P]\gdy f(x,y).
    \end{aligned}
\end{equation}
Also, we define $\gdf(x,y)$ as 
\begin{equation*}
    \begin{aligned}
        \gdf(x,y) = \gdx f(x,y) - \gdxy g(x,y)[\gdyy g(x,y)]^{-1}\gdy f(x,y),
    \end{aligned}
\end{equation*}
which is useful for the following analysis.

The following lemma bounds the norm of the Neumann series approximation matrix $P$ and characterizes the approximation error for the Hessian inverse in expectation.


\begin{lemma} \label{lm:neumann-series-bound}
Under \Cref{ass:bilevel-assumption,ass:noise,ass:bilevel-additional}, we have 
\begin{equation*}
    \begin{aligned}
        \|\E_{\Bar{\xi}}[P]\| \leq \|P\| \leq \frac{1}{\mu}
        \quad\quad\text{and}\quad\quad
        \|\E_{\Bar{\xi}}[P] - [\gdyy g(x,y)]^{-1}\| \leq \frac{1}{\mu}\left(1-\frac{\mu}{l_{g,1}}\right)^Q.
    \end{aligned}
\end{equation*}
\end{lemma}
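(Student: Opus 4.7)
The plan is to prove the three bounds in order: (i) $\|P\|\le 1/\mu$ almost surely, (ii) $\|\E_{\bar\xi}[P]\|\le 1/\mu$ as a direct consequence, and (iii) the Neumann truncation error bound. Throughout, the workhorse is the fact that, by \cref{ass:bilevel-assumption}(iii)-(iv) together with \cref{ass:bilevel-additional}(ii), each stochastic Hessian realization satisfies $\mu I \preceq \nabla_{yy}^2 G(x,y;\zeta) \preceq l_{g,1}I$, so
\[
\left\| I - \frac{\nabla_{yy}^2 G(x,y;\zeta^{(q,j)})}{l_{g,1}} \right\| \;\le\; 1 - \frac{\mu}{l_{g,1}}.
\]

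For (i), I would apply submultiplicativity of the spectral norm to each factor in the product defining $P$ in \eqref{eq:P-def}, obtaining $\bigl\|\prod_{j=1}^{q}\bigl(I-\nabla_{yy}^2 G(x,y;\zeta^{(q,j)})/l_{g,1}\bigr)\bigr\| \le (1-\mu/l_{g,1})^q$. Summing the geometric series then gives
\[
\|P\| \;\le\; \frac{1}{l_{g,1}}\sum_{q=0}^{Q-1}\left(1-\frac{\mu}{l_{g,1}}\right)^q \;\le\; \frac{1}{l_{g,1}}\cdot\frac{1}{\mu/l_{g,1}} \;=\; \frac{1}{\mu}.
\]
For (ii), I would invoke Jensen's inequality for the (convex) spectral norm, $\|\E_{\bar\xi}[P]\| \le \E_{\bar\xi}[\|P\|] \le 1/\mu$.

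For (iii), the key observation is that the samples $\{\zeta^{(q,j)}\}_{j=1}^{q}$ appearing inside each product are mutually independent, so the expectation passes through the product, yielding $\E_{\bar\xi}[P] = \frac{1}{l_{g,1}}\sum_{q=0}^{Q-1}\bigl(I-\nabla_{yy}^2 g(x,y)/l_{g,1}\bigr)^q$ as stated in \eqref{eq:P-def}. Since $\|I-\nabla_{yy}^2 g/l_{g,1}\|\le 1-\mu/l_{g,1}<1$, the full infinite Neumann series converges absolutely to $[\nabla_{yy}^2 g(x,y)]^{-1}$, i.e., $[\nabla_{yy}^2 g(x,y)]^{-1} = \frac{1}{l_{g,1}}\sum_{q=0}^{\infty}\bigl(I-\nabla_{yy}^2 g(x,y)/l_{g,1}\bigr)^q$. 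Subtracting the truncated sum and taking norms:
\[
\left\| \E_{\bar\xi}[P] - [\nabla_{yy}^2 g(x,y)]^{-1} \right\|
\;\le\; \frac{1}{l_{g,1}}\sum_{q=Q}^{\infty}\left(1-\frac{\mu}{l_{g,1}}\right)^q
\;=\; \frac{1}{\mu}\left(1-\frac{\mu}{l_{g,1}}\right)^Q.
\]

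There is no genuine obstacle here; the only point requiring a moment of care is the interchange of expectation and product when computing $\E_{\bar\xi}[P]$, which is legitimate precisely because the inner samples $\zeta^{(q,1)},\dots,\zeta^{(q,q)}$ are drawn independently across $j$ (as specified in the definition of $\bar\xi$ preceding \eqref{eq:P-def}). Once this is noted, the result is a routine geometric-series calculation.
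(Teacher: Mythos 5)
Your proof is correct and follows essentially the same route as the paper: bound each factor by $1-\mu/l_{g,1}$ via the strong convexity and smoothness of the stochastic realizations, sum the geometric series to get $\|P\|\le 1/\mu$, and then bound the Neumann tail for the second inequality. Your handling of the first inequality is actually a touch more careful than the paper's — the paper writes $\|\E_{\bar\xi}[P]\|\le\|P\|$ without comment (a slight abuse, since the left side is deterministic and the right side is random), whereas you correctly route it through the uniform almost-sure bound on $\|P\|$ (equivalently Jensen); you also explicitly flag the independence needed to pass the expectation through the product, which the paper treats as implicit in its definition of $\E_{\bar\xi}[P]$.
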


\begin{proof}[Proof of \Cref{lm:neumann-series-bound}]
We follow the similar proof as in \citep[Lemma 3.2]{ghadimi2018approximation}. By \Cref{ass:bilevel-additional} and definition of $P$ in \eqref{eq:P-def}, for any $Q\geq 1$ we have
\begin{equation*}
    \begin{aligned}
        \|\E_{\Bar{\xi}}[P]\| \leq \|P\| = \left\|\frac{1}{l_{g,1}}\sum_{q=0}^{Q-1}\prod_{j=1}^{q}\left(I - \frac{\gdyy G(x,y;\zeta^{(q,j)})}{l_{g,1}}\right)\right\|
        \leq \frac{1}{l_{g,1}}\sum_{q=0}^{Q-1}\left(1-\frac{\mu}{l_{g,1}}\right)^q
        \leq \frac{1}{\mu}.
    \end{aligned}
\end{equation*}
As for the second result, we have
\begin{equation*}
    \begin{aligned}
        \|\E_{\Bar{\xi}}[P] - [\gdyy g(x,y)]^{-1}\| 
        &\leq \frac{1}{l_{g,1}}\left\|\sum_{q=Q}^{\infty}\left(I - \frac{\gdyy G(x,y)}{l_{g,1}}\right)^q\right\| \\
        &\leq \frac{1}{l_{g,1}}\sum_{q=Q}^{\infty}\left\|\left(I - \frac{\gdyy G(x,y)}{l_{g,1}}\right)\right\|^q
        \leq \frac{1}{\mu}\left(1-\frac{\mu}{l_{g,1}}\right)^Q.
    \end{aligned}
\end{equation*}
\end{proof}


\subsubsection{Hypergradient Estimation Error}
\label{sec:hyper-estimation-error}

\begin{lemma} \label{lm:hyper-noise}
Under \Cref{ass:bilevel-assumption,ass:noise,ass:bilevel-additional}, if $\|y-y^*(x)\|\leq r$, we have 
\begin{equation*}
    \begin{aligned}
        \|\hatphi&(x,y;\Bar{\xi}) - \E_{\Bar{\xi}}[\hatphi(x,y;\Bar{\xi})]\| \\
        &\leq \frac{\mu+3l_{g,1}+\sigma_{g,2}}{\mu}\sigma_f + \frac{2l_{g,1}+\sigma_{g,2}}{\mu}l_{f,0} + \frac{2l_{g,1}+\sigma_{g,2}}{\mu}(L_{y,0}+L_{y,1}l_{f,0})\|y-y^*(x)\|.
    \end{aligned}
\end{equation*}
\end{lemma}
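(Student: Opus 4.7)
The plan is to bound the deviation directly by decomposing the hypergradient estimator into three algebraically distinct ``noise sources'': (i) the stochastic gradient noise from $\gdx F$, (ii) the mismatch in the Jacobian--Hessian-inverse--gradient product, and (iii) the deviation of $\|\gdy f(x,y)\|$ from its value at $y^*(x)$ induced by the $(L_{y,0},L_{y,1})$-smoothness of $f$ in $y$.

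First I would write
\begin{equation*}
\hatphi(x,y;\Bar{\xi}) - \E_{\Bar{\xi}}[\hatphi(x,y;\Bar{\xi})] = \bigl(\gdx F(x,y;\xi) - \gdx f(x,y)\bigr) - \bigl(\gdxy G(x,y;\zeta^{(0)})P\gdy F(x,y;\xi) - \gdxy g(x,y)\E_{\Bar{\xi}}[P]\gdy f(x,y)\bigr),
\end{equation*}
where $P$ is defined in~\eqref{eq:P-def}. For the second bracket I would insert and subtract two intermediate products to obtain the telescoping decomposition
\begin{equation*}
(\gdxy G - \gdxy g)\,P\,\gdy F \;+\; \gdxy g\,(P-\E_{\Bar{\xi}}[P])\,\gdy F \;+\; \gdxy g\,\E_{\Bar{\xi}}[P]\,(\gdy F - \gdy f).
\end{equation*}
Each factor is then bounded by a standing assumption or by \cref{lm:neumann-series-bound}: $\|\gdxy G-\gdxy g\|\leq \sigma_{g,2}$ and $\|\gdy F-\gdy f\|\leq \sigma_f$ by \cref{ass:noise}; $\|\gdxy g\|\leq l_{g,1}$ by \cref{ass:bilevel-assumption}; $\|P\|\leq 1/\mu$, $\|\E[P]\|\leq 1/\mu$, hence $\|P-\E[P]\|\leq 2/\mu$ by \cref{lm:neumann-series-bound}.

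The second ingredient is a bound on $\|\gdy F(x,y;\xi)\|$, which sits multiplicatively in the first two pieces. Adding and subtracting $\gdy f(x,y)$ gives $\|\gdy F\|\leq \sigma_f + \|\gdy f(x,y)\|$, so the remaining task is to control $\|\gdy f(x,y)\|$ in terms of $\|y-y^*(x)\|$. Here I would invoke \cref{ass:relax-smooth} with the \emph{base point} chosen as $z=(x,y^*(x))$ and the other point as $z'=(x,y)$; since the admissibility radius $r$ from~\eqref{eq:r-def} is no larger than $1/\sqrt{L_{x,1}^2+L_{y,1}^2}$, the hypothesis $\|y-y^*(x)\|\leq r$ triggers the inequality, yielding
\begin{equation*}
\|\gdy f(x,y)-\gdy f(x,y^*(x))\| \leq (L_{y,0}+L_{y,1}\|\gdy f(x,y^*(x))\|)\,\|y-y^*(x)\| \leq (L_{y,0}+L_{y,1}l_{f,0})\,\|y-y^*(x)\|,
\end{equation*}
using \cref{ass:bilevel-assumption}(ii). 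A triangle inequality then gives $\|\gdy f(x,y)\| \leq l_{f,0} + (L_{y,0}+L_{y,1}l_{f,0})\|y-y^*(x)\|$.

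Finally I would plug these bounds back, obtaining
\begin{equation*}
\|\hatphi-\E[\hatphi]\| \leq \sigma_f + \frac{l_{g,1}}{\mu}\sigma_f + \frac{2l_{g,1}+\sigma_{g,2}}{\mu}\bigl(\sigma_f + l_{f,0} + (L_{y,0}+L_{y,1}l_{f,0})\|y-y^*(x)\|\bigr),
\end{equation*}
and collect the coefficients of $\sigma_f$ to get $(\mu+3l_{g,1}+\sigma_{g,2})/\mu$, matching the claimed bound. The only delicate point is picking the ``right'' base point in the relaxed smoothness inequality so that the right-hand side involves $\|\gdy f(x,y^*(x))\|\leq l_{f,0}$ rather than the implicit $\|\gdy f(x,y)\|$, which would otherwise force an extra rearrangement; everything else reduces to bookkeeping via the triangle inequality and the norm bounds on $P$ and $\E[P]$.
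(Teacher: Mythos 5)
Your decomposition of $\hatphi - \E[\hatphi]$ into the four terms $(\gdx F - \gdx f)$, $(\gdxy G - \gdxy g)P\gdy F$, $\gdxy g(P - \E[P])\gdy F$, and $\gdxy g\,\E[P](\gdy F - \gdy f)$, together with the bounds $\|P\|,\|\E[P]\|\le 1/\mu$, $\|P-\E[P]\|\le 2/\mu$, and the control of $\|\gdy F\|$ via $\sigma_f + l_{f,0} + (L_{y,0}+L_{y,1}l_{f,0})\|y-y^*\|$ using the relaxed smoothness anchored at $(x,y^*(x))$, is exactly the paper's proof. Your closing remark about choosing the base point $(x,y^*(x))$ so that the $L_{y,1}$-term multiplies $\|\gdy f(x,y^*(x))\|\le l_{f,0}$ rather than the unknown $\|\gdy f(x,y)\|$ is precisely the step the paper uses implicitly.
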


\begin{proof}[Proof of \Cref{lm:hyper-noise}]
We will use a short hand $y^*=y^*(x)$. By triangle inequality, we have
\begin{equation*}
    \begin{aligned}
        &\|\hatphi(x,y;\Bar{\xi}) - \E_{\Bar{\xi}}[\hatphi(x,y;\Bar{\xi})]\| \\
        &= \|(\gdx F(x,y;\xi) - \gdxy G(x,y;\zeta^{(0)})P\gdy F(x,y;\xi)) - (\gdx f(x,y) - \gdxy g(x,y)\E_{\Bar{\xi}}[P]\gdy f(x,y))\| \\
        &\leq \underbrace{\|\gdx F(x,y;\xi) - \gdx f(x,y)\|}_{(A_1)} + \underbrace{\|(\gdxy G(x,y;\zeta^{(0)})-\gdxy g(x,y))P\gdy F(x,y;\xi)\|}_{(A_2)} \\
        &\quad+ \underbrace{\|\gdxy g(x,y)(P-\E_{\Bar{\xi}}[P])\gdy F(x,y;\xi)\|}_{(A_3)} + \underbrace{\|\gdxy g(x,y)\E_{\Bar{\xi}}[P](\gdy F(x,y;\xi)-\gdy f(x,y))\|}_{(A_4)}
    \end{aligned}
\end{equation*}
\paragraph{Bounding $(A_1)$.} 
By \Cref{ass:noise}, we have
\begin{equation*}
    \begin{aligned}
        (A_1)
        = \|\gdx F(x,y;\xi) - \gdx f(x,y)\| \leq \sigma_f.
    \end{aligned}
\end{equation*}
\paragraph{Bounding $(A_2)$.}
By \Cref{ass:bilevel-assumption,ass:noise,lm:neumann-series-bound}, we have
\begin{equation*}
    \begin{aligned}
        (A_2)
        &= \|(\gdxy G(x,y;\zeta^{(0)})-\gdxy g(x,y))P\gdy F(x,y;\xi)\| \\
        &\leq \|\gdxy G(x,y;\zeta^{(0)})-\gdxy g(x,y)\|\|P\|\|\gdy F(x,y;\xi)\| \\
        &\leq \frac{\sigma_{g,2}}{\mu}(\|\gdy F(x,y;\xi) - \gdy f(x,y)\| + \|\gdy f(x,y) - \gdy f(x,y^*)\| + \|\gdy f(x,y^*)\|) \\
        &\leq \frac{\sigma_{g,2}}{\mu}(\sigma_f + (L_{y,0}+L_{y,1}l_{f,0})\|y-y^*\| + l_{f,0}) \\
        &= \frac{\sigma_{g,2}}{\mu}(\sigma_f+l_{f,0}) + \frac{\sigma_{g,2}}{\mu}(L_{y,0}+L_{y,1}l_{f,0})\|y-y^*\|.
    \end{aligned}
\end{equation*}
\paragraph{Bounding $(A_3)$.}
By \Cref{ass:bilevel-assumption,ass:noise,lm:neumann-series-bound}, we have
\begin{equation*}
    \begin{aligned}
        (A_3)
        &= \|\gdxy g(x,y)(P-\E_{\Bar{\xi}}[P])\gdy F(x,y;\xi)\| \\
        &\leq \|\gdxy g(x,y)\|\|(P-\E_{\Bar{\xi}}[P])\|\|\gdy F(x,y;\xi)\| \\
        &\leq \frac{2l_{g,1}}{\mu}(\sigma_f + (L_{y,0}+L_{y,1}l_{f,0})\|y-y^*\| + l_{f,0}) \\
        &= \frac{2l_{g,1}}{\mu}(\sigma_f+l_{f,0}) + \frac{2l_{g,1}}{\mu}(L_{y,0}+L_{y,1}l_{f,0})\|y-y^*\|,
    \end{aligned}
\end{equation*}
where the second inequality uses the same step (the third inequality above) as in bounding $(A_2)$.
\paragraph{Bounding $(A_4)$.}
By \Cref{ass:bilevel-assumption,ass:noise,lm:neumann-series-bound}, we have
\begin{equation*}
    \begin{aligned}
        (A_4)
        =\|\gdxy g(x,y)\E_{\Bar{\xi}}[P](\gdy F(x,y;\xi)-\gdy f(x,y))\|
        \leq \frac{l_{g,1}}{\mu}\sigma_f.
    \end{aligned}
\end{equation*}
Then we obtain the final bound
\begin{equation*}
    \begin{aligned}
        \|\hatphi&(x,y;\Bar{\xi}) - \E_{\Bar{\xi}}[\hatphi(x,y;\Bar{\xi})]\| 
        \leq (A_1) + (A_2) + (A_3) + (A_4) \\
        &\leq \frac{\mu+3l_{g,1}+\sigma_{g,2}}{\mu}\sigma_f + \frac{2l_{g,1}+\sigma_{g,2}}{\mu}l_{f,0} + \frac{2l_{g,1}+\sigma_{g,2}}{\mu}(L_{y,0}+L_{y,1}l_{f,0})\|y-y^*\|.
    \end{aligned}
\end{equation*}
\end{proof}


\begin{lemma} \label{lm:stoc-hyper-error}
Under \Cref{ass:bilevel-assumption,ass:noise,ass:bilevel-additional}, if $\|y-y^*(x)\|\leq r$, we have 
\begin{equation*}
    \begin{aligned}
        \|\hatphi(x,y;\Bar{\xi}) - \gdphi(x)\| \leq C_{\phi,0} + (C_{\phi,1} + L_1\|\gdphi(x)\|)\|y-y^*(x)\|,
    \end{aligned}
\end{equation*}
where $L_1$ is defined in \eqref{eq:L0-L1} and constants $C_{\phi,0}$ and $C_{\phi,1}$ are defined as
\begin{equation} \label{eq:C-phi-01}
    \begin{aligned}
        C_{\phi,0} &= \frac{\mu+3l_{g,1}+\sigma_{g,2}}{\mu}\sigma_f + \frac{2l_{g,1}+\sigma_{g,2}}{\mu}l_{f,0} + \frac{l_{g,1}l_{f,0}}{\mu}, \\
        C_{\phi,1} &= \frac{2l_{g,1}+\sigma_{g,2}}{\mu}(L_{y,0}+L_{y,1}l_{f,0}) + \frac{l_{g,1}}{\mu}(L_{y,0}+L_{y,1}l_{f,0}) + L_0.
    \end{aligned}
\end{equation}
\end{lemma}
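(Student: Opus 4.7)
The plan is to bound the hypergradient estimation error via a three-way decomposition that separates the stochastic noise, the Neumann series approximation error, and the deterministic bias coming from the lower-level inaccuracy. The starting point is the identity $\gdphi(x) = \gdf(x,y^*(x))$, which lets us rewrite
\begin{equation*}
\hatphi(x,y;\Bar{\xi}) - \gdphi(x) = \underbrace{\bigl(\hatphi(x,y;\Bar{\xi})-\E_{\Bar{\xi}}[\hatphi(x,y;\Bar{\xi})]\bigr)}_{T_1}+\underbrace{\bigl(\E_{\Bar{\xi}}[\hatphi(x,y;\Bar{\xi})]-\gdf(x,y)\bigr)}_{T_2}+\underbrace{\bigl(\gdf(x,y)-\gdf(x,y^*(x))\bigr)}_{T_3},
\end{equation*}
and bound each piece separately. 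The final constants $C_{\phi,0}$ and $C_{\phi,1}$ in \eqref{eq:C-phi-01} naturally split into the three contributions coming from $T_1$, $T_2$, $T_3$.

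For $T_1$, I would invoke Lemma \ref{lm:hyper-noise} verbatim; this already supplies the noise terms $\tfrac{\mu+3l_{g,1}+\sigma_{g,2}}{\mu}\sigma_f+\tfrac{2l_{g,1}+\sigma_{g,2}}{\mu}l_{f,0}$ inside $C_{\phi,0}$ and the term $\tfrac{2l_{g,1}+\sigma_{g,2}}{\mu}(L_{y,0}+L_{y,1}l_{f,0})$ inside $C_{\phi,1}$. For $T_2$, the discrepancy between $\E_{\Bar{\xi}}[\hatphi]$ and $\gdf$ is entirely due to the truncated Neumann series, so I would write $T_2 = -\gdxy g(x,y)(\E_{\Bar{\xi}}[P]-[\gdyy g(x,y)]^{-1})\gdy f(x,y)$, apply Lemma \ref{lm:neumann-series-bound} together with $\|\gdxy g(x,y)\|\le l_{g,1}$ and the crude bound $(1-\mu/l_{g,1})^Q\le 1$, and finally split $\|\gdy f(x,y)\|\le l_{f,0}+(L_{y,0}+L_{y,1}l_{f,0})\|y-y^*(x)\|$ via the triangle inequality and \cref{ass:bilevel-assumption}. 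This contributes $\tfrac{l_{g,1}l_{f,0}}{\mu}$ to $C_{\phi,0}$ and $\tfrac{l_{g,1}}{\mu}(L_{y,0}+L_{y,1}l_{f,0})$ to $C_{\phi,1}$.

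The main obstacle is $T_3$, which amounts to proving a relaxed-smoothness-type estimate for the deterministic hypergradient map $y\mapsto\gdf(x,y)$: namely, whenever $\|y-y^*(x)\|\le r$,
\begin{equation*}
\|\gdf(x,y)-\gdf(x,y^*(x))\|\le \bigl(L_0 + L_1\|\gdphi(x)\|\bigr)\|y-y^*(x)\|.
\end{equation*}
This is the exact analogue of Lemma \ref{lm:Phi-relax-smooth} but with the second argument varying instead of the first, and it yields the remaining $L_0+L_1\|\gdphi(x)\|$ factor appearing in the statement. I would prove it by mimicking the proof of Lemma \ref{lm:Phi-relax-smooth}: expand $\gdf(x,y)-\gdf(x,y^*(x))$ via the hypergradient formula and split by triangle inequality into (i) $\|\gdx f(x,y)-\gdx f(x,y^*(x))\|$, handled by the $(L_{x,0},L_{x,1})$-smoothness of $f$ together with Lemma \ref{lm:lip-y}(II), which introduces the $L_1\|\gdphi(x)\|$ dependence; (ii) the Jacobian difference $\gdxy g(x,y)-\gdxy g(x,y^*(x))$, bounded using the $l_{g,2}$-Lipschitzness of $\gdxy g$; (iii) the inverse-Hessian difference $[\gdyy g(x,y)]^{-1}-[\gdyy g(x,y^*(x))]^{-1}$, treated via the identity $A^{-1}-B^{-1}=A^{-1}(B-A)B^{-1}$ and the strong convexity bound $\|[\gdyy g]^{-1}\|\le 1/\mu$; and (iv) the gradient difference $\gdy f(x,y)-\gdy f(x,y^*(x))$, controlled by the $(L_{y,0},L_{y,1})$-smoothness of $f$ in $y$ and the bound $\|\gdy f(x,y^*(x))\|\le l_{f,0}$ from \cref{ass:bilevel-assumption}(ii). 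Regrouping the resulting coefficients gives precisely the same $L_0$ defined in \eqref{eq:L0-L1} absorbed into $C_{\phi,1}$, plus the $L_1\|\gdphi(x)\|\|y-y^*(x)\|$ contribution that appears outside $C_{\phi,1}$ in the final bound.

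Combining the bounds on $T_1$, $T_2$, $T_3$ and collecting the constant-like terms into $C_{\phi,0}$, the $\|y-y^*(x)\|$-linear terms into $C_{\phi,1}$, and isolating the $L_1\|\gdphi(x)\|\|y-y^*(x)\|$ term yields the stated inequality. The hardest bookkeeping is step (iii)--(iv) of $T_3$, where the inverse-Hessian perturbation must be carefully paired with Lemma \ref{lm:lip-y}(II) so that the resulting constants exactly match the $(L_0, L_1)$ of \eqref{eq:L0-L1} and do not introduce spurious factors.
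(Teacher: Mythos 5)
Your three-term decomposition into noise $T_1$, Neumann truncation error $T_2$, and deterministic lower-level bias $T_3$, together with the way you dispatch each piece — Lemma \ref{lm:hyper-noise} for $T_1$, Lemma \ref{lm:neumann-series-bound} plus the crude bound $(1-\mu/l_{g,1})^Q\le 1$ for $T_2$, and a telescoping split of $\gdf(x,y)-\gdf(x,y^*(x))$ into the $\gdx f$, $\gdxy g$, $[\gdyy g]^{-1}$, and $\gdy f$ differences for $T_3$ — is exactly the proof the paper gives. The accounting that produces $C_{\phi,0}$, $C_{\phi,1}$, and the isolated $L_1\|\gdphi(x)\|$ term matches the paper's as well.
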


\begin{proof}[Proof of \Cref{lm:stoc-hyper-error}]
We have the following decomposition:
\begin{equation*}
    \begin{aligned}
        \|\hatphi(x,y;\Bar{\xi}) - \gdphi(x)\| 
        &\leq \|\hatphi(x,y;\Bar{\xi}) - \E_{\Bar{\xi}}[\hatphi(x,y;\Bar{\xi})]\| \\
        &\quad+ \|\E_{\Bar{\xi}}[\hatphi(x,y;\Bar{\xi})] - \gdf(x,y)\| + \|\gdf(x,y) - \gdphi(x)\|,
    \end{aligned}
\end{equation*}
For the first term, by \Cref{lm:hyper-noise} we have
\begin{equation} \label{eq:first-bound}
    \begin{aligned}
        \|\hatphi&(x,y;\Bar{\xi}) - \E_{\Bar{\xi}}[\hatphi(x,y;\Bar{\xi})]\| \\
        &\leq \frac{\mu+3l_{g,1}+\sigma_{g,2}}{\mu}\sigma_f + \frac{2l_{g,1}+\sigma_{g,2}}{\mu}l_{f,0} + \frac{2l_{g,1}+\sigma_{g,2}}{\mu}(L_{y,0}+L_{y,1}l_{f,0})\|y-y^*\|.
    \end{aligned}
\end{equation}
For the second term, by \Cref{ass:bilevel-assumption,lm:neumann-series-bound} we have
\begin{equation} \label{eq:second-bound}
    \begin{aligned}
        \|\E_{\Bar{\xi}}[\hatphi&(x,y;\Bar{\xi})] - \gdf(x,y)\| \\
        &= \|(\gdx f(x,y) - \gdxy g(x,y)\E_{\Bar{\xi}}[P]\gdy f(x,y)) \\
        &\quad\quad\quad\quad\quad\quad\quad- (\gdx f(x,y) - \gdxy g(x,y)[\gdyy g(x,y)]^{-1}\gdy f(x,y))\| \\
        &= \|\gdxy g(x,y)(\E_{\Bar{\xi}}[P]-[\gdyy g(x,y)]^{-1}])\gdy f(x,y)\| \\
        &\leq \frac{l_{g,1}}{\mu}\left(1-\frac{\mu}{l_{g,1}}\right)^Q(\|\gdy f(x,y) - \gdy f(x,y^*)\| + \|\gdy f(x,y)\|) \\
        &\leq \frac{l_{g,1}}{\mu}\left(1-\frac{\mu}{l_{g,1}}\right)^Q((L_{y,0}+L_{y,1}l_{f,0})\|y-y^*\| + l_{f,0}) \\
        &= \frac{l_{g,1}l_{f,0}}{\mu}\left(1-\frac{\mu}{l_{g,1}}\right)^Q + \frac{l_{g,1}}{\mu}\left(1-\frac{\mu}{l_{g,1}}\right)^Q(L_{y,0}+L_{y,1}l_{f,0})\|y-y^*\|.
    \end{aligned}
\end{equation}
For the third term, by \Cref{ass:bilevel-assumption,lm:lip-y} we have
\begin{equation} \label{eq:third-bound}
    \begin{aligned}
        &\|\gdf(x,y) - \gdphi(x)\| \\
        &\leq \|\gdx f(x,y) - \gdx f(x,y^*)\| \\
        &\quad+ \|\gdxy g(x,y)[\gdyy g(x,y)]^{-1}\gdy f(x,y) - \gdxy g(x,y^*)[\gdyy g(x,y^*)]^{-1}\gdy f(x,y^*)\| \\
        &\leq (L_{x,0}+L_{x,1}\|\gdx f(x,y^*)\|)\|y-y^*\| \\
        &\quad+ \|\gdxy g(x,y)[\gdyy g(x,y)]^{-1}\gdy f(x,y) - \gdxy g(x,y^*)[\gdyy g(x,y)]^{-1}\gdy f(x,y)\| \\
        &\quad+ \|\gdxy g(x,y^*)[\gdyy g(x,y)]^{-1}\gdy f(x,y) - \gdxy g(x,y^*)[\gdyy g(x,y^*)]^{-1}\gdy f(x,y)\| \\
        &\quad+ \|\gdxy g(x,y^*)[\gdyy g(x,y^*)]^{-1}\gdy f(x,y) - \gdxy g(x,y^*)[\gdyy g(x,y^*)]^{-1}\gdy f(x,y^*)\| \\
        &\leq \left(L_{x,0}+L_{x,1}\left(\frac{l_{g,1}l_{f,0}}{\mu} + \|\gdphi(x)\|\right)\right)\|y-y^*\| \\
        &\quad+ \frac{l_{f,0}}{\mu}l_{g,2}\|y-y^*\| + \frac{l_{f,0}l_{g,1}}{\mu^2}l_{g,2}\|y-y^*\| + \frac{l_{g,1}}{\mu}(L_{y,0}+L_{y,1}\|\gdy f(x,y^*)\|)\|y-y^*\| \\
        &= \left(L_{x,0} + L_{x,1}\frac{l_{g,1}l_{f,0}}{\mu} + \frac{l_{g,1}}{\mu}(L_{y,0}+L_{y,1}l_{f,0}) + l_{f,0}\frac{\mu l_{g,2}+l_{g,1}l_{g,2}}{\mu^2} + L_{x,1}\|\gdphi(x)\|\right)\|y-y^*\| \\
        &\leq (L_0 + L_1\|\gdphi(x)\|)\|y-y^*\|,
    \end{aligned}
\end{equation}
where the last inequality uses the definition of $L_0$ and $L_1$ as in \eqref{eq:L0-L1}. Summing up $\eqref{eq:first-bound} + \eqref{eq:second-bound} + \eqref{eq:third-bound}$ gives the final bound 
\begin{equation*}
    \begin{aligned}
        &\|\hatphi(x,y;\Bar{\xi}) - \gdphi(x)\| 
        \leq \|\hatphi(x,y;\Bar{\xi}) - \E_{\Bar{\xi}}[\hatphi(x,y;\Bar{\xi})]\| \\
        &\quad\quad\quad\quad\quad\quad\quad\quad\quad\quad\quad+ \|\E_{\Bar{\xi}}[\hatphi(x,y;\Bar{\xi})] - \gdf(x,y)\| + \|\gdf(x,y) - \gdphi(x)\| \\
        &\leq \frac{\mu+3l_{g,1}+\sigma_{g,2}}{\mu}\sigma_f + \frac{2l_{g,1}+\sigma_{g,2}}{\mu}l_{f,0} + \frac{l_{g,1}l_{f,0}}{\mu}\left(1-\frac{\mu}{l_{g,1}}\right)^Q \\
        &\quad+ \left(\frac{2l_{g,1}+\sigma_{g,2}}{\mu}(L_{y,0}+L_{y,1}l_{f,0}) + \frac{l_{g,1}}{\mu}\left(1-\frac{\mu}{l_{g,1}}\right)^Q(L_{y,0}+L_{y,1}l_{f,0}) + L_0 + L_1\|\gdphi(x)\|\right)\|y-y^*\| \\
        &\leq \frac{\mu+3l_{g,1}+\sigma_{g,2}}{\mu}\sigma_f + \frac{2l_{g,1}+\sigma_{g,2}}{\mu}l_{f,0} + \frac{l_{g,1}l_{f,0}}{\mu} \\
        &\quad+ \left(\frac{2l_{g,1}+\sigma_{g,2}}{\mu}(L_{y,0}+L_{y,1}l_{f,0}) + \frac{l_{g,1}}{\mu}(L_{y,0}+L_{y,1}l_{f,0}) + L_0 + L_1\|\gdphi(x)\|\right)\|y-y^*\| \\
        &= C_{\phi,0} + (C_{\phi,1} + L_1\|\gdphi(x)\|)\|y-y^*\|,
    \end{aligned}
\end{equation*}
where the second and the third inequalities use $Q\geq1$, and the last inequality is due to the definitions of $C_{\phi,0}$ and $C_{\phi,1}$ in \eqref{eq:C-phi-01}. 
\end{proof}


\subsubsection{Other Useful Lemmas}

\begin{lemma} \label{lm:hyper-stoc-bias}
Under \Cref{ass:bilevel-assumption,ass:noise,ass:bilevel-additional}, if $\|y-y^*(x)\|\leq r$, we have~\footnote{Please note that $x_1$ and $x_2$ here are unrelated to \cref{alg:bi-adam} and are deterministic.}
\begin{equation*}
    \begin{aligned}
        \|\hatphi(x,y;\Bar{\xi}) - \hatphi(x,y^*(x);\Bar{\xi})\| \leq (L_0 + L_1\|\gdphi(x)\|)\|y-y^*(x)\|;
    \end{aligned}
\end{equation*}
if $\|x_1-x_2\|\leq \mu r/(\mu+l_{g,1})$, we have
\begin{equation*}
    \begin{aligned}
        \|\E_{\Bar{\xi}_1}[\hatphi(x_1,y_1^*;\Bar{\xi}_1)] - \E_{\Bar{\xi}_2}[\hatphi(x_2,y_2^*;\Bar{\xi}_2)]\| &\leq (L_0 + L_1\|\gdphi(x_1)\|)\|x_1-x_2\|,
    \end{aligned}
\end{equation*}
where $y_i^*=y^*(x_i)$ for $i=1,2$, and constants $L_0$ and $L_1$ are defined in \eqref{eq:L0-L1}.
\end{lemma}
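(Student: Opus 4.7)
The plan is to reduce both inequalities to a triangle-inequality decomposition of the hypergradient estimator into four atomic factors ($\gdx F / \gdx f$; $\gdxy G / \gdxy g$; the Neumann matrix $P$ or $\E[P]$; $\gdy F / \gdy f$), bound each factor's perturbation separately, and collect the resulting constants into $(L_0, L_1)$ as in \eqref{eq:L0-L1}. This mirrors the argument used inside the proof of $\|\gdf(x,y)-\gdphi(x)\|$ in \cref{lm:stoc-hyper-error}, except that the exact Hessian inverse is replaced by the Neumann approximation $P$ or $\E[P]$.

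For Part 1, I would start from the decomposition
\begin{equation*}
\begin{aligned}
&\hatphi(x,y;\Bar{\xi}) - \hatphi(x,y^*;\Bar{\xi}) = [\gdx F(x,y;\xi) - \gdx F(x,y^*;\xi)] \\
&\quad - [\gdxy G(x,y;\zeta^{(0)})P(x,y)\gdy F(x,y;\xi) - \gdxy G(x,y^*;\zeta^{(0)})P(x,y^*)\gdy F(x,y^*;\xi)],
\end{aligned}
\end{equation*}
and apply a three-way telescoping to the second bracket, isolating differences of $\gdxy G$, of $P$, and of $\gdy F$. Each difference is controlled by: the almost-sure $l_{g,2}$-Lipschitz property of $\gdxy G$ from \cref{ass:bilevel-additional} (ii); \cref{lm:prod-diff} together with the almost-sure $l_{g,2}$-Lipschitz property of $\gdyy G$ and \cref{lm:sum-geometric-sequence} to get $\|P(x,y)-P(x,y^*)\| = O(l_{g,2}/\mu^2)\|y-y^*\|$; and \cref{ass:bilevel-additional} (i) for $\|\gdy F(x,y;\xi)-\gdy F(x,y^*;\xi)\| \leq (L_{y,0}+L_{y,1}\|\gdy f(x,y)\|)\|y-y^*\|$, with $\|\gdy F(x,y^*;\xi)\|$ handled by $l_{f,0}$ up to a noise term absorbed into $L_0$. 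A final swap using \cref{lm:lip-y} (II) replaces $\|\gdx f(x,y^*)\|$ by $\|\gdphi(x)\|+l_{g,1}l_{f,0}/\mu$, and matching constants with \eqref{eq:L0-L1} yields the target bound $(L_0+L_1\|\gdphi(x)\|)\|y-y^*\|$.

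For Part 2, taking expectations gives $\Psi(x) := \E_{\Bar{\xi}}[\hatphi(x,y^*(x);\Bar{\xi})] = \gdx f(x,y^*(x)) - \gdxy g(x,y^*(x))\,\E[P]\,\gdy f(x,y^*(x))$, and I would repeat the same four-factor telescoping, now at the population level. A first step verifies the smoothness radius: by \cref{lm:lip-y} (I), $\|y_1^*-y_2^*\|\leq (l_{g,1}/\mu)\|x_1-x_2\|$, so $\|(x_1,y_1^*)-(x_2,y_2^*)\| \leq \sqrt{1+l_{g,1}^2/\mu^2}\cdot\mu r/(\mu+l_{g,1}) \leq r$ under the hypothesis $\|x_1-x_2\|\leq \mu r/(\mu+l_{g,1})$, so \cref{ass:relax-smooth} and the smoothness properties of $\gdxy g,\gdyy g$ from \cref{ass:bilevel-assumption} (v) are all in force. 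The four pieces then contribute, respectively, $(L_{x,0}+L_{x,1}\|\gdx f\|)$ from the $(L_{x,0},L_{x,1})$-smoothness of $\gdx f$; $l_{f,0}l_{g,2}/\mu$ from the Lipschitz property of $\gdxy g$; $l_{f,0}l_{g,1}l_{g,2}/\mu^2$ from $\|\E[P(x_1,y_1^*)]-\E[P(x_2,y_2^*)]\|$ (bounded via the same Neumann-series expansion plus \cref{lm:prod-diff}); and $(l_{g,1}/\mu)(L_{y,0}+L_{y,1}\|\gdy f\|)$ from the $(L_{y,0},L_{y,1})$-smoothness of $\gdy f$. Summing and applying \cref{lm:lip-y} (II) exactly reproduces the constants in \eqref{eq:L0-L1}, giving $(L_0+L_1\|\gdphi(x_1)\|)\|x_1-x_2\|$.

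The main obstacle is the $\gdx F$ perturbation in Part 1: \cref{ass:bilevel-additional} (i) is stated only for $\gdy F$, so bounding $\|\gdx F(x,y;\xi)-\gdx F(x,y^*;\xi)\|$ at the sample level requires invoking the natural counterpart on $\gdx F$, paralleling the $(L_{x,0},L_{x,1})$ piece of \cref{ass:relax-smooth} but at the realization level. The second delicate part is the bookkeeping for $\|P(x,y)-P(x,y^*)\|$: the Neumann product must be expanded term-by-term with \cref{lm:prod-diff}, the almost-sure Lipschitz property of $\gdyy G$ applied inside each factor, and the geometric weights collected via \cref{lm:sum-geometric-sequence}, after which the resulting $l_{g,2}/\mu^2$ constant folds cleanly into the $L_0$ of \eqref{eq:L0-L1}; everything else reduces to routine triangle-inequality chains.
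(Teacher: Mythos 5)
Your decomposition is exactly the paper's: the four-term telescoping into differences of $\gdx F$, $\gdxy G$, the Neumann matrix $P$, and $\gdy F$, controlled respectively by realization-level smoothness, $l_{g,2}$-Lipschitzness, \cref{lm:prod-diff} plus \cref{lm:sum-geometric-sequence}, and \crefdefpart{ass:bilevel-additional}{ass:bilevel-additional}(i), then a final swap of $\|\gdx f(x,y^*)\|$ for $\|\gdphi(x)\|+l_{g,1}l_{f,0}/\mu$ via \cref{lm:lip-y}(II); Part~2 is the same telescoping at the population level after using \cref{lm:lip-y}(I) to verify the smoothness radius, which is also what the paper does (in abbreviated form). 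You also correctly flag a genuine subtlety: \cref{ass:bilevel-additional}(i) as written only constrains $\gdy F$, yet the bound on $(A_1)=\|\gdx F(x,y;\xi)-\gdx F(x,y^*;\xi)\|$ in the paper's own proof invokes a realization-level $(L_{x,0},L_{x,1})$-smoothness of $\gdx F$ in $y$ that is not literally stated in the assumptions—the natural reading of the remark after \cref{ass:bilevel-additional} is that the authors intend this counterpart to hold (the caveat excludes only perturbations in $x$), but the assumption as stated does not include it, so you are right to call it out.
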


\begin{proof}[Proof of \Cref{lm:hyper-stoc-bias}]
We will use a short hand $y^*=y^*(x)$. Recall the definition of $\hatphi(x,y;\Bar{\xi})$ and $\hatphi(x,y^*;\Bar{\xi})$ in \eqref{eq:hyper-estimator-def}, we have
\begin{equation*}
    \begin{aligned}
        \hatphi(x,y;\Bar{\xi}) &= \gdx F(x,y;\xi) - \gdxy G(x,y;\zeta^{(0)})P\gdy F(x,y;\xi), \\
        \hatphi(x,y^*;\Bar{\xi}) &= \gdx F(x,y^*;\xi) - \gdxy G(x,y^*;\zeta^{(0)})P^*\gdy F(x,y^*;\xi).
    \end{aligned}
\end{equation*}
where similar to \eqref{eq:P-def}, we define the Neumann series approximation matrix $P^*$ as 
\begin{equation} \label{eq:P-star-def}
    \begin{aligned}
        P^* = \frac{1}{l_{g,1}}\sum_{q=0}^{Q-1}\prod_{j=1}^{q}\left(I - \frac{\gdyy G(x,y^*;\zeta^{(q,j)})}{l_{g,1}}\right).
    \end{aligned}
\end{equation}
Then by triangle inequality we have
\begin{equation*}
    \begin{aligned}
        \|&\hatphi(x,y;\Bar{\xi}) - \hatphi(x,y^*;\Bar{\xi})\| \\
        &\leq \|\gdx F(x,y;\xi) - \gdx F(x,y^*;\xi)\| \\
        &\quad+ \|\gdxy G(x,y;\zeta^{(0)})P\gdy F(x,y;\xi) - \gdxy G(x,y^*;\zeta^{(0)})P^*\gdy F(x,y^*;\xi)\| \\
        &\leq \underbrace{\|\gdx F(x,y;\xi) - \gdx F(x,y^*;\xi)\|}_{(A_1)} + \underbrace{\|\gdxy G(x,y;\zeta^{(0)})P(\gdy F(x,y;\xi)-\gdy F(x,y^*;\xi))\|}_{(A_2)} \\
        &\quad+ \underbrace{\|\gdxy G(x,y;\zeta^{(0)})(P-P^*)\gdy F(x,y^*;\xi)\|}_{(A_3)} \\
        &\quad+ \underbrace{\|(\gdxy G(x,y;\zeta^{(0)}) - \gdxy G(x,y^*;\zeta^{(0)}))P^*\gdy F(x,y^*;\xi)\|}_{(A_4)}.
    \end{aligned}
\end{equation*}

\paragraph{Bounding $(A_1)$.}
By \Cref{ass:bilevel-additional,lm:lip-y}, we have
\begin{equation*}
    \begin{aligned}
        (A_1)
        &= \|\gdx F(x,y;\xi) - \gdx F(x,y^*;\xi)\|
        \leq (L_{x,0}+L_{x,1}\|\gdx f(x,y^*)\|)\|y-y^*\| \\
        &\leq \left(L_{x,0} + L_{x,1}\left(\frac{l_{g,1}l_{f,0}}{\mu}+\|\gdphi(x)\|\right)\right)\|y-y^*\| \\
        &= \left(L_{x,0} + \frac{L_{x,1}l_{g,1}l_{f,0}}{\mu} + L_{x,1}\|\gdphi(x)\|\right)\|y-y^*\|.
    \end{aligned}
\end{equation*}

\paragraph{Bounding $(A_2)$.}
By \Cref{ass:bilevel-additional,lm:neumann-series-bound}, we have 
\begin{equation*}
    \begin{aligned}
        (A_2)
        &= \|\gdxy G(x,y;\zeta^{(0)})P(\gdy F(x,y;\xi)-\gdy F(x,y^*;\xi))\| \\
        &= \|\gdxy G(x,y;\zeta^{(0)})\|\|P\|\|\gdy F(x,y;\xi)-\gdy F(x,y^*;\xi)\| \\
        &\leq \frac{l_{g,1}}{\mu}(L_{y,0}+L_{y,1}\|\gdy f(x,y^*)\|)\|y-y^*\| 
        \leq \frac{l_{g,1}}{\mu}(L_{y,0}+L_{y,1}l_{f,0})\|y-y^*\|.
    \end{aligned}
\end{equation*}

\paragraph{Bounding $(A_3)$.}
We first apply \Cref{lm:prod-diff} to obtain 
\begin{equation*}
    \begin{aligned}
        &\left\|\prod_{j=1}^{q}\left(I - \frac{\gdyy G(x,y;\zeta^{(q,j)})}{l_{g,1}}\right) - \prod_{j=1}^{q}\left(I - \frac{\gdyy G(x,y^*;\zeta^{(q,j)})}{l_{g,1}}\right)\right\| \\
        &\quad\quad\quad\quad\quad\leq \sum_{j=1}^{q}\left(1-\frac{\mu}{l_{g,1}}\right)^{q-1}\frac{l_{g,2}}{l_{g,1}}\|y-y^*\| 
        = q\left(1-\frac{\mu}{l_{g,1}}\right)^{q-1}\frac{l_{g,2}}{l_{g,1}}\|y-y^*\|.
    \end{aligned}
\end{equation*}
Hence we can write 
\begin{equation*}
    \begin{aligned}
        \|P-P^*\|
        \leq \frac{1}{l_{g,1}}\sum_{q=0}^{Q-1}q\left(1-\frac{\mu}{l_{g,1}}\right)^{q-1}\frac{l_{g,2}}{l_{g,1}}\|y-y^*\|
        \leq \frac{\mu^2l_{g,2}}{l_{g,1}^4}\|y-y^*\|
        \leq \frac{l_{g,2}}{\mu^2}\|y-y^*\|,
    \end{aligned}
\end{equation*}
where the second inequality uses \Cref{lm:sum-geometric-sequence} with $a=\mu/l_{g,1}$, and the last inequality is due to $\mu<l_{g,1}$. Then by \Cref{ass:bilevel-additional} we have
\begin{equation*}
    \begin{aligned}
        (A_3)
        &= \|\gdxy G(x,y;\zeta^{(0)})(P-P^*)\gdy F(x,y^*;\xi)\| \\
        &\leq \|\gdxy G(x,y;\zeta^{(0)})\|\|(P-P^*)\|\|\gdy F(x,y^*;\xi)\| 
        \leq \frac{l_{g,1}l_{g,2}l_{f,0}}{\mu^2}\|y-y^*\|.
    \end{aligned}
\end{equation*}
\paragraph{Bounding $(A_4)$.}
By \Cref{ass:bilevel-additional,lm:neumann-series-bound}, we have
\begin{equation*}
    \begin{aligned}
        (A_4)
        &= \|(\gdxy G(x,y;\zeta^{(0)}) - \gdxy G(x,y^*;\zeta^{(0)}))P^*\gdy F(x,y^*;\xi)\| \\
        &\leq \|\gdxy G(x,y;\zeta^{(0)}) - \gdxy G(x,y^*;\zeta^{(0)})\|\|P^*\|\|\gdy F(x,y^*;\xi)\| 
        \leq \frac{l_{g,2}l_{f,0}}{\mu}\|y-y^*\|.
    \end{aligned}
\end{equation*}
\paragraph{Final Bound.}
Summing up $(A_1)+(A_2)+(A_3)+(A_4)$ yields the final bound
\begin{equation*}
    \begin{aligned}
        \|&\hatphi(x,y;\Bar{\xi}) - \hatphi(x,y^*;\Bar{\xi})\| 
        \leq (A_1)+(A_2)+(A_3)+(A_4) \\
        &\leq \left(L_{x,0} + L_{x,1}\frac{l_{g,1}l_{f,0}}{\mu} + \frac{l_{g,1}}{\mu}(L_{y,0}+L_{y,1}l_{f,0}) + l_{f,0}\frac{l_{g,1}l_{g,2}+\mu l_{g,2}}{\mu^2} + L_{x,1}\|\gdphi(x)\|\right)\|y-y^*\| \\
        &\leq (L_0 + L_1\|\gdphi(x)\|)\|y-y^*\|,
    \end{aligned}
\end{equation*}
where the last inequality uses the definitions of $L_0$ and $L_1$ as in \eqref{eq:L0-L1}.

For the second result, we follow a similar procedure as above and obtain:
\begin{equation*}
    \begin{aligned}
        &\|\E_{\Bar{\xi}_1}[\hatphi(x_1,y_1^*;\Bar{\xi}_1)] - \E_{\Bar{\xi}_2}[\hatphi(x_2,y_2^*;\Bar{\xi}_2)]\|
        \leq (A_1)+(A_2)+(A_3)+(A_4) \\
        &\leq \sqrt{1+\frac{l_{g,1}^2}{\mu^2}}\left(L_{x,0} + L_{x,1}\frac{l_{g,1}l_{f,0}}{\mu} + \frac{l_{g,1}}{\mu}(L_{y,0}+L_{y,1}l_{f,0}) + l_{f,0}\frac{l_{g,1}l_{g,2}+\mu l_{g,2}}{\mu^2} + L_{x,1}\|\gdphi(x_1)\|\right)\|x_1-x_2\| \\
        &= (L_0 + L_1\|\gdphi(x_1)\|)\|x_1-x_2\|,
    \end{aligned}
\end{equation*}
where the last inequality uses the definitions of $L_0$ and $L_1$ as in \eqref{eq:L0-L1}.
\end{proof}


\begin{lemma} \label{lm:neumann-error}
Under \Cref{ass:bilevel-assumption,ass:noise,ass:bilevel-additional}, we have 
\begin{equation*}
    \begin{aligned}
        \|\E_{\Bar{\xi}}[\hatphi(x,y^*(x);\Bar{\xi})] - \gdphi(x)\| \leq \frac{l_{g,1}l_{f,0}}{\mu}\left(1-\frac{\mu}{l_{g,1}}\right)^Q.
    \end{aligned}
\end{equation*}
\end{lemma}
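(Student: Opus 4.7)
The plan is to directly expand the difference between the expected hypergradient estimator at $y^*(x)$ and the true hypergradient $\gdphi(x)$, and then apply the operator-norm bound on the Neumann series approximation error already established in \Cref{lm:neumann-series-bound}.

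First, I would substitute $y=y^*(x)$ into the formula for $\E_{\Bar{\xi}}[\hatphi(x,y;\Bar{\xi})]$ given in \eqref{eq:hyper-estimator-def}, namely
\[
\E_{\Bar{\xi}}[\hatphi(x,y^*(x);\Bar{\xi})] = \gdx f(x,y^*(x)) - \gdxy g(x,y^*(x))\,\E_{\Bar{\xi}}[P^*]\,\gdy f(x,y^*(x)),
\]
where $P^*$ denotes the Neumann series matrix from \eqref{eq:P-star-def} evaluated at $(x,y^*(x))$. The true hypergradient has the same form but with $[\gdyy g(x,y^*(x))]^{-1}$ in place of $\E_{\Bar{\xi}}[P^*]$. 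Subtracting, the $\gdx f$ terms cancel exactly, leaving
\[
\E_{\Bar{\xi}}[\hatphi(x,y^*(x);\Bar{\xi})] - \gdphi(x) = -\gdxy g(x,y^*(x))\bigl(\E_{\Bar{\xi}}[P^*] - [\gdyy g(x,y^*(x))]^{-1}\bigr)\gdy f(x,y^*(x)).
\]

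Next, I would apply sub-multiplicativity of the spectral norm and bound each of the three factors. By \Cref{ass:bilevel-assumption}(iv), $\|\gdxy g(x,y^*(x))\| \leq l_{g,1}$; by \Cref{ass:bilevel-assumption}(ii), $\|\gdy f(x,y^*(x))\| \leq l_{f,0}$; and the middle factor is controlled by the second conclusion of \Cref{lm:neumann-series-bound}, giving $\|\E_{\Bar{\xi}}[P^*] - [\gdyy g(x,y^*(x))]^{-1}\| \leq \frac{1}{\mu}(1-\mu/l_{g,1})^Q$. Multiplying these three bounds yields the claimed estimate $\frac{l_{g,1}l_{f,0}}{\mu}(1-\mu/l_{g,1})^Q$.

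There is essentially no obstacle here: every ingredient is already available in the preceding lemmas, and the argument is a three-line computation once the decomposition is written down. The only small point to verify is that \Cref{lm:neumann-series-bound} applies at $(x,y^*(x))$ (it holds for any $(x,y)$, so this is immediate), and that the fact that the Neumann series is computed with respect to $\gdyy G(x,y^*;\zeta^{(q,j)})$ rather than with respect to an arbitrary $y$ does not affect the bound, which it does not since the bound in \Cref{lm:neumann-series-bound} is uniform in the point at which it is evaluated.
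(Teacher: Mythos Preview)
Your proposal is correct and follows essentially the same approach as the paper's proof: both expand the difference, cancel the $\gdx f$ terms, and bound the remaining product via $\|\gdxy g\|\leq l_{g,1}$, $\|\gdy f(x,y^*(x))\|\leq l_{f,0}$, and the Neumann approximation error from \Cref{lm:neumann-series-bound}. The only cosmetic difference is that you write $P^*$ for the Neumann matrix at $y^*(x)$ while the paper simply reuses the symbol $P$ at that point.
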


\begin{proof}[Proof of \Cref{lm:neumann-error}]
We will use a short hand $y^*=y^*(x)$. By definition of $\hatphi(x,y;\Bar{\xi})$ in \eqref{eq:hyper-estimator-def} and the hypergradient formulation, we have
\begin{equation*}
    \begin{aligned}
        \E_{\Bar{\xi}}[\hatphi(x,y^*;\Bar{\xi})] &= \gdx f(x,y^*) - \gdxy g(x,y^*)\E_{\Bar{\xi}}[P]\gdy f(x,y^*), \\
        \gdphi(x) &= \gdx f(x,y^*) - \gdxy g(x,y^*)[\gdyy g(x,y^*)]^{-1}\gdy f(x,y^*).
    \end{aligned}
\end{equation*}
Then we obtain the conclusion by applying \Cref{ass:bilevel-assumption,lm:neumann-series-bound}:
\begin{equation*}
    \begin{aligned}
        \|\E_{\Bar{\xi}}&[\hatphi(x,y;\Bar{\xi})] - \gdphi(x)\|
        = \|\gdxy g(x,y^*)(\E_{\Bar{\xi}}[P]-[\gdyy g(x,y^*)]^{-1})\gdy f(x,y^*)\| \\
        &\leq \|\gdxy g(x,y^*)\|\|\E_{\Bar{\xi}}[P]-[\gdyy g(x,y^*)]^{-1}\|\|\gdy f(x,y^*)\| 
        \leq \frac{l_{g,1}l_{f,0}}{\mu}\left(1-\frac{\mu}{l_{g,1}}\right)^Q.
    \end{aligned}
\end{equation*}
\end{proof}

\section{Proof of the Random Decoupling Lemma (\cref{lm:any-sequence-main})}
\label{app:any_sequence}
\subsection{Recursive Control on Moment Generating Function}

The following technical lemma on recursive control is crucial for establishing high probability guarantee for controlling the lower-level estimation error at anytime. We follow a similar argument as in~\citep[Proposition 29]{cutler2023stochastic} with a slight generalization.

\begin{proposition}[Recursive control on MGF] \label{prop:MGF-recursive control}
Consider scalar stochastic processes $(V_t)$, $(D_t)$, $(X_t)$ and $(Y_t)$ on a probability space with filtration $(\gH_t)$, which are linked by the inequality
\begin{equation}
    V_{t+1} \leq \rho_tV_t + D_t\sqrt{V_t} + X_t + Y_t + \kappa_t
\end{equation}
for some deterministic constants $\rho_t\in(-\infty,1]$ and $\kappa_t\in\R$. Suppose the following properties hold.
\begin{itemize}
    \item $V_t$ and $Y_t$ are non-negative and $\gH_t$-measurable.
    \item $D_t$ is mean-zero sub-Gaussian conditioned on $\gH_t$ with deterministic parameter $\sigma_t$:
    \begin{equation*}
        \E[\exp(\theta D_t) \mid \gH_t] \leq \exp(\theta^2\sigma_t^2/2)
        \quad\text{for all}\quad
        \theta \in \R.
    \end{equation*}
    \item $X_t$ is non-negative and sub-exponential conditioned on $\gH_t$ with deterministic parameter $\nu_t$:
    \begin{equation*}
        \E[\exp(\theta X_t) \mid \gH_t] \leq \exp(\theta\nu_t)
        \quad\text{for all}\quad
        0\leq \theta\leq 1/\nu_t.
    \end{equation*}
\end{itemize}
Then the estimate 
\begin{equation*}
    \E[\exp(\theta V_{t+1})] \leq \exp(\theta(\nu_t+\kappa_t))\E[\exp(\theta((1+\rho_t)V_t/2 + Y_t))]
\end{equation*}
holds for any $\theta$ satisfying $0\leq \theta \leq \min\left\{\frac{1-\rho_t}{2\sigma_t^2}, \frac{1}{2\nu_t}\right\}$.
\end{proposition}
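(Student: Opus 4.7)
My plan is to exponentiate the recursive bound, pull out the $\gH_t$-measurable factors, and then decouple the two stochastic ingredients $D_t\sqrt{V_t}$ and $X_t$ by Cauchy--Schwarz so that the sub-Gaussian and sub-exponential hypotheses can be applied to separate MGFs. The main conceptual point is that the proposition does not assume $D_t$ and $X_t$ are conditionally independent given $\gH_t$, so they must be split before the assumed conditional MGF bounds can be invoked.

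First I would apply $\exp(\theta\,\cdot\,)$ to the hypothesis
\begin{equation*}
V_{t+1}\le \rho_tV_t+D_t\sqrt{V_t}+X_t+Y_t+\kappa_t,
\end{equation*}
take conditional expectation with respect to $\gH_t$, and pull out the factor $\exp(\theta(\rho_tV_t+Y_t+\kappa_t))$, which is $\gH_t$-measurable. This reduces the problem to bounding $\E[\exp(\theta D_t\sqrt{V_t}+\theta X_t)\mid\gH_t]$.

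For that residual term I would apply Cauchy--Schwarz to obtain
\begin{equation*}
\E[\exp(\theta D_t\sqrt{V_t}+\theta X_t)\mid \gH_t]\le \sqrt{\E[\exp(2\theta D_t\sqrt{V_t})\mid \gH_t]}\cdot\sqrt{\E[\exp(2\theta X_t)\mid \gH_t]}.
\end{equation*}
Since $\sqrt{V_t}$ is $\gH_t$-measurable, the sub-Gaussian hypothesis applied to $D_t$ at effective parameter $2\theta\sqrt{V_t}$ gives $\exp(2\theta^2\sigma_t^2V_t)$, and the sub-exponential hypothesis applied to $X_t$ at parameter $2\theta$ — valid as long as $2\theta\le 1/\nu_t$ — gives $\exp(2\theta\nu_t)$. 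After taking the square root the residual factor becomes $\exp(\theta^2\sigma_t^2V_t+\theta\nu_t)$.

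Combining the pieces yields
\begin{equation*}
\E[\exp(\theta V_{t+1})\mid \gH_t]\le \exp(\theta(\nu_t+\kappa_t))\,\exp\!\bigl(\theta((\rho_t+\theta\sigma_t^2)V_t+Y_t)\bigr),
\end{equation*}
and the constraint $\theta\le (1-\rho_t)/(2\sigma_t^2)$ is exactly what converts $\rho_t+\theta\sigma_t^2$ into an upper bound of $(1+\rho_t)/2$, matching the target. Taking total expectation finishes the argument. The only step requiring care is the Cauchy--Schwarz decoupling (Young's inequality with a general conjugate pair $(p,q)$ would also work, but $p=q=2$ is sharp enough for the stated ranges of $\theta$); everything else is a direct application of the assumed conditional MGF bounds and measurability.
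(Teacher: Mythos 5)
Your proof is correct and follows essentially the same route as the paper's: exponentiate the recursion, factor out the $\gH_t$-measurable part, decouple $D_t\sqrt{V_t}$ and $X_t$ by Cauchy--Schwarz (the paper calls it H\"older's inequality, but with exponents $p=q=2$ this is the same step), apply the conditional sub-Gaussian and sub-exponential MGF bounds at doubled parameter, and absorb $\theta\sigma_t^2$ into $(1-\rho_t)/2$ using the stated range of $\theta$. The only cosmetic difference is that the paper takes total expectation at the outset and invokes the tower property, whereas you condition first and integrate at the end; the content is identical.
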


\begin{proof}[Proof of \cref{prop:MGF-recursive control}]
For any index $t\geq0$ and any scalar $\theta\geq0$, the law of total expectation implies
\begin{equation*}
    \begin{aligned}
        \E[\exp(\theta V_{t+1})]
        &\leq \E\left[\exp\left(\theta\left(\rho_tV_t + D_t\sqrt{V_t} + X_t + Y_t + \kappa_t\right)\right)\right] \\
        &= \exp(\theta\kappa_t)\E\left[\exp(\theta(\rho_tV_t+Y_t))\E\left[\exp(\theta D_t\sqrt{V_t})\exp(\theta X_t) \mid \gH_t\right]\right].
    \end{aligned}
\end{equation*}
H\"{o}lder's inequality in turn yields
\begin{equation*}
    \begin{aligned}
        \E\left[\exp(\theta D_t\sqrt{V_t})\exp(\theta X_t) \mid \gH_t\right] 
        &\leq \sqrt{\E\left[\exp(2\theta D_t\sqrt{V_t}) \mid \gH_t\right] \cdot \E\left[\exp(2\theta X_t) \mid \gH_t\right]} \\
        &\leq \sqrt{\exp(2\theta^2\sigma_t^2V_t)\exp(2\theta\nu_t)} \\
        &= \exp(\theta^2\sigma_t^2V_t)\exp(\theta\nu_t)
    \end{aligned}
\end{equation*}
provided $0\leq\theta\leq \frac{1}{2\nu_t}$. Therefore, if $\theta$ satisfies
\begin{equation*}
    0\leq \theta\leq \min\left\{\frac{1-\rho_t}{2\sigma_t^2}, \frac{1}{2\nu_t}\right\},
\end{equation*}
then the following estimate holds for all $t\geq0$:
\begin{equation*}
    \begin{aligned}
        \E[\exp(\theta V_{t+1})]
        &\leq \exp(\theta\kappa_t)\E\left[\exp(\theta(\rho_tV_t+Y_t))\exp(\theta^2\sigma_t^2V_t)\exp(\theta\nu_t)\right] \\
        &= \exp(\theta(\nu_t+\kappa_t))\E\left[\exp(\theta((\rho_t+\theta\sigma_t^2)V_t+Y_t))\right] \\
        &\leq \exp(\theta(\nu_t+\kappa_t))\E\left[\exp(\theta((1+\rho_t)V_t/2 + Y_t))\right],
    \end{aligned}
\end{equation*}
where the last inequality uses the given range of $\theta$. Thus the proof is completed. 
\end{proof}


\subsection{Proof of \cref{lm:any-sequence-main}}
\label{sec:proof-randomness-decouple}

In this section, we aim to provide a high-probability guarantee for the approximation error of the lower-level variable, namely $\|y_t-y_t^*\|$. Our main technical contribution is the any-sequence argument, which separates the randomness in the updates of the upper-level variable $x_t$ and the lower-level variable $y_t$. Specifically, for any given sequence $\{\tx_t\}$, we consider the following update rule for $\{\ty_t\}$ (which is the same as line 5 of \Cref{alg:bi-adam}):
\begin{equation} \label{eq:yt-any-update}
    \ty_{t+1} = \ty_t - \gamma\gdy G(\tx_t,\ty_t;\tilde{\zeta}_t).
\end{equation}

Before proceeding, we will first define (or restate) a few key concepts and useful notations. 

\paragraph{Filtration.}
For any $t \geq 2$, define $\tilde{\gF}_t^y$ as the filtration of the randomness used in updating $\tilde{y}_t$ before the $t$-th iteration:
\begin{equation} \label{eq:tilde-gF-def}
    \tilde{\gF}_t^y = \sigma(\tilde{\zeta}_1,\dots,\tilde{\zeta}_{t-1}),
\end{equation}
where $\sigma(\cdot)$ denotes the $\sigma$-algebra generated by the random variables within the argument.

\paragraph{Auxiliary Sequence.}
We also introduce the following auxiliary sequence $\{\tu_t\}$ for our analysis:
\begin{equation} \label{eq:tut-def}
    \begin{aligned}
        \tu_t = (1-\alpha_t)\tu_{t-1} + \alpha_t\hatphi(\tx_t,\ty_t^*;\hat{\xi}_t)
        = \sum_{j=1}^{t}d_{t,j}\hatphi(\tx_t,\ty_t^*;\hat{\xi}_t),
    \end{aligned}
\end{equation}
where the sequence $\{d_{t,j}\}_{j=1}^{t}$ is defined in \eqref{eq:dj-def} of \cref{lm:dj-def}. Similar to \eqref{eq:P-def}, \eqref{eq:hyper-estimator-def} and \eqref{eq:P-star-def} in \cref{app:tech_lemmas}, the hypergradient estimators $\hatphi(\tx_t,\ty_t;\hat{\xi}_t)$ and $\hatphi(\tx_t,\ty_t^*;\hat{\xi}_t)$ can be written as 
\begin{equation*}
    \begin{aligned}
        \hatphi(\tx_t,\ty_t;\hat{\xi}_t) &= \gdx F(\tx_t,\ty_t;\tilde{\xi}_t) - \gdxy G(\tx_t,\ty_t;\tilde{\zeta}_t^{(0)})\tilde{P}_t\gdy F(\tx_t,\ty_t;\tilde{\xi}_t), \\
        \hatphi(\tx_t,\ty_t^*;\hat{\xi}_t) &= \gdx F(\tx_t,\ty_t^*;\tilde{\xi}_t) - \gdxy G(\tx_t,\ty_t^*;\tilde{\zeta}_t^{(0)})\tilde{P}_t^*\gdy F(\tx_t,\ty_t^*;\tilde{\xi}_t),
    \end{aligned}
\end{equation*}
where the randomness $\hat{\xi}_t$ is defined as 
\begin{equation} \label{eq:hat-xi-def}
    \begin{aligned}
        \hat{\xi}_t\coloneqq \{\tilde{\xi}_t, \tilde{\zeta}_t^{(0)}, \tilde{\Bar{\zeta}}^{(0)}, \dots, \tilde{\Bar{\zeta}}^{(Q-1)}\},
        \quad\quad\text{where}\quad
        \tilde{\Bar{\zeta}}^{(q)}\coloneqq \{\tilde{\zeta}^{(q,1)}, \dots, \tilde{\zeta}^{(q,q)}\};
    \end{aligned}
\end{equation}
and the Neumann series approximation matrices $\tilde{P}_t$ and $\tilde{P}_t^*$ are defined as 
\begin{equation*}
    \begin{aligned}
        \tilde{P}_t = \frac{1}{l_{g,1}}\sum_{q=0}^{Q-1}\prod_{j=1}^{q}\left(I - \frac{\gdyy G(\tx_t,\ty_t;\tilde{\zeta}_t^{(q,j)})}{l_{g,1}}\right)
        \quad\text{and}\quad
        \tilde{P}_t^* = \frac{1}{l_{g,1}}\sum_{q=0}^{Q-1}\prod_{j=1}^{q}\left(I - \frac{\gdyy G(\tx_t,\ty_t^*;\tilde{\zeta}_t^{(q,j)})}{l_{g,1}}\right).
    \end{aligned}
\end{equation*}

\paragraph{Constants.}
We define the following constants, which will be useful for analysis. Given any sequence $\{\tx_t\}$, denote $\tg_t$ and $\tl_t$ as
\begin{equation} \label{eq:tilde-Gt-Lt-def}
    \tg_t \coloneqq \max_{1\leq k\leq t}\|\gdphi(\tx_k)\|,
    \quad
    \tl_t \coloneqq L_0 + L_1\tg_t,
\end{equation}
where constants $L_0$ and $L_1$ are defined in \eqref{eq:L0-L1}.


\begin{lemma}[Distance recursion, {\citep[Lemma 25]{cutler2023stochastic}}] \label{lm:distance recursion} 
Suppose that \Cref{ass:bilevel-assumption,ass:noise} hold. For any given sequence $\{\tx_t\}$, let $\{\ty_t\}$ be the iterates generated by the update rule \eqref{eq:yt-any-update} with constant learning rate $\gamma\leq 1/2l_{g,1}$.
Then for any $t\geq1$, we have the following recursion:
\begin{equation}
    \begin{aligned}
        \|\ty_{t+1}-\ty_{t+1}^*\|^2 
        \leq (1-\mu\gamma)\|\ty_t-\ty_t^*\|^2 + 2\gamma\langle \tilde{\varepsilon}_t,\tilde{v}_t \rangle\|\ty_t-\ty_t^*\| + 2\gamma^2\|\tilde{\varepsilon}_t\|^2 + \frac{2}{\mu\gamma}D_t^2,
    \end{aligned}
\end{equation}
where $\tilde{v}_t\coloneqq \frac{\ty_t-\ty_t^*}{\|\ty_t-\ty_t^*\|}$ if $\ty_t$ is distinct from $\ty_t^*$ and zero otherwise, $\tilde{\varepsilon}_t=\gdy g(\tx_t,\ty_t)-\gdy G(\tx_t,\ty_t;\tilde{\zeta}_t)$ denotes the noise, and $D_t \coloneqq \|\ty_t^*-\ty_{t+1}^*\|$ is the minimizer drift at time $t$.
\end{lemma}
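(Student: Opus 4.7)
The plan is to decompose $\|\ty_{t+1} - \ty_{t+1}^*\|^2$ into a pure SGD contraction (measured against the frozen minimizer $\ty_t^*$) plus a drift correction (accounting for $\ty_t^* \to \ty_{t+1}^*$), analyze each separately, and combine them via Young's inequality.

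\textbf{Step 1: Isolate the drift.} I would begin from the exact expansion
$$\|\ty_{t+1} - \ty_{t+1}^*\|^2 = \|\ty_{t+1} - \ty_t^*\|^2 + 2\langle \ty_{t+1} - \ty_t^*,\, \ty_t^* - \ty_{t+1}^*\rangle + D_t^2,$$
and bound the cross term by Young's inequality $2|\langle a,b\rangle| \leq \theta\|a\|^2 + \theta^{-1}\|b\|^2$ with $\theta$ on the order of $\mu\gamma$, so that the drift contribution collapses into the target $\frac{2}{\mu\gamma}D_t^2$ term (the additive $D_t^2$ is absorbed using $\mu\gamma \leq 1$).

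\textbf{Step 2: SGD descent against the frozen minimizer.} Setting $g_t := \gdy g(\tx_t, \ty_t)$ so that $\gdy G(\tx_t, \ty_t; \tilde{\zeta}_t) = g_t - \tilde{\varepsilon}_t$, I would expand
$$\|\ty_{t+1} - \ty_t^*\|^2 = \|\ty_t - \ty_t^*\|^2 - 2\gamma\langle g_t, \ty_t - \ty_t^*\rangle + 2\gamma\langle \tilde{\varepsilon}_t, \ty_t - \ty_t^*\rangle + \gamma^2\|g_t\|^2 - 2\gamma^2\langle g_t, \tilde{\varepsilon}_t\rangle + \gamma^2\|\tilde{\varepsilon}_t\|^2.$$
The term $2\gamma\langle \tilde{\varepsilon}_t, \ty_t - \ty_t^*\rangle$ already equals $2\gamma\langle \tilde{\varepsilon}_t, \tilde{v}_t\rangle\|\ty_t - \ty_t^*\|$ by definition of $\tilde{v}_t$, which matches the target cross term exactly with no further manipulation.

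\textbf{Step 3: Absorb gradient norms via co-coercivity.} Because $g(\tx_t, \cdot)$ is $\mu$-strongly convex with $\gdy g(\tx_t, \ty_t^*) = 0$ and $l_{g,1}$-smooth, I would split the factor $2$ in $-2\gamma\langle g_t, \ty_t - \ty_t^*\rangle$, invoking strong convexity on one copy and the smooth-function co-coercivity $\langle g_t, \ty_t - \ty_t^*\rangle \geq l_{g,1}^{-1}\|g_t\|^2$ on the other, to obtain the bound $-\mu\gamma\|\ty_t - \ty_t^*\|^2 - \gamma l_{g,1}^{-1}\|g_t\|^2$. The stochastic cross term $-2\gamma^2\langle g_t, \tilde{\varepsilon}_t\rangle$ is controlled via Young's as $\gamma^2\|g_t\|^2 + \gamma^2\|\tilde{\varepsilon}_t\|^2$. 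Adding the $\|g_t\|^2$ coefficients yields $(2\gamma^2 - \gamma/l_{g,1})\|g_t\|^2$, which is nonpositive precisely under the hypothesis $\gamma \leq 1/(2l_{g,1})$ and can therefore be dropped, producing
$$\|\ty_{t+1} - \ty_t^*\|^2 \leq (1-\mu\gamma)\|\ty_t - \ty_t^*\|^2 + 2\gamma\langle \tilde{\varepsilon}_t, \tilde{v}_t\rangle\|\ty_t - \ty_t^*\| + 2\gamma^2\|\tilde{\varepsilon}_t\|^2.$$

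\textbf{Step 4 and main obstacle.} Substituting the Step 3 bound into Step 1 gives the stated recursion modulo the bookkeeping of the multiplicative $(1+\theta)$ factor introduced by the drift Young's. This bookkeeping is the main technical point: the Young's in Step 1 inflates the SGD bound by $(1+\theta)$, while Step 3 only delivers a $(1-\mu\gamma)$ contraction, so naively $(1+\theta)(1-\mu\gamma) > 1-\mu\gamma$. Matching the stated coefficients requires either using the sharper Nesterov co-coercivity estimate $\|\ty_t - \gamma g_t - \ty_t^*\|^2 \leq \bigl(1-\tfrac{2\mu l_{g,1}\gamma}{\mu+l_{g,1}}\bigr)\|\ty_t - \ty_t^*\|^2$ (which approaches $(1-2\mu\gamma)$ when $\mu \ll l_{g,1}$ and thus leaves exactly a $\mu\gamma$ margin for the Young's inflation) or else accepting a minor $O(\mu^2\gamma^2)$ loss absorbed into $(1-\mu\gamma)$. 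Everything else is routine expansion and sign-tracking.
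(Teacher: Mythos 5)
The paper does not actually prove this lemma; it is imported verbatim from \citep[Lemma 25]{cutler2023stochastic}, so there is no internal proof to compare against. Your Steps 1--3 are correct and do produce the intermediate bound $\|\ty_{t+1}-\ty_t^*\|^2 \leq (1-\mu\gamma)\|\ty_t-\ty_t^*\|^2 + 2\gamma\langle\tilde\varepsilon_t,\tilde v_t\rangle\|\ty_t-\ty_t^*\| + 2\gamma^2\|\tilde\varepsilon_t\|^2$, and you are right to flag the $(1+\theta)$ inflation from Step~1 as the crux. The problem is that neither of your two proposed escape routes closes that gap. The claim that a $O(\mu^2\gamma^2)$ loss can be ``absorbed into $(1-\mu\gamma)$'' goes in the wrong direction: taking $\theta=\mu\gamma$ gives $(1+\mu\gamma)(1-\mu\gamma)=1-\mu^2\gamma^2$, and since $\mu\gamma\le 1$ this is \emph{strictly larger} than $1-\mu\gamma$, i.e.\ a \emph{weaker} contraction than the target; moreover the noise terms also get multiplied by $(1+\mu\gamma)$, inflating $2\gamma$ and $2\gamma^2$ past their stated values, so the stated inequality simply does not follow.

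The Nesterov/spectral remedy is the right idea but is not a one-line patch either, because you cannot apply Step~1's Young's inequality to $\|\ty_{t+1}-\ty_t^*\|^2$ (which contains the noise and hence gets inflated) and also invoke a sharper contraction that only holds for the noiseless step $\|\ty_t-\gamma g_t-\ty_t^*\|^2$. To use the spectral bound $\|\ty_t-\gamma g_t-\ty_t^*\|\le(1-\mu\gamma)\|\ty_t-\ty_t^*\|$ cleanly, one must instead decompose $\ty_{t+1}-\ty_{t+1}^*=(\ty_t-\gamma g_t-\ty_t^*)+\gamma\tilde\varepsilon_t+(\ty_t^*-\ty_{t+1}^*)$ and apply Young's only to the deterministic-drift cross term $2\langle\ty_t-\gamma g_t-\ty_t^*,\,\ty_t^*-\ty_{t+1}^*\rangle$. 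But then extracting the stated inner product $2\gamma\langle\tilde\varepsilon_t,\ty_t-\ty_t^*\rangle$ from the naturally arising $2\gamma\langle\ty_t-\gamma g_t-\ty_t^*,\tilde\varepsilon_t\rangle$ leaves a residual $-2\gamma^2\langle g_t,\tilde\varepsilon_t\rangle$ (and a residual $2\gamma\langle\tilde\varepsilon_t,\ty_t^*-\ty_{t+1}^*\rangle$) to control, and after Young's these inject a $\gamma^2\|g_t\|^2\lesssim\gamma^2 l_{g,1}^2\|\ty_t-\ty_t^*\|^2$ overhead. At the allowed stepsize scale $\gamma\sim 1/l_{g,1}$ this overhead is of order $\gamma l_{g,1}$, which cannot be absorbed by the available $O(\mu\gamma)$ contraction slack when $\mu\ll l_{g,1}$. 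So the statement that ``everything else is routine expansion and sign-tracking'' is not accurate: matching all four coefficients simultaneously under $\gamma\le 1/(2l_{g,1})$ is exactly where the argument needs a nontrivial additional idea, and the proposal as written does not supply it.
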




\begin{lemma}[Restatement of \cref{lm:any-sequence-main}] \label{lm:any-sequence}
Suppose that \Cref{ass:bilevel-assumption,ass:noise} hold. Given any sequence $\{\tx_t\}$ and any randomness $\{\hat{\xi}_t\}$ (see \eqref{eq:hat-xi-def} for definition) such that
\begin{equation} \label{eq:condition-any-sequence}
    \begin{aligned}
        \|\tx_{t+1}-\tx_t\|^2
        \leq \frac{2\eta^2}{\lambda^2}\left(\|\tu_t\|^2 + \tl_t^2\sum_{j=1}^{t}d_{t,j}\|\ty_j-\ty_j^*\|^2\right),
    \end{aligned}
\end{equation}
where $\tu_t$, $\{d_{t,j}\}_{j=1}^{t}$ and $\tl_t$ are defined in \eqref{eq:tut-def}, \eqref{eq:dj-def} and \eqref{eq:tilde-Gt-Lt-def}, respectively. Let $\{\ty_t\}$ be the iterates generated by the update rule \eqref{eq:yt-any-update} with constant learning rate $\gamma\leq 1/2l_{g,1}$, and choose $\gamma=2\beta/\mu$. Then for any given $\delta\in(0,1)$ and all $t\geq1$, the following estimate holds with probability at least $1-\delta$ over the randomness in $\tilde{\gF}_{T+1}^y$:
\begin{equation}
    \begin{aligned}
        &\|\ty_t-\ty_t^*\|^2 
        \leq \left(\left(1-\frac{\mu\gamma}{2}\right)^{t-1} + \frac{4\eta^2l_{g,1}^2}{\lambda^2\mu^3\gamma}\sum_{i=1}^{t-1}\left(1-\frac{\mu\gamma}{2}\right)^{t-1-i}\tl_i^2\right) \|\ty_1-\ty_1^*\|^2 \\
        &\quad\quad\quad+ \left(\frac{8\gamma}{\mu}\ln\frac{eT}{\delta} + \frac{16\eta^2l_{g,1}^2}{\lambda^2\mu^4}\sum_{i=1}^{t-1}\left(1-\frac{\mu\gamma}{2}\right)^{t-1-i}\tl_i^2\right)\sigma_{g,1}^2 \\
        &\quad\quad\quad+ \frac{4\eta^2l_{g,1}^2}{\lambda^2\mu^3\gamma}\sum_{i=1}^{t-1}\left(1-\frac{\mu\gamma}{2}\right)^{t-1-i}\|\tu_i\|^2 + \frac{64\eta^4l_{g,1}^4}{\lambda^4\mu^8\gamma^4}\sum_{i=1}^{t-1}\left(1-\frac{\mu\gamma}{2}\right)^{t-1-i}\alpha_i\tl_i^2\|\tu_i\|^2.
    \end{aligned}
\end{equation}
\end{lemma}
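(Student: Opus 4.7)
The plan is to combine the one-step contraction recursion for the lower-level SGD iterate (\cref{lm:distance recursion}) with the moment-generating-function control of \cref{prop:MGF-recursive control}, iterated in $t$, to obtain an anytime high-probability bound in which the given upper-level sequence $\{\tx_t\}$ and randomness $\{\hat\xi_t\}$ enter only through the deterministic quantities $\|\tu_t\|^2$, $\tl_t^2$ and the fixed initial error $V_1:=\|\ty_1-\ty_1^*\|^2$. First, I will apply \cref{lm:distance recursion} to $\{\tx_t\},\{\ty_t\}$ to obtain
\begin{equation*}
V_{t+1}\le(1-\mu\gamma)V_t+2\gamma\langle\tilde{\varepsilon}_t,\tilde{v}_t\rangle\sqrt{V_t}+2\gamma^2\|\tilde{\varepsilon}_t\|^2+\tfrac{2}{\mu\gamma}\bar{D}_t^2,
\end{equation*}
where $V_t=\|\ty_t-\ty_t^*\|^2$ and $\bar{D}_t:=\|\ty_{t+1}^*-\ty_t^*\|$. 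Since $y^*$ is $(l_{g,1}/\mu)$-Lipschitz (\cref{lm:lip-y}), plugging in the structural hypothesis \eqref{eq:condition-any-sequence} yields
\begin{equation*}
\tfrac{2}{\mu\gamma}\bar{D}_t^2\le\tfrac{4l_{g,1}^2\eta^2}{\mu^3\gamma\lambda^2}\Bigl(\|\tu_t\|^2+\tl_t^2\sum_{j=1}^{t}d_{t,j}V_j\Bigr),
\end{equation*}
which separates a deterministic part $\|\tu_t\|^2$ (because $\tu_t$ and $\tl_t$ are functions of the given $\{\tx_j,\hat\xi_j\}_{j\le t}$) from a non-negative $\tilde{\mathcal{F}}_{t+1}^y$-measurable weighted sum of past lower-level errors.

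Next, I will invoke \cref{prop:MGF-recursive control} with filtration $\tilde{\mathcal{F}}_{t+1}^y$, identifying $\rho_t=1-\mu\gamma$, the zero-mean sub-Gaussian increment $2\gamma\langle\tilde\varepsilon_t,\tilde v_t\rangle$ of proxy variance $O(\gamma^2\sigma_{g,1}^2)$ (using \cref{ass:noise} and $\|\tilde v_t\|\le 1$), the non-negative sub-exponential term $X_t=2\gamma^2\|\tilde\varepsilon_t\|^2$ of parameter $\nu_t=O(\gamma^2\sigma_{g,1}^2)$, the $\tilde{\mathcal{F}}_{t+1}^y$-measurable $Y_t=\tfrac{4l_{g,1}^2\eta^2\tl_t^2}{\mu^3\gamma\lambda^2}\sum_{j=1}^{t}d_{t,j}V_j$, and the deterministic $\kappa_t=\tfrac{4l_{g,1}^2\eta^2}{\mu^3\gamma\lambda^2}\|\tu_t\|^2$. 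The key algebraic fact is that the prescribed $\gamma=2\beta/\mu$ forces $1-\mu\gamma/2=1-\beta$, which is exactly the decay baked into $d_{t,j}=\alpha_t(1-\beta)^{t-j}$ by \cref{lm:dj-def}; this alignment is what will eventually allow the nested sums arising from the iteration to collapse. A single application of \cref{prop:MGF-recursive control} yields
\begin{equation*}
\E[\exp(\theta V_{t+1})]\le\exp\bigl(\theta(\nu_t+\kappa_t)\bigr)\,\E\bigl[\exp\bigl(\theta((1-\mu\gamma/2)V_t+Y_t)\bigr)\bigr]
\end{equation*}
for all $\theta\in[0,\Theta(1/(\gamma\sigma_{g,1}^2))]$.

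The hard step is iterating this bound: because $Y_t$ depends on the entire history $V_1,\dots,V_t$ rather than only $V_t$, the standard telescoping MGF argument of \citep{cutler2023stochastic} does not close directly. I will handle this by strong induction on $t$, substituting the inductive bound for each $V_j$ into $\sum_j d_{t,j}V_j$, swapping the order of summation, and repeatedly using $1-\mu\gamma/2=1-\beta$ together with the telescoping identity $\sum_{j=1}^{i}d_{i,j}(1-\beta)^{j-1}=i\alpha_i(1-\beta)^{i-1}\le 1$ (\cref{lm:algebra-talphat}). This collapses the $V_1$ and $\sigma_{g,1}^2$ contributions coming from $Y_i$ into the single sums $V_1\sum_i(1-\beta)^{t-1-i}\tl_i^2$ and $\sigma_{g,1}^2\sum_i(1-\beta)^{t-1-i}\tl_i^2$, while the analogous treatment of the $\|\tu_k\|^2$ contributions, together with an absorption step that bounds the resulting cross-sum against its diagonal, produces the second-order drift $\sum_i(1-\mu\gamma/2)^{t-1-i}\alpha_i\tl_i^2\|\tu_i\|^2$ with its characteristic $\eta^4/\lambda^4$ prefactor. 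Finally, a Chernoff bound with $\theta=\Theta(1/(\gamma\sigma_{g,1}^2))$ converts the iterated MGF estimate into a tail bound, and a union bound over $t\in\{1,\dots,T\}$ contributes the $\ln(eT/\delta)$ factor inside the variance term $\tfrac{8\gamma\sigma_{g,1}^2}{\mu}\ln(eT/\delta)$, closing the proof.
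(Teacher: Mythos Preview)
Your proposal is correct and follows essentially the same route as the paper: apply the one-step SGD contraction (\cref{lm:distance recursion}), bound the drift via \cref{lm:lip-y} and the hypothesis \eqref{eq:condition-any-sequence}, feed the resulting recursion into \cref{prop:MGF-recursive control} with exactly the identifications you list, and then close by a strong induction (the paper packages this as a separate \cref{lm:induction-proof}) that exploits $1-\mu\gamma/2=1-\beta$ together with \cref{lm:dj-def} and \cref{lm:algebra-talphat} before the final Markov/union-bound step. Two cosmetic points: the filtration in the MGF proposition should be $\tilde{\mathcal{F}}_t^y$ rather than $\tilde{\mathcal{F}}_{t+1}^y$ (so that $Y_t$ is $\mathcal{H}_t$-measurable and $D_t$ is mean-zero given $\mathcal{H}_t$), and the admissible $\theta$ range carries a factor of $\mu$, i.e.\ $\theta\le\mu/(8\gamma\sigma_{g,1}^2)$.
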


\begin{proof}[Proof of \Cref{lm:any-sequence}]
By \Cref{lm:distance recursion} and \Cref{lm:lip-y}, we have
\begin{equation} \label{eq:in-light}
    \begin{aligned}
        &\|\ty_{t+1}-\ty_{t+1}^*\|^2 
        \leq (1-\mu\gamma)\|\ty_t-\ty_t^*\|^2 + 2\gamma\langle \tilde{\varepsilon}_t,\tilde{v}_t \rangle\|\ty_t-\ty_t^*\| + 2\gamma^2\|\tilde{\varepsilon}_t\|^2 + \frac{2}{\mu\gamma}D_t^2 \\
        &\quad\quad\quad\quad\leq (1-\mu\gamma)\|\ty_t-\ty_t^*\|^2 + 2\gamma\langle \tilde{\varepsilon}_t,\tilde{v}_t \rangle\|\ty_t-\ty_t^*\| + 2\gamma^2\|\tilde{\varepsilon}_t\|^2 + \frac{2l_{g,1}^2}{\mu^3\gamma}\|\tx_{t+1}-\tx_t\|^2 \\
        &\quad\quad\quad\quad\leq (1-\mu\gamma)\|\ty_t-\ty_t^*\|^2 + 2\gamma\langle \tilde{\varepsilon}_t,\tilde{v}_t \rangle\|\ty_t-\ty_t^*\| + 2\gamma^2\|\tilde{\varepsilon}_t\|^2 \\
        &\quad\quad\quad\quad\quad+ \frac{4\eta^2l_{g,1}^2}{\lambda^2\mu^3\gamma}\left(\|\tu_t\|^2 + \tl_t^2\sum_{j=1}^{t}d_{t,j}\|\ty_j-\ty_j^*\|^2\right),
    \end{aligned}
\end{equation}
where the last inequality uses \eqref{eq:condition-any-sequence}. 
Note that under \Cref{ass:noise}, there exists an absolute constant $c\geq1$ such that for all $t\geq1$, $\|\tilde{\varepsilon}_t\|^2$ is sub-exponential conditioned on $\tilde{\gF}_t^y$ with parameter $c\sigma_{g,1}^2$, and $\tilde{\varepsilon}_t$ is mean-zero sub-Gaussian conditioned on $\tilde{\gF}_t^y$ with parameter $c\sigma_{g,1}$ \citep[Theorem 30]{cutler2023stochastic}. For simplicity we set $c=1$ here. Thus $\langle \tilde{\varepsilon}_t, u_t \rangle$ is mean-zero sub-Gaussian conditioned on $\tilde{\gF}_t^y$ with parameter $\sigma_{g,1}$. 
Hence, in light of \eqref{eq:in-light}, we apply \Cref{prop:MGF-recursive control} with
\begin{equation*}
    \begin{aligned}
        \gH_t = \tilde{\gF}_t^y,
        \quad
        V_t = \|\ty_t-\ty_t^*\|^2,
        \quad
        D_t = 2\eta\langle \tilde{\varepsilon}_t,\tilde{v}_t \rangle, 
        \quad
        X_t = 2\gamma^2\|\tilde{\varepsilon}_t\|^2,
    \end{aligned}
\end{equation*}
\begin{equation*}
    \begin{aligned}
        Y_t = \frac{4\eta^2l_{g,1}^2}{\lambda^2\mu^3\gamma}\tl_t^2\sum_{j=1}^{t}d_{t,j}\|\ty_j-\ty_j^*\|^2,
    \end{aligned}
\end{equation*}
\begin{equation*}
    \begin{aligned}
        \rho_t = 1-\mu\gamma,
        \quad
        \kappa_t = \frac{4\eta^2l_{g,1}^2}{\lambda^2\mu^3\gamma}\|\tu_t\|^2,
        \quad
        \sigma_t = 2\gamma\sigma_{g,1}, 
        \quad
        \nu_t = 2\gamma^2\sigma_{g,1}^2,
    \end{aligned}
\end{equation*}
yielding the following recursion
\begin{equation} \label{eq:one-step-recur}
    \begin{aligned}
        \E\left[\exp(\theta\tv_{t+1})\right]
        &\leq \E\left[\exp\left\{\theta\left[\left(1-\frac{\mu\gamma}{2}\right)\tv_t + 2\gamma^2\sigma_{g,1}^2 + \frac{4\eta^2l_{g,1}^2}{\lambda^2\mu^3\gamma}\|\tu_t\|^2 + \frac{4\eta^2l_{g,1}^2}{\lambda^2\mu^3\gamma}\tl_t^2\sum_{j=1}^{t}d_{t,j}\tv_j\right]\right\}\right]
    \end{aligned}
\end{equation}
for all $\theta$ satisfying
\begin{equation} \label{eq:lambda-range}
    \begin{aligned}
        0 \leq \theta \leq \min\left\{\frac{\mu}{8\gamma\sigma_{g,1}^2}, \frac{1}{4\gamma^2\sigma_{g,1}^2}\right\} \leq \frac{\mu}{8\gamma\sigma_{g,1}^2},
    \end{aligned}
\end{equation}
where in \eqref{eq:one-step-recur} we denote $\tv_t \coloneqq \|\ty_t-\ty_t^*\|^2$, and the last inequality of \eqref{eq:lambda-range} uses $\gamma\leq 1/2l_{g,1}\leq 1/2\mu$.
By \cref{lm:induction-proof} we use induction to show that for any $t\geq 1$ and $\lambda$ satisfying \eqref{eq:lambda-range}, it holds that 
{\allowdisplaybreaks
    \begin{align*}
        \E\left[\exp(\theta\tv_t)\right]
        &\leq \E\left[\exp\left\{\theta\left[\left(1-\frac{\mu\gamma}{2}\right)^{t-1}\tv_1 + \frac{4\gamma\sigma_{g,1}^2}{\mu} + \frac{4\eta^2l_{g,1}^2}{\lambda^2\mu^3\gamma}\sum_{i=1}^{t-1}\left(1-\frac{\mu\gamma}{2}\right)^{t-1-i}\|\tu_i\|^2 \right.\right.\right. \\
        &\left.\left.\left.\quad+ \frac{4\eta^2l_{g,1}^2}{\lambda^2\mu^3\gamma}\tv_1\sum_{i=1}^{t-1}\left(1-\frac{\mu\gamma}{2}\right)^{t-1-i}\tl_i^2 + \frac{16\eta^2l_{g,1}^2}{\lambda^2\mu^4}\sigma_{g,1}^2\sum_{i=1}^{t-1}\left(1-\frac{\mu\gamma}{2}\right)^{t-1-i}\tl_i^2 \right.\right.\right. \\
        &\left.\left.\left.\quad+ \frac{64\eta^4l_{g,1}^4}{\lambda^4\mu^8\gamma^4}\sum_{i=1}^{t-1}\left(1-\frac{\mu\gamma}{2}\right)^{t-1-i}\alpha_i\tl_i^2\|\tu_i\|^2 \right]\right\}\right],
    \end{align*}
}%
where the first and the last lines use the sum of geometric series, and the second line is due to \cref{lm:algebra-talphat}:
\begin{equation*}
    \sum_{i=1}^{t-1}\left(1-\frac{\mu\gamma}{2}\right)^{i-1} \leq \frac{2}{\mu\gamma},
    \quad\quad
    i\alpha_i(1-\beta)^{i-1} \leq 1,
\end{equation*}
\begin{equation*}
    \sum_{i=1}^{t-1}\left(1-\frac{\mu\gamma}{2}\right)^{t-1-i}\alpha_i\tl_i^2\sum_{j=1}^{i}\left(1-\frac{\mu\gamma}{2}\right)^{i-j}\|\tu_j\|^2 \leq \frac{2}{\mu\gamma}\sum_{i=1}^{t-1}\left(1-\frac{\mu\gamma}{2}\right)^{t-1-i}\alpha_i\tl_i^2\|\tu_i\|^2.
\end{equation*}
Moreover, by setting $\vartheta$ as follows, we have
\begin{equation*}
    \vartheta \coloneqq \frac{8\gamma\sigma_{g,1}^2}{\mu}
    \quad\Longrightarrow\quad
    \frac{4\gamma\sigma_{g,1}^2}{\mu} \leq \vartheta
    \quad\text{and}\quad
    \frac{1}{\vartheta} = \frac{\mu}{8\gamma\sigma_{g,1}^2}.
\end{equation*}
Hence for any $t\geq 1$ we obtain
\begin{equation*}
    \begin{aligned}
        &\E\left[\exp\left\{\theta\left[\tv_t - \left(1-\frac{\mu\gamma}{2}\right)^{t-1}\tv_1 - \frac{4\eta^2l_{g,1}^2}{\lambda^2\mu^3\gamma}\sum_{i=1}^{t-1}\left(1-\frac{\mu\gamma}{2}\right)^{t-1-i}\|\tu_i\|^2 \right.\right.\right. \\
        &\left.\left.\left.\quad- \frac{4\eta^2l_{g,1}^2}{\lambda^2\mu^3\gamma}\tv_1\sum_{i=1}^{t-1}\left(1-\frac{\mu\gamma}{2}\right)^{t-1-i}\tl_i^2 - \frac{16\eta^2l_{g,1}^2}{\lambda^2\mu^4}\sigma_{g,1}^2\sum_{i=1}^{t-1}\left(1-\frac{\mu\gamma}{2}\right)^{t-1-i}\tl_i^2 \right.\right.\right. \\
        &\left.\left.\left.\quad- \frac{64\eta^4l_{g,1}^4}{\lambda^4\mu^8\gamma^4}\sum_{i=1}^{t-1}\left(1-\frac{\mu\gamma}{2}\right)^{t-1-i}\alpha_i\tl_i^2\|\tu_i\|^2 \right]\right\}\right]
        \leq \exp(\theta\vartheta)
        \quad\text{for all}\quad 
        0\leq \theta\leq 1/\vartheta.
    \end{aligned}
\end{equation*}
Taking $\theta=1/\vartheta$ and applying Markov's inequality and union bound completes the proof.
\end{proof}


\begin{lemma} \label{lm:induction-proof}
Suppose \eqref{eq:one-step-recur} holds, where $\tu_t$, $\{d_{t,j}\}_{j=1}^{t}$ and $\tl_t$ are defined in \eqref{eq:tut-def}, \eqref{eq:dj-def} and \eqref{eq:tilde-Gt-Lt-def}, respectively. Choosing $\gamma=2\beta/\mu$, then for any $t\geq 1$ we have
\begin{equation} \label{eq:induction-proof}
    \begin{aligned}
        \E\left[\exp(\theta\tv_t)\right]
        &\leq \E\left[\exp\left\{\theta\left[\left(1-\frac{\mu\gamma}{2}\right)^{t-1}\tv_1 + 2\gamma^2\sigma_{g,1}^2\sum_{i=1}^{t-1}\left(1-\frac{\mu\gamma}{2}\right)^{i-1} + \frac{4\eta^2l_{g,1}^2}{\lambda^2\mu^3\gamma}\sum_{i=1}^{t-1}\left(1-\frac{\mu\gamma}{2}\right)^{t-1-i}\|\tu_i\|^2 \right.\right.\right. \\
        &\left.\left.\left.\quad+ \frac{4\eta^2l_{g,1}^2}{\lambda^2\mu^3\gamma}\tv_1\sum_{i=1}^{t-1}\left(1-\frac{\mu\gamma}{2}\right)^{t-1-i}i\alpha_i(1-\beta)^{i-1}\tl_i^2 + \frac{16\eta^2l_{g,1}^2}{\lambda^2\mu^4}\sigma_{g,1}^2\sum_{i=1}^{t-1}\left(1-\frac{\mu\gamma}{2}\right)^{t-1-i}\tl_i^2 \right.\right.\right. \\
        &\left.\left.\left.\quad+ \frac{32\eta^4l_{g,1}^4}{\lambda^4\mu^7\gamma^3}\sum_{i=1}^{t-1}\left(1-\frac{\mu\gamma}{2}\right)^{t-1-i}\alpha_i\tl_i^2\sum_{j=1}^{i}\left(1-\frac{\mu\gamma}{2}\right)^{i-j}\|\tu_j\|^2 \right]\right\}\right].
    \end{aligned}
\end{equation}
\end{lemma}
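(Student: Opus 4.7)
The plan is to prove \eqref{eq:induction-proof} by induction on $t$, noting that the right-hand side is a deterministic upper bound (since $\tx_t$, $\hat\xi_t$, and hence $\tu_t,\tl_t$ are treated as fixed by hypothesis). Writing $W_t$ for the quantity inside the exponential on the right-hand side, the claim is equivalent to $\E[\exp(\theta \tv_t)]\le \exp(\theta W_t)$ for all $\theta$ in the range \eqref{eq:lambda-range}. The base case $t=1$ is immediate because every sum $\sum_{i=1}^{0}$ is empty, so $W_1=\tv_1$ and both sides equal $\exp(\theta\tv_1)$.

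For the inductive step, assume $\E[\exp(\theta' \tv_s)]\le \exp(\theta' W_s)$ for all $s\le t$ and all admissible $\theta'$. Applying the one-step recursion \eqref{eq:one-step-recur} and invoking $d_{t,j}=\alpha_t(1-\beta)^{t-j}$ from \cref{lm:dj-def}, and writing $A:=\tfrac{4\eta^2 l_{g,1}^2}{\lambda^2\mu^3\gamma}$, $p:=1-\mu\gamma/2$, I obtain
\begin{equation*}
    \E[\exp(\theta \tv_{t+1})]\le \exp\!\bigl(\theta\bigl[2\gamma^2\sigma_{g,1}^2+A\|\tu_t\|^2\bigr]\bigr)\cdot \E\!\Bigl[\exp\!\Bigl(\theta \sum_{j=1}^{t} c_j^{(t)} \tv_j\Bigr)\Bigr],
\end{equation*}
where $c_t^{(t)}=p+A\tl_t^2\alpha_t$ and $c_j^{(t)}=A\tl_t^2\alpha_t(1-\beta)^{t-j}$ for $j<t$. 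To turn this joint MGF into individual MGFs so the inductive hypothesis applies, I will use the generalized H\"older inequality with weights $c_j^{(t)}/C$ and $C:=\sum_j c_j^{(t)}$:
\begin{equation*}
    \E\!\Bigl[\exp\!\Bigl(\theta \sum_j c_j^{(t)}\tv_j\Bigr)\Bigr] = \E\!\Bigl[\prod_j \exp(\theta C \tv_j)^{c_j^{(t)}/C}\Bigr] \le \prod_j \E[\exp(\theta C \tv_j)]^{c_j^{(t)}/C} \le \exp\!\Bigl(\theta \sum_j c_j^{(t)} W_j\Bigr),
\end{equation*}
where the last step uses the IH at scale $\theta C$, which lies in the admissible range because $C=p+A\tl_t^2\le 1$ under the parameter choice $\gamma=2\beta/\mu$ together with the smallness of $\eta$. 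This yields
\begin{equation*}
    \E[\exp(\theta \tv_{t+1})]\le \exp\!\Bigl(\theta\bigl[2\gamma^2\sigma_{g,1}^2+A\|\tu_t\|^2+(p+A\tl_t^2\alpha_t)W_t+A\tl_t^2\alpha_t\sum_{j=1}^{t-1}(1-\beta)^{t-j}W_j\bigr]\Bigr).
\end{equation*}

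The remaining task is to verify that the deterministic quantity in brackets equals (or is bounded by) $W_{t+1}$. Substituting the IH expansion of $W_j$ and $W_t$ and re-indexing, the contribution to the coefficient of $\tv_1$ becomes $p\cdot p^{t-1}+A\tl_t^2\alpha_t\sum_j(1-\beta)^{t-j}p^{j-1}=p^t$ plus the cross term $A\sum_{i=1}^{t}p^{t-i}i\alpha_i(1-\beta)^{i-1}\tl_i^2$ (using \cref{lm:algebra-talphat} to bound the telescoping double sum); the contributions to the $\|\tu_i\|^2$ coefficients assemble into $A\sum_{i=1}^{t}p^{t-i}\|\tu_i\|^2$ together with the quartic-in-$\eta$ term $\frac{32\eta^4 l_{g,1}^4}{\lambda^4\mu^7\gamma^3}\sum_{i\le t}p^{t-i}\alpha_i\tl_i^2\sum_{j\le i}p^{i-j}\|\tu_j\|^2$ coming from the $A^2$ cross products; analogous collections give the $\sigma_{g,1}^2$ and $\tl_i^2$ sums. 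The hard part will be this algebraic bookkeeping---ensuring every coefficient lines up without hidden sign or index errors---rather than any new probabilistic insight. A secondary subtlety is the range check on $\theta C$ for the IH, which is handled by noting $\mu\gamma\le 1$, $\alpha_t\le 1$, and the assumed smallness of $A\tl_t^2$.
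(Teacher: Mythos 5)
Your proof plan takes essentially the same inductive route as the paper's, and your explicit Hölder step is a worthwhile addition: the paper's own proof silently replaces each $\tv_j$ by $W_j$ inside the joint expectation when passing from the one-step recursion to the claimed inequality $\E[\exp(\theta\tv_{t+1})] \leq \E[\exp(\theta\sum_i (A_i))]$, and the generalized Hölder inequality is exactly the device that would justify that substitution. However, two of the steps you treat as routine are not.

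First, a range issue. The Hölder step invokes the IH at scale $\theta C$ with $C = p + A\tl_t^2$, where $A := 4\eta^2 l_{g,1}^2/(\lambda^2\mu^3\gamma)$. You assert $C\le 1$ ``under the parameter choice $\gamma=2\beta/\mu$ together with the smallness of $\eta$,'' but neither the hypotheses of \cref{lm:induction-proof} nor \eqref{eq:one-step-recur} impose any smallness constraint on $\eta$, and the conclusion is ultimately used in \cref{lm:any-sequence} at the endpoint $\theta = \mu/(8\gamma\sigma_{g,1}^2)$ of the admissible range. So $C\le 1$ (equivalently $A\tl_t^2\le \mu\gamma/2$) is a genuine additional hypothesis that your argument requires; the paper shares this implicit dependence.

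Second, and more seriously, the deferred bookkeeping will not ``line up'' as you expect. After Hölder and the IH, the bracket equals $2\gamma^2\sigma_{g,1}^2 + A\|\tu_t\|^2 + pW_t + A\tl_t^2\sum_{j=1}^t d_{t,j}W_j$. Expanding each $W_j$ through \emph{all six} of its terms produces, beyond the pieces you describe, positive cross-contributions $A\tl_t^2\sum_j d_{t,j}\cdot[\text{terms 4--6 of }W_j]$ (of order $A^2$ and higher) which do not appear in $W_{t+1}$ as given by \eqref{eq:induction-proof}. Concretely, the $\tv_1$-coefficient after full substitution exceeds the $\tv_1$-coefficient of $W_{t+1}$ by
$$A^2\tl_t^2\sum_{j=1}^{t} d_{t,j}\sum_{i=1}^{j-1} p^{j-1-i}\,i\alpha_i(1-\beta)^{i-1}\tl_i^2 > 0.$$
The paper's own $(A_4),(A_5),(A_6)$ each absorb only $A\tl_t^2\sum_j d_{t,j}\cdot[\text{term }k\text{ of }W_j]$ for $k=1,2,3$, so these cross-terms are also missing from the paper's decomposition; the asserted inequality $\E[\exp(\theta\tv_{t+1})]\leq\E[\exp(\theta\sum_i(A_i))]$ is therefore stronger than what the one-step recursion plus IH actually delivers. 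Carrying out your coefficient check carefully would reveal this mismatch: either $W_t$ needs to be enlarged (to an expression whose recursion genuinely closes) or the extra higher-order terms need a separate absorption argument.

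A minor point: \cref{lm:algebra-talphat} is not used inside \cref{lm:induction-proof}. The $\tv_1$-coefficient reassembles via the exact identity $\sum_{j=1}^t d_{t,j}p^{j-1} = t\alpha_t(1-\beta)^{t-1}$ (re-indexing only), and the inequality $t\alpha_t(1-\beta)^{t-1}\le 1$ enters only in the subsequent simplification within \cref{lm:any-sequence}, not here.
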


\begin{proof}[Proof of \cref{lm:induction-proof}]
We use induction to show that \eqref{eq:induction-proof} holds for any $t\geq 1$ and $\lambda$ satisfying \eqref{eq:lambda-range}.
\paragraph{Base Case.}
For the base case $t=1$, it is easy to check that
\begin{equation*}
    \E[\exp(\theta\tv_1)] \leq \E[\exp(\theta\tv_1)].
\end{equation*}
\paragraph{Induction Step.}
Now we assume that the induction hypothesis \eqref{eq:induction-proof} holds for $1\leq k\leq t$, then for $k=t+1$ we have 
\begin{equation*}
    \begin{aligned}
        \E[\exp(\theta\tv_{t+1})] \leq \E[\exp(\theta[(A_1) + (A_2) + (A_3) + (A_4) + (A_5) + (A_6)])],
    \end{aligned}
\end{equation*}
where $(A_1), (A_2), (A_3), (A_4), (A_5)$ and $(A_6)$ are defined as
{\allowdisplaybreaks
    \begin{align*}
        (A_1) &= \left(1-\frac{\mu\gamma}{2}\right)\left(1-\frac{\mu\gamma}{2}\right)^{t-1}\tv_1, \\
        (A_2) &= 2\gamma^2\sigma_{g,1}^2 + 2\gamma^2\sigma_{g,1}^2\left(1-\frac{\mu\gamma}{2}\right)\sum_{i=1}^{t-1}\left(1-\frac{\mu\gamma}{2}\right)^{i-1}, \\
        (A_3) &= \frac{4\eta^2l_{g,1}^2}{\lambda^2\mu^3\gamma}\|\tu_t\|^2 + \frac{4\eta^2l_{g,1}^2}{\lambda^2\mu^3\gamma}\left(1-\frac{\mu\gamma}{2}\right)\sum_{i=1}^{t-1}\left(1-\frac{\mu\gamma}{2}\right)^{t-1-i}\|\tu_i\|^2, \\
        (A_4) &= \frac{4\eta^2l_{g,1}^2}{\lambda^2\mu^3\gamma}\tl_t^2\sum_{j=1}^{t}d_{t,j}\left(1-\frac{\mu\gamma}{2}\right)^{j-1}\tv_1 + \frac{4\eta^2l_{g,1}^2}{\lambda^2\mu^3\gamma}\tv_1\left(1-\frac{\mu\gamma}{2}\right)\sum_{i=1}^{t-1}\left(1-\frac{\mu\gamma}{2}\right)^{t-1-i}i\alpha_i(1-\beta)^{i-1}\tl_i^2, \\
        (A_5) &= \frac{4\eta^2l_{g,1}^2}{\lambda^2\mu^3\gamma}\tl_t^2\sum_{j=1}^{t}d_{t,j}\cdot 2\gamma^2\sigma_{g,1}^2\sum_{i=1}^{j-1}\left(1-\frac{\mu\gamma}{2}\right)^{i-1} + \frac{16\eta^2l_{g,1}^2}{\lambda^2\mu^4}\sigma_{g,1}^2\left(1-\frac{\mu\gamma}{2}\right)\sum_{i=1}^{t-1}\left(1-\frac{\mu\gamma}{2}\right)^{t-1-i}\tl_i^2, \\
        (A_6) &= \frac{4\eta^2l_{g,1}^2}{\lambda^2\mu^3\gamma}\tl_t^2\sum_{j=1}^{t}d_{t,j}\frac{4\eta^2l_{g,1}^2}{\lambda^2\mu^3\gamma}\sum_{i=1}^{j-1}\left(1-\frac{\mu\gamma}{2}\right)^{j-1-i}\|\tu_i\|^2 \\
        &\quad+ \frac{32\eta^4l_{g,1}^4}{\lambda^4\mu^7\gamma^3}\left(1-\frac{\mu\gamma}{2}\right)\sum_{i=1}^{t-1}\left(1-\frac{\mu\gamma}{2}\right)^{t-1-i}\alpha_i\tl_i^2\sum_{j=1}^{i}\left(1-\frac{\mu\gamma}{2}\right)^{i-j}\|\tu_j\|^2.
    \end{align*}
}%
We continue to bound each term individually.
\paragraph{Bounding $(A_1)$.}
\begin{equation*}
    \begin{aligned}
        (A_1)
        = \left(1-\frac{\mu\gamma}{2}\right)\left(1-\frac{\mu\gamma}{2}\right)^{t-1}\tv_1
        = \left(1-\frac{\mu\gamma}{2}\right)^{t}\tv_1.
    \end{aligned}
\end{equation*}
\paragraph{Bounding $(A_2)$.}
\begin{equation*}
    \begin{aligned}
        (A_2)
        = 2\gamma^2\sigma_{g,1}^2 + 2\gamma^2\sigma_{g,1}^2\left(1-\frac{\mu\gamma}{2}\right)\sum_{i=1}^{t-1}\left(1-\frac{\mu\gamma}{2}\right)^{i-1} 
        = 2\gamma^2\sigma_{g,1}^2\left(1-\frac{\mu\gamma}{2}\right)\sum_{i=1}^{t}\left(1-\frac{\mu\gamma}{2}\right)^{i-1}.
    \end{aligned}
\end{equation*}
\paragraph{Bounding $(A_3)$.}
\begin{equation*}
    \begin{aligned}
        (A_3)
        = \frac{4\eta^2l_{g,1}^2}{\lambda^2\mu^3\gamma}\|\tu_t\|^2 + \frac{4\eta^2l_{g,1}^2}{\lambda^2\mu^3\gamma}\left(1-\frac{\mu\gamma}{2}\right)\sum_{i=1}^{t-1}\left(1-\frac{\mu\gamma}{2}\right)^{t-1-i}\|\tu_i\|^2
        = \frac{4\eta^2l_{g,1}^2}{\lambda^2\mu^3\gamma}\sum_{i=1}^{t}\left(1-\frac{\mu\gamma}{2}\right)^{t-i}\|\tu_i\|^2.
    \end{aligned}
\end{equation*}
\paragraph{Bounding $(A_4)$.}
By \Cref{lm:dj-def} and the choice of $\gamma=2\beta/\mu$, we have
\begin{equation*}
    \begin{aligned}
        \frac{4\eta^2l_{g,1}^2}{\lambda^2\mu^3\gamma}\tl_t^2\sum_{j=1}^{t}d_{t,j}\left(1-\frac{\mu\gamma}{2}\right)^{j-1}\tv_1
        &= \frac{4\eta^2l_{g,1}^2}{\lambda^2\mu^3\gamma}\tl_t^2\sum_{j=1}^{t}\alpha_t(1-\beta)^{t-j}(1-\beta)^{j-1}\tv_1 \\
        &= \frac{4\eta^2l_{g,1}^2}{\lambda^2\mu^3\gamma}\tl_t^2\sum_{j=1}^{t}\alpha_t(1-\beta)^{t-1}\tv_1 \\
        &= \frac{4\eta^2l_{g,1}^2}{\lambda^2\mu^3\gamma}t\alpha_t(1-\beta)^{t-1}\tl_t^2\tv_1.
    \end{aligned}
\end{equation*}
Then we obtain
\begin{equation*}
    \begin{aligned}
        (A_4)
        &= \frac{4\eta^2l_{g,1}^2}{\lambda^2\mu^3\gamma}\tl_t^2\sum_{j=1}^{t}d_{t,j}\left(1-\frac{\mu\gamma}{2}\right)^{j-1}\tv_1 + \frac{4\eta^2l_{g,1}^2}{\lambda^2\mu^3\gamma}\tv_1\left(1-\frac{\mu\gamma}{2}\right)\sum_{i=1}^{t-1}\left(1-\frac{\mu\gamma}{2}\right)^{t-1-i}i\alpha_i(1-\beta)^{i-1}\tl_i^2 \\
        &= \frac{4\eta^2l_{g,1}^2}{\lambda^2\mu^3\gamma}t\alpha_t(1-\beta)^{t-1}\tl_t^2\tv_1 + \frac{4\eta^2l_{g,1}^2}{\lambda^2\mu^3\gamma}\tv_1\left(1-\frac{\mu\gamma}{2}\right)\sum_{i=1}^{t-1}\left(1-\frac{\mu\gamma}{2}\right)^{t-1-i}i\alpha_i(1-\beta)^{i-1}\tl_i^2 \\
        &= \frac{4\eta^2l_{g,1}^2}{\lambda^2\mu^3\gamma}\tv_1\sum_{i=1}^{t}\left(1-\frac{\mu\gamma}{2}\right)^{t-i}i\alpha_i(1-\beta)^{i-1}\tl_i^2.
    \end{aligned}
\end{equation*}
\paragraph{Bounding $(A_5)$.}
By \Cref{lm:dj-def} and the choice of $\gamma=2\beta/\mu$, we have
\begin{equation*}
    \begin{aligned}
        \frac{4\eta^2l_{g,1}^2}{\lambda^2\mu^3\gamma}\tl_t^2\sum_{j=1}^{t}d_{t,j}\cdot 2\gamma^2\sigma_{g,1}^2\sum_{i=1}^{j-1}\left(1-\frac{\mu\gamma}{2}\right)^{i-1}
        &\leq \frac{8\eta^2l_{g,1}^2}{\lambda^2\mu^4\gamma^2}2\gamma^2\sigma_{g,1}^2\tl_t^2\sum_{j=1}^{t}d_{t,j} \\
        &= \frac{16\eta^2l_{g,1}^2}{\lambda^2\mu^4}\sigma_{g,1}^2\tl_t^2.
    \end{aligned}
\end{equation*}
Then we obtain
\begin{equation*}
    \begin{aligned}
        (A_5) 
        &= \frac{4\eta^2l_{g,1}^2}{\lambda^2\mu^3\gamma}\tl_t^2\sum_{j=1}^{t}d_{t,j}\cdot 2\gamma^2\sigma_{g,1}^2\sum_{i=1}^{j-1}\left(1-\frac{\mu\gamma}{2}\right)^{i-1} + \frac{16\eta^2l_{g,1}^2}{\lambda^2\mu^4}\sigma_{g,1}^2\left(1-\frac{\mu\gamma}{2}\right)\sum_{i=1}^{t-1}\left(1-\frac{\mu\gamma}{2}\right)^{t-1-i}\tl_i^2 \\
        &\leq \frac{16\eta^2l_{g,1}^2}{\lambda^2\mu^4}\sigma_{g,1}^2\tl_t^2 + \frac{16\eta^2l_{g,1}^2}{\lambda^2\mu^4}\sigma_{g,1}^2\sum_{i=1}^{t-1}\left(1-\frac{\mu\gamma}{2}\right)^{t-i}\tl_i^2 \\
        &= \frac{16\eta^2l_{g,1}^2}{\lambda^2\mu^4}\sigma_{g,1}^2\sum_{i=1}^{t}\left(1-\frac{\mu\gamma}{2}\right)^{t-i}\tl_i^2.
    \end{aligned}
\end{equation*}
\paragraph{Bounding $(A_6)$.}
By \Cref{lm:dj-def} and the choice of $\gamma=2\beta/\mu$, we have
\begin{equation*}
    \begin{aligned}
        \frac{4\eta^2l_{g,1}^2}{\lambda^2\mu^3\gamma}\tl_t^2\sum_{j=1}^{t}d_{t,j}\frac{4\eta^2l_{g,1}^2}{\lambda^2\mu^3\gamma}\sum_{i=1}^{j-1}\left(1-\frac{\mu\gamma}{2}\right)^{j-1-i}\|\tu_i\|^2
        &\leq \frac{4\eta^2l_{g,1}^2}{\lambda^2\mu^3\gamma}\frac{8\eta^2l_{g,1}^2}{\lambda^2\mu^4\gamma^2}\tl_t^2\sum_{j=1}^{t}d_{t,j}\|\tu_j\|^2 \\
        &\leq \frac{4\eta^2l_{g,1}^2}{\lambda^2\mu^3\gamma}\frac{8\eta^2l_{g,1}^2}{\lambda^2\mu^4\gamma^2}\alpha_t\tl_t^2\sum_{j=1}^{t}(1-\beta)^{t-j}\|\tu_j\|^2 \\
        &= \frac{32\eta^4l_{g,1}^4}{\lambda^4\mu^7\gamma^3}\alpha_t\tl_t^2\sum_{j=1}^{t}\left(1-\frac{\mu\gamma}{2}\right)^{t-j}\|\tu_j\|^2.
    \end{aligned}
\end{equation*}
Then we obtain
\begin{equation*}
    \begin{aligned}
        (A_6) 
        &= \frac{4\eta^2l_{g,1}^2}{\lambda^2\mu^3\gamma}\tl_t^2\sum_{j=1}^{t}d_{t,j}\frac{4\eta^2l_{g,1}^2}{\lambda^2\mu^3\gamma}\sum_{i=1}^{j-1}\left(1-\frac{\mu\gamma}{2}\right)^{j-1-i}\|\tu_i\|^2 \\
        &\quad+ \frac{32\eta^4l_{g,1}^4}{\lambda^4\mu^7\gamma^3}\left(1-\frac{\mu\gamma}{2}\right)\sum_{i=1}^{t-1}\left(1-\frac{\mu\gamma}{2}\right)^{t-1-i}\alpha_i\tl_i^2\sum_{j=1}^{i}\left(1-\frac{\mu\gamma}{2}\right)^{i-j}\|\tu_j\|^2 \\
        &\leq \frac{32\eta^4l_{g,1}^4}{\lambda^4\mu^7\gamma^3}\alpha_t\tl_t^2\sum_{j=1}^{t}\left(1-\frac{\mu\gamma}{2}\right)^{t-j}\|\tu_j\|^2 + \frac{32\eta^4l_{g,1}^4}{\lambda^4\mu^7\gamma^3}\sum_{i=1}^{t-1}\left(1-\frac{\mu\gamma}{2}\right)^{t-i}\alpha_i\tl_i^2\sum_{j=1}^{i}\left(1-\frac{\mu\gamma}{2}\right)^{i-j}\|\tu_j\|^2 \\
        &= \frac{32\eta^4l_{g,1}^4}{\lambda^4\mu^7\gamma^3}\sum_{i=1}^{t}\left(1-\frac{\mu\gamma}{2}\right)^{t-i}\alpha_i\tl_i^2\sum_{j=1}^{i}\left(1-\frac{\mu\gamma}{2}\right)^{i-j}\|\tu_j\|^2.
    \end{aligned}
\end{equation*}
\paragraph{Final Bound for the Induction Step.}
Putting these terms together and rearranging yields
\begin{equation*}
    \begin{aligned}
        \E\left[\exp(\theta\tv_{t+1})\right]
        &\leq \E\left[\exp\left\{\theta\left[\left(1-\frac{\mu\gamma}{2}\right)^{t}\tv_1 + 2\gamma^2\sigma_{g,1}^2\sum_{i=1}^{t}\left(1-\frac{\mu\gamma}{2}\right)^{i-1} + \frac{4\eta^2l_{g,1}^2}{\lambda^2\mu^3\gamma}\sum_{i=1}^{t}\left(1-\frac{\mu\gamma}{2}\right)^{t-i}\|\tu_i\|^2 \right.\right.\right. \\
        &\left.\left.\left.\quad+ \frac{4\eta^2l_{g,1}^2}{\lambda^2\mu^3\gamma}\tv_1\sum_{i=1}^{t}\left(1-\frac{\mu\gamma}{2}\right)^{t-i}i\alpha_i(1-\beta)^{i-1}\tl_i^2 + \frac{16\eta^2l_{g,1}^2}{\lambda^2\mu^4}\sigma_{g,1}^2\sum_{i=1}^{t}\left(1-\frac{\mu\gamma}{2}\right)^{t-i}\tl_i^2 \right.\right.\right. \\
        &\left.\left.\left.\quad+ \frac{32\eta^4l_{g,1}^4}{\lambda^4\mu^7\gamma^3}\sum_{i=1}^{t}\left(1-\frac{\mu\gamma}{2}\right)^{t-i}\alpha_i\tl_i^2\sum_{j=1}^{i}\left(1-\frac{\mu\gamma}{2}\right)^{i-j}\|\tu_j\|^2 \right]\right\}\right],
    \end{aligned}
\end{equation*}
which aligns with \eqref{eq:induction-proof} for $k=t+1$. Thus, the induction step is complete, and \eqref{eq:induction-proof} holds for any $t \geq 1$.
\end{proof}

\section{Convergence Analysis of AdamBO (\cref{alg:bi-adam})}
\label{app:proof_biadam}
In this section, we provide detailed convergence analysis of \cref{alg:bi-adam} (or equivalently, \cref{alg:bi-adam-equiv}). Before presenting the lemmas and the main theorem, we will first define (or restate) a few key concepts and useful notations.

\subsection{Technical Definitions and Useful Notations}
\label{sec:adambo-notations}

\paragraph{Filtration.}
Define $\gF_{\init}$ as the filtration for updating $y_1$ (i.e., the filtration of warm-start phase):
\begin{equation*}
    \gF_t^{\init} = \sigma(\pi_0,\dots,\pi_{T_0-1}).
\end{equation*}
For any $t \geq 2$, define $\gF_t^x$ and $\gF_t^y$ as the filtrations of the randomness used in updating $x_t$ and $y_t$, respectively, before the $t$-th iteration:
\begin{equation*}
    \gF_t^x = \sigma(\Bar{\xi}_1,\dots,\Bar{\xi}_{t-1}),
    \quad
    \gF_t^y = \sigma(\zeta_1,\dots,\zeta_{t-1}),
\end{equation*}
where $\sigma(\cdot)$ denotes the $\sigma$-algebra generated by the random variables within the argument. Additionally, let $\gF_t$ denote the filtration of all randomness before the $t$-th iteration:
\begin{equation*}
    \gF_t = \sigma(\gF_{\init} \cup \gF_t^x \cup \gF_t^y).
\end{equation*}

\paragraph{Expectation.}
We use $\E_t[\cdot]$ to denote the conditional expectation $\E[\cdot \mid \gF_t]$.

\paragraph{Auxiliary Sequence.}
Note that $\hm_t$ (line 7 of \Cref{alg:bi-adam-equiv}) can be written as
\begin{equation} \label{eq:hm-def}
    \begin{aligned}
        \hm_t 
        = (1-\alpha_t)\hm_{t-1} + \alpha_t\hatphi(x_t,y_t;\Bar{\xi}_t) 
        = \sum_{j=1}^{t}d_{t,j}\hatphi(x_t,y_t;\Bar{\xi}_t).
    \end{aligned}
\end{equation}
Similar to \cref{sec:proof-randomness-decouple}, we introduce the following auxiliary sequence $\{\hu_t\}$ for our analysis:
\begin{equation} \label{eq:hu-def}
    \begin{aligned}
        \hu_t 
        = (1-\alpha_t)\hu_{t-1} + \alpha_t\hatphi(x_t,y_t^*;\Bar{\xi}_t) 
        = \sum_{j=1}^{t}d_{t,j}\hatphi(x_t,y_t^*;\Bar{\xi}_t).
    \end{aligned}
\end{equation}

\paragraph{Other Definitions.}
We define the deviation of the rescaled auxiliary momentum from the conditional expectation of the hypergradient estimator as
\begin{equation} \label{eq:epst-def}
    \epsilon_t \coloneqq \hu_t - \E_t[\hatphi(x_t,y_t^*;\Bar{\xi}_t)].
\end{equation}
Also, let $h_t$ be the learning rate vector and $H_t$ be the learning rate matrix:
\begin{equation} \label{eq:ht-odot}
    h_t \coloneqq \frac{\eta}{\sqrt{\hv_t}+\lambda}
    \quad\quad\text{and}\quad\quad
    H_t \coloneqq \diag(h_t).
\end{equation}
Then the update rule for upper-level variable $x_t$ (line 10 of \cref{alg:bi-adam}) can be written as
\begin{equation}
    x_{t+1} = x_t - h_t\odot\hm_t = x_t - H_t\hm_t.
\end{equation}

\paragraph{Stopping Time.}
Given a large enough constant $G$ as defined in \cref{thm:main-appendix}, denote $L$ and $\psi$ as
\begin{equation} \label{eq:psi-def}
    L = L_0 + L_1G
    \quad\quad\text{and}\quad\quad
    \psi = \frac{C_LG^2}{2L},
\end{equation}
where constants $L_0,L_1$ and $C_L$ are defined in \eqref{eq:L0-L1} and \eqref{eq:Cu-def}. Now we formally define the stopping time $\tau$ as
\begin{equation} \label{eq:tau-def}
    \tau \coloneqq \min\{t \mid \Phi(x_t)-\Phi^* > \psi\} \wedge (T+1).
\end{equation}
In other words, $\tau$ is the first time when the sub-optimality gap is strictly larger than $\psi$, truncated at $T+1$ to make sure it is bounded. Based on \cref{lm:reverse-PL}, we know that if $t<\tau$, we have both $\Phi(x_t)-\Phi^*\leq \psi$ and $\|\gdphi(x_t)\|\leq G$.

\paragraph{Constants.}
We define the following constants, which will be useful for analysis.
\begin{equation} \label{eq:L-def}
    G_t = \max_{1\leq k\leq t}\|\gdphi(x_k)\|,
    \quad
    \hl_t = L_0 + L_1G_t,
    \quad
    L = L_0 + L_1G,
    \quad
    \Delta_1 = \Phi(x_1) - \Phi^*,
\end{equation}
\begin{equation} \label{eq:Cu-def}
    C_L = \frac{L_{x,1}}{\sqrt{L_{x,1}^2+L_{y,1}^2}},
    \quad
    C_{u,0} = C_{\phi,0} + G,
    \quad
    C_{u,1} = C_{\phi,1} + L_1G,
\end{equation}
\begin{equation} \label{eq:sigma-phi-def}
    \sigma_{\phi} = \frac{\mu+3l_{g,1}+\sigma_{g,2}}{\mu} + \frac{2l_{g,1}+\sigma_{g,2}}{\mu}l_{f,0} + \frac{2l_{g,1}+\sigma_{g,2}}{\mu}(L_{y,0}+L_{y,1}l_{f,0})r.
\end{equation}
\begin{equation} \label{eq:Cbeta-def}
    \begin{aligned}
        C_{\beta} \geq \max&\left\{\frac{8e\sigma_{\phi}^4G^2\max\{1,\iota\}}{c_1^2\delta\lambda^2\epsilon^4}, \frac{8C_2e\Delta_1L\sigma_{\phi}G^3}{c_1c_2\delta\lambda^2\epsilon^4}\left(1 + \frac{\sigma_{\phi}^2G}{c_1\lambda\epsilon^2}\right)\max\{1,\sqrt{\iota},\iota\}, \right. \\
        &\left. \quad\quad \left(\frac{32e\sigma_{\phi}^4G^2}{c_1^2\delta\lambda^2\epsilon^4}\right)^2, \left(\frac{48C_2e\Delta_1L\sigma_{\phi}G^3}{c_1c_2\delta\lambda^2\epsilon^4}\left(1 + \frac{\sigma_{\phi}^2G}{c_1\lambda\epsilon^2}\right)\max\{1,\sqrt{\iota},\iota\}\right)^2\right\}.
    \end{aligned}
\end{equation}
Besides, constants $L_0,L_1$ are defined in \eqref{eq:L0-L1}, $C_{\phi,0}, C_{\phi,1}$ are defined in \eqref{eq:C-phi-01}, and $r$ is defined in \eqref{eq:r-def}, respectively.


\subsection{Auxiliary Lemmas}

We first introduce the following useful lemma, which is crucial for the subsequent stopping time analysis and for establishing the contradiction argument.

\begin{lemma} \label{lm:reverse-PL}
Under \cref{ass:bilevel-assumption}, we have 
\begin{equation*}
    \begin{aligned}
        \|\gdphi(x)\|^2 \leq \frac{2}{C_L}(L_0+L_1\|\gdphi(x)\|)(\Phi(x)-\Phi^*), 
    \end{aligned}
\end{equation*}
where constants $L_0,L_1$ and $C_L$ are defined in \eqref{eq:L0-L1} and \eqref{eq:Cu-def}. Further, for any given constant $G>0$, if we denote $\psi$ as in \eqref{eq:psi-def} and $\Phi(x)-\Phi^*\leq \psi$, then we have $\|\gdphi(x)\|\leq G$. 
\end{lemma}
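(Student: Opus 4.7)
The plan is to derive this reverse-PL-type inequality from the $(L_0,L_1)$-descent inequality of \cref{lm:descent-inequality} applied along a single, carefully calibrated gradient-descent step, and then to obtain the gradient-norm bound as an immediate algebraic corollary. The key observation is that the locality constraint $\|x-x'\|\le r$ in \cref{lm:descent-inequality} is exactly what forces the factor $1/C_L$ in the statement: from the definitions of $L_1$ in \eqref{eq:L0-L1}, $r$ in \eqref{eq:r-def}, and $C_L$ in \eqref{eq:Cu-def}, a direct computation gives the identity $rL_1=C_L$, equivalently $C_L/L_1=r$.

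Given any $x$, I would pick $x' = x-\eta\,\gdphi(x)$ with step size $\eta = C_L/(L_0+L_1\|\gdphi(x)\|)$. By the identity above,
\[
\|x-x'\|=\eta\|\gdphi(x)\|=\frac{C_L\|\gdphi(x)\|}{L_0+L_1\|\gdphi(x)\|}\le \frac{C_L}{L_1}=r,
\]
so \cref{lm:descent-inequality} applies. Substituting this step into the descent inequality and using $C_L\le 1$ to absorb the quadratic term, one gets
\[
\Phi(x')\le \Phi(x)-\frac{C_L\|\gdphi(x)\|^{2}}{2(L_0+L_1\|\gdphi(x)\|)}.
\]
Since $\Phi(x')\ge \Phi^{*}$, rearranging yields $\|\gdphi(x)\|^{2}\le \tfrac{2}{C_L}(L_0+L_1\|\gdphi(x)\|)(\Phi(x)-\Phi^{*})$, the first claim.

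For the second claim I would argue by contradiction. Assume $\Phi(x)-\Phi^{*}\le \psi = C_L G^{2}/(2L)$ with $L=L_0+L_1 G$, and suppose $t := \|\gdphi(x)\|>G$. Plugging both bounds into the first part gives $t^{2}(L_0+L_1 G)\le G^{2}(L_0+L_1 t)$, which after moving everything to one side factors as $(t-G)\bigl(L_0(t+G)+L_1 Gt\bigr)\le 0$. The second factor is strictly positive, so $t\le G$, contradicting $t>G$.

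I do not anticipate any serious technical obstacle; the whole lemma reduces to the correct choice of step size. The only subtlety worth being careful about is verifying the locality condition $\eta\|\gdphi(x)\|\le r$, which is the precise reason the $1/C_L$ factor enters and simultaneously explains why the threshold in the stopping-time definition \eqref{eq:tau-def} is taken to be $\psi = C_L G^{2}/(2L)$ rather than the naive $G^{2}/(2L)$.
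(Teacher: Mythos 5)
Your proof is correct and follows essentially the same route as the paper: the same test point $x' = x - \frac{C_L}{L_0+L_1\|\gdphi(x)\|}\gdphi(x)$, the same locality check via the identity $C_L/L_1 = r$, and the same use of the descent lemma with $C_L(2-C_L)\ge C_L$. For the second claim the paper packages the algebra through the increasing function $\varphi(u)=C_L u^2/(2(L_0+L_1u))$ and its inverse, while you factor $(t-G)\bigl(L_0(t+G)+L_1 G t\bigr)\le 0$ directly; the two are equivalent.
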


\begin{proof}[Proof of \cref{lm:reverse-PL}]
Let $x'$ be 
\begin{equation*}
    x' = x - \frac{C_L\|\gdphi(x)\|}{L_0+L_1\|\gdphi(x)\|},
\end{equation*}
then we have
\begin{equation*}
    \begin{aligned}
        \|x' - x\| = \frac{C_L\|\gdphi(x)\|}{L_0+L_1\|\gdphi(x)\|} \leq \frac{C_L}{L_1} = \frac{1}{\sqrt{(1+l_{g,1}^2/\mu^2)(L_{x,1}^2+L_{y,1}^2)}} = r,
    \end{aligned}
\end{equation*}
where the inequality can be verified by considering both cases of $\|\gdphi(x)\|\leq L_0/L_1$ and $\|\gdphi(x)\|\geq L_0/L_1$. By \cref{lm:Phi-relax-smooth}, we have
\begin{equation*}
    \begin{aligned}
        \Phi^* - \Phi(x) 
        &\leq \Phi(x') - \Phi(x) \leq \langle \gdphi(x), x'-x \rangle + \frac{L_0+L_1\|\gdphi(x)\|}{2}\|x'-x\|^2 \\
        &= -\frac{C_L(2-C_L)}{2(L_0+L_1\|\gdphi(x)\|)}\|\gdphi(x)\|^2.
    \end{aligned}
\end{equation*}
Rearranging the above inequality yields
\begin{equation} \label{eq:reverse-PL}
    \begin{aligned}
        \|\gdphi(x)\|^2 
        &\leq \frac{2(L_0+L_1\|\gdphi(x)\|)}{C_L(2-C_L)}(\Phi(x)-\Phi^*) \leq \frac{2(L_0+L_1\|\gdphi(x)\|)}{C_L}(\Phi(x)-\Phi^*).
    \end{aligned}
\end{equation}
where the last inequality uses the definition of $C_L$ in \eqref{eq:Cu-def} and $C_L\leq 1$.

Now define the function $\varphi:\R_0^+\rightarrow\R$ as 
\begin{equation*}
    \varphi(u)\coloneqq\frac{C_Lu^2}{2(L_0+L_1u)}.
\end{equation*}
It is easy to verify $\varphi$ is increasing and $\varphi(u)\in[0,\infty)$. Thus, $\varphi$ is invertible and $\varphi^{-1}$ is also increasing. Then for any constant $G\geq0$, denote $L$ and $\psi$ as in \eqref{eq:psi-def},
\begin{equation*}
    L = L_0 + L_1G,
    \quad\quad
    \psi = \frac{C_LG^2}{2L} = \varphi(G).
\end{equation*}
The property of function $\varphi^{-1}$ and \eqref{eq:reverse-PL} imply that if $\Phi(x)-\Phi^*\leq \psi$, we have
\begin{equation*}
    \|\gdphi(x)\| \leq \varphi^{-1}(\Phi(x)-\Phi^*) \leq \varphi^{-1}(\psi) = G.
\end{equation*}
\end{proof}

Note that when $t<\tau$, some of the quantities in \cref{alg:bi-adam,sec:adambo-notations} are bounded almost surely. In particular, we have the following lemma.

\begin{lemma} \label{lm:gradient-bound}
If $t<\tau$, we have
\begin{equation*}
    \begin{aligned}
        \|\gdphi(x_t)\| \leq G,
        \quad
        \hl_t \leq L,
        \quad
        \|\hu_t\| \leq C_{u,0},
        \quad
        h_t \preceq \frac{\eta}{\lambda}, 
        \quad
        \|H_t\| \preceq \frac{\eta}{\lambda}.
    \end{aligned}
\end{equation*}
where $h_t$ is defined in \eqref{eq:ht-odot}, constants $\hl_t, L$ and $C_{u,0}$ are defined in \eqref{eq:L-def} and \eqref{eq:Cu-def}, respectively.
\end{lemma}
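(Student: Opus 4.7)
The plan is to derive each of the five claimed bounds from the stopping-time definition together with results already established earlier in the excerpt. The common starting point is the observation that $\tau$ is a first-hitting time for the event $\{\Phi(x_t)-\Phi^*>\psi\}$, so whenever $t<\tau$, every index $k\in\{1,\dots,t\}$ also satisfies $k<\tau$, and hence $\Phi(x_k)-\Phi^*\leq \psi$. Applying \cref{lm:reverse-PL} immediately yields $\|\gdphi(x_k)\|\leq G$ for every such $k$, which in one stroke proves the first claim $\|\gdphi(x_t)\|\leq G$ and, by taking the maximum, $G_t\leq G$ and therefore $\hl_t=L_0+L_1G_t\leq L_0+L_1G=L$.

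For $\|\hu_t\|\leq C_{u,0}$ I would use the explicit representation $\hu_t=\sum_{j=1}^{t}d_{t,j}\,\hatphi(x_j,y_j^*;\bar\xi_j)$ from \eqref{eq:hu-def}. Since $y_j^*=y^*(x_j)$, the distance $\|y_j^*-y^*(x_j)\|$ is zero, which is trivially within the radius $r$ required by \cref{lm:stoc-hyper-error}. That lemma then gives the almost-sure bound $\|\hatphi(x_j,y_j^*;\bar\xi_j)\|\leq \|\gdphi(x_j)\|+C_{\phi,0}\leq G+C_{\phi,0}=C_{u,0}$ for each $j\leq t<\tau$. Combining with the convex-combination property $\sum_{j=1}^{t}d_{t,j}=1$ (\cref{lm:dj-def}) and the triangle inequality yields $\|\hu_t\|\leq C_{u,0}$.

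Finally, the bounds on $h_t$ and $\|H_t\|$ are purely algebraic consequences of the definition \eqref{eq:ht-odot}. Because $\hv_t$ is a nonneg convex combination of coordinate-wise squares, $\sqrt{\hv_t}\succeq 0$ entrywise, so $\sqrt{\hv_t}+\lambda\succeq \lambda$ and hence $h_t=\eta/(\sqrt{\hv_t}+\lambda)\preceq \eta/\lambda$ coordinate-wise. Since $H_t=\diag(h_t)$ with nonneg entries, its spectral norm equals $\max_i(h_t)_i\leq \eta/\lambda$.

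There is no real technical obstacle here — every ingredient is already in place. The only thing to be careful about is making sure that, when invoking \cref{lm:stoc-hyper-error}, we apply it at $(x_j,y_j^*(x_j))$ rather than at the algorithm's iterate $(x_j,y_j)$, so that the lower-level error term vanishes and we avoid any dependence on the filtration $\gF_{T+1}^y$; this is why the auxiliary sequence $\{\hu_t\}$ (as opposed to $\{\hm_t\}$) admits a deterministic bound under the event $\{t<\tau\}$ alone, which is essential for the stopping-time analysis in the subsequent lemmas.
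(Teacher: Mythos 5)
Your proposal is correct and follows essentially the same approach as the paper: invoke \cref{lm:reverse-PL} for the gradient bound and hence $G_t\leq G$, $\hl_t\leq L$; bound each $\|\hatphi(x_j,y_j^*;\Bar{\xi}_j)\|\leq C_{\phi,0}+G=C_{u,0}$ via \cref{lm:stoc-hyper-error} applied at $y=y_j^*$ where the lower-level error vanishes; and then conclude for $h_t,H_t$ from the definition. The only cosmetic difference is that you bound $\|\hu_t\|$ directly from the convex-combination representation $\hu_t=\sum_{j}d_{t,j}\hatphi(x_j,y_j^*;\Bar{\xi}_j)$ with $\sum_j d_{t,j}=1$, whereas the paper phrases the same bound as an induction over $t$; these are equivalent.
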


\begin{proof}[Proof of \Cref{lm:gradient-bound}]
By \cref{lm:reverse-PL} and definition of $\tau$, we have $\|\gdphi(x_t)\|\leq G$ if $t<\tau$. 
Also, recall the definition of $G_t$, $\hl_t$ and $L$ as in \eqref{eq:L-def}, we have $G_t=\max_{k\leq t}\|\gdphi(x_k)\|\leq G$ if $t<\tau$, and hence gives $\hl_t=L_0+L_1G_t\leq L_0+L_1G=L$. Before bounding $\|\hu_t\|$, we first show $\|\hatphi(x_t,y_t^*;\Bar{\xi}_t)\|\leq C_{u,0}$. \Cref{lm:stoc-hyper-error} directly implies that if $t<\tau$, then
\begin{equation*}
    \begin{aligned}
        \|\hatphi(x_t,y_t^*;\Bar{\xi}_t)\| 
        \leq C_{\phi,0} + (C_{\phi,1} + L_1\|\gdphi(x_t)\|)\|y_t^*-y_t^*\| + \|\gdphi(x_t)\|
        \leq C_{\phi,0} + G
        = C_{u,0},
    \end{aligned}
\end{equation*}
where the last equality is due to the definition of $C_{u,0}$ in \eqref{eq:Cu-def}.
Now $\|\hu_t\|$ can be bounded by a standard induction argument as follows. First, for the base case $k=1$, note that $\|\hatphi(x_1,y_1^*;\Bar{\xi}_1)\| \leq C_{u,0}$.
Suppose $\|\hu_{k-1}\|\leq C_{u,0}$ for some $k<\tau$, then by update rule of $\hu_k$ in \eqref{eq:hu-def} we have
\begin{equation*}
    \begin{aligned}
        \|\hu_k\| 
        \leq (1-\alpha_k)\|\hu_{k-1}\| + \alpha_k\|\hatphi(x_k,y_k;\Bar{\xi}_k)\|
        \leq C_{u,0}.
    \end{aligned}
\end{equation*}
Therefore, the induction is complete. The last two results directly follow from the definitions of $h_t$ and $H_t$ in \eqref{eq:ht-odot}.
\end{proof}

\subsection{Proof of \cref{lm:warm-start-main}}
\label{sec:proof-warm-start}

In the next lemma, we provide high probability bound for the warm-start phase.

\begin{lemma}[Warm-Start, Restatement of \cref{lm:warm-start-main}] \label{lm:warm-start}
Suppose that \cref{ass:bilevel-assumption,ass:noise} hold. Let $\{y_t^{\init}\}$ be the iterates generated by \cref{alg:sgd} with constant learning rate $\gamma\leq 1/2l_{g,1}$. Then for any given $\delta\in(0,1)$, the following estimate holds with probability at least $1-\delta/4$ over the randomness in $\gF_{\init}$ (we denote this event as $\gE_0$):
\begin{equation} \label{eq:varrho-1-def}
    \|y_1-y_1^*\|^2 \leq \left(1-\frac{\mu\gamma}{2}\right)^{T_0}\|y_0-y_0^*\|^2 + \frac{8\gamma\sigma_{g,1}^2}{\mu}\ln\frac{4e}{\delta}.
\end{equation}
\end{lemma}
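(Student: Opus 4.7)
\textbf{Proof plan for \cref{lm:warm-start}.} The decisive simplification in the warm-start phase is that the upper-level iterate is held fixed at $x_1$ throughout the $T_0$ inner iterations, so the moving minimizer $y^*(\tx_t)$ is actually constant at $y_1^* = y^*(x_1)$ and the minimizer drift $D_t = \|y_t^* - y_{t+1}^*\|$ that appears in \cref{lm:distance recursion} vanishes identically. The plan is therefore to specialize the MGF machinery already developed for \cref{lm:any-sequence} to this trivial-drift regime: the noise-control part of that argument goes through verbatim while all the drift contributions disappear, so this is essentially a clean SGD convergence argument for a $\mu$-strongly convex, $l_{g,1}$-smooth objective with sub-Gaussian/sub-exponential noise.

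First, I would apply \cref{lm:distance recursion} with $\tx_t \equiv x_1$, $\tilde{\zeta}_t = \pi_t$, $\ty_t = y_t^{\init}$, and $D_t = 0$, which gives
\[
\|y_{t+1}^{\init} - y_1^*\|^2 \leq (1-\mu\gamma)\|y_t^{\init} - y_1^*\|^2 + 2\gamma\langle\varepsilon_t, v_t\rangle\|y_t^{\init} - y_1^*\| + 2\gamma^2\|\varepsilon_t\|^2,
\]
where $\varepsilon_t = \gdy g(x_1, y_t^{\init}) - \gdy G(x_1, y_t^{\init}; \pi_t)$ and $v_t$ is the unit direction of $y_t^{\init} - y_1^*$. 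By \cref{ass:noise}(iii), $\langle\varepsilon_t, v_t\rangle$ is mean-zero sub-Gaussian with parameter $\sigma_{g,1}$ and $\|\varepsilon_t\|^2$ is non-negative sub-exponential with parameter $\sigma_{g,1}^2$, conditioned on $\sigma(\pi_0,\dots,\pi_{t-1})$.

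Next I would invoke \cref{prop:MGF-recursive control} with $V_t = \|y_t^{\init} - y_1^*\|^2$, $D_t = 2\gamma\langle\varepsilon_t, v_t\rangle$, $X_t = 2\gamma^2\|\varepsilon_t\|^2$, $Y_t = \kappa_t = 0$, $\rho_t = 1-\mu\gamma$, $\sigma_t = 2\gamma\sigma_{g,1}$, and $\nu_t = 2\gamma^2\sigma_{g,1}^2$, yielding
\[
\E[\exp(\theta V_{t+1})] \leq \exp(2\theta\gamma^2\sigma_{g,1}^2)\cdot\E[\exp(\theta(1-\mu\gamma/2)V_t)]
\]
for any $0 \leq \theta \leq \mu/(8\gamma\sigma_{g,1}^2)$ (the sub-exponential constraint $1/(4\gamma^2\sigma_{g,1}^2)$ being looser since $\gamma \leq 1/(2l_{g,1}) \leq 1/(2\mu)$). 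Unrolling $T_0$ times, replacing $\theta$ by $\theta(1-\mu\gamma/2)^{k}$ at the $k$-th iteration (these stay within the valid range), and bounding $\sum_{i=0}^{T_0-1}(1-\mu\gamma/2)^i \leq 2/(\mu\gamma)$ produces
\[
\E\bigl[\exp\bigl(\theta\bigl(V_{T_0} - (1-\mu\gamma/2)^{T_0}V_0\bigr)\bigr)\bigr] \leq \exp(4\theta\gamma\sigma_{g,1}^2/\mu).
\]

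Finally, I would apply Markov's inequality with the extremal choice $\theta = \mu/(8\gamma\sigma_{g,1}^2)$ to obtain, with probability at least $1 - \delta/4$,
\[
V_{T_0} \leq (1-\mu\gamma/2)^{T_0}V_0 + \frac{8\gamma\sigma_{g,1}^2}{\mu}\ln(4/\delta) + \frac{4\gamma\sigma_{g,1}^2}{\mu} \leq (1-\mu\gamma/2)^{T_0}V_0 + \frac{8\gamma\sigma_{g,1}^2}{\mu}\ln(4e/\delta),
\]
using $\ln(4e/\delta) = \ln(4/\delta) + 1 \geq \ln(4/\delta) + 1/2$. I do not anticipate any genuine obstacle: the only minor care is absorbing the additive $4\gamma\sigma_{g,1}^2/\mu$ into the logarithmic term so the bound matches the stated form, and verifying that iterating the MGF recursion is legitimate because the effective parameter $\theta(1-\mu\gamma/2)^k$ only shrinks with $k$.
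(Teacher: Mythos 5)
Your proof is correct. The paper disposes of this lemma in one line by directly invoking Theorem~30 of Cutler et al.\ (2023): that external theorem already gives a high-probability tracking bound for SGD on a strongly convex objective under light-tailed noise, and the warm-start phase is exactly its zero-drift specialization (since $x$ is held fixed at $x_1$, $y^*(\cdot)$ is constant), so the paper simply substitutes $\delta\mapsto\delta/4$ and $t=T_0$. You instead reconstruct that theorem from the paper's own building blocks: the distance recursion of \cref{lm:distance recursion} with drift $D_t\equiv 0$, the MGF recursion of \cref{prop:MGF-recursive control} with $Y_t=\kappa_t=0$, a geometric unrolling with the shrinking parameter $\theta(1-\mu\gamma/2)^k$, and a Markov bound at the extremal $\theta=\mu/(8\gamma\sigma_{g,1}^2)$. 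This is a legitimate alternative; since \cref{prop:MGF-recursive control} and \cref{lm:distance recursion} are themselves credited to Proposition~29 and Lemma~25 of the same Cutler et al.\ paper, the two proofs are really the same argument at two levels of abstraction, and yours has the advantage of being self-contained and making the constant transparent — in particular it shows explicitly where the $\ln(4e/\delta)$ (rather than $\ln(4/\delta)$) comes from, namely absorbing the accumulated-variance term $4\gamma\sigma_{g,1}^2/\mu$ into the extra ``$+1$'' inside the logarithm. Your side conditions also check out: $V_0=\|y_0-y_1^*\|^2$ is deterministic (so $(1-\mu\gamma/2)^{T_0}V_0$ pulls out of the expectation), the parameters $\theta(1-\mu\gamma/2)^k$ remain admissible because $1-\mu\gamma/2\in(0,1)$, and $\gamma\leq 1/(2\mu)$ makes the sub-Gaussian bound $\mu/(8\gamma\sigma_{g,1}^2)$ the binding one.
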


\begin{proof}[Proof of \cref{lm:warm-start}]
For any given $\delta\in(0,1)$ and any fixed $t\geq 0$, we invoke \citep[Theorem 30]{cutler2023stochastic} to obtain that 
\begin{equation}
    \|y_t^{\init}-y_0^*\|^2 \leq \left(1-\frac{\mu\gamma}{2}\right)^t\|y_0-y_0^*\|^2 + \frac{8\gamma\sigma_{g,1}^2}{\mu}\ln\frac{4e}{\delta}
\end{equation}
holds with probability at least $1-\delta$ over the randomness in $\gF_{\init}$. Set $t=T_0$ and then we have
\begin{equation*}
    \|y_1-y_1^*\|^2 = \|y_{T_0}^{\init}-y_0^*\|^2 \leq \left(1-\frac{\mu\gamma}{2}\right)^{T_0}\|y_0-y_0^*\|^2 + \frac{8\gamma\sigma_{g,1}^2}{\mu}\ln\frac{4e}{\delta},
\end{equation*}
where the first equality is due to $y_1=y_{T_0}^{\init}$ and $y_1^*=y_0^*$ (since $x_1=x_0$) by line 2 of \cref{alg:bi-adam}.
\end{proof}

\subsection{Proof of \cref{lm:event-y}}
\label{sec:proof-event-y}
The following \cref{lm:general-recursion} (i.e., the full statement of \cref{lm:event-y}) is a direct application of the randomness decoupling lemma (i.e., \cref{lm:any-sequence-main}) to the actual sequences $\{x_t\}, \{y_t\}$ in \cref{alg:bi-adam}.

\begin{lemma}[Restatement of \cref{lm:event-y}] \label{lm:general-recursion}
Suppose that \Cref{ass:bilevel-assumption,ass:noise,ass:bilevel-additional} hold. Let $\{y_t\}$ be the iterates generated by \Cref{alg:bi-adam}. Under the parameter choices in \cref{thm:main-appendix}, let $\eta$ further satisfy
\begin{equation} \label{eq:eta-additional}
    \begin{aligned}
        \eta \leq c_2\min\left\{\frac{r\lambda}{G_{T}}, \frac{\lambda}{6L}, \frac{\sigma_{\phi}\lambda\beta}{\hl_TG_T\max\{1, \sqrt{\iota}, \ln(1/\beta), \ln(C_{\beta})\}}, \frac{\lambda^{3/2}\beta}{\hl_T\sqrt{G_{T}}}\right\}, 
    \end{aligned}
\end{equation}
then for any given $\delta\in(0,1)$ and all $t\geq 1$, the following estimate holds with probability at least $1-\delta/4$ over the randomness in $\gF_{T+1}^y$ (we denote this event as $\gE_{y}$):
\begin{equation} \label{eq:general-recursion}
    \begin{aligned}
        \|y_t-y_t^*\|^2 
        &\leq \left(\left(1-\frac{\mu\gamma}{2}\right)^{t-1} + \frac{4\eta^2l_{g,1}^2}{\lambda^2\mu^3\gamma}\sum_{i=1}^{t-1}\left(1-\frac{\mu\gamma}{2}\right)^{t-1-i}\hl_i^2\right)\|y_1-y_1^*\|^2 \\
        &\quad+ \left(\frac{8\gamma}{\mu}\ln\frac{4eT}{\delta} + \frac{16\eta^2l_{g,1}^2}{\lambda^2\mu^4}\sum_{i=1}^{t-1}\left(1-\frac{\mu\gamma}{2}\right)^{t-1-i}\hl_i^2\right)\sigma_{g,1}^2 \\
        &\quad+ \frac{4\eta^2l_{g,1}^2}{\lambda^2\mu^3\gamma}\sum_{i=1}^{t-1}\left(1-\frac{\mu\gamma}{2}\right)^{t-1-i}\|\hu_i\|^2 + \frac{64\eta^4l_{g,1}^4}{\lambda^4\mu^8\gamma^4}\sum_{i=1}^{t-1}\left(1-\frac{\mu\gamma}{2}\right)^{t-1-i}\alpha_i\hl_i^2\|\hu_i\|^2,
    \end{aligned}
\end{equation}
where constant $\hl_i$ and sequence $\{\hu_i\}$ are defined in \eqref{eq:L-def} and \eqref{eq:hu-def}, respectively.
\end{lemma}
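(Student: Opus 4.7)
The lemma is a direct specialization of the randomness decoupling lemma \cref{lm:any-sequence} to the actual AdamBO iterates. The plan is to (i) verify that the abstract precondition \eqref{eq:condition-any-sequence} is satisfied along the AdamBO trajectory, and then (ii) invoke \cref{lm:any-sequence} with the appropriate substitutions, with $\delta$ rescaled to $\delta/4$.

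For step (i), the upper-level update $x_{t+1}=x_t-H_t\hm_t$ with $H_t=\diag(\eta/(\sqrt{\hv_t}+\lambda))$ satisfies $\|H_t\|\leq\eta/\lambda$ (coordinatewise, since $\lambda>0$), so $\|x_{t+1}-x_t\|^2\leq(\eta^2/\lambda^2)\|\hm_t\|^2$. Decomposing $\hm_t=\hu_t+(\hm_t-\hu_t)$ and using $(a+b)^2\leq 2(a^2+b^2)$, it suffices to control $\|\hm_t-\hu_t\|^2$ by $\hl_t^2\sum_{j=1}^t d_{t,j}\|y_j-y_j^*\|^2$. Using the weighted-average representations \eqref{eq:hm-def}--\eqref{eq:hu-def} together with $\sum_{j=1}^t d_{t,j}=1$ (\cref{lm:dj-def}), Jensen's inequality yields $\|\hm_t-\hu_t\|^2\leq\sum_{j=1}^t d_{t,j}\|\hatphi(x_j,y_j;\Bar{\xi}_j)-\hatphi(x_j,y_j^*;\Bar{\xi}_j)\|^2$, and \cref{lm:hyper-stoc-bias} applied termwise (its precondition $\|y_j-y_j^*\|\leq r$ being secured by the parameter choice \eqref{eq:eta-additional}) bounds each summand by $(L_0+L_1\|\gdphi(x_j)\|)^2\|y_j-y_j^*\|^2\leq\hl_t^2\|y_j-y_j^*\|^2$. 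Combining these estimates yields precisely \eqref{eq:condition-any-sequence} with constant $2\eta^2/\lambda^2$.

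For step (ii), take $\tx_t=x_t$, $\hat{\xi}_t=\Bar{\xi}_t$ and $\tilde{\zeta}_t=\zeta_t$, so that the SGD recursion \eqref{eq:yt-any-update-main} coincides with line 5 of \cref{alg:bi-adam} and the auxiliary sequences in the lemma match as $\tu_t=\hu_t$ and $\tl_t=\hl_t$. The requirements $\gamma\leq 1/(2l_{g,1})$ and $\gamma=2\beta/\mu$ hold under the theorem's parameter choices. Replacing $\delta$ by $\delta/4$ in \cref{lm:any-sequence} turns $\ln(eT/\delta)$ into $\ln(4eT/\delta)$, and the conclusion \eqref{eq:any-sequence-main} becomes exactly \eqref{eq:general-recursion}, valid with probability at least $1-\delta/4$ over $\gF_{T+1}^y$.

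The one conceptual subtlety I expect to flag is that $(x_t,\Bar{\xi}_t)$ is not a deterministic ``any sequence''; it is in fact coupled to past lower-level noise $\zeta_{1:t-1}$. This is precisely what the randomness-decoupling structure of \cref{lm:any-sequence} is designed to accommodate: its proof invokes \cref{prop:MGF-recursive control} using only the conditional sub-Gaussianity of the SGD noise $\varepsilon_t=\gdy g(x_t,y_t)-\gdy G(x_t,y_t;\zeta_t)$ relative to a filtration to which $(x_t,y_t)$ is adapted. Taking that filtration to be the natural AdamBO filtration $\gF_t=\sigma(\gF_{\init}\cup\gF_t^x\cup\gF_t^y)$, under which $\zeta_t$ is still independent, leaves the MGF recursion intact, so the high-probability bound over $\gF_{T+1}^y$ survives the coupling between the upper- and lower-level variables of \cref{alg:bi-adam}.
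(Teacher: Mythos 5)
Your proposal is correct and follows essentially the same route as the paper's proof: bound $\|x_{t+1}-x_t\|^2$ via $\|H_t\|\le\eta/\lambda$, decompose $\hm_t=\hu_t+(\hm_t-\hu_t)$, control $\|\hm_t-\hu_t\|^2$ by Jensen's inequality over the $d_{t,j}$-weights and \cref{lm:hyper-stoc-bias}, verify condition \eqref{eq:condition-any-sequence}, and then instantiate \cref{lm:any-sequence} with $\tx_t=x_t$, $\hat{\xi}_t=\bar{\xi}_t$, $\delta\mapsto\delta/4$. The one place your framing diverges slightly from the paper's is the filtration discussion at the end: the paper keeps $\gH_t=\tilde{\gF}_t^y$ in \cref{prop:MGF-recursive control} and regards the upper-level randomness as conditioned upon (so that $x_t$ becomes $\tilde{\gF}_t^y$-measurable and the high-probability statement is over $\gF_{T+1}^y$ alone), whereas you propose enlarging to the full $\gF_t$ — this also makes the MGF recursion valid since $\zeta_t$ remains independent of $\gF_t$, and the resulting event still unions correctly with $\gE_0$ and $\gE_x$ in \cref{thm:main-appendix}, so the variant is harmless, though it technically changes the clause ``over the randomness in $\gF_{T+1}^y$'' to ``over $\gF_{T+1}$''.
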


\begin{proof}[Proof of \Cref{lm:general-recursion}]
First, with the parameter choices in \cref{thm:main-appendix} and the additional choice for $\eta$ as in \eqref{eq:eta-additional}, we can follow the same procedure as \cref{lm:para-verify} (see ``Verification for $\varrho \leq \min\{r,1/4L_1\}$'') to show that $\|y_t-y_t^*\|\leq r$ for all $t\in[T]$. Thus, the condition for applying \cref{lm:hyper-stoc-bias} is satisfied.
Recall the definitions of $\hm_t$ and $\hu_t$ in \eqref{eq:hm-def} and \eqref{eq:hu-def}, we have
\begin{equation} \label{eq:hm-hu-diff}
    \begin{aligned}
        \|\hm_t-\hu_t\|^2
        &\leq \left\|\sum_{j=1}^{t}d_{t,j}(\hatphi(x_j,y_j;\Bar{\xi}_j) - \hatphi(x_j,y_j^*;\Bar{\xi}_j))\right\|^2 \\
        &\leq \sum_{j=1}^{t}d_{t,j}\|\hatphi(x_j,y_j;\Bar{\xi}_j) - \hatphi(x_j,y_j^*;\Bar{\xi}_j)\|^2 \\
        &\leq \sum_{j=1}^{t}d_{t,j}(L_0+L_1\|\gdphi(x_j)\|)^2\|y_j-y_j^*\|^2 
        \leq \hl_t^2\sum_{j=1}^{t}d_{t,j}\|y_j-y_j^*\|^2,
    \end{aligned}
\end{equation}
where the second inequality uses Jensen's inequality, the third inequality is due to \Cref{lm:hyper-stoc-bias}, and the last inequality uses the definition of $\hl_t$ in \eqref{eq:L-def}. By the update rule in \Cref{alg:bi-adam-equiv}, we have
\begin{equation*}
    \begin{aligned}
        \|x_{t+1}-x_t\|^2
        &\leq \|H_t\|^2\|\hm_t\|^2 
        \leq \frac{\eta^2}{\lambda^2}\|\hm_t\|^2
        \leq \frac{2\eta^2}{\lambda^2}\left(\|\hu_t\|^2 + \|\hm_t-\hu_t\|^2\right) \\
        &\leq \frac{2\eta^2}{\lambda^2}\left(\|\hu_t\|^2 + \hl_t^2\sum_{j=1}^{t}d_{t,j}\|y_j-y_j^*\|^2\right),
    \end{aligned}
\end{equation*}
where the first inequality uses \eqref{eq:ht-odot}; the second inequality is due to \cref{lm:gradient-bound}; the third inequality uses Young's inequality; and the last inequality is due to \eqref{eq:hm-hu-diff}. This implies that the sequence $\{x_t\}$ and the randomness $\{\Bar{\xi}_t\}$ generated by \cref{alg:bi-adam} satisfy the condition \eqref{eq:condition-any-sequence} in \Cref{lm:any-sequence}. Therefore, the result follows by applying \Cref{lm:any-sequence} with $\{\tx_t\}=\{x_t\}$ and $\{\hat{\xi}_t\}=\{\Bar{\xi}_t\}$.
\end{proof}

\textbf{Remark.} In the end, we will show $\tau=T+1$ in the proof of \cref{thm:main-appendix} (i.e., the Full statement of \cref{thm:main}), thus we can apply \cref{lm:gradient-bound} to obtain $G_T\leq G$ and $\hl_T\leq L$. This suggests that under event $\gE_0\cap\gE_y$, the additional requirement \eqref{eq:eta-additional} is actually included in the parameter choices of \cref{thm:main-appendix}. Therefore, there is no need to worry about this temporary iterate-dependent requirement for the choice of $\eta$.

\subsection{Proof of \cref{lem:upperlevel}}
\label{sec:proof-momentum}
Before proving \cref{lem:upperlevel}, first note that when $t<\tau$ and $\gE_0\cap\gE_y$ holds, some of the time-dependent quantities (such as $\hl_t$ and $\|\hu_t\|$) in \cref{lm:general-recursion} can be well bounded by \cref{lm:gradient-bound}. In particular, we have the following two high probability bounds for the lower-level approximation error $\|y_t-y_t^*\|$: the first one, \eqref{eq:recur-1}, is useful for the convergence analysis; and the second one, \eqref{eq:recur-2}, is crucial for proving \cref{lm:momentum-bound,lm:rs-condition}.

\begin{lemma} \label{lm:recursion-two-type}
Under event $\gE_0\cap\gE_{y}$ and the parameter choices in \cref{lm:general-recursion}, if $t\leq \tau$, we have 
\begin{equation} \label{eq:recur-1}
    \begin{aligned}
        \|y_t-y_t^*\|^2 
        &\leq \left(\left(1-\frac{\mu\gamma}{2}\right)^{t-1} + \frac{8\eta^2l_{g,1}^2L^2}{\lambda^2\mu^4\gamma^2}\right) \|y_1-y_1^*\|^2 + \left(\frac{8\gamma}{\mu}\ln\frac{4eT}{\delta} + \frac{32\eta^2l_{g,1}^2L^2}{\lambda^2\mu^5\gamma}\right)\sigma_{g,1}^2 \\
        &\quad+ \frac{4\eta^2l_{g,1}^2}{\lambda^2\mu^3\gamma}\sum_{i=1}^{t-1}\left(1-\frac{\mu\gamma}{2}\right)^{t-1-i}\|\hu_i\|^2 + \frac{64\eta^4l_{g,1}^4L^2}{\lambda^4\mu^8\gamma^4}\sum_{i=1}^{t-1}\left(1-\frac{\mu\gamma}{2}\right)^{t-1-i}\alpha_i\|\hu_i\|^2
    \end{aligned}
\end{equation}
and 
\begin{equation} \label{eq:recur-2}
    \begin{aligned}
        \|y_t-y_t^*\|^2 
        &\leq \left(1 + \frac{8\eta^2l_{g,1}^2L^2}{\lambda^2\mu^4\gamma^2}\right) \|y_1-y_1^*\|^2 + \left(\frac{8\gamma}{\mu}\ln\frac{4eT}{\delta} + \frac{32\eta^2l_{g,1}^2L^2}{\lambda^2\mu^5\gamma}\right)\sigma_{g,1}^2 \\
        &\quad+ \frac{8\eta^2l_{g,1}^2C_{u,0}^2}{\lambda^2\mu^4\gamma^2} + \frac{1024\eta^4l_{g,1}^4L^2C_{u,0}^2}{\lambda^4\mu^8\gamma^4}\left(2+\ln\frac{1}{\beta}\right)
        =: \varrho^2,
    \end{aligned}
\end{equation}
where constants $L$ and sequence $\{\hu_i\}$ are defined in \eqref{eq:L-def} and \eqref{eq:hu-def}, respectively.
\end{lemma}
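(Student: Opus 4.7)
\textbf{Proof plan for \cref{lm:recursion-two-type}.}

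The plan is to start from the high-probability recursion provided by \cref{lm:general-recursion}, which holds on the event $\gE_0 \cap \gE_y$, and then simplify it using the uniform bounds available before the stopping time $\tau$. Concretely, since $t \le \tau$, every index $i \in \{1,\dots,t-1\}$ satisfies $i < \tau$, so \cref{lm:gradient-bound} gives $\hl_i \le L$ and $\|\hu_i\| \le C_{u,0}$ almost surely on $\gE_0 \cap \gE_y$. Substituting $\hl_i \le L$ into \eqref{eq:general-recursion} and evaluating the geometric sum $\sum_{i=1}^{t-1}(1-\mu\gamma/2)^{t-1-i} \le 2/(\mu\gamma)$ in the two terms that depend on $\hl_i^2$ (but do not carry $\hu_i$) produces the constants $\tfrac{8\eta^2 l_{g,1}^2 L^2}{\lambda^2\mu^4\gamma^2}$ and $\tfrac{32\eta^2 l_{g,1}^2 L^2}{\lambda^2\mu^5\gamma}$ attached to $\|y_1-y_1^*\|^2$ and $\sigma_{g,1}^2$ respectively. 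The remaining two $\hu_i$-dependent terms are kept as is after replacing $\hl_i^2$ by $L^2$ in the last term. This yields exactly \eqref{eq:recur-1}.

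For \eqref{eq:recur-2}, the next step is to further coarsen \eqref{eq:recur-1} by dropping the contracting factor $(1-\mu\gamma/2)^{t-1} \le 1$ and by replacing $\|\hu_i\|^2$ with its uniform bound $C_{u,0}^2$ under $\gE_0 \cap \gE_y$. The $\hu$-sum in the third line of \eqref{eq:recur-1} is then again a geometric sum bounded by $2/(\mu\gamma)$, producing the $\tfrac{8\eta^2 l_{g,1}^2 C_{u,0}^2}{\lambda^2\mu^4\gamma^2}$ term. The final term requires the auxiliary algebraic estimate. Using the choice $\gamma = 2\beta/\mu$ from \cref{lm:any-sequence}, one has $(1-\mu\gamma/2)^{t-1-i} = (1-\beta)^{t-1-i}$, so the sum becomes $\sum_{i=1}^{t-1}(1-\beta)^{t-1-i}\alpha_i$, to which \cref{lm:algebra-sum-beta-alphai} applies (the parameter choices in \cref{thm:main-appendix} ensure $\beta \le 1/2$) and gives a bound of $32 + 16\ln(1/\beta)$. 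Multiplying by $\tfrac{64\eta^4 l_{g,1}^4 L^2 C_{u,0}^2}{\lambda^4 \mu^8 \gamma^4}$ and factoring out yields the $1024(2+\ln(1/\beta))$ constant in \eqref{eq:recur-2}.

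The main obstacle is a bookkeeping one rather than a conceptual one: I need to make sure that the uniform bounds $\hl_i \le L$ and $\|\hu_i\| \le C_{u,0}$ from \cref{lm:gradient-bound} are legitimately available for \emph{every} index $i \le t-1$ in the sum when $t \le \tau$, which is exactly what the convention $\tau \le T+1$ and the strict inequality $i < \tau$ guarantee. One also has to check that the parameter regime assumed in \cref{lm:general-recursion} (in particular the additional smallness condition \eqref{eq:eta-additional} on $\eta$ and $\beta \in (0,1/2]$) is consistent with the parameter choices stated in \cref{thm:main-appendix}, so that the application of \cref{lm:algebra-sum-beta-alphai} is valid; this is handled as in the remark following \cref{lm:general-recursion}. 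Apart from these consistency checks, the proof is a direct substitution plus two standard geometric-sum computations and one invocation of \cref{lm:algebra-sum-beta-alphai}.
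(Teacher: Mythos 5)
Your proposal is correct and follows essentially the same route as the paper's proof: apply \cref{lm:gradient-bound} to replace $\hl_i$ by $L$ and $\|\hu_i\|$ by $C_{u,0}$ (valid for every $i\leq t-1 < t\leq\tau$), then collapse the two geometric sums with $\sum_{i=1}^{t-1}(1-\mu\gamma/2)^{t-1-i}\leq 2/(\mu\gamma)$, and handle the final $\alpha_i$-weighted sum by noting $\gamma=2\beta/\mu$ turns it into $\sum_{i=1}^{t-1}(1-\beta)^{t-1-i}\alpha_i$ and invoking \cref{lm:algebra-sum-beta-alphai}. The paper's written proof only states the substitution strategy without displaying the arithmetic; you have supplied the intermediate bookkeeping that the authors leave implicit, and your constants match exactly.
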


\begin{proof}[Proof of \Cref{lm:recursion-two-type}]
By \Cref{lm:gradient-bound}, we know that $\hl_t\leq L$ and $\|\hu_t\|\leq C_{u,0}$ if $t<\tau$. Then under event $\gE_0\cap\gE_{y}$, \eqref{eq:recur-1} is obtained by replacing $\hl_i$ with $L$, and \eqref{eq:recur-2} is obtained by substituting both $\hl_i$ and $\|\hu_i\|$ with $L$ and $C_{u,0}$, respectively.
\end{proof}

With \cref{lm:recursion-two-type} in place, we now formally present the statement of \cref{lem:upperlevel} below.

\begin{lemma}[Restatement of \cref{lem:upperlevel}] \label{lm:momentum-bound}
Under event $\gE_0\cap\gE_{y}$ and the parameter choices in \cref{lm:general-recursion}, if $t<\tau$, we have
\begin{equation*}
    \begin{aligned}
        \|\hm_t\| \leq C_{u,0} + C_{u,1}\varrho,
        \quad
        \hv_t \preceq (C_{u,0} + C_{u,1}\varrho)^2,
        \quad
        \frac{\eta}{C_{u,0} + C_{u,1}\varrho + \lambda} \preceq h_t \preceq \frac{\eta}{\lambda};
    \end{aligned}
\end{equation*}
if $t\leq \tau$, we have
\begin{equation*}
    \begin{aligned}
        \|\hatphi(x_t,y_t;\Bar{\xi}_t) - \E_t[\hatphi(x_t,y_t;\Bar{\xi}_t)]\| \leq \sigma_{\phi}, 
    \end{aligned}
\end{equation*}
\begin{equation*}
    \begin{aligned}
        \|\E_t[\hatphi(x_t,y_t^*;\Bar{\xi}_t)] - \E_{t-1}[\hatphi(x_{t-1},y_{t-1}^*;\Bar{\xi}_{t-1})]\| \leq L\|x_t-x_{t-1}\|;
    \end{aligned}
\end{equation*}
where constants $C_{u,0}, C_{u,1}, \sigma_{\phi}, L$ and $\varrho$ are defined in \eqref{eq:Cu-def}, \eqref{eq:L-def} and \eqref{eq:recur-2}, respectively.
\end{lemma}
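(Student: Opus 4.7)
The plan is to combine the high-probability lower-level error bound $\|y_t - y_t^*\| \leq \varrho$ from \cref{lm:recursion-two-type} (which itself relies on $\gE_0 \cap \gE_y$) with the pathwise hypergradient estimates from \cref{sec:auxiliary-bilevel}, and then propagate these bounds through the Adam recursions by a simple induction. The first preliminary step is to record that the parameter choices in \cref{thm:main-appendix} force $\varrho \leq r$ and $L \leq C_{u,1}$ (the latter is immediate from $C_{u,1} = C_{\phi,1} + L_1 G$ and the definition of $C_{\phi,1}$ in \eqref{eq:C-phi-01}, which already contains the additive $L_0$ term). Under $t < \tau$ this places every $(x_j, y_j)$ with $j \leq t$ in the regime where \cref{lm:stoc-hyper-error,lm:hyper-stoc-bias,lm:hyper-noise} apply.

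For the bound on $\|\hm_t\|$ I would first show that each stochastic hypergradient satisfies
\[
\|\hatphi(x_j, y_j; \Bar{\xi}_j)\| \leq \|\hatphi(x_j, y_j^*; \Bar{\xi}_j)\| + \|\hatphi(x_j, y_j; \Bar{\xi}_j) - \hatphi(x_j, y_j^*; \Bar{\xi}_j)\| \leq C_{u,0} + L\,\varrho \leq C_{u,0} + C_{u,1}\varrho,
\]
where the $C_{u,0}$ estimate reuses the argument inside \cref{lm:gradient-bound} and the second term uses the first part of \cref{lm:hyper-stoc-bias} together with $\|\gdphi(x_j)\| \leq G$. Since $\hm_t = (1-\alpha_t)\hm_{t-1} + \alpha_t \hatphi(x_t, y_t; \Bar{\xi}_t)$ is a convex combination, a direct induction (with base case $\hm_1 = \hatphi(x_1, y_1; \Bar{\xi}_1)$) yields $\|\hm_t\| \leq C_{u,0} + C_{u,1}\varrho$. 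The exact same coordinate-wise induction applied to $\hv_t = (1-\alphasq_t)\hv_{t-1} + \alphasq_t (\hatphi(x_t, y_t; \Bar{\xi}_t))^2$ gives $\hv_t \preceq (C_{u,0} + C_{u,1}\varrho)^2$, from which the sandwich bound $\eta/(C_{u,0} + C_{u,1}\varrho + \lambda) \preceq h_t \preceq \eta/\lambda$ follows immediately from the definition $h_t = \eta/(\sqrt{\hv_t} + \lambda)$.

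For the stochastic-noise bound on $\hatphi$, I would apply \cref{lm:hyper-noise} with $(x, y) = (x_t, y_t)$; since $\|y_t - y_t^*\| \leq \varrho \leq r$ for $t \leq \tau$ under $\gE_0 \cap \gE_y$, the right-hand side of \cref{lm:hyper-noise} is dominated by the constant $\sigma_\phi$ given in \eqref{eq:sigma-phi-def}, giving the claim. For the Lipschitz-in-conditional-expectation estimate, I invoke the second part of \cref{lm:hyper-stoc-bias} applied to $(x_1, x_2) = (x_t, x_{t-1})$, which requires $\|x_t - x_{t-1}\| \leq \mu r/(\mu + l_{g,1})$; this is guaranteed by $\|x_t - x_{t-1}\| \leq \|H_{t-1}\| \|\hm_{t-1}\| \leq \eta(C_{u,0}+C_{u,1}\varrho)/\lambda$ together with the smallness of $\eta$ enforced in \eqref{eq:eta-additional}. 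The conclusion $(L_0 + L_1 \|\gdphi(x_{t-1})\|)\|x_t - x_{t-1}\| \leq L\|x_t - x_{t-1}\|$ then follows because $t-1 < \tau$ implies $\|\gdphi(x_{t-1})\| \leq G$ by \cref{lm:gradient-bound}.

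The main technical obstacle is really the bookkeeping: one must carefully verify that the iterate-dependent constraints (notably $\varrho \leq r$, $L \leq C_{u,1}$, and $\|x_t - x_{t-1}\| \leq \mu r/(\mu + l_{g,1})$) are all implied by the parameter choices in \cref{lm:general-recursion}, since only then are \cref{lm:hyper-noise,lm:hyper-stoc-bias,lm:stoc-hyper-error} applicable. Once this calibration is done the proof reduces to three short inductions and two direct invocations of lemmas from \cref{sec:auxiliary-bilevel}, with no further probabilistic or analytical work needed.
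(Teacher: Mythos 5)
Your proposal is correct and follows essentially the same route as the paper: both establish the pathwise bound $\|\hatphi(x_t,y_t;\Bar{\xi}_t)\| \le C_{u,0}+C_{u,1}\varrho$ under $\gE_0\cap\gE_y$ and $t<\tau$, run the convex-combination induction for $\hm_t$ and $\hv_t$, sandwich $h_t$, then invoke \cref{lm:hyper-noise} with $\varrho\le r$ for the $\sigma_\phi$ bound and the second part of \cref{lm:hyper-stoc-bias} together with \cref{lm:gradient-bound} for the Lipschitz estimate. The only cosmetic difference is that you derive the $\|\hatphi\|$ bound by triangle inequality through $y_t^*$ (using $\|\hatphi(x_t,y_t^*;\Bar{\xi}_t)\|\le C_{u,0}$ and $L\le C_{u,1}$, which does hold since $L_0\le C_{\phi,1}$), whereas the paper applies \cref{lm:stoc-hyper-error} directly at $y=y_t$; both yield the same constant.
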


\begin{proof}[Proof of \Cref{lm:momentum-bound}]
By \Cref{lm:stoc-hyper-error}, under event $\gE_0\cap\gE_{y}$, if $t<\tau$, we have
\begin{equation*}
    \begin{aligned}
        \|\hatphi(x_t,y_t;\Bar{\xi}_t)\| 
        &\leq C_{\phi,0} + (C_{\phi,1} + L_1\|\gdphi(x_t)\|)\|y_t-y_t^*\| + \|\gdphi(x_t)\| \\
        &\leq C_{\phi,0} + G + (C_{\phi,1} + L_1G)\varrho
        = C_{u,0} + C_{u,1}\varrho,
    \end{aligned}
\end{equation*}
where the second inequality is due to \Cref{lm:gradient-bound} and \eqref{eq:recur-2} in \Cref{lm:recursion-two-type}, and the last equality uses the definitions in \eqref{eq:Cu-def}.
We can bound $\|\hm_t\|$ by a standard induction argument as follows. First, for the base case $k=1$, note that
\begin{equation*}
    \|\hm_1\|=\|\hatphi(x_1,y_1;\Bar{\xi}_1)\|\leq C_{u,0} + C_{u,1}\varrho.
\end{equation*}
Suppose $\|\hm_{k-1}\|\leq C_{u,0} + C_{u,1}\varrho$ for some $k<\tau$, then we have
\begin{equation*}
    \begin{aligned}
        \|\hm_k\| 
        &\leq (1-\alpha_k)\|\hm_{k-1}\| + \alpha_k\|\hatphi(x_k,y_k;\Bar{\xi}_k)\|
        \leq C_{u,0} + C_{u,1}\varrho.
    \end{aligned}
\end{equation*}
Then we can show $\hv_t\preceq (C_{u,0} + C_{u,1}\varrho)^2$ in a similar way (by induction argument) by noting that
\begin{equation*}
    (\hatphi(x_t,y_t;\Bar{\xi}_t))^2 \preceq \|\hatphi(x_t,y_t;\Bar{\xi}_t)\|^2 \leq (C_{u,0} + C_{u,1}\varrho)^2.
\end{equation*}
Given the bound on $\hv_t$, it is straight forward to bound the learning rate $h_t$.
As for the second last bound, by \Cref{lm:hyper-noise} and \eqref{eq:recur-2} of \Cref{lm:recursion-two-type},  under event $\gE_0\cap\gE_{y}$, if $t\leq \tau$, we have
\begin{equation*}
    \begin{aligned}
        \|\hatphi(x_t,y_t;\Bar{\xi}_t) &- \E_t[\hatphi(x_t,y_t;\Bar{\xi}_t)]\| \\
        &\leq \frac{\mu+3l_{g,1}+\sigma_{g,2}}{\mu} + \frac{2l_{g,1}+\sigma_{g,2}}{\mu}l_{f,0} + \frac{2l_{g,1}+\sigma_{g,2}}{\mu}(L_{y,0}+L_{y,1}l_{f,0})\|y_t-y_t^*\| \\
        &\leq \frac{\mu+3l_{g,1}+\sigma_{g,2}}{\mu} + \frac{2l_{g,1}+\sigma_{g,2}}{\mu}l_{f,0} + \frac{2l_{g,1}+\sigma_{g,2}}{\mu}(L_{y,0}+L_{y,1}l_{f,0})\varrho \\
        &\leq \sigma_{\phi},
    \end{aligned}
\end{equation*}
where the last equality uses $\varrho\leq r$ by \cref{lm:para-verify} and the definition of $\sigma_{\phi}$ in \eqref{eq:sigma-phi-def}. The last bound can be obtained by applying \Cref{lm:hyper-stoc-bias,lm:gradient-bound}:
\begin{equation*}
    \begin{aligned}
        \|\E_t[\hatphi(x_t,y_t^*;\Bar{\xi}_t)] - \E_{t-1}[\hatphi(x_{t-1},y_{t-1}^*;\Bar{\xi}_{t-1})]\| 
        &\leq (L_0 + L_1\|\gdphi(x_{t-1})\|)\|x_t-x_{t-1}\| \\
        &\leq (L_0 + L_1G)\|x_t-x_{t-1}\| \\
        &= L\|x_t-x_{t-1}\|,
    \end{aligned}
\end{equation*}
where the last inequality uses the definition of $L$ in \eqref{eq:L-def}.
\end{proof}

\subsection{Proof of \cref{lem:biasvirtual}}
\label{sec:proof-biasvirtual}
The following lemma provides a bound for the difference between the actual momentum $\hat{m}_t$ versus the auxiliary momentum $\hat{u}_t$ under the good event $\gE_0 \cap \gE_y$, which is crucial for establishing the convergence guarantees for \cref{alg:bi-adam}.

\begin{lemma}[Restatement of \cref{lem:biasvirtual}] \label{lm:hm-hu-diff}
Under event $\gE_0\cap\gE_{y}$ and the parameter choices in \cref{lm:general-recursion}, we have
\begin{equation*}
    \begin{aligned}
        &\sum_{t=1}^{\tau-1}\|\hm_t-\hu_t\|^2
        \leq TL^2\left(\left(1 + \frac{8\eta^2l_{g,1}^2L^2}{\lambda^2\mu^4\gamma^2}\right) \|y_1-y_1^*\|^2 + \left(\frac{8\gamma}{\mu}\ln\frac{4eT}{\delta} + \frac{32\eta^2l_{g,1}^2L^2}{\lambda^2\mu^5\gamma}\right)\sigma_{g,1}^2\right) \\
        &\quad+ L^2\left(\frac{8\eta^2l_{g,1}^2}{\lambda^2\mu^4\gamma^2} + \frac{2048\eta^4l_{g,1}^4L^2}{\lambda^4\mu^8\gamma^4}\left(2+\ln\frac{1}{\beta}\right)\right)\sum_{t=1}^{\tau-1}\|\epsilon_t\|^2 + 2\|\E_t[\hatphi(x_t,y_t^*;\Bar{\xi}_t)] - \gdphi(x_t)\|^2 \\
        &\quad+ 2L^2\left(\frac{8\eta^2l_{g,1}^2}{\lambda^2\mu^4\gamma^2} + \frac{2048\eta^4l_{g,1}^4L^2}{\lambda^4\mu^8\gamma^4}\left(2+\ln\frac{1}{\beta}\right)\right)\sum_{t=1}^{\tau-1}\|\gdphi(x_t)\|^2.
    \end{aligned}
\end{equation*}
\end{lemma}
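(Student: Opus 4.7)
The plan is to start from the one-step Jensen bound \eqref{eq:hm-hu-diff}, combine it with the recursion \eqref{eq:recur-1} from \cref{lm:general-recursion}, and then decompose $\hu_i$ into its gradient, Neumann-bias, and noise pieces so that the resulting double sums carry the announced coefficients on $\sum_t\|\epsilon_t\|^2$ and $\sum_t\|\gdphi(x_t)\|^2$.

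Under the event $\gE_0\cap\gE_y$ and for $t<\tau$, \cref{lm:gradient-bound} gives $\hl_t\leq L$, so \eqref{eq:hm-hu-diff} specializes to $\|\hm_t-\hu_t\|^2\leq L^2\sum_{j=1}^{t} d_{t,j}\|y_j-y_j^*\|^2$. Since $\sum_{j=1}^t d_{t,j}=1$ by \cref{lm:dj-def}, I would bound each $\|y_j-y_j^*\|^2$ by the uniform envelope from \eqref{eq:recur-1}, which splits into a $j$-independent constant piece (involving $\|y_1-y_1^*\|^2$ and $\sigma_{g,1}^2$) plus two $j$-indexed sums in $\|\hu_i\|^2$, with discount factors $(1-\mu\gamma/2)^{j-1-i}$ and an extra $\alpha_i$ weighting in the second sum.

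Summing over $t=1,\dots,\tau-1$, the constant piece is counted $T$ times and produces the $TL^2(\cdots)$ contribution matching the first line of the claim. For the $\|\hu_i\|^2$ sums, I would use $\gamma=2\beta/\mu$ to identify $1-\mu\gamma/2=1-\beta$, interchange the order of summation $\sum_t\sum_i$ into $\sum_i\sum_t$, and apply the geometric bound $\sum_{t\geq i+1}(1-\beta)^{t-1-i}\leq 2/(\mu\gamma)$ for the first sum and \cref{lm:algebra-sum-beta-alphai} for the $\alpha_i$-weighted one (the latter producing the $2+\ln(1/\beta)$ factor). Finally, the decomposition $\hu_i = \gdphi(x_i) + (\E_i[\hatphi(x_i,y_i^*;\Bar{\xi}_i)]-\gdphi(x_i)) + \epsilon_i$ together with Young's inequality in the split $\|a+b+c\|^2\leq 2\|c\|^2+4\|a\|^2+4\|b\|^2$ produces the $1$:$2$ gap between the $\|\epsilon_t\|^2$ and $\|\gdphi(x_t)\|^2$ coefficients in the target, while isolating the Neumann-bias term $\|\E_i[\hatphi(x_i,y_i^*;\Bar{\xi}_i)]-\gdphi(x_i)\|^2$.

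The main obstacle is the bookkeeping of the triple-sum structure: the Jensen bound gives $\sum_t\sum_j d_{t,j}$ of an expression that itself contains an inner $\sum_i$ from \eqref{eq:recur-1}, so after the two swaps one must recognize the collapse $d_{t,j}(1-\beta)^{j-1-i}=\alpha_t(1-\beta)^{t-1-i}$ and bound the resulting double sums by $1/\beta=2/(\mu\gamma)$ and by $2+\ln(1/\beta)$ via \cref{lm:algebra-talphat,lm:algebra-sum-beta-alphai}, without collapsing the inner sum to a maximum (which would destroy the $\sum_t$ structure on the gradient/noise side). Matching the specific constants $\tfrac{8\eta^2 l_{g,1}^2}{\lambda^2\mu^4\gamma^2}$ and $\tfrac{2048\eta^4 l_{g,1}^4 L^2}{\lambda^4\mu^8\gamma^4}(2+\ln(1/\beta))$ is then a mechanical constant-tracking exercise along this chain.
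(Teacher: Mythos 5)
Your plan tracks the paper's proof closely: same Jensen reduction $\|\hm_t-\hu_t\|^2\le L^2\sum_j d_{t,j}\|y_j-y_j^*\|^2$, same invocation of \eqref{eq:recur-1}, same $d_{t,j}$-collapse and index swaps, the same appeal to \cref{lm:algebra-talphat,lm:algebra-sum-beta-alphai}, and the same Young split of $\hu_t$ into $\epsilon_t$, the Neumann bias, and $\gdphi(x_t)$ with the $2{:}4{:}4$ weights. One caveat on execution: after collapsing $d_{t,j}(1-\beta)^{j-1-i}=\alpha_t(1-\beta)^{t-1-i}$, the residual inner sum over $j$ does not disappear — it contributes a multiplicity factor $(t-i)$, so the weight attached to each $\|\hu_i\|^2$ after swapping $t\leftrightarrow i$ is $\sum_{t>i}(t-i)\alpha_t(1-\beta)^{t-1-i}$, not the pure geometric sum $\sum_{t>i}(1-\beta)^{t-1-i}\le 2/(\mu\gamma)$ you quote. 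Handling this extra linear factor is exactly what \cref{lm:algebra-talphat} is for (and it is why the paper's intermediate lines appear with $\sum_t\sum_j d_{t,j}\|\hu_j\|^2$ rather than a bare geometric bound), so you have the right lemma in hand, but the bound is not literally $\sum_{t\ge i+1}(1-\beta)^{t-1-i}$; you would need to carry the $(t-i)\alpha_t$ weight through and bound it using the estimate $t\alpha_t(1-\beta)^{t-1}\le 1$ (for the unweighted sum) and \cref{lm:algebra-sum-beta-alphai} (for the $\alpha_i$-weighted sum), which is where the $2+\ln(1/\beta)$ factor actually arises. If you are explicit about that multiplicity the rest of the constant-tracking does go through as you describe.
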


\begin{proof}[Proof of \Cref{lm:hm-hu-diff}]
Under event $\gE_0\cap\gE_{y}$, if $t<\tau$, by \Cref{lm:gradient-bound} and \eqref{eq:hm-hu-diff} in \Cref{lm:general-recursion} we have
\begin{equation*}
    \begin{aligned}
        \|\hm_t-\hu_t\|^2
        \leq \hl_t^2\sum_{j=1}^{t}d_{t,j}\|y_j-y_j^*\|^2
        \leq L^2\sum_{j=1}^{t}d_{t,j}\|y_j-y_j^*\|^2.
    \end{aligned}
\end{equation*}
Now we apply \eqref{eq:recur-1} of \Cref{lm:recursion-two-type} and take summation to obtain
    \begin{align}
        &\sum_{t=1}^{\tau-1}\sum_{j=1}^{t}d_{t,j}\|y_j-y_j^*\|^2 \notag \\
        &\leq \sum_{t=1}^{\tau-1}\sum_{j=1}^{t}d_{t,j}\left(\left(\left(1-\frac{\mu\gamma}{2}\right)^{j-1} + \frac{8\eta^2l_{g,1}^2L^2}{\lambda^2\mu^4\gamma^2}\right) \|y_1-y_1^*\|^2 + \left(\frac{8\gamma}{\mu}\ln\frac{4eT}{\delta} + \frac{32\eta^2l_{g,1}^2L^2}{\lambda^2\mu^5\gamma}\right)\sigma_{g,1}^2\right) \label{eq:line-A1} \tag{$A_1$} \\
        &\quad+ \sum_{t=1}^{\tau-1}\sum_{j=1}^{t}d_{t,j}\left(\frac{4\eta^2l_{g,1}^2}{\lambda^2\mu^3\gamma}\sum_{i=1}^{j-1}\left(1-\frac{\mu\gamma}{2}\right)^{j-1-i}\|\hu_i\|^2 + \frac{64\eta^4l_{g,1}^4L^2}{\lambda^4\mu^8\gamma^4}\sum_{i=1}^{j-1}\left(1-\frac{\mu\gamma}{2}\right)^{j-1-i}\alpha_i\|\hu_i\|^2\right) \label{eq:line-A2} \tag{$A_2$}
    \end{align}
We continue to bound each term individually.
\paragraph{Bounding \eqref{eq:line-A1}.}
By \Cref{lm:dj-def,lm:algebra-talphat} and choice of $\gamma=2\beta/\mu$, we have
\begin{equation} \label{eq:line-A1-bound}
    \begin{aligned}
        \eqref{eq:line-A1}
        &= \sum_{t=1}^{\tau-1}\sum_{j=1}^{t}d_{t,j}\left(\left(\left(1-\frac{\mu\gamma}{2}\right)^{j-1} + \frac{8\eta^2l_{g,1}^2L^2}{\lambda^2\mu^4\gamma^2}\right) \|y_1-y_1^*\|^2 + \left(\frac{8\gamma}{\mu}\ln\frac{4eT}{\delta} + \frac{32\eta^2l_{g,1}^2L^2}{\lambda^2\mu^5\gamma}\right)\sigma_{g,1}^2\right) \\
        &= \sum_{t=1}^{\tau-1}\sum_{j=1}^{t}d_{t,j}\left(1-\frac{\mu\gamma}{2}\right)^{j-1}\|y_1-y_1^*\|^2 \\
        &\quad+ \sum_{t=1}^{\tau-1}\sum_{j=1}^{t}d_{t,j}\left(\frac{8\eta^2l_{g,1}^2L^2}{\lambda^2\mu^4\gamma^2}\|y_1-y_1^*\|^2 + \left(\frac{8\gamma}{\mu}\ln\frac{4eT}{\delta} + \frac{32\eta^2l_{g,1}^2L^2}{\lambda^2\mu^5\gamma}\right)\sigma_{g,1}^2\right) \\
        &= \sum_{t=1}^{\tau-1}t\alpha_t(1-\beta)^{t-1}\|y_1-y_1^*\|^2 + \sum_{t=1}^{\tau-1}\left(\frac{8\eta^2l_{g,1}^2L^2}{\lambda^2\mu^4\gamma^2}\|y_1-y_1^*\|^2 + \left(\frac{8\gamma}{\mu}\ln\frac{4eT}{\delta} + \frac{32\eta^2l_{g,1}^2L^2}{\lambda^2\mu^5\gamma}\right)\sigma_{g,1}^2\right) \\
        &\leq T\left(\left(1 + \frac{8\eta^2l_{g,1}^2L^2}{\lambda^2\mu^4\gamma^2}\right) \|y_1-y_1^*\|^2 + \left(\frac{8\gamma}{\mu}\ln\frac{4eT}{\delta} + \frac{32\eta^2l_{g,1}^2L^2}{\lambda^2\mu^5\gamma}\right)\sigma_{g,1}^2\right),
    \end{aligned}
\end{equation}
where the last inequality uses $\tau\leq T+1$ by definition of $\tau$.
\paragraph{Bounding \eqref{eq:line-A2}.}
By \Cref{lm:dj-def,lm:algebra-sum-beta-alphai} and choice of $\gamma=2\beta/\mu$, we have
\begin{equation} \label{eq:line-A2-bound}
    \begin{aligned}
        \eqref{eq:line-A2}
        &= \sum_{t=1}^{\tau-1}\sum_{j=1}^{t}d_{t,j}\left(\frac{4\eta^2l_{g,1}^2}{\lambda^2\mu^3\gamma}\sum_{i=1}^{j-1}\left(1-\frac{\mu\gamma}{2}\right)^{j-1-i}\|\hu_i\|^2 + \frac{64\eta^4l_{g,1}^4L^2}{\lambda^4\mu^8\gamma^4}\sum_{i=1}^{j-1}\left(1-\frac{\mu\gamma}{2}\right)^{j-1-i}\alpha_i\|\hu_i\|^2\right) \\
        &\leq \frac{4\eta^2l_{g,1}^2}{\lambda^2\mu^4\gamma^2}\sum_{t=1}^{\tau-1}\sum_{j=1}^{t}d_{t,j}\|\hu_j\|^2 + \frac{64\eta^4l_{g,1}^4L^2}{\lambda^4\mu^8\gamma^4}\left(32+16\ln\frac{1}{\beta}\right)\sum_{t=1}^{\tau-1}\sum_{j=1}^{t}d_{t,j}\|\hu_j\|^2 \\
        &\leq \left(\frac{4\eta^2l_{g,1}^2}{\lambda^2\mu^4\gamma^2} + \frac{1024\eta^4l_{g,1}^4L^2}{\lambda^4\mu^8\gamma^4}\left(2+\ln\frac{1}{\beta}\right)\right)\sum_{t=1}^{\tau-1}\|\hu_t\|^2.
    \end{aligned}
\end{equation}
\paragraph{Final Bound.}
Combining \eqref{eq:line-A1-bound} and \eqref{eq:line-A2-bound} yields
\begin{equation*}
    \begin{aligned}
        \sum_{t=1}^{\tau-1}\sum_{j=1}^{t}d_{t,j}\|y_j-y_j^*\|^2
        &\leq T\left(\left(1 + \frac{8\eta^2l_{g,1}^2L^2}{\lambda^2\mu^4\gamma^2}\right) \|y_1-y_1^*\|^2 + \left(\frac{8\gamma}{\mu}\ln\frac{4eT}{\delta} + \frac{32\eta^2l_{g,1}^2L^2}{\lambda^2\mu^5\gamma}\right)\sigma_{g,1}^2\right) \\
        &\quad + \left(\frac{4\eta^2l_{g,1}^2}{\lambda^2\mu^4\gamma^2} + \frac{1024\eta^4l_{g,1}^4L^2}{\lambda^4\mu^8\gamma^4}\left(2+\ln\frac{1}{\beta}\right)\right)\sum_{t=1}^{\tau-1}\|\hu_t\|^2.
    \end{aligned}
\end{equation*}
In addition, recall the definition of $\hu_t$ and $\epsilon_t$ in \eqref{eq:hu-def} and \eqref{eq:epst-def}, by Young's inequality we have
\begin{equation*}
    \begin{aligned}
        \|\hu_t\|^2  
        &\leq 2\|\epsilon_t\|^2 + 4\|\E_t[\hatphi(x_t,y_t^*;\Bar{\xi}_t)] - \gdphi(x_t)\|^2 + 4\|\gdphi(x_t)\|^2.
    \end{aligned}
\end{equation*}
Therefore, we conclude that
\begin{equation*}
    \begin{aligned}
        &\sum_{t=1}^{\tau-1}\|\hm_t-\hu_t\|^2
        \leq L^2\sum_{t=1}^{\tau-1}\sum_{j=1}^{t}d_{t,j}\|y_j-y_j^*\|^2 \\
        &\leq TL^2\left(\left(1 + \frac{8\eta^2l_{g,1}^2L^2}{\lambda^2\mu^4\gamma^2}\right) \|y_1-y_1^*\|^2 + \left(\frac{8\gamma}{\mu}\ln\frac{4eT}{\delta} + \frac{32\eta^2l_{g,1}^2L^2}{\lambda^2\mu^5\gamma}\right)\sigma_{g,1}^2\right) \\
        &\quad+ L^2\left(\frac{8\eta^2l_{g,1}^2}{\lambda^2\mu^4\gamma^2} + \frac{2048\eta^4l_{g,1}^4L^2}{\lambda^4\mu^8\gamma^4}\left(2+\ln\frac{1}{\beta}\right)\right)\sum_{t=1}^{\tau-1}\|\epsilon_t\|^2 + 2\|\E_t[\hatphi(x_t,y_t^*;\Bar{\xi}_t)] - \gdphi(x_t)\|^2 \\
        &\quad+ 2L^2\left(\frac{8\eta^2l_{g,1}^2}{\lambda^2\mu^4\gamma^2} + \frac{2048\eta^4l_{g,1}^4L^2}{\lambda^4\mu^8\gamma^4}\left(2+\ln\frac{1}{\beta}\right)\right)\sum_{t=1}^{\tau-1}\|\gdphi(x_t)\|^2.
    \end{aligned}
\end{equation*}
\end{proof}

\subsection{Proof of \cref{thm:main}}
\label{sec:proof-thm}

The following lemma ensures that $x_{t+1}$ and $x_t$ remain close for sufficiently small $\eta$, allowing us to apply \cref{lm:descent-inequality} in \cref{lm:descent-lemma}.

\begin{lemma} \label{lm:rs-condition}
Under event $\gE_0\cap\gE_{y}$ and the parameter choices in \cref{lm:general-recursion}, if $t<\tau$, then we have $\|x_{t+1}-x_t\|\leq \eta D$ where $D\coloneqq 2G/\lambda$. 
\end{lemma}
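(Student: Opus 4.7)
The plan is to start from the upper-level update rule and reduce the claim to a bound on $\|\hm_t\|$, then invoke the already-established momentum bound in \cref{lm:momentum-bound}, and finally verify that the constants line up with the definition of $D = 2G/\lambda$ using the lower bounds on $G$ given in \cref{thm:main}.

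First I would write
\begin{equation*}
    \|x_{t+1}-x_t\| = \|H_t\hm_t\| \leq \|H_t\|\,\|\hm_t\| \leq \frac{\eta}{\lambda}\|\hm_t\|,
\end{equation*}
which uses only the definition \eqref{eq:ht-odot} of $H_t$ together with the trivial coordinate-wise bound $h_t\preceq \eta/\lambda$ (valid because $\hv_t\succeq 0$). Since $t<\tau$ and the good event $\gE_0\cap\gE_{y}$ holds, \cref{lm:momentum-bound} yields $\|\hm_t\|\leq C_{u,0}+C_{u,1}\varrho$. So it remains to show $C_{u,0}+C_{u,1}\varrho\leq 2G$.

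For this step I would use the constraints on $G$ imposed in \cref{thm:main}: $G\geq 4C_{\phi,0}$ and $G\geq C_{\phi,1}/L_1$. By the definitions in \eqref{eq:Cu-def}, these give
\begin{equation*}
    C_{u,0} = C_{\phi,0}+G \leq \tfrac{5}{4}G,
    \quad
    C_{u,1} = C_{\phi,1}+L_1 G \leq 2L_1 G.
\end{equation*}
I would then invoke the bound $\varrho \leq 1/(4L_1)$, which is established as part of the parameter-verification work underlying \cref{lm:general-recursion} (the same ``Verification for $\varrho\leq\min\{r,1/(4L_1)\}$'' step that \cref{lm:para-verify} is responsible for). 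Combining these three bounds gives
\begin{equation*}
    C_{u,0}+C_{u,1}\varrho \leq \tfrac{5}{4}G + 2L_1G\cdot\tfrac{1}{4L_1} = \tfrac{7}{4}G \leq 2G,
\end{equation*}
and therefore $\|x_{t+1}-x_t\| \leq (\eta/\lambda)\cdot 2G = \eta D$, as claimed.

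This proof is essentially bookkeeping once the two main ingredients are in place, so there is no real obstacle. The only subtle point is making sure that the bound $\varrho\leq 1/(4L_1)$ is available at this stage of the argument; this is guaranteed by the parameter choices in \cref{lm:general-recursion} (in particular the smallness conditions on $\eta$, $\gamma$, and $T_0$ from \eqref{eq:eta-additional} and \cref{thm:main}) rather than by any new estimate, so no circularity arises. No additional randomness or concentration inequalities are needed beyond what is already encoded in the event $\gE_0\cap\gE_{y}$.
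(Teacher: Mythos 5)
Your proof is correct and follows essentially the same line as the paper's: bound $\|x_{t+1}-x_t\|\leq(\eta/\lambda)\|\hm_t\|$ from the update rule and $\hv_t\succeq 0$, bound $\|\hm_t\|$ via \cref{lm:momentum-bound}, and then check $C_{u,0}+C_{u,1}\varrho\leq 2G$. The only cosmetic difference is that you unfold the constant check $C_{u,0}+C_{u,1}\varrho\leq 2G$ explicitly (using $G\geq 4C_{\phi,0}$, $G\geq C_{\phi,1}/L_1$, and $\varrho\leq 1/(4L_1)$), whereas the paper just cites \cref{lm:para-verify}, which establishes the slightly stronger $C_{u,0}+C_{u,1}\varrho+\lambda\leq 2G$ via the same arithmetic.
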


\begin{proof}[Proof of \Cref{lm:rs-condition}]
Under event $\gE_0\cap\gE_{y}$, if $t<\tau$, then we have
\begin{equation*}
    \begin{aligned}
        \|x_{t+1}-x_t\| \leq \|H_t\|\|\hm_t\| \leq \frac{\eta}{\lambda}\|\hm_t\| \leq \frac{\eta(C_{u,0} + C_{u,1}\varrho)}{\lambda} \leq \frac{2\eta G}{\lambda} = \eta D,
    \end{aligned}
\end{equation*}
where the first inequality uses \eqref{eq:ht-odot}, the second inequality is due to \cref{lm:gradient-bound}, the third inequality uses \cref{lm:momentum-bound}, the fourth inequality is due to \cref{lm:para-verify}, and the last equality uses the definition of $D$.
\end{proof}

Next, we provide a descent lemma for AdamBO.

\begin{lemma} \label{lm:descent-lemma}
Under event $\gE_0\cap\gE_{y}$ and the parameter choices in \cref{lm:general-recursion}, if $t<\tau$, 
we have
\begin{equation} \label{eq:descent-bi-adam}
    \begin{aligned}
        \Phi(x_{t+1}) - \Phi(x_t) 
        &\leq -\frac{\eta}{4G}\|\gdphi(x_t)\|^2 + \frac{2\eta}{\lambda}\|\hm_t-\hu_t\|^2 \\
        &\quad+ \frac{4\eta}{\lambda}\|\epsilon_t\|^2 + \frac{4\eta}{\lambda}\|\E_t[\hatphi(x_t,y_t^*;\Bar{\xi}_t)]-\gdphi(x_t)\|^2.
    \end{aligned}
\end{equation}
\end{lemma}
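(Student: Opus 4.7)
The plan is to apply the local descent inequality (\cref{lm:descent-inequality}) and then decompose the resulting inner product through the auxiliary sequence $\hu_t$, using the hard-won bounds from \cref{lm:gradient-bound,lm:momentum-bound,lm:rs-condition} to control all the error pieces.

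First I would check the prerequisites for \cref{lm:descent-inequality} applied to $(x_t,x_{t+1})$. \cref{lm:rs-condition} gives $\|x_{t+1}-x_t\|\leq \eta D = 2\eta G/\lambda$, and the parameter choice $\eta \leq c_2 r\lambda/G_T$ in \cref{lm:general-recursion} guarantees $\eta D\leq r$. Since $t<\tau$, \cref{lm:gradient-bound} yields $\|\gdphi(x_t)\|\leq G$, so $L_0+L_1\|\gdphi(x_t)\|\leq L$ and
\begin{equation*}
    \Phi(x_{t+1})-\Phi(x_t) \leq -\langle \gdphi(x_t), H_t\hm_t\rangle + \frac{L}{2}\|H_t\hm_t\|^2.
\end{equation*}

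Next I would decompose $\hm_t = \gdphi(x_t) + (\hm_t-\hu_t) + \epsilon_t + (\E_t[\hatphi(x_t,y_t^*;\Bar{\xi}_t)]-\gdphi(x_t))$, using $\hu_t-\E_t[\hatphi(x_t,y_t^*;\Bar{\xi}_t)]=\epsilon_t$ from \eqref{eq:epst-def}. The dominant contribution is $-\langle \gdphi(x_t), H_t\gdphi(x_t)\rangle$, for which the lower bound $h_t\succeq \eta/(C_{u,0}+C_{u,1}\varrho+\lambda)$ from \cref{lm:momentum-bound}, combined with the parameter constraints $G\geq 4\lambda,\,4C_{\phi,0},\,C_{\phi,1}/L_1$ and $\varrho\leq 1/(4L_1)$ (which together yield $C_{u,0}+C_{u,1}\varrho+\lambda\leq 2G$), gives $\langle \gdphi(x_t), H_t\gdphi(x_t)\rangle \geq \tfrac{\eta}{2G}\|\gdphi(x_t)\|^2$. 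Each of the three cross terms $-\langle \gdphi(x_t), H_t v\rangle$ with $v\in\{\hm_t-\hu_t,\,\epsilon_t,\,\E_t[\hatphi(x_t,y_t^*;\Bar{\xi}_t)]-\gdphi(x_t)\}$ I would control by the weighted Young's inequality
\begin{equation*}
    \left|\langle H_t^{1/2}\gdphi(x_t), H_t^{1/2}v\rangle\right| \leq \frac{a}{2}\langle \gdphi(x_t), H_t\gdphi(x_t)\rangle + \frac{\eta}{2a\lambda}\|v\|^2,
\end{equation*}
using $H_t\preceq (\eta/\lambda)I$, with weights $a$ chosen to produce the target constants $2,4,4$ in front of the three error terms after combination.

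For the quadratic remainder I would use $\|H_t\hm_t\|^2\leq (\eta/\lambda)\langle \hm_t, H_t\hm_t\rangle$, split $\langle \hm_t, H_t\hm_t\rangle \leq 2\langle \gdphi(x_t), H_t\gdphi(x_t)\rangle + 2\langle \hm_t-\gdphi(x_t), H_t(\hm_t-\gdphi(x_t))\rangle$, and apply $\|\hm_t-\gdphi(x_t)\|^2 \leq 3(\|\hm_t-\hu_t\|^2+\|\epsilon_t\|^2+\|\E_t[\hatphi(x_t,y_t^*;\Bar{\xi}_t)]-\gdphi(x_t)\|^2)$. The smallness condition $\eta \leq c_2\lambda/(6L)$ in \cref{lm:general-recursion} ensures $L\eta/\lambda$ is small enough that the $\|\gdphi(x_t)\|^2$ contribution from $\tfrac{L}{2}\|H_t\hm_t\|^2$ is absorbed into the leading $-\tfrac{\eta}{4G}$ descent and the error contributions are absorbed into the $O(\eta/\lambda)$ noise coefficients.

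The main obstacle is purely bookkeeping: I must simultaneously fix the three Young's weights $a_i$ and the smallness of $\eta$ so that $(1-\sum a_i/2-\text{quadratic loss})\langle \gdphi, H_t\gdphi\rangle \geq \tfrac{1}{2}\langle \gdphi, H_t\gdphi\rangle \geq \tfrac{\eta}{4G}\|\gdphi(x_t)\|^2$ while the aggregate coefficients of the three error terms collapse to exactly $2\eta/\lambda,\,4\eta/\lambda,\,4\eta/\lambda$. Every smallness requirement on $\eta$ needed for these absorptions is already subsumed by the parameter choices in \cref{lm:general-recursion}, so no new constraint is introduced.
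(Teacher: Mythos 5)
Your plan is correct and follows essentially the paper's route: invoke \cref{lm:descent-inequality}, sandwich $H_t$ via \cref{lm:momentum-bound} and the bound $C_{u,0}+C_{u,1}\varrho+\lambda\le 2G$, decompose $\hm_t-\gdphi(x_t)$ into $\hm_t-\hu_t$, $\epsilon_t$, and the Neumann bias, and absorb the quadratic remainder using $\eta\le c_2\lambda/(6L)$. The paper streamlines the bookkeeping by carrying the single aggregate $\hat\epsilon_t=\hm_t-\gdphi(x_t)$ through one Young step and one quadratic absorption and only splitting it three ways at the very end, whereas you tune three separate cross-term weights $a_i$; both work (e.g.\ $a_1=1/3$, $a_2=a_3=1/7$ closes your accounting). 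One small citation slip: the condition $\|x_{t+1}-x_t\|\le r$ needed before \cref{lm:descent-inequality} applies comes from $\eta\le c_2 r\lambda/G$ in \eqref{eq:eta-def}, not from the iterate-dependent $\eta\le c_2 r\lambda/G_T$ in \eqref{eq:eta-additional} — when $G_T$ is small the latter is the \emph{weaker} constraint and on its own does not force $2\eta G/\lambda\le r$.
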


\begin{proof}[Proof of \Cref{lm:descent-lemma}]
By \Cref{lm:momentum-bound,lm:para-verify} and choice of $G$, if $t<\tau$, we have
\begin{equation} \label{eq:Ht-range}
    \begin{aligned}
        \frac{\eta I}{2G}\preceq \frac{\eta}{C_{u,0} + C_{u,1}\varrho + \lambda} \preceq H_t \preceq \frac{\eta I}{\lambda}.
    \end{aligned}
\end{equation}
Since we choose $\eta\leq r/D$, then by \Cref{lm:rs-condition} we have $\|x_{t+1}-x_t\|\leq r$ if $t<\tau$. Define $\hat{\epsilon}_t$ and $\epsilon_t$ as
\begin{equation} \label{eq:epsilont-def}
    \hat{\epsilon}_t = \hm_t-\gdphi(x_t)
    \quad\quad\text{and}\quad\quad
    \epsilon_t = \hu_t-\E_t[\hatphi(x_t,y_t^*;\Bar{\xi}_t)].
\end{equation}
For any $t< \tau$, we apply \Cref{lm:descent-inequality} to obtain that
\begin{equation*}
    \begin{aligned}
        \Phi(x_{t+1}) - \Phi(x_t)
        &\leq \langle \gdphi(x_t), x_{t+1}-x_t \rangle + \frac{L_0+L_1\|\gdphi(x_t)\|}{2}\|x_{t+1}-x_t\|^2 \\
        &\leq \langle \gdphi(x_t), x_{t+1}-x_t \rangle + \frac{L}{2}\|x_{t+1}-x_t\|^2 \\
        &= -\gdphi(x_t)^{\top}H_t\hm_t + \frac{L}{2}\hm_t^{\top}H_t^2\hm_t \\
        &\leq -\|\gdphi(x_t)\|_{H_t}^2 - \gdphi(x_t)^{\top}H_t\hat{\epsilon}_t + \frac{\eta L}{2\lambda}\|\hm_t\|_{H_t}^2 \\
        &\leq -\frac{2}{3}\|\gdphi(x_t)\|_{H_t}^2 + \frac{3}{4}\|\hat{\epsilon}_t\|_{H_t}^2 + \frac{\eta L}{\lambda}\left(\|\gdphi(x_t)\|_{H_t}^2 + \|\hat{\epsilon}_t\|_{H_t}^2\right) \\
        &\leq -\frac{1}{2}\|\gdphi(x_t)\|_{H_t}^2 + \|\hat{\epsilon}_t\|_{H_t}^2 \\
        &\leq -\frac{\eta}{4G}\|\gdphi(x_t)\|^2 + \frac{\eta}{\lambda}\|\hat{\epsilon}_t\|^2 \\
        &\leq -\frac{\eta}{4G}\|\gdphi(x_t)\|^2 + \frac{2\eta}{\lambda}\|\hm_t-\hu_t\|^2 + \frac{4\eta}{\lambda}\|\epsilon_t\|^2 + \frac{4\eta}{\lambda}\|\E_t[\hatphi(x_t,y_t^*;\Bar{\xi}_t)]-\gdphi(x_t)\|^2,
    \end{aligned}
\end{equation*}
where the second inequality is due to \cref{lm:gradient-bound} and definition of $L$ in \eqref{eq:L-def}; the third inequality uses \eqref{eq:epsilont-def} and \eqref{eq:Ht-range}; the fourth inequality is due to Young's inequality $a^{\top}Ab \leq \frac{1}{3}\|a\|_{A}^2 + \frac{3}{4}\|b\|_{A}^2$ and $\|a+b\|^2 \leq 2\|a\|_{A}^2 + 2\|b\|_{A}^2$ for any PSD matrix $A$; the fifth inequality uses the choice of $\eta\leq \lambda/6L$; the second last inequality is due to \eqref{eq:Ht-range}; and the last inequality uses \eqref{eq:epsilont-def} and Young's inequality.
\end{proof}

The following lemma is essential for bounding the sum of the error terms $\|\epsilon_t\|^2$ before time $\tau$. Since we introduce $\E_t[\hatphi(x_t,y_t^*;\Bar{\xi}_t)]$ as part of the definition of $\epsilon_t$ (see \eqref{eq:epsilont-def}), we can directly invoke \citep[Lemma C.10]{li2023convergence} to obtain the high probability bound.

\begin{lemma}[{\citep[Lemma C.10]{li2023convergence}}] \label{lm:martingale-sum}
Denote $w_t$ as 
\begin{equation*}
    w_{t-1} = (1-\alpha_t)(\epsilon_{t-1} + \E_{t-1}[\hatphi(x_{t-1},y_{t-1}^*;\Bar{\xi}_{t-1})] - \E_t[\hatphi(x_t,y_t^*;\Bar{\xi}_t)]).
\end{equation*}
Under the parameter choices in \cref{thm:main-appendix},
for any given $\delta\in(0,1)$, the following holds with probability at least $1-\delta/4$ over the randomness in $\gF_{T+1}$ (we denote this event as $\gE_x$):
\begin{equation*}
    \begin{aligned}
        \sum_{t=2}^{\tau}\alpha_t\langle w_{t-1}, \hatphi(x_t,y_t^*;\Bar{\xi}_t) - \E_t[\hatphi(x_t,y_t^*;\Bar{\xi}_t)] \rangle \leq 5\sigma_{\phi}^2\sqrt{(1+\beta^2T)\ln(4/\delta)}.
    \end{aligned}
\end{equation*}
\end{lemma}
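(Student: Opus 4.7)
The plan is to recognise the sum as a stopped martingale, bound its increments deterministically on $\{t\le\tau\}$, and then apply an Azuma--Hoeffding tail bound using the quadratic variation estimate $\sum_{t=2}^{T+1}\alpha_t^2 \le 3(1+\beta^2T)$ from \cref{lm:algebra-sum-alphasqt}.

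\textbf{Step 1 (Martingale structure).} Set $z_t \coloneqq \hatphi(x_t,y_t^*;\Bar{\xi}_t) - \E_t[\hatphi(x_t,y_t^*;\Bar{\xi}_t)]$ and $M_t \coloneqq \alpha_t\langle w_{t-1}, z_t\rangle$. From the definition, $w_{t-1}$ depends only on $\epsilon_{t-1}$, $x_{t-1}, y_{t-1}^*$ and $x_t, y_t^*$, all of which are $\gF_t$-measurable; hence $w_{t-1}$ is $\gF_t$-measurable. Since $\Bar{\xi}_t$ is drawn independently of $\gF_t$, $\E_t[z_t]=0$, so $\{M_t\}_{t\ge 2}$ is a martingale difference sequence with respect to $\{\gF_{t+1}\}$.

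\textbf{Step 2 (Deterministic increment bound before $\tau$).} On $\{t\le\tau\}\cap\gE_0\cap\gE_y$, \cref{lm:momentum-bound} yields $\|z_t\|\le\sigma_\phi$. For $w_{t-1}$, the triangle inequality gives $\|w_{t-1}\|\le \|\epsilon_{t-1}\| + \|\E_{t-1}[\hatphi(x_{t-1},y_{t-1}^*;\Bar{\xi}_{t-1})] - \E_t[\hatphi(x_t,y_t^*;\Bar{\xi}_t)]\|$. The second term is bounded by $L\|x_t-x_{t-1}\|\le L\eta D$ via the Lipschitz estimate of \cref{lm:momentum-bound} and the step bound of \cref{lm:rs-condition}; both are made $O(\sigma_\phi)$ by the small-$\eta$ choices. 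For $\|\epsilon_{t-1}\|$, writing $\epsilon_t = (1-\alpha_t)\epsilon_{t-1} + \alpha_t z_t + (1-\alpha_t)(\E_{t-1}[\cdots] - \E_t[\cdots])$ and inducting on $t$ against the stopping time gives $\|\epsilon_{t-1}\|\le 2\sigma_\phi$. Hence $\|w_{t-1}\|\le 2\sigma_\phi$ and $|M_t|\mathbf{1}\{t\le\tau\}\le 2\sigma_\phi^2\,\alpha_t$ almost surely.

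\textbf{Step 3 (Stop at $\tau$ and apply Azuma--Hoeffding).} Define $\widetilde M_t \coloneqq M_t\,\mathbf{1}\{t\le\tau\}$. Because $\{t\le\tau\}\in\gF_t$, $\{\widetilde M_t\}$ is still a martingale difference sequence, with the uniform deterministic bound from Step 2. Azuma--Hoeffding together with \cref{lm:algebra-sum-alphasqt} gives, for every $s>0$,
\begin{equation*}
\pr\!\left(\sum_{t=2}^{\tau} M_t \ge s\right) = \pr\!\left(\sum_{t=2}^{T+1} \widetilde M_t \ge s\right) \le \exp\!\left(-\frac{s^{2}}{2\sum_{t=2}^{T+1}(2\sigma_\phi^2\alpha_t)^{2}}\right) \le \exp\!\left(-\frac{s^{2}}{24\sigma_\phi^{4}(1+\beta^{2}T)}\right).
\end{equation*}
Choosing $s$ so that the right-hand side equals $\delta/4$ gives $s = \sigma_\phi^{2}\sqrt{24(1+\beta^{2}T)\ln(4/\delta)} \le 5\sigma_\phi^{2}\sqrt{(1+\beta^{2}T)\ln(4/\delta)}$, which is the claim; the event where the bound holds is the advertised $\gE_x$.

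\textbf{Main obstacle.} Steps 1 and 3 are essentially mechanical. The crux is Step 2: producing a deterministic, $t$-uniform bound on $\|w_{t-1}\|$ even though $\epsilon_{t-1}$ is itself an exponential moving average of stochastic quantities. A naive bound that does not exploit the stopping time would accumulate the unbounded-smoothness constants across iterations. The resolution is to carry out the induction for $\|\epsilon_{t-1}\|$ jointly with the stopping-time argument of \cref{lm:momentum-bound,lm:rs-condition}, so that on $\{t\le\tau\}$ the iterates stay inside the region where $\|z_t\|\le\sigma_\phi$ and $\|x_t-x_{t-1}\|\le\eta D$, which together close the induction without any circular dependence on $\tau$.
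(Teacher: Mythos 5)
Your approach—recast the stopped sum as a martingale, bound the increments before $\tau$, and apply Azuma–Hoeffding with the quadratic-variation estimate $\sum_t\alpha_t^2\le 3(1+\beta^2T)$—is the right one and is indeed how the cited \citep[Lemma~C.10]{li2023convergence} proceeds; the paper itself only cites that lemma without reproducing the argument. Steps 1 and 3 are correct (and your observation that $\E_t[z_t]=0$ and $\{t\le\tau\}\in\gF_t$ is the key structural fact).

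However, Step~2 contains a genuine gap: you derive $\|w_{t-1}\|\le 2\sigma_\phi$ and then assert that $|M_t|\mathbf{1}\{t\le\tau\}\le 2\sigma_\phi^2\alpha_t$ holds \emph{almost surely}, but in the same step you condition on $\gE_0\cap\gE_y$. In the bilevel setting this conditioning is not cosmetic. The bound $\|\Delta_t\|\le L\|x_t-x_{t-1}\|\le L\eta D$ that you need for the $\|\epsilon_{t-1}\|$ induction comes from \cref{lm:rs-condition}, which in turn needs $\|\hm_{t-1}\|$ to be bounded; $\|\hm_{t-1}\|=\|\hu_{t-1}+(\hm_{t-1}-\hu_{t-1})\|$ controls to $O(G)$ only because $\|\hm_{t-1}-\hu_{t-1}\|\le \hl_{t-1}\bigl(\sum_j d_{t-1,j}\|y_j-y_j^*\|^2\bigr)^{1/2}$, and the lower-level error $\|y_j-y_j^*\|$ is only controlled on $\gE_0\cap\gE_y$ (\cref{lm:warm-start-main,lm:event-y}). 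There is no deterministic a priori bound on $\|y_j-y_j^*\|$, so the increment bound you feed to Azuma–Hoeffding is a random-event bound, not an almost-sure one, and the stopped process $\widetilde M_t = M_t\mathbf{1}\{t\le\tau\}$ does not have bounded increments. (Note also that $\|z_t\|\le\sigma_\phi$ actually holds deterministically—it follows directly from \cref{lm:hyper-noise} with $\|y-y^*\|=0$—so invoking $\gE_0\cap\gE_y$ there is an overstatement, but that is harmless; the real dependence is through $w_{t-1}$.) This distinction is precisely why the single-level Lemma~C.10 of \citep{li2023convergence} goes through cleanly but the bilevel transplant does not without an extra device.

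The standard repair is to truncate \emph{before} stopping: define $\bar w_{t-1}\coloneqq w_{t-1}\min\{1,\,2\sigma_\phi/\|w_{t-1}\|\}$ (still $\gF_t$-measurable), set $\widetilde M_t\coloneqq \alpha_t\langle \bar w_{t-1},z_t\rangle\mathbf{1}\{t\le\tau\}$, apply Azuma–Hoeffding to $\sum_{t=2}^{T+1}\widetilde M_t$ with the deterministic bound $|\widetilde M_t|\le 2\sigma_\phi^2\alpha_t$, and then note that on $\gE_0\cap\gE_y$ the truncation is never active for $t\le\tau$, so $\sum_{t=2}^{\tau}M_t=\sum_{t=2}^{T+1}\widetilde M_t$ there. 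The concentration event $\gE_x$ then satisfies $\pr(\gE_x)\ge 1-\delta/4$, but the stated inequality holds on $\gE_x\cap\gE_0\cap\gE_y$ rather than on $\gE_x$ alone—which is exactly the intersection used in the proof of \cref{thm:main-appendix} (via $\pr(\gE_0\cap\gE_y\cap\gE_x)\ge 1-3\delta/4$), so the downstream argument is unaffected. Your ``Main obstacle'' paragraph correctly identifies where the difficulty lies, but the proposed resolution does not actually remove the dependence on $\gE_0\cap\gE_y$; it only moves it into the induction, which is the thing that makes the bound non-deterministic in the first place.
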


The next lemma bounds the sum of the error terms $\|\epsilon_t\|^2$ before time $\tau$.

\begin{lemma} \label{lm:momentum-error-sum}
Under event $\gE_0\cap\gE_{y}\cap\gE_x$ and the parameter choices in \cref{lm:general-recursion},
we have
\begin{equation} \label{eq:momentum-error-sum}
    \begin{aligned}
        \sum_{t=1}^{\tau-1}\|\epsilon_t\|^2 - \frac{\lambda}{128G}\|\gdphi(x_t)\|^2
        &\leq 8\sigma_{\phi}^2(1/\beta + \beta T) + 20\sigma_{\phi}^2\sqrt{(1/\beta^2+T)\ln(4/\delta)} \\
        &+ \frac{\lambda}{128G}\sum_{t=1}^{\tau-1}\|\hm_t-\hu_t\|^2 + \|\E_t[\hatphi(x_t,y_t^*;\Bar{\xi}_t)] - \gdphi(x_t)\|^2.
    \end{aligned}
\end{equation}
\end{lemma}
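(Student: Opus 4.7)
The plan is to derive a one-step recursion for $\|\epsilon_t\|^2$, decompose $\|\hm_{t-1}\|^2$ into four manageable pieces via Young's inequality, absorb an extra $\|\epsilon_{t-1}\|^2$ contribution into the contraction factor, sum over $t\in\{2,\dots,\tau-1\}$, and finally convert the resulting $\alpha_t$-weighted sum into an unweighted one using $\alpha_t\geq\beta$ for all $t\geq 1$. The $1/\beta$ and $1/\beta^2$ factors in the target bound come precisely from this last step, while the martingale and variance pieces are already handled by \cref{lm:martingale-sum} and \cref{lm:algebra-sum-alphasqt}.

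First, starting from the update rule \eqref{eq:hu-def} for $\hu_t$ and the definition of $w_{t-1}$ given in \cref{lm:martingale-sum}, I would establish the identity $\epsilon_t = w_{t-1} + \alpha_t(\hatphi(x_t,y_t^*;\Bar{\xi}_t)-\E_t[\hatphi(x_t,y_t^*;\Bar{\xi}_t)])$. Squaring, applying $\|a+b\|^2\leq \|a\|^2/(1-\alpha_t)+\|b\|^2/\alpha_t$ inside $\|w_{t-1}\|^2$, using $\|\E_{t-1}[\hatphi(x_{t-1},y_{t-1}^*;\Bar{\xi}_{t-1})]-\E_t[\hatphi(x_t,y_t^*;\Bar{\xi}_t)]\|\leq L\|x_t-x_{t-1}\|\leq (L\eta/\lambda)\|\hm_{t-1}\|$ from \cref{lm:momentum-bound} and \eqref{eq:ht-odot}, and bounding $\|\hatphi(x_t,y_t^*;\Bar{\xi}_t)-\E_t[\hatphi(x_t,y_t^*;\Bar{\xi}_t)]\|\leq \sigma_\phi$ (also via \cref{lm:momentum-bound}), produces
\begin{equation*}
    \|\epsilon_t\|^2 \leq (1-\alpha_t)\|\epsilon_{t-1}\|^2 + \tfrac{L^2\eta^2}{\alpha_t\lambda^2}\|\hm_{t-1}\|^2 + \alpha_t^2\sigma_\phi^2 + 2\alpha_t\bigl\langle w_{t-1},\,\hatphi(x_t,y_t^*;\Bar{\xi}_t)-\E_t[\hatphi(x_t,y_t^*;\Bar{\xi}_t)]\bigr\rangle.
\end{equation*}

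Second, I would split $\|\hm_{t-1}\|^2\leq 4\|\hm_{t-1}-\hu_{t-1}\|^2+8\|\epsilon_{t-1}\|^2+8\|\E_{t-1}[\hatphi(x_{t-1},y_{t-1}^*;\Bar{\xi}_{t-1})]-\gdphi(x_{t-1})\|^2+4\|\gdphi(x_{t-1})\|^2$ by Young. The restriction $\eta\leq c_2\lambda^{3/2}\beta/(\hl_T\sqrt{G_T})$ already built into \cref{lm:general-recursion}, combined with $\alpha_t\geq\beta$, ensures simultaneously $8L^2\eta^2/(\alpha_t\lambda^2)\leq \alpha_t/2$ (so that the $\|\epsilon_{t-1}\|^2$ piece is absorbed into the contraction, leaving $(1-\alpha_t/2)\|\epsilon_{t-1}\|^2$) and $8L^2\eta^2/(\alpha_t\lambda^2)\leq \lambda\alpha_t/(128G)$ (so that each remaining drift piece carries prefactor at most $\lambda\alpha_t/(128G)$). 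Rearranging into $(\alpha_t/2)\|\epsilon_{t-1}\|^2\leq \|\epsilon_{t-1}\|^2-\|\epsilon_t\|^2+(\text{drift}+\text{noise}+\text{martingale})_t$, summing $t=2,\dots,\tau-1$, telescoping, using $\alpha_t\geq\beta$ on the left, and dividing by $\beta/2$ then multiplies every right-hand-side term by $2/\beta$, producing the desired $1/\beta$ and $1/\beta^2$ factors.

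Finally, the leftover quantities are handled directly by the already-proved lemmas: $\sum_{t=2}^{\tau-1}\alpha_t^2\sigma_\phi^2\leq 3(1+\beta^2 T)\sigma_\phi^2$ by \cref{lm:algebra-sum-alphasqt} becomes $\leq 8\sigma_\phi^2(1/\beta+\beta T)$ after the $2/\beta$ rescaling and absorbing the boundary pieces $\|\epsilon_1\|^2$ and $\|\epsilon_{\tau-1}\|^2$, both of which are $O(\sigma_\phi^2)$ by \cref{lm:momentum-bound}; the martingale sum is bounded by $5\sigma_\phi^2\sqrt{(1+\beta^2 T)\ln(4/\delta)}$ via \cref{lm:martingale-sum}, which after the $2/\beta$ rescaling and the identity $\sqrt{1+\beta^2 T}=\beta\sqrt{1/\beta^2+T}$ becomes $20\sigma_\phi^2\sqrt{(1/\beta^2+T)\ln(4/\delta)}$; moving the $\|\gdphi(x_t)\|^2$ drift to the left gives exactly the claimed inequality. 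The main obstacle is the simultaneous calibration in the second step: the Young split of $\|\hm_{t-1}\|^2$ must be chosen so that (i) the spurious $\|\epsilon_{t-1}\|^2$ term is absorbed into the contraction without eating more than half of it, and (ii) the surviving drift coefficients on $\|\hm_{t-1}-\hu_{t-1}\|^2$ and $\|\gdphi(x_{t-1})\|^2$ collapse after the $2/\beta$ rescaling to the single prefactor $\lambda/(128G)$ promised in the statement; both requirements are compatible only under the $\eta=O(\beta\lambda^{3/2}/(L\sqrt{G}))$ restriction already imposed in \cref{lm:general-recursion}, with the absolute constants tuned by shrinking $c_2$.
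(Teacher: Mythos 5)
Your proposal matches the paper's proof essentially step for step: you use the same decomposition $\epsilon_t = w_{t-1} + \alpha_t(\hatphi(x_t,y_t^*;\Bar{\xi}_t)-\E_t[\hatphi(x_t,y_t^*;\Bar{\xi}_t)])$, the same Young-type bound $\|w_{t-1}\|^2 \leq (1-\alpha_t)\|\epsilon_{t-1}\|^2 + \tfrac{1}{\alpha_t}\|\E_{t-1}[\cdot]-\E_t[\cdot]\|^2$, the same Lipschitz bound $L\|x_t-x_{t-1}\|\leq (L\eta/\lambda)\|\hm_{t-1}\|$ followed by the same four-term decomposition of $\hm_{t-1}$ via $\hm_{t-1}-\hu_{t-1}$, $\epsilon_{t-1}$, $\E_{t-1}[\cdot]-\gdphi$, and $\gdphi$, the same absorption of the $\|\epsilon_{t-1}\|^2$ drift into the contraction under the $\eta\lesssim\lambda^{3/2}\beta/(L\sqrt{G})$ restriction, and the same appeals to \cref{lm:algebra-sum-alphasqt} and \cref{lm:martingale-sum} for the noise and martingale sums before the final $2/\beta$ rescaling with $\alpha_t\geq\beta$. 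The only deviations are cosmetic (your Young constants $4,8,8,4$ versus the paper's uniform factor $4$; your sum range $t=2,\dots,\tau-1$ versus the paper's $t=2,\dots,\tau$, which you compensate for with the boundary term), and they do not affect the argument.
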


\begin{proof}[Proof of \Cref{lm:momentum-error-sum}]
We first denote $w_t$ as 
\begin{equation*}
    w_{t-1} = (1-\alpha_t)(\epsilon_{t-1} + \E_{t-1}[\hatphi(x_{t-1},y_{t-1}^*;\Bar{\xi}_{t-1})] - \E_t[\hatphi(x_t,y_t^*;\Bar{\xi}_t)]).
\end{equation*}
By definition of $\epsilon_t$ and the update rule \eqref{eq:adam-equiv}, we have
\begin{equation} \label{eq:eps-nu}
    \begin{aligned}
        \epsilon_t
        &= (1-\alpha_t)(\epsilon_{t-1} + \E_{t-1}[\hatphi(x_{t-1},y_{t-1}^*;\Bar{\xi}_{t-1})] - \E_t[\hatphi(x_t,y_t^*;\Bar{\xi}_t)]) \\
        &\quad+ \alpha_t(\hatphi(x_t,y_t^*;\Bar{\xi}_t) - \E_t[\hatphi(x_t,y_t^*;\Bar{\xi}_t)]) \\
        &= w_{t-1} + \alpha_t(\hatphi(x_t,y_t^*;\Bar{\xi}_t) - \E_t[\hatphi(x_t,y_t^*;\Bar{\xi}_t)]).
    \end{aligned}
\end{equation}
By choice of $\eta$ we have
\begin{equation*}
    \eta \leq \frac{c_2r\lambda}{G} \leq \frac{2c_2r}{D} \leq \frac{r}{D},
\end{equation*}
where in the last inequality we choose small enough $c_2$. By \Cref{lm:rs-condition} we have $\|x_t-x_{t-1}\|\leq r$ if $t\leq \tau$. Then for $2\leq t\leq \tau$, we apply \Cref{lm:momentum-bound} to obtain
\begin{equation} \label{eq:expectation-diff}
    \begin{aligned}
        &\|\E_{t-1}[\hatphi(x_{t-1},y_{t-1}^*;\Bar{\xi}_{t-1})] - \E_t[\hatphi(x_t,y_t^*;\Bar{\xi}_t)]\| \\
        &\quad\leq L\|x_{t-1}-x_t\|
        \leq \frac{\eta L}{\lambda}\|\hm_{t-1}\| 
        \leq \frac{\eta L}{\lambda}(\|\gdphi(x_{t-1})\| + \|\hat{\epsilon}_{t-1}\|) \\
        &\quad\leq \frac{\eta L}{\lambda}\left(\|\gdphi(x_{t-1})\| + \|\hm_{t-1}-\hu_{t-1}\| + \|\epsilon_{t-1}\| + \|\E_{t-1}[\hatphi(x_{t-1},y_{t-1}^*;\Bar{\xi}_{t-1})] - \gdphi(x_{t-1})\|\right),
    \end{aligned}
\end{equation}
where the third inequality uses \eqref{eq:epsilont-def}. Hence we have
\begin{equation} \label{eq:nu-t}
    \begin{aligned}
        \|w_{t-1}\|^2
        &= \|(1-\alpha_t)(\epsilon_{t-1} + \E_{t-1}[\hatphi(x_{t-1},y_{t-1}^*;\Bar{\xi}_{t-1})] - \E_t[\hatphi(x_t,y_t^*;\Bar{\xi}_t)])\|^2 \\
        &\leq (1-\alpha_t)^2(1+\alpha_t)\|\epsilon_{t-1}\|^2 \\
        &\quad+ (1-\alpha_t)^2\left(1+\frac{1}{\alpha_t}\right)\|\E_{t-1}[\hatphi(x_{t-1},y_{t-1}^*;\Bar{\xi}_{t-1})] - \E_t[\hatphi(x_t,y_t^*;\Bar{\xi}_t)])\|^2 \\
        &\leq (1-\alpha_t)\|\epsilon_{t-1}\|^2 + \frac{1}{\alpha_t}\|\E_{t-1}[\hatphi(x_{t-1},y_{t-1}^*;\Bar{\xi}_{t-1})] - \E_t[\hatphi(x_t,y_t^*;\Bar{\xi}_t)])\|^2 \\
        &\leq (1-\alpha_t)\|\epsilon_{t-1}\|^2 + \frac{4\eta^2L^2}{\lambda^2\beta}\left(\|\gdphi(x_{t-1})\|^2 + \|\epsilon_{t-1}\|^2\right) \\
        &\quad+ \frac{4\eta^2L^2}{\lambda^2\beta}\left(\|\hm_{t-1}-\hu_{t-1}\|^2 + \|\E_{t-1}[\hatphi(x_{t-1},y_{t-1}^*;\Bar{\xi}_{t-1})] - \gdphi(x_{t-1})\|^2\right) \\
        &\leq \left(1-\frac{\alpha_t}{2}\right)\|\epsilon_{t-1}\|^2 + \frac{\lambda\beta}{256G}\|\gdphi(x_{t-1})\|^2 \\
        &\quad+ \frac{\lambda\beta}{256G}\left(\|\hm_{t-1}-\hu_{t-1}\|^2 + \|\E_{t-1}[\hatphi(x_{t-1},y_{t-1}^*;\Bar{\xi}_{t-1})] - \gdphi(x_{t-1})\|^2\right),
    \end{aligned}
\end{equation}
where the first inequality uses Young's inequality $\|a+b\|^2 \leq (1+c)\|a\|^2 + (1+1/c)\|b\|^2$ for any $c>0$; the second inequality is due to 
\begin{equation*}
    \begin{aligned}
        &(1-\alpha_t)^2(1+\alpha_t) \leq (1-\alpha_t)(1-\alpha_t^2) \leq 1-\alpha_t, \\
        &(1-\alpha_t)^2\left(1+\frac{1}{\alpha_t}\right) = \frac{1}{\alpha_t}(1-\alpha_t)^2(1+\alpha_t) \leq \frac{1}{\alpha}(1-\alpha_t) \leq \frac{1}{\alpha_t};
    \end{aligned}
\end{equation*}
the third inequality uses \eqref{eq:expectation-diff} and Young's inequality; and the last inequality is due to the choice of $G$ and $\eta$ with small enough $c_2$:
\begin{equation*}
    \eta \leq \frac{c_2\lambda^{3/2}\beta}{L\sqrt{G}} \leq \frac{\lambda^{3/2}\beta}{32L\sqrt{G}}
    \quad\Longrightarrow\quad
    \frac{4\eta^2L^2}{\lambda^2\beta} \leq \frac{\lambda\beta}{256G} \leq \frac{\beta}{256} \leq \frac{\beta}{2} \leq \frac{\alpha_t}{2}.
\end{equation*}
Plugging \eqref{eq:nu-t} back into \eqref{eq:eps-nu} gives
\begin{equation*}
    \begin{aligned}
        \|\epsilon_t\|^2
        &= \|w_{t-1}\|^2 + 2\alpha_t\langle w_{t-1}, \hatphi(x_t,y_t^*;\Bar{\xi}_t) - \E_t[\hatphi(x_t,y_t^*;\Bar{\xi}_t)] \rangle \\
        &\quad+ \alpha_t^2\|\hatphi(x_t,y_t^*;\Bar{\xi}_t) - \E_t[\hatphi(x_t,y_t^*;\Bar{\xi}_t)]\|^2 \\
        &\leq \left(1-\frac{\alpha_t}{2}\right)\|\epsilon_{t-1}\|^2 + \frac{\lambda\beta}{256G}\|\gdphi(x_{t-1})\|^2 + \alpha_t^2\sigma_{\phi}^2 \\
        &\quad+ 2\alpha_t\langle \nu_{t-1}, \hatphi(x_t,y_t^*;\Bar{\xi}_t) - \E_t[\hatphi(x_t,y_t^*;\Bar{\xi}_t)] \rangle \\
        &\quad+ \frac{\lambda\beta}{256G}\left(\|\hm_{t-1}-\hu_{t-1}\|^2 + \|\E_{t-1}[\hatphi(x_{t-1},y_{t-1}^*;\Bar{\xi}_{t-1})] - \gdphi(x_{t-1})\|^2\right).
    \end{aligned}
\end{equation*}
Rearranging the above inequality, for any $2\leq t\leq \tau$, we have 
\begin{equation*}
    \begin{aligned}
        \frac{\beta}{2}\|\epsilon_{t-1}\|^2
        \leq \frac{\alpha_t}{2}\|\epsilon_{t-1}\|^2
        &\leq \|\epsilon_{t-1}\|^2 - \|\epsilon_t\|^2 + \frac{\lambda\beta}{256G}\|\gdphi(x_{t-1})\|^2 \\
        &\quad+ \alpha_t^2\sigma_{\phi}^2 + 2\alpha_t\langle \nu_{t-1}, \hatphi(x_t,y_t^*;\Bar{\xi}_t) - \E_t[\hatphi(x_t,y_t^*;\Bar{\xi}_t)] \rangle \\
        &\quad+ \frac{\lambda\beta}{256G}\left(\|\hm_{t-1}-\hu_{t-1}\|^2 + \|\E_{t-1}[\hatphi(x_{t-1},y_{t-1}^*;\Bar{\xi}_{t-1})] - \gdphi(x_{t-1})\|^2\right).
    \end{aligned}
\end{equation*}
Then taking summation over $t$ from $2$ to $\tau$ we obtain that
\begin{equation*}
    \begin{aligned}
        \sum_{t=2}^{\tau}&\frac{\beta}{2}\|\epsilon_{t-1}\|^2 - \frac{\lambda\beta}{256G}\|\gdphi(x_{t-1})\|^2 \\
        &\leq \|\epsilon_1\|^2 - \|\epsilon_{\tau}\|^2 + \sigma_{\phi}^2\sum_{t=2}^{\tau}\alpha_t^2 + 2\sum_{t=2}^{\tau}\alpha_t\langle w_{t-1}, \hatphi(x_t,y_t^*;\Bar{\xi}_t) - \E_t[\hatphi(x_t,y_t^*;\Bar{\xi}_t)] \rangle \\
        &\quad+ \frac{\lambda\beta}{256G}\sum_{t=2}^{\tau}\|\hm_{t-1}-\hu_{t-1}\|^2 + \|\E_{t-1}[\hatphi(x_{t-1},y_{t-1}^*;\Bar{\xi}_{t-1})] - \gdphi(x_{t-1})\|^2 \\
        &\leq 4\sigma_{\phi}^2(1+\beta^2T) + 10\sigma_{\phi}^2\sqrt{(1+\beta^2T)\ln(4/\delta)} \\
        &\quad+ \frac{\lambda\beta}{256G}\sum_{t=2}^{\tau}\|\hm_{t-1}-\hu_{t-1}\|^2 + \|\E_{t-1}[\hatphi(x_{t-1},y_{t-1}^*;\Bar{\xi}_{t-1})] - \gdphi(x_{t-1})\|^2,
    \end{aligned}
\end{equation*}
where the last inequality uses \Cref{lm:algebra-sum-alphasqt,lm:martingale-sum} and the fact that $\|\epsilon_1\|^2\leq \sigma_{\phi}^2$. Then we complete the proof by multiplying both sides by $2/\beta$.
\end{proof}


With \cref{lm:descent-lemma,lm:momentum-error-sum}, we are ready to prove \cref{thm:main}. Below is the full statement of \cref{thm:main} with detailed parameter choices, where we use $c_1,c_2,c_3$ to denote small enough constants and $C_1,C_2$ to denote large enough ones. The definitions of problem-dependent constants $\sigma_{\phi}, C_{\phi,0}, C_{\phi,1}, \Delta_1, L_0, L_1, L, C_{\beta}$ are provided in \cref{sec:adambo-notations}.

\begin{theorem}[Restatement of \cref{thm:main}] \label{thm:main-appendix}
Suppose that \cref{ass:bilevel-assumption,ass:noise,ass:bilevel-additional} hold. Let $G$ be a constant satisfying 
\begin{equation} \label{eq:G-def}
    \begin{aligned}
        G \geq \max\left\{4\lambda, 2\sigma_{\phi}, 4C_{\phi,0}, \frac{C_{\phi,1}}{L_1}, \sqrt{\frac{C_1\Delta_1L_0}{C_L}}, \frac{C_1\Delta_1L_1}{C_L}\right\},
    \end{aligned}
\end{equation}
Given any $\epsilon>0$ and $\delta\in(0,1)$, denote $\iota\coloneqq \ln(4/\delta)$, and choose
\begin{equation} \label{eq:beta-def}
    0\leq \betasq \leq 1,
    \quad
    \beta \leq \min\left\{1, \frac{c_1\lambda\epsilon^2}{\sigma_{\phi}^2G\max\{1, \sqrt{\iota}, \ln(C_{\beta})\}}\right\},
    \quad
    \gamma = \frac{2\beta}{\mu},
\end{equation}
\begin{equation} \label{eq:eta-def}
    \begin{aligned}
        \eta \leq c_2\min\left\{\frac{r\lambda}{G}, \frac{\lambda}{6L}, \frac{\sigma_{\phi}\lambda\beta}{LG\max\{1, \sqrt{\iota}, \ln(1/\beta), \ln(C_{\beta})\}}, \frac{\lambda^{3/2}\beta}{L\sqrt{G}}\right\}, 
    \end{aligned}
\end{equation}
\begin{equation} \label{eq:Q-def}
    Q \geq \frac{1}{2}\max\left\{\ln\beta \Big/ \ln\left(1-\frac{\mu}{l_{g,1}}\right), \ln\left(\frac{c_3\lambda\mu^2\epsilon^2}{Gl_{g,1}^2l_{f,0}^2}\right) \Big/ \ln\left(1-\frac{\mu}{l_{g,1}}\right)\right\},
\end{equation}
\begin{equation} \label{eq:T0-def}
    T_0 = \ln\left(\frac{\sigma_{g,1}^2\beta}{\mu^2\|y_0-y_0^*\|^2}\right) \Big/ \ln(1-\beta),
    \quad
    T = \max\left\{\frac{1}{\beta^2}, \frac{C_2\Delta_1G}{\eta\epsilon^2}\right\},
\end{equation}
where constant $C_{\beta}$ is defined as
\begin{equation*}
    \begin{aligned}
        C_{\beta} \geq \max&\left\{\frac{8e\sigma_{\phi}^4G^2\max\{1,\iota\}}{c_1^2\delta\lambda^2\epsilon^4}, \frac{8C_2e\Delta_1L\sigma_{\phi}G^3}{c_1c_2\delta\lambda^2\epsilon^4}\left(1 + \frac{\sigma_{\phi}^2G}{c_1\lambda\epsilon^2}\right)\max\{1,\sqrt{\iota},\iota\}, \right. \\
        &\left. \quad\quad \left(\frac{32e\sigma_{\phi}^4G^2}{c_1^2\delta\lambda^2\epsilon^4}\right)^2, \left(\frac{48C_2e\Delta_1L\sigma_{\phi}G^3}{c_1c_2\delta\lambda^2\epsilon^4}\left(1 + \frac{\sigma_{\phi}^2G}{c_1\lambda\epsilon^2}\right)\max\{1,\sqrt{\iota},\iota\}\right)^2\right\}.
    \end{aligned}
\end{equation*}
Run \cref{alg:bi-adam} for $T$ iterations. Then with probability at least $1-\delta$ over the randomness in $\gF_{T+1}$, we have $\|\gdphi(x_t)\|\leq G$ for all $t\in[T]$, and $\frac{1}{T}\sum_{t=1}^{T}\|\gdphi(x_t)\|\leq \epsilon^2$.
\end{theorem}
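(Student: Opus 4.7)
\textbf{Proof proposal for \cref{thm:main-appendix}.} The plan is to combine the four building blocks already set up in \cref{sec:random-decoupling,sec:proof-warm-start,sec:proof-event-y,sec:proof-momentum,sec:proof-biasvirtual}: the warm-start bound (\cref{lm:warm-start-main}), the lower-level control under event $\gE_y$ (\cref{lm:event-y}), the uniform bounds on the momenta under the stopping time (\cref{lem:upperlevel}), and the actual-versus-virtual momentum comparison (\cref{lem:biasvirtual}). I would first define the good event $\gE \coloneqq \gE_0 \cap \gE_y \cap \gE_x$ where $\gE_x$ is the event in \cref{lm:momentum-error-sum} controlling the martingale sum in $\sum_t \|\epsilon_t\|^2$. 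A union bound over these three sub-events gives $\pr(\gE)\geq 1-\delta$ (each failure probability is $\leq\delta/4$, which leaves slack for a fourth term, hence the $1-\delta$ conclusion). All subsequent reasoning is done path-wise on $\gE$.

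The main engine is the one-step descent inequality in \cref{lm:descent-lemma}: for every $t<\tau$,
\begin{equation*}
\Phi(x_{t+1})-\Phi(x_t)\leq -\tfrac{\eta}{4G}\|\gdphi(x_t)\|^2+\tfrac{2\eta}{\lambda}\|\hm_t-\hu_t\|^2+\tfrac{4\eta}{\lambda}\|\epsilon_t\|^2+\tfrac{4\eta}{\lambda}\|\E_t[\hatphi(x_t,y_t^*;\Bar\xi_t)]-\gdphi(x_t)\|^2.
\end{equation*}
Telescoping this from $t=1$ to $\tau-1$ and using $\Phi(x_\tau)-\Phi^*\geq 0$ gives
\begin{equation*}
\tfrac{\eta}{4G}\sum_{t=1}^{\tau-1}\|\gdphi(x_t)\|^2\leq \Delta_1+\tfrac{2\eta}{\lambda}\sum_{t=1}^{\tau-1}\|\hm_t-\hu_t\|^2+\tfrac{4\eta}{\lambda}\sum_{t=1}^{\tau-1}\|\epsilon_t\|^2+\tfrac{4\eta}{\lambda}\sum_{t=1}^{\tau-1}\|\E_t[\hatphi(x_t,y_t^*;\Bar\xi_t)]-\gdphi(x_t)\|^2.
\end{equation*}
I would then plug in \cref{lem:biasvirtual} for the second term and \cref{lm:momentum-error-sum} for the third; the Neumann bias term is controlled deterministically by \cref{lm:neumann-error}, which gives $\|\E_t[\hatphi(x_t,y_t^*;\Bar\xi_t)]-\gdphi(x_t)\|\leq (l_{g,1}l_{f,0}/\mu)(1-\mu/l_{g,1})^Q$, and the choice of $Q$ in \eqref{eq:Q-def} makes this $O(\epsilon^2)$. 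After this substitution, all of the resulting contributions split into (i) \emph{statistical} terms proportional to $T\cdot(\text{variance})$, (ii) \emph{drift} terms proportional to $\sum_t\|\gdphi(x_t)\|^2$, and (iii) $O(\sqrt{T})$ martingale residuals.

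The crux of the argument is then absorption and contradiction. The coefficient in front of the drift terms $\sum_t\|\gdphi(x_t)\|^2$ is a polynomial in $\eta/\lambda,\beta,L,G$ that the choices \eqref{eq:beta-def}–\eqref{eq:eta-def} are designed to make at most $\eta/(8G)$, so half the negative $\frac{\eta}{4G}\sum\|\gdphi(x_t)\|^2$ on the left absorbs them. The statistical terms are at most $C_1\Delta_1/2$ under the definition of $C_\beta$ and the explicit forms of $\beta,\gamma,\eta,T_0$; this is where the hyperparameter thresholds must be verified coordinate-by-coordinate (an auxiliary lemma of the \cref{lm:para-verify} type). After absorption, one obtains
\begin{equation*}
\tfrac{\eta}{8G}\sum_{t=1}^{\tau-1}\|\gdphi(x_t)\|^2\leq C_1\Delta_1,
\end{equation*}
and hence by \cref{lm:reverse-PL} and telescoping the descent lemma again, $\Phi(x_t)-\Phi^*\leq C_LG^2/(2L)=\psi$ for every $t\leq \tau-1$. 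This contradicts $\Phi(x_\tau)-\Phi^*>\psi$ unless $\tau=T+1$, so the stopping time never triggers and \cref{lem:upperlevel,lm:event-y} apply throughout $[T]$. Finally, with $\tau=T+1$, the above sum bound together with $T\geq C_2\Delta_1G/(\eta\epsilon^2)$ and Cauchy--Schwarz gives
\begin{equation*}
\tfrac{1}{T}\sum_{t=1}^{T}\|\gdphi(x_t)\|\leq \sqrt{\tfrac{1}{T}\sum_{t=1}^{T}\|\gdphi(x_t)\|^2}\leq \sqrt{\tfrac{8GC_1\Delta_1}{\eta T}}\leq \epsilon^2.
\end{equation*}

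The main obstacle will be the bookkeeping in the absorption step: one has to verify simultaneously that (a) the coefficients of the $\sum_t\|\gdphi(x_t)\|^2$ drift contributions arising from $\|\hm_t-\hu_t\|^2$ and from the recursion $\|\epsilon_t\|^2$ are dominated by $\eta/(16G)$ each; (b) the prefactors $L^2\eta^2 l_{g,1}^2/(\lambda^2\mu^4\gamma^2)$ and $L^4\eta^4 l_{g,1}^4/(\lambda^4\mu^8\gamma^4)\ln(1/\beta)$ appearing in \cref{lem:biasvirtual} are controlled once $\gamma=2\beta/\mu$ is substituted; and (c) the $\widetilde\Theta(\epsilon^2)$ scaling of $\beta,\eta,\gamma$ is simultaneously compatible with the admissibility window $\eta\leq c_2\lambda^{3/2}\beta/(L\sqrt{G})$, the sub-Gaussian moment bound $\theta\leq \mu/(8\gamma\sigma_{g,1}^2)$ used inside \cref{lm:any-sequence-main}, and the warm-start target $T_0=\widetilde{\Theta}(\epsilon^{-2})$. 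Once these constants are tracked, the rest is routine.
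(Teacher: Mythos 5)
Your proposal follows essentially the same route as the paper: the same good event $\gE_0\cap\gE_y\cap\gE_x$, the same telescoping of \cref{lm:descent-lemma}, the same plug-ins from \cref{lem:biasvirtual} and \cref{lm:momentum-error-sum}, the same absorption of the drift terms into the negative $\tfrac{\eta}{4G}\sum_t\|\gdphi(x_t)\|^2$, and the same contradiction-via-stopping-time argument. Two points need fixing, however.

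First, the contradiction step as you wrote it is vacuous. After dropping $\Phi(x_\tau)-\Phi^*\geq 0$ from the telescope, you say the absorbed inequality gives $\Phi(x_t)-\Phi^*\leq\psi$ for every $t\leq\tau-1$, and that this ``contradicts $\Phi(x_\tau)-\Phi^*>\psi$.'' It doesn't: $\Phi(x_t)-\Phi^*\leq\psi$ for $t<\tau$ is exactly the \emph{definition} of the stopping time $\tau$ and is compatible with $\Phi(x_\tau)-\Phi^*>\psi$. The actual contradiction requires you not to discard the $\Phi(x_\tau)$ term. The paper keeps $\tfrac{16G}{\eta}(\Phi(x_\tau)-\Phi^*)$ on the left of the telescoped inequality, upper-bounds the entire left side by $I_1$ via the absorption, and then shows $I_1\leq I_2\coloneqq\tfrac{16G\psi}{\eta}$ by \cref{lm:para-choice}. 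If $\tau\leq T$ then the stopping-time definition forces $\tfrac{16G}{\eta}(\Phi(x_\tau)-\Phi^*)>\tfrac{16G\psi}{\eta}=I_2$, while the sum of non-negative terms also sitting on the left makes the left side $\geq\tfrac{16G}{\eta}(\Phi(x_\tau)-\Phi^*)$, hence $>I_2\geq I_1$ — contradicting the upper bound. So you must carry the $\Phi(x_\tau)$ term through; your suggestion to ``telescope again'' recovers the same inequality, but as phrased the intermediate claim is circular.

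Second, your final Cauchy--Schwarz step is off by a power. From $\tfrac{1}{T}\sum_t\|\gdphi(x_t)\|^2\leq\epsilon^2$ one gets $\tfrac{1}{T}\sum_t\|\gdphi(x_t)\|\leq\epsilon$, not $\epsilon^2$. The paper's proof actually concludes $\tfrac{1}{T}\sum_t\|\gdphi(x_t)\|^2\leq\epsilon^2$ (consistent with $\|\cdot\|\leq\epsilon$ being the $\epsilon$-stationarity criterion), and the ``$\|\gdphi(x_t)\|\leq\epsilon^2$'' in the theorem statement appears to be a typo for the squared norm. Either way, your $\sqrt{8GC_1\Delta_1/(\eta T)}\leq\epsilon^2$ should read $\leq\epsilon$.
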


\begin{proof}[Proof of \cref{thm:main-appendix}]
By \cref{lm:warm-start,lm:general-recursion,lm:martingale-sum}, we have $\pr(\gE_0 \cap \gE_y \cap \gE_x) \geq 1-3\delta/4 \geq 1-\delta$.
The following analysis is conditioned on the event $\gE_0 \cap \gE_y \cap \gE_x$. 

Rearranging \eqref{eq:descent-bi-adam} of \Cref{lm:descent-lemma} and telescoping over $t$ from 1 to $\tau-1$, we have
\begin{equation} \label{eq:part-1}
    \begin{aligned}
        \sum_{t=1}^{\tau-1}4\|\gdphi(x_t)\|^2 - \frac{64G}{\lambda}\|&\epsilon_t\|^2
        \leq \frac{16G}{\eta}[(\Phi(x_1)-\Phi^*) - (\Phi(x_{\tau})-\Phi^*)] \\
        &+ \frac{32G}{\lambda}\sum_{t=1}^{\tau-1}\|\hm_t-\hu_t\|^2 + 2\|\E_t[\hatphi(x_t,y_t^*;\Bar{\xi}_t)]-\gdphi(x_t)\|^2.
    \end{aligned}
\end{equation}
Also, \eqref{eq:momentum-error-sum} of \Cref{lm:momentum-error-sum} can be written as
\begin{equation} \label{eq:part-2}
    \begin{aligned}
        \sum_{t=1}^{\tau-1}\frac{128G}{\lambda}\|\epsilon_t\|^2 - \|\gdphi(x_t)\|^2
        &\leq \frac{128G}{\lambda}\left(8\sigma_{\phi}^2(1/\beta + \beta T) + 20\sigma_{\phi}^2\sqrt{(1/\beta^2+T)\ln(4/\delta)}\right) \\
        &\quad+ \sum_{t=1}^{\tau-1}\|\hm_t-\hu_t\|^2 + \|\E_t[\hatphi(x_t,y_t^*;\Bar{\xi}_t)] - \gdphi(x_t)\|^2.
    \end{aligned}
\end{equation}
Summing \eqref{eq:part-1} and $\eqref{eq:part-2}$ and rearranging gives
\begin{equation*}
    \begin{aligned}
        &\frac{16G}{\eta}(\Phi(x_{\tau})-\Phi^*) + 3\sum_{t=1}^{\tau-1}\|\gdphi(x_t)\|^2 + \frac{64G}{\lambda}\sum_{t=1}^{\tau-1}\|\epsilon_t\|^2 \\
        &\quad\leq \frac{16G}{\eta\lambda}\left(\lambda\Delta_1 + 64\sigma_{\phi}^2\left(\frac{\eta}{\beta} + \eta\beta T\right) + 160\eta\sigma_{\phi}^2\sqrt{(1/\beta^2+T)\ln(4/\delta)}\right) \\
        &\quad\quad+ \left(1 + \frac{32G}{\lambda}\right)\sum_{t=1}^{\tau-1}\|\hm_t-\hu_t\|^2 + \left(1 + \frac{64G}{\lambda}\right)\sum_{t=1}^{\tau-1}\|\E_t[\hatphi(x_t,y_t^*;\Bar{\xi}_t)] - \gdphi(x_t)\|^2 \\
        &\quad\leq \frac{16G}{\eta\lambda}\left(\lambda\Delta_1 + 64\sigma_{\phi}^2\left(\frac{\eta}{\beta} + \eta\beta T\right) + 160\eta\sigma_{\phi}^2\sqrt{(1/\beta^2+T)\ln(4/\delta)}\right) \\
        &\quad\quad+ \frac{33G}{\lambda}\sum_{t=1}^{\tau-1}\|\hm_t-\hu_t\|^2 + \frac{65G}{\lambda}\sum_{t=1}^{\tau-1}\|\E_t[\hatphi(x_t,y_t^*;\Bar{\xi}_t)] - \gdphi(x_t)\|^2,
    \end{aligned}
\end{equation*}
where the last inequality uses $G\geq \lambda$. By \cref{lm:hm-hu-diff}, we further have
\begin{equation} \label{eq:contra-inter}
    \begin{aligned}
        &\frac{16G}{\eta}(\Phi(x_{\tau})-\Phi^*) + 3\sum_{t=1}^{\tau-1}\|\gdphi(x_t)\|^2 + \frac{64G}{\lambda}\sum_{t=1}^{\tau-1}\|\epsilon_t\|^2 \\
        &\leq \frac{16G}{\eta\lambda}\left(\lambda\Delta_1 + 64\sigma_{\phi}^2\left(\frac{\eta}{\beta} + \eta\beta T\right) + 160\eta\sigma_{\phi}^2\sqrt{(1/\beta^2+T)\ln(4/\delta)}\right) \\
        &\quad+ \frac{33L^2GT}{\lambda}\left(\left(1 + \frac{8\eta^2l_{g,1}^2L^2}{\lambda^2\mu^4\gamma^2}\right) \|y_1-y_1^*\|^2 + \left(\frac{8\gamma}{\mu}\ln\frac{4eT}{\delta} + \frac{32\eta^2l_{g,1}^2L^2}{\lambda^2\mu^5\gamma}\right)\sigma_{g,1}^2\right) \\
        &\quad+ \frac{33L^2G}{\lambda}\left(\frac{8\eta^2l_{g,1}^2}{\lambda^2\mu^4\gamma^2} + \frac{2048\eta^4l_{g,1}^4L^2}{\lambda^4\mu^8\gamma^4}\left(2+\ln\frac{1}{\beta}\right)\right)\sum_{t=1}^{\tau-1}\|\epsilon_t\|^2 + 2\|\E_t[\hatphi(x_t,y_t^*;\Bar{\xi}_t)] - \gdphi(x_t)\|^2 \\
        &\quad+ \frac{66L^2G}{\lambda}\left(\frac{8\eta^2l_{g,1}^2}{\lambda^2\mu^4\gamma^2} + \frac{2048\eta^4l_{g,1}^4L^2}{\lambda^4\mu^8\gamma^4}\left(2+\ln\frac{1}{\beta}\right)\right)\sum_{t=1}^{\tau-1}\|\gdphi(x_t)\|^2 \\
        &\quad+ \frac{65G}{\lambda}\sum_{t=1}^{\tau-1}\|\E_t[\hatphi(x_t,y_t^*;\Bar{\xi}_t)] - \gdphi(x_t)\|^2.
    \end{aligned}
\end{equation}
By \cref{lm:para-verify}, we know that
\begin{equation*}
    \begin{aligned}
        \frac{66L^2G}{\lambda}\left(\frac{8\eta^2l_{g,1}^2}{\lambda^2\mu^4\gamma^2} + \frac{2048\eta^4l_{g,1}^4L^2}{\lambda^4\mu^8\gamma^4}\left(2+\ln\frac{1}{\beta}\right)\right) 
        \leq 2,
    \end{aligned}
\end{equation*}
Then with $G\geq \lambda$, \eqref{eq:contra-inter} can be simplified as 
\begin{equation} \label{eq:contradiction}
    \begin{aligned}
        &\frac{16G}{\eta}(\Phi(x_{\tau})-\Phi^*) + \sum_{t=1}^{\tau-1}\|\gdphi(x_t)\|^2 \\
        &\quad\leq \frac{16G}{\eta\lambda}\left(\lambda\Delta_1 + 64\sigma_{\phi}^2\left(\frac{\eta}{\beta} + \eta\beta T\right) + 160\eta\sigma_{\phi}^2\sqrt{(1/\beta^2+T)\ln(4/\delta)}\right) \\
        &\quad\quad+ \frac{33L^2GT}{\lambda}\left(\left(1 + \frac{8\eta^2l_{g,1}^2L^2}{\lambda^2\mu^4\gamma^2}\right) \|y_1-y_1^*\|^2 + \left(\frac{8\gamma}{\mu}\ln\frac{4eT}{\delta} + \frac{32\eta^2l_{g,1}^2L^2}{\lambda^2\mu^5\gamma}\right)\sigma_{g,1}^2\right) \\
        &\quad\quad+ \frac{67GTl_{g,1}^2l_{f,0}^2}{\lambda\mu^2}\left(1-\frac{\mu}{l_{g,1}}\right)^{2Q} =: I_1.
    \end{aligned}
\end{equation}
By definition of $\tau$ in \eqref{eq:tau-def}, we have
\begin{equation*}
    \frac{16G}{\eta}(\Phi(x_{\tau})-\Phi^*) > \frac{16G\psi}{\eta} = \frac{8C_LG^3}{\eta L} =: I_2.
\end{equation*}
By \cref{lm:para-choice}, we have $I_1\leq I_2$, which leads to a contradiction. Thus, we must have $\tau=T+1$ conditioned on $\gE_0\cap\gE_y\cap\gE_x$. Therefore, combining \eqref{eq:contradiction} and \cref{lm:para-choice} finally yields that under event $\gE_0\cap\gE_y\cap\gE_x$,
\begin{equation*}
    \frac{1}{T}\sum_{t=1}^{T}\|\gdphi(x_t)\|^2 \leq \epsilon^2.
\end{equation*}
Moreover, since $\tau=T+1$, then by \cref{lm:gradient-bound} we can replace $\hl_T$ and $G_T$ with 
$L$ and $G$ respectively, in the additional requirement \eqref{eq:eta-additional} for $\eta$. Therefore, \eqref{eq:eta-additional} is now included in the parameter choices of \cref{thm:main-appendix}, which indicates that the current parameter choices are sufficient.
\end{proof}

\subsection{Parameter Choices for AdamBO (\cref{thm:main-appendix})}
The following two lemmas, \cref{lm:para-verify,lm:para-choice}, hide complicate calculations and will be useful in the contradiction argument and upper-level convergence analysis.

\begin{lemma} \label{lm:para-verify}
Under the parameter choices in \cref{thm:main-appendix}, we have the following facts:
\begin{equation} \label{eq:ln-Cbeta-verify}
    \ln\frac{4eT}{\delta} \leq \ln(C_{\beta}),
    \quad
    \|y_1-y_1^*\|^2 \leq \frac{17\beta\sigma_{g,1}^2}{\mu^2}\ln(C_{\beta}),
\end{equation}
\begin{equation} \label{eq:rho-less-r-verify}
    \varrho \leq \min\left\{r, \frac{1}{4L_1}\right\},
    \quad
    C_{u,0} + C_{u,1}\varrho + \lambda \leq 2G,
\end{equation}
\begin{equation} \label{eq:66-verify}
    \frac{66L^2G}{\lambda}\left(\frac{8\eta^2l_{g,1}^2}{\lambda^2\mu^4\gamma^2} + \frac{2048\eta^4l_{g,1}^4L^2}{\lambda^4\mu^8\gamma^4}\left(2+\ln\frac{1}{\beta}\right)\right) \leq 2
\end{equation}
\end{lemma}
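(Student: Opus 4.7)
The plan is to verify each of the five facts by directly substituting the parameter choices from \cref{thm:main-appendix} and performing careful but routine algebraic bounding. The bounds form a cascade: (1) is a pure size comparison, (2) isolates the warm-start output, (3) and (4) bound the running lower-level error $\varrho$, and (5) bounds the coefficient appearing in \cref{lm:hm-hu-diff}. Each fact ultimately reduces to showing that a particular product of the small constants $c_1, c_2, c_3$ and the ratios $\eta L/(\lambda\beta)$, $\eta\sigma_\phi/(\lambda\beta)$, $\eta G/\lambda$ is small, together with showing $C_\beta$ is large enough to swallow a $\log T$ factor.

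For (1), I would substitute $T = \max\{1/\beta^2, C_2\Delta_1 G/(\eta\epsilon^2)\}$ into $\ln(4eT/\delta)$ and compare to $\ln(C_\beta)$. The four-way max in the definition of $C_\beta$ is designed precisely so that the two "squared" terms dominate $4eT/\delta$: the squared term $(32e\sigma_\phi^4 G^2/(c_1^2\delta\lambda^2\epsilon^4))^2$ together with the choice $\beta \le c_1\lambda\epsilon^2/(\sigma_\phi^2 G\max\{1,\sqrt{\iota},\ln C_\beta\})$ yields $1/\beta^2 \le C_\beta \delta/(4e)$, and similarly the second squared term together with the choice of $\eta$ handles the $C_2\Delta_1 G/(\eta\epsilon^2)$ piece. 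For (2), I would apply \cref{lm:warm-start} and then use the specific choice $T_0 = \ln(\sigma_{g,1}^2\beta/(\mu^2\|y_0-y_0^*\|^2))/\ln(1-\beta)$, which by $\gamma=2\beta/\mu$ gives $(1-\mu\gamma/2)^{T_0}\|y_0-y_0^*\|^2 = (1-\beta)^{T_0}\|y_0-y_0^*\|^2 \le \beta\sigma_{g,1}^2/\mu^2$; combining this with the variance term $(8\gamma\sigma_{g,1}^2/\mu)\ln(4e/\delta) = (16\beta\sigma_{g,1}^2/\mu^2)\ln(4e/\delta)$ and using part (1) to absorb $\ln(4e/\delta) \le \ln C_\beta$ produces the stated $17\beta\sigma_{g,1}^2\ln(C_\beta)/\mu^2$ bound.

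For (3), I would expand the definition of $\varrho^2$ in~\eqref{eq:recur-2} as a sum of five terms and bound each separately by $O(\epsilon^2) \cdot$ (small prefactor) by plugging in $\gamma = 2\beta/\mu$ and the upper bound $\eta \le c_2 \min\{r\lambda/G, \lambda/(6L), \sigma_\phi\lambda\beta/(LG\max\{\cdots\}), \lambda^{3/2}\beta/(L\sqrt{G})\}$. For instance, $\eta^2 l_{g,1}^2 L^2/(\lambda^2\mu^4\gamma^2) = \eta^2 l_{g,1}^2 L^2/(4\lambda^2\mu^2\beta^2) \lesssim c_2^2 l_{g,1}^2/\mu^2$, and the variance piece $(8\gamma/\mu)\ln(4eT/\delta)\sigma_{g,1}^2 = (16\beta/\mu^2)\ln(4eT/\delta)\sigma_{g,1}^2 \lesssim \epsilon^2/G$ by the choice of $\beta$. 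Combining these gives $\varrho = O(\epsilon) \le \min\{r, 1/(4L_1)\}$ for sufficiently small $\epsilon$ (absorbed into $c_1, c_2$). Given $\varrho$ is thereby small, (4) follows since $C_{u,0}+C_{u,1}\varrho+\lambda \le (C_{\phi,0}+G) + (C_{\phi,1}+L_1 G)\varrho + \lambda \le G/4 + G + G/4 + G/4 \le 2G$ once we use $G \ge 4C_{\phi,0}$, $G \ge 4\lambda$, $L_1\varrho \le 1/4$, and $C_{\phi,1}\varrho \le G/4$ (from $\varrho \le G/(4C_{\phi,1})$, which holds by the same $O(\epsilon)$ argument).

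For (5), substituting $\gamma = 2\beta/\mu$ transforms the bracket into $2\eta^2 l_{g,1}^2/(\lambda^2\mu^2\beta^2) + 128\eta^4 l_{g,1}^4 L^2(2+\ln(1/\beta))/(\lambda^4\mu^4\beta^4)$, so the whole expression becomes $O((\eta L)^2/(\lambda^3\beta^2)) \cdot G + O((\eta L)^4 \log(1/\beta)/(\lambda^5\beta^4)) \cdot G$. The choice $\eta \le c_2 \lambda^{3/2}\beta/(L\sqrt{G})$ makes the first summand $\le c_2^2 \cdot \mathrm{const}$, and the choice $\eta \le c_2 \sigma_\phi\lambda\beta/(LG\max\{1,\ln(1/\beta)\})$ (combined with the first) makes the second summand $\le c_2^4 \cdot \mathrm{const}$; choosing $c_2$ small enough yields $\le 2$. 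The main obstacle is bookkeeping: many constants must be simultaneously tightened, and one must make sure the choice of $C_\beta$ is consistent (it appears in the definition of $\beta$, which appears in $T$, which appears in $C_\beta$)---this potential circularity is resolved because $C_\beta$ bounds a $\log T$ factor only, so a fixed-point in logs suffices and the constants $c_1,c_2,c_3, C_1, C_2$ can be chosen without circular dependence on $\beta$ or $\eta$.
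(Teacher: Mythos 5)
Your proposal is correct and follows essentially the same route as the paper: substitute the parameter choices, split $T$ into its two cases for fact (1), invoke the warm-start lemma and the chosen $T_0$ for fact (2), expand $\varrho^2$ term-by-term for fact (3), and use the conditions on $G$ for fact (4) and the $\eta$ bounds for fact (5). Two small remarks. First, in fact (1) the paper needs \emph{both} the ``linear'' and ``squared'' entries in the four-way $\max$ defining $C_\beta$ --- the linear ones absorb the $\max\{1,\iota\}$ factor and the squared ones absorb the $\ln^2 C_\beta$ factor (after $\ln x \le 2x^{1/4}$) --- so it is not quite that the squared terms alone dominate $4eT/\delta$; but your reasoning captures the essential mechanism. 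Second, in fact (4), rather than proving the additional bound $\varrho \le G/(4C_{\phi,1})$ by a fresh ``same $O(\epsilon)$ argument,'' you can get it for free: the hypothesis $G \ge C_{\phi,1}/L_1$ gives $G/(4C_{\phi,1}) \ge 1/(4L_1)$, so the already-established $\varrho \le 1/(4L_1)$ suffices --- this is in fact what the paper uses (via $C_{\phi,1}+L_1G\le 2L_1G$ followed by $2L_1G\cdot\varrho\le G/2$). Your observation about the apparent circularity in $C_\beta$ (appearing in $\beta$, which appears in $T$, which appears in the bound $C_\beta$ is supposed to dominate) and its resolution through a logarithmic fixed point is accurate and worth stating, even though the paper handles it silently through the explicit choice of $C_\beta$.
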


\begin{proof}[Proof of \cref{lm:para-verify}]
We first list all the relevant parameter choices below for convenience:
\begin{equation*}
    \begin{aligned}
        G \geq \max\left\{4\lambda, 2\sigma_{\phi}, 4C_{\phi,0}, \frac{C_{\phi,1}}{L_1}, \sqrt{\frac{C_1\Delta_1L_0}{C_L}}, \frac{C_1\Delta_1L_1}{C_L}\right\},
    \end{aligned}
\end{equation*}
\begin{equation*}
    \beta \leq \min\left\{1, \frac{c_1\lambda\epsilon^2}{\sigma_{\phi}^2G\max\{1, \sqrt{\iota}, \ln(C_{\beta})\}}\right\},
    \quad
    \gamma = \frac{2\beta}{\mu},
\end{equation*}
\begin{equation*}
    \begin{aligned}
        \eta \leq c_2\min\left\{\frac{r\lambda}{G}, \frac{\sigma_{\phi}\lambda\beta}{LG\max\{1, \sqrt{\iota}, \ln(1/\beta), \ln(C_{\beta})\}}, \frac{\lambda^{3/2}\beta}{L\sqrt{G}}\right\}, 
        \quad
    \end{aligned}
\end{equation*}
\begin{equation*}
    Q \geq \frac{1}{2}\max\left\{\ln\beta \Big/ \ln\left(1-\frac{\mu}{l_{g,1}}\right), \ln\left(\frac{c_3\lambda\mu^2\epsilon^2}{Gl_{g,1}^2l_{f,0}^2}\right) \Big/ \ln\left(1-\frac{\mu}{l_{g,1}}\right)\right\},
\end{equation*}
\begin{equation*}
    T_0 = \ln\left(\frac{\sigma_{g,1}^2\beta}{\mu^2\|y_0-y_0^*\|^2}\right) \Big/ \ln(1-\beta),
    \quad
    T = \max\left\{\frac{1}{\beta^2}, \frac{C_2\Delta_1G}{\eta\epsilon^2}\right\},
\end{equation*}
where $C_{\beta}$ is defined in \eqref{eq:Cbeta-def}. Now we verify the above listed facts one by one.

\paragraph{Verification for \eqref{eq:ln-Cbeta-verify}: $\ln(4eT/\delta)\leq \ln(C_\beta)$.}
We focus on the dominant terms for each parameter choice when $\epsilon$ is sufficiently small. For the remaining cases, the result can be easily obtained by following the same procedure. Specifically, we consider the case where $\beta, \eta$ and $T$ are chosen as
\begin{equation*}
    \beta = \frac{c_1\lambda\epsilon^2}{\sigma_{\phi}^2G\max\{1, \sqrt{\iota}, \ln(C_{\beta})\}}, 
    \quad
    \eta = \frac{c_2\sigma_{\phi}\lambda\beta}{LG\max\{1, \sqrt{\iota}, \ln(1/\beta), \ln(C_{\beta})\}},
    \quad
    T = \max\left\{\frac{1}{\beta^2}, \frac{C_2\Delta_1G}{\eta\epsilon^2}\right\}.
\end{equation*}
(Case 1) If $T=1/\beta^2$, then we have
\begin{equation*}
    \begin{aligned}
        \ln\frac{4eT}{\delta}
        &= \ln\frac{4e}{\delta\beta^2} \\
        &= \ln\left(\frac{4e\sigma_{\phi}^4G^2\max\{1,\iota,\ln^2(C_{\beta})\}}{c_1^2\delta\lambda^2\epsilon^4}\right) 
        \leq \ln\left(\frac{4e\sigma_{\phi}^4G^2(\max\{1,\iota\}+\ln^2(C_{\beta}))}{c_1^2\delta\lambda^2\epsilon^4}\right) \\
        &\leq \ln\left(\frac{4e\sigma_{\phi}^4G^2(\max\{1,\iota\}+4C_{\beta}^{1/2})}{c_1^2\delta\lambda^2\epsilon^4}\right) 
        \leq \ln(C_{\beta}),
    \end{aligned}
\end{equation*}
where the second inequality uses $\ln x\leq 2x^{1/4}$ for $x>0$, and the last inequality is due to 
\begin{equation*}
    \begin{aligned}
        \frac{4e\sigma_{\phi}^4G^2\max\{1,\iota\}}{c_1^2\delta\lambda^2\epsilon^4} \leq \frac{C_{\beta}}{2}
        \quad\quad\text{and}\quad\quad
        \frac{16e\sigma_{\phi}^4G^2}{c_1^2\delta\lambda^2\epsilon^4}C_{\beta}^{1/2} \leq \frac{C_{\beta}}{2}
    \end{aligned}
\end{equation*}
since
\begin{equation*}
    \begin{aligned}
        C_{\beta} \geq \max\left\{\frac{8e\sigma_{\phi}^4G^2\max\{1,\iota\}}{c_1^2\delta\lambda^2\epsilon^4}, \left(\frac{32e\sigma_{\phi}^4G^2}{c_1^2\delta\lambda^2\epsilon^4}\right)^2\right\}.
    \end{aligned}
\end{equation*}
(Case 2) If $T=\frac{C_2\Delta_1G}{\eta\epsilon^2}$, then we have
\begin{equation} \label{eq:ln-eT-delta}
    \begin{aligned}
        \ln\frac{4eT}{\delta}
        &= \ln\left(\frac{4C_2e\Delta_1L\sigma_{\phi}G^3\max\{1,\sqrt{\iota},\ln(1/\beta), \ln(C_{\beta})\}\max\{1,\sqrt{\iota},\ln(C_{\beta})\}}{c_1c_2\delta\lambda^2\epsilon^4}\right) \\
        &= \ln\left(\frac{4C_2e\Delta_1L\sigma_{\phi}G^3(\max\{1,\sqrt{\iota}\}+\ln(1/\beta)+\ln(C_{\beta}))(\max\{1,\sqrt{\iota}\}+\ln(C_{\beta}))}{c_1c_2\delta\lambda^2\epsilon^4}\right). 
    \end{aligned}
\end{equation}
Also note that 
\begin{equation*}
    \begin{aligned}
        \ln\frac{1}{\beta}
        &= \ln\left(\frac{\sigma_{\phi}^2G\max\{1, \sqrt{\iota}, \ln(C_{\beta})\}}{c_1\lambda\epsilon^2}\right) 
        \leq \ln\left(\frac{\sigma_{\phi}^2G(\max\{1, \sqrt{\iota}\} + \ln(C_{\beta}))}{c_1\lambda\epsilon^2}\right) \\
        &\leq \frac{\sigma_{\phi}^2G(\max\{1, \sqrt{\iota}\} + \ln(C_{\beta}))}{c_1\lambda\epsilon^2}.
    \end{aligned}
\end{equation*}
Then we obtain
\begin{equation} \label{eq:max-times-max}
    \begin{aligned}
        &(\max\{1,\sqrt{\iota}\}+\ln(1/\beta)+\ln(C_{\beta}))(\max\{1,\sqrt{\iota}\}+\ln(C_{\beta})) \\
        &\leq \left(\max\{1,\sqrt{\iota}\} + \frac{\sigma_{\phi}^2G(\max\{1, \sqrt{\iota}\} + \ln(C_{\beta}))}{c_1\lambda\epsilon^2} + \ln(C_{\beta})\right)(\max\{1,\sqrt{\iota}\}+\ln(C_{\beta})) \\
        &= \left(1 + \frac{\sigma_{\phi}^2G}{c_1\lambda\epsilon^2}\right)(\max\{1,\iota\} + 2\max\{1,\sqrt{\iota}\}\ln(C_{\beta}) + \ln^2(C_{\beta})) \\
        &\leq \left(1 + \frac{\sigma_{\phi}^2G}{c_1\lambda\epsilon^2}\right)(\max\{1,\iota\} + 2\max\{1,\sqrt{\iota}\}C_{\beta}^{1/2} + 4C_{\beta}^{1/2}) \\
        &\leq \left(1 + \frac{\sigma_{\phi}^2G}{c_1\lambda\epsilon^2}\right)(\max\{1,\iota\} + 6\max\{1,\sqrt{\iota},\iota\}C_{\beta}^{1/2}) \\
        &\leq \left(1 + \frac{\sigma_{\phi}^2G}{c_1\lambda\epsilon^2}\right)\max\{1,\sqrt{\iota},\iota\}\left(1+6C_{\beta}^{1/2}\right) \\
    \end{aligned}
\end{equation}
where the second inequality uses $\ln x\leq x^{1/2}$ and $\ln x\leq 2x^{1/4}$ for $x>0$.
Thus, plugging \eqref{eq:max-times-max} back into \eqref{eq:ln-eT-delta} and we have
\begin{equation*}
    \begin{aligned}
        \ln\frac{4eT}{\delta}
        &= \ln\left(\frac{4C_2e\Delta_1L\sigma_{\phi}G^3(\max\{1,\sqrt{\iota}\}+\ln(1/\beta))(\max\{1,\sqrt{\iota}\}+\ln(C_{\beta}))}{c_1c_2\delta\lambda^2\epsilon^4}\right) \\
        &\leq \ln\left(\frac{4C_2e\Delta_1L\sigma_{\phi}G^3}{c_1c_2\delta\lambda^2\epsilon^4}\left(1 + \frac{\sigma_{\phi}^2G}{c_1\lambda\epsilon^2}\right)\max\{1,\sqrt{\iota},\iota\}\left(1+6C_{\beta}^{1/2}\right)\right) \\
        &\leq \ln(C_{\beta}),
    \end{aligned}
\end{equation*}
where the last inequality is due to 
\begin{equation*}
    \frac{4C_2e\Delta_1L\sigma_{\phi}G^3}{c_1c_2\delta\lambda^2\epsilon^4}\left(1 + \frac{\sigma_{\phi}^2G}{c_1\lambda\epsilon^2}\right)\max\{1,\sqrt{\iota},\iota\} \leq \frac{C_{\beta}}{2}
\end{equation*}
and
\begin{equation*}
    \frac{24C_2e\Delta_1L\sigma_{\phi}G^3}{c_1c_2\delta\lambda^2\epsilon^4}\left(1 + \frac{\sigma_{\phi}^2G}{c_1\lambda\epsilon^2}\right)\max\{1,\sqrt{\iota},\iota\}C_{\beta}^{1/2} \leq \frac{C_{\beta}}{2}
\end{equation*}
since
\begin{equation*}
    \begin{aligned}
        C_{\beta} \geq \max&\left\{\frac{8C_2e\Delta_1L\sigma_{\phi}G^3}{c_1c_2\delta\lambda^2\epsilon^4}\left(1 + \frac{\sigma_{\phi}^2G}{c_1\lambda\epsilon^2}\right)\max\{1,\sqrt{\iota},\iota\}, \right. \\
        &\left. \quad\quad\quad\quad \left(\frac{48C_2e\Delta_1L\sigma_{\phi}G^3}{c_1c_2\delta\lambda^2\epsilon^4}\left(1 + \frac{\sigma_{\phi}^2G}{c_1\lambda\epsilon^2}\right)\max\{1,\sqrt{\iota},\iota\}\right)^2\right\}.
    \end{aligned}
\end{equation*}

\paragraph{Verification for \eqref{eq:ln-Cbeta-verify}: $\|y_1-y_1^*\|^2 \leq 17\beta\sigma_{g,1}^2\ln(C_{\beta}) / \mu^2$.}
By choice of $T_0$ and $\gamma$, we have
\begin{equation*}
    \begin{aligned}
        \|y_1-y_1^*\|^2 
        &\leq \left(1-\frac{\mu\gamma}{2}\right)^{T_0}\|y_0-y_0^*\|^2 + \frac{8\gamma\sigma_{g,1}^2}{\mu}\ln\frac{4e}{\delta} \\
        &\leq \frac{\beta\sigma_{g,1}^2}{\mu^2} + \frac{16\beta\sigma_{g,1}^2}{\mu^2}\ln\frac{4e}{\delta} \\
        &\leq \frac{17\beta\sigma_{g,1}^2}{\mu^2}\ln(C_{\beta}),
    \end{aligned}
\end{equation*}
where the last inequality uses $T\geq 1/\beta^2 \geq 1$ and $\ln(4eT/\delta)\leq \ln(C_{\beta})$.

\paragraph{Verification for \eqref{eq:rho-less-r-verify}: $\varrho\leq \min\{r, 1/4L_1\}$.}
By \cref{lm:recursion-two-type} and choices of $\eta,\gamma$ and $\beta$, we have
\begin{equation*}
    \begin{aligned}
        \varrho^2
        &= \left(1 + \frac{8\eta^2l_{g,1}^2L^2}{\lambda^2\mu^4\gamma^2}\right) \|y_1-y_1^*\|^2 + \left(\frac{8\gamma}{\mu}\ln\frac{4eT}{\delta} + \frac{32\eta^2l_{g,1}^2L^2}{\lambda^2\mu^5\gamma}\right)\sigma_{g,1}^2 \\
        &\quad+ \frac{8\eta^2l_{g,1}^2C_{u,0}^2}{\lambda^2\mu^4\gamma^2} + \frac{1024\eta^4l_{g,1}^4L^2C_{u,0}^2}{\lambda^4\mu^8\gamma^4}\left(2+\ln\frac{1}{\beta}\right) \\
        &\leq \left(1 + \frac{2\eta^2l_{g,1}^2L^2}{\lambda^2\mu^2\beta^2}\right) \frac{17\beta\sigma_{g,1}^2}{\mu^2}\ln(C_{\beta}) + \left(\frac{16\beta}{\mu^2}\ln(C_{\beta}) + \frac{16\eta^2l_{g,1}^2L^2}{\lambda^2\mu^4\beta}\right)\sigma_{g,1}^2 \\
        &\quad+ \frac{2\eta^2l_{g,1}^2C_{u,0}^2}{\lambda^2\mu^2\beta^2} + \frac{64\eta^4l_{g,1}^4L^2}{\lambda^4\mu^4\beta^4}\left(2+\ln\frac{1}{\beta}\right) \\
        &\leq \left(1 + \frac{2c_2^2\sigma_{\phi}^2l_{g,1}^2}{\mu^2G^2}\right) \frac{17c_1\lambda\sigma_{g,1}^2\epsilon^2}{\mu^2\sigma_{\phi}^2G} + \left(\frac{16c_1\lambda\epsilon^2}{\mu^2\sigma_{\phi}^2G} + \frac{16c_1c_2^2\lambda l_{g,1}^2\epsilon^2}{\mu^4G^3}\right)\sigma_{g,1}^2 \\
        &\quad+ \frac{2c_2^2\sigma_{\phi}^2l_{g,1}^2C_{u,0}^2}{\mu^2L^2G^2} + \frac{192c_2^4\sigma_{\phi}^4l_{g,1}^4}{\mu^4L^2G^4}
        \leq \min\left\{r, \frac{1}{4L_1}\right\},
    \end{aligned}
\end{equation*}
where in the last inequality we choose small enough $c_1$ and $c_2$.

\paragraph{Verification of \eqref{eq:rho-less-r-verify}: $C_{u,0} + C_{u,1}\varrho + \lambda \leq 2G$.}
By definitions of $C_{u,0}, C_{u,1}$ in \eqref{eq:Cu-def} and choice of $G$, we have
\begin{equation*}
    \begin{aligned}
        C_{u,0} + C_{u,1}\varrho + \lambda
        &= C_{\phi,0} + G + (C_{\phi,1} + L_1G)\varrho + \lambda \\
        &\leq \frac{G}{4} + G + \frac{G}{2} + \frac{G}{4} = G.
    \end{aligned}
\end{equation*}

\paragraph{Verification for \eqref{eq:66-verify}.}
By choices of $\eta,\gamma$ and $\beta$, we have
\begin{equation*}
    \begin{aligned}
        &\frac{66L^2G}{\lambda}\left(\frac{8\eta^2l_{g,1}^2}{\lambda^2\mu^4\gamma^2} + \frac{2048\eta^4l_{g,1}^4L^2}{\lambda^4\mu^8\gamma^4}\left(2+\ln\frac{1}{\beta}\right)\right) \\
        &\leq \frac{66L^2G}{\lambda}\left(\frac{2\eta^2l_{g,1}^2}{\lambda^2\mu^2\beta^2} + \frac{128\eta^4l_{g,1}^4L^2}{\lambda^4\mu^4\beta^4}\left(2+\ln\frac{1}{\beta}\right)\right) 
        \leq \frac{66L^2G}{\lambda}\left(\frac{2c_2^2\sigma_{\phi}^2l_{g,1}^2}{\mu^2L^2G^2} + \frac{384c_2^4\sigma_{\phi}^4l_{g,1}^4}{\mu^4L^2G^4}\right) 
        \leq 2,
    \end{aligned}
\end{equation*}
where in the last inequality we choose small enough $c_2$.
\end{proof}


\begin{lemma} \label{lm:para-choice}
Denote $I_1$ and $I_2$ as 
\begin{equation} \label{eq:I1-I2-def}
    \begin{aligned}
        I_1 &\coloneqq \frac{16G}{\eta\lambda}\left(\lambda\Delta_1 + 64\sigma_{\phi}^2\left(\frac{\eta}{\beta} + \eta\beta T\right) + 160\eta\sigma_{\phi}^2\sqrt{(1/\beta^2+T)\ln(4/\delta)}\right) \\
        &\quad+ \frac{33L^2GT}{\lambda}\left(\left(1 + \frac{8\eta^2l_{g,1}^2L^2}{\lambda^2\mu^4\gamma^2}\right) \|y_1-y_1^*\|^2 + \left(\frac{8\gamma}{\mu}\ln\frac{4eT}{\delta} + \frac{32\eta^2l_{g,1}^2L^2}{\lambda^2\mu^5\gamma}\right)\sigma_{g,1}^2\right) \\
        &\quad+ \frac{67GTl_{g,1}^2l_{f,0}^2}{\lambda\mu^2}\left(1-\frac{\mu}{l_{g,1}}\right)^{2Q}, \\
        I_2 &\coloneqq \frac{8C_LG^3}{\eta L}.
    \end{aligned}
\end{equation}
For any given $\epsilon>0$, under the parameter choice in \cref{thm:main-appendix}, we have $I_1\leq I_2$ and $I_1/T \leq \epsilon^2$.
\end{lemma}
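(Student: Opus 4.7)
\textbf{Proof plan for \cref{lm:para-choice}.} The strategy is to split $I_1$ into the eight elementary pieces that appear in the definition \eqref{eq:I1-I2-def} and verify, one piece at a time, that each contributes at most $O(\epsilon^2)$ per iteration. Concretely, write $I_1 = J_1 + \cdots + J_8$ with
\begin{equation*}
\begin{aligned}
J_1 &= \tfrac{16G\Delta_1}{\eta},
& J_2 &= \tfrac{1024 G\sigma_\phi^2}{\lambda\beta},
& J_3 &= \tfrac{1024 G\sigma_\phi^2 \beta T}{\lambda},
& J_4 &= \tfrac{2560 G\sigma_\phi^2}{\lambda}\sqrt{\bigl(\tfrac{1}{\beta^2}+T\bigr)\iota},\\
J_5 &= \tfrac{33L^2 GT}{\lambda}\Bigl(1+\tfrac{8\eta^2 l_{g,1}^2 L^2}{\lambda^2\mu^4\gamma^2}\Bigr)\|y_1-y_1^*\|^2,
& J_6 &= \tfrac{33L^2 GT}{\lambda}\cdot\tfrac{8\gamma}{\mu}\sigma_{g,1}^2\ln\tfrac{4eT}{\delta},\\
J_7 &= \tfrac{33L^2 GT}{\lambda}\cdot\tfrac{32\eta^2 l_{g,1}^2 L^2}{\lambda^2\mu^5\gamma}\sigma_{g,1}^2,
& J_8 &= \tfrac{67 G T l_{g,1}^2 l_{f,0}^2}{\lambda\mu^2}\Bigl(1-\tfrac{\mu}{l_{g,1}}\Bigr)^{2Q}.
\end{aligned}
\end{equation*}
The first goal is $I_1/T \le \epsilon^2$. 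For $J_1$, the choice $T \ge C_2\Delta_1 G/(\eta\epsilon^2)$ immediately gives $J_1/T \le 16\epsilon^2/C_2$. For $J_2$, using $T\ge 1/\beta^2$ together with $\beta \le c_1\lambda\epsilon^2/(\sigma_\phi^2 G)$ yields $J_2/T \le 1024 G\sigma_\phi^2\beta/\lambda \le 1024 c_1\epsilon^2$, and the same substitution handles $J_3$. For $J_4$, $T\ge 1/\beta^2$ gives $\sqrt{1/\beta^2+T}\le\sqrt{2T}$, so $J_4/T \le 2560\sqrt{2}\,G\sigma_\phi^2\beta\sqrt{\iota}/\lambda$, which is again absorbed by the extra $\sqrt{\iota}$-factor in the definition of $\beta$.

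The middle block $J_5, J_6, J_7$ is where the choices of $\eta$, $\gamma$, and the warm-start bound interact. Substituting $\gamma = 2\beta/\mu$ and $\eta \le c_2\sigma_\phi\lambda\beta/(LG)$ shows $\eta^2 l_{g,1}^2 L^2/(\lambda^2\mu^4\gamma^2) = O(c_2^2\sigma_\phi^2 l_{g,1}^2/(\mu^2 G^2))$ is bounded by a universal constant, and the warm-start bound from \eqref{eq:ln-Cbeta-verify} gives $\|y_1-y_1^*\|^2 \le 17\beta\sigma_{g,1}^2\ln(C_\beta)/\mu^2$. Combining these reduces $J_5/T$ and $J_6/T$ to multiples of $L^2 G\beta\sigma_{g,1}^2\ln(C_\beta)/(\lambda\mu^2)$, which is $\le \epsilon^2$ because $\beta \le c_1\lambda\epsilon^2/(\sigma_\phi^2 G\ln(C_\beta))$ (and $\ln(4eT/\delta)\le\ln(C_\beta)$ by \eqref{eq:ln-Cbeta-verify}) with $c_1$ small enough. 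For $J_7$, the same $\eta$-bound forces $\eta^2/\gamma = O(\sigma_\phi^2\lambda^2\beta/(L^2 G^2))$, so $J_7/T = O(\sigma_{g,1}^2 l_{g,1}^2\sigma_\phi^2\beta/(\lambda\mu^5 G))$ is $O(c_2^2\epsilon^2)$. Finally, $J_8/T \le 67 c_3\epsilon^2$ directly from the choice of $Q$ in \eqref{eq:Q-def}. Summing and choosing the absorbed constants $c_1,c_2,c_3$ small enough and $C_2$ large enough yields $I_1/T \le \epsilon^2$.

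For $I_1 \le I_2$, consider the two branches of $T = \max\{1/\beta^2,\, C_2\Delta_1 G/(\eta\epsilon^2)\}$ separately. In the branch $T = C_2\Delta_1 G/(\eta\epsilon^2)$, we get $I_1 \le \epsilon^2 T = C_2\Delta_1 G/\eta$, and the requirement $C_2\Delta_1 G/\eta \le 8C_L G^3/(\eta L) = I_2$ reduces to $C_2\Delta_1 L \le 8C_L G^2$. Using $L=L_0+L_1 G$ and the hypotheses $G^2 \ge C_1\Delta_1 L_0/C_L$ and $G \ge C_1\Delta_1 L_1/C_L$ gives $C_L G^2 \ge \tfrac{C_1\Delta_1}{2}\cdot L$, so it suffices to take $C_1 \ge C_2/4$. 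In the branch $T = 1/\beta^2$, the bound $I_1 \le \epsilon^2/\beta^2$ together with $\eta \le c_2\sigma_\phi\lambda\beta/(LG)$ gives $8C_L G^3/(\eta L) \ge 8C_L G^4/(c_2\sigma_\phi\lambda\beta L^2)$, and $\beta\le c_1\lambda\epsilon^2/(\sigma_\phi^2 G)$ makes $\epsilon^2/\beta \le \sigma_\phi^2 G/(c_1\lambda)$, which is readily dominated by $8C_L G^4/(c_2\sigma_\phi\lambda\beta L^2)$ for suitable $C_1, c_1, c_2$.

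The main obstacle is bookkeeping rather than any single inequality: one must verify that every parameter cap imposed by a different $J_i$ is simultaneously compatible with the cap required by the others, and in particular that the various $\max\{1,\sqrt{\iota},\ln(1/\beta),\ln(C_\beta)\}$ factors appearing in the choices of $\beta$ and $\eta$ are exactly what is needed to absorb the $\sqrt{\iota}$ from $J_4$, the $\ln(4eT/\delta)$ from $J_6$, and the $\ln(1/\beta)$ from \eqref{eq:66-verify}. Once the absorbed constants $c_1,c_2,c_3$ are fixed to be small enough and $C_1,C_2$ large enough, every term is bounded by a controlled multiple of $\epsilon^2/T$, and the claim follows.
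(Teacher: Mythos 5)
Your decomposition of $I_1/T$ into the pieces $J_1,\dots,J_8$ and the verification that each is $O(\epsilon^2)$ mirrors the paper's second half and is essentially correct. The problem is your argument for $I_1 \le I_2$, which takes a genuinely different route from the paper and does not go through.

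You try to deduce $I_1 \le I_2$ from $I_1 \le \epsilon^2 T$ by splitting on which branch of $T=\max\{1/\beta^2, C_2\Delta_1 G/(\eta\epsilon^2)\}$ is active. In the first branch this is fine. In the second branch ($T=1/\beta^2$), however, the intermediate inequality $\epsilon^2 T = \epsilon^2/\beta^2 \le I_2 = 8C_LG^3/(\eta L)$ is simply not implied by the parameter constraints. Concretely, write the required inequality as $\eta/\beta^2 \le 8C_LG^3/(L\epsilon^2)$. Being in this branch only gives the \emph{lower} bound $\eta/\beta^2 \ge C_2\Delta_1 G/\epsilon^2$, and the cap $\eta \le c_2\sigma_\phi\lambda\beta/(LG\max\{\cdots\})$ only yields $\eta/\beta^2 \le c_2\sigma_\phi\lambda/(LG\beta\max\{\cdots\})$, which tends to $\infty$ as $\beta\to 0$. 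Since the theorem allows any $\beta$ below its cap (and any $\eta$ below its cap, which can simultaneously be close to the cap), one can legitimately have $\epsilon^2 T > I_2$ in this branch, so the chain $I_1 \le \epsilon^2 T \le I_2$ breaks. The deeper reason is that $I_1 \le \epsilon^2 T$ is a very loose bound on $I_1$ when $T = 1/\beta^2$ with $\beta$ tiny.

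There is also a sign error in the step where you use the $\beta$ constraint: from $\beta \le c_1\lambda\epsilon^2/(\sigma_\phi^2 G)$ you get the \emph{lower} bound $\epsilon^2/\beta \ge \sigma_\phi^2 G/(c_1\lambda)$, not the upper bound you wrote. Since you need an upper bound on $\epsilon^2/\beta$ to finish the comparison with $I_2$, this step cannot be repaired by just fixing the direction of the inequality.

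The paper avoids this entirely: rather than reducing $I_1\le I_2$ to $\epsilon^2 T \le I_2$, it bounds the ratio $I_1/I_2$ directly, replacing $T$ by the additive bound $T \le 1/\beta^2 + C_2\Delta_1 G/(\eta\epsilon^2)$ inside each term of $I_1$ (so, e.g., $\eta\beta T \le \eta/\beta + C_2\Delta_1 G\beta/\epsilon^2$), and then uses only the upper bounds on $\eta/\beta$ and $\beta/\epsilon^2$. That argument only needs the constraints in the direction they are actually given, whereas your route needs an upper bound on $\epsilon^2/\beta$ that the hypotheses never supply. If you want to keep your plan, you would have to bound $I_1/I_2$ term by term in the $T=1/\beta^2$ branch (where every $J_i/I_2$ is in fact bounded by a small constant), rather than routing through $\epsilon^2 T$.
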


\begin{proof}[Proof of \cref{lm:para-choice}]
We first verify $I_1\leq I_2$ and then verify $I_1/T\leq \epsilon^2$.

\paragraph{Proof of $I_1\leq I_2$.}
We start to show $I_1/I_2\leq 1$. We have
{\allowdisplaybreaks
    \begin{align*}
        \frac{I_1}{I_2}
        &\leq \frac{2L}{\lambda C_LG^2}\left(\lambda\Delta_1 + 64\sigma_{\phi}^2\left(\frac{\eta}{\beta} + \eta\beta T\right) + 160\eta\sigma_{\phi}^2\sqrt{(1/\beta^2+T)\ln(4/\delta)}\right) \\
        &\quad+ \frac{5L^3\eta T}{\lambda C_LG^2}\left(\left(1 + \frac{8\eta^2l_{g,1}^2L^2}{\lambda^2\mu^4\gamma^2}\right) \|y_1-y_1^*\|^2 + \left(\frac{8\gamma}{\mu}\ln\frac{4eT}{\delta} + \frac{32\eta^2l_{g,1}^2L^2}{\lambda^2\mu^5\gamma}\right)\sigma_{g,1}^2\right) \\ 
        &\quad+ \frac{9l_{g,1}^2l_{f,0}^2L\eta T}{\lambda\mu^2 C_LG^2}\left(1-\frac{\mu}{l_{g,1}}\right)^{2Q} \\
        &\leq \frac{\lambda\Delta_1}{8\lambda\Delta_1} + \frac{2L}{\lambda C_LG^2}\left( 64\sigma_{\phi}^2\left(\frac{2\eta}{\beta}+\frac{C_2\Delta_1G\beta}{\epsilon^2}\right) + 160\sigma_{\phi}^2\sqrt{\iota}\sqrt{\frac{5\eta^2}{2\beta^2}+\frac{1}{2}\left(\frac{C_2\Delta_1G\beta}{\epsilon^2}\right)^2}\right) \\
        &\quad+ \frac{5L^3}{\lambda C_LG^2}\left(\frac{\eta}{\beta^2}+\frac{C_2\Delta_1G}{\epsilon^2}\right)\left(1 + \frac{2\eta^2l_{g,1}^2L^2}{\lambda^2\mu^2\beta^2}\right) \frac{17\beta\sigma_{g,1}^2}{\mu^2}\ln(C_{\beta}) \\ 
        &\quad+ \frac{5L^3}{\lambda C_LG^2}\left(\frac{\eta}{\beta^2}+\frac{C_2\Delta_1G}{\epsilon^2}\right)\left(\frac{16\beta}{\mu^2}\ln(C_{\beta}) + \frac{16\eta^2l_{g,1}^2L^2}{\lambda^2\mu^4\beta}\right)\sigma_{g,1}^2 \\
        &\quad+ \frac{9l_{g,1}^2l_{f,0}^2L}{\lambda\mu^2C_LG^2}\left(\frac{\eta}{\beta^2}+\frac{C_2\Delta_1G}{\epsilon^2}\right)\beta \\
        &\leq  \frac{1}{8} + \frac{16L}{\lambda C_LG^2}\left(\frac{48c_2\lambda\sigma_{\phi}^3}{LG} + 24c_1C_2\lambda\Delta_1\right) \\
        &\quad+ \frac{5L^3}{\lambda C_LG^2}\left(\frac{c_2\sigma_{\phi}\lambda}{LG}+\frac{c_1C_2\lambda\Delta_1}{\sigma_{\phi}^2}\right)\left(1 + \frac{2c_2^2\sigma_{\phi}^2l_{g,1}^2}{\mu^2G^2}\right) \frac{17\sigma_{g,1}^2}{\mu^2} \\ 
        &\quad+ \frac{5L^3}{\lambda C_LG^2}\left(\frac{c_2\sigma_{\phi}\lambda}{LG}+\frac{c_1C_2\lambda\Delta_1}{\sigma_{\phi}^2}\right)\left(\frac{16}{\mu^2}+\frac{2c_2^2\sigma_{\phi}^2l_{g,1}^2}{\mu^4G^2}\right)\sigma_{g,1}^2 \\
        &\quad+ \frac{9l_{g,1}^2l_{f,0}^2L}{\lambda\mu^2C_LG^2}\left(\frac{c_2\sigma_{\phi}\lambda}{LG}+\frac{c_1C_2\lambda\Delta_1}{\sigma_{\phi}^2}\right) \\
        &= \frac{1}{8} + \frac{384L}{\lambda C_LG^2}\left(\frac{2c_2\lambda\sigma_{\phi}^3}{LG} + c_1C_2\lambda\Delta_1\right) \\
        &\quad+ \left(\frac{c_2\sigma_{\phi}\lambda}{LG}+\frac{c_1C_2\lambda\Delta_1}{\sigma_{\phi}^2}\right)\left(\frac{5L^3}{\lambda C_LG^2}\left(\frac{17}{\mu^2}+\frac{36c_2^2\sigma_{\phi}^2l_{g,1}^2}{\mu^4G^2}\right)\sigma_{g,1}^2 + \frac{9l_{g,1}^2l_{f,0}^2L}{\lambda\mu^2C_LG^2}\right) \\
        &\leq 1,
    \end{align*}
}%
where the first inequality is due to \eqref{eq:I1-I2-def}; the second inequality uses large enough $C_1$ and \citep[Lemma C.5]{li2023convergence}, the fact that $\ln(4eT/\delta)\leq \ln(C_{\beta})$ and $\ln(4eT/\delta)\leq \ln(C_{\beta})$, and the choice of $\gamma, Q, T$ that 
\begin{equation*}
    \eta T \leq \frac{\eta}{\beta^2}+\frac{C_2\Delta_1G}{\epsilon^2},
    \quad
    \gamma = \frac{2\beta}{\mu},
    \quad
    Q \geq \frac{1}{2}\ln\beta \Big/ \ln\left(1-\frac{\mu}{l_{g,1}}\right);
\end{equation*}
the third inequality uses \citep[Lemma C.5]{li2023convergence}, the choice of $\eta,\beta$ that
\begin{equation*}
    \frac{\eta}{\beta} \leq \frac{c_2\sigma_{\phi}\lambda}{LG\max\{1, \sqrt{\iota}, \ln(C_{\beta})\}}, 
    \quad
    \frac{\beta}{\epsilon^2} \leq \frac{c_1\lambda}{\sigma_{\phi}^2G\max\{1, \sqrt{\iota}, \ln(C_{\beta})\}};
\end{equation*}
and in the last inequality we choose small enough $c_1$ and $c_2$.

\paragraph{Proof of $I_1/T\leq \epsilon^2$.}
Last, we show $I_1/T\leq \epsilon^2$. We have
\begin{equation*}
    \begin{aligned}
        \frac{I_1}{T}
        &= \frac{16G\Delta_1}{\eta T} + \frac{1024\sigma_{\phi}^2G}{\lambda\beta T} + \frac{1024\sigma_{\phi}^2G}{\lambda} + \frac{2560\sigma_{\phi}^2G\sqrt{\iota}}{\lambda}\sqrt{\frac{1}{\beta^2T^2}+\frac{1}{T}} \\
        &\quad+ \frac{33L^2G}{\lambda}\left(\left(1 + \frac{8\eta^2l_{g,1}^2L^2}{\lambda^2\mu^4\gamma^2}\right) \|y_1-y_1^*\|^2 + \left(\frac{8\gamma}{\mu}\ln\frac{4eT}{\delta} + \frac{32\eta^2l_{g,1}^2L^2}{\lambda^2\mu^5\gamma}\right)\sigma_{g,1}^2\right) \\
        &\quad+ \frac{67Gl_{g,1}^2l_{f,0}^2}{\lambda\mu^2}\left(1-\frac{\mu}{l_{g,1}}\right)^{2Q} \\
        &\leq \frac{16\epsilon^2}{C_2} + \frac{3584\sigma_{\phi}^2G\max\{1,\sqrt{\iota}\}}{\lambda\beta T} + \frac{1024\sigma_{\phi}^2G}{\lambda} + \frac{2560\sigma_{\phi}^2G\sqrt{\iota}}{\lambda\sqrt{T}} + 67c_3\epsilon^2 \\
        &\quad+ \frac{33L^2G}{\lambda}\left(\left(1 + \frac{2\eta^2l_{g,1}^2L^2}{\lambda^2\mu^2\beta^2}\right) \frac{17\beta\sigma_{g,1}^2}{\mu^2}\ln(C_{\beta}) + \left(\frac{16\beta}{\mu^2}\ln(C_{\beta}) + \frac{16\eta^2l_{g,1}^2L^2}{\lambda^2\mu^4\beta}\right)\sigma_{g,1}^2\right) \\
        &\leq \frac{16\epsilon^2}{C_2} + \frac{7168\sigma_{\phi}^2G\max\{1,\sqrt{\iota}\}\beta}{\lambda} + 67c_3\epsilon^2 \\
        &\quad+ \frac{33L^2G}{\lambda}\left(\left(1 + \frac{2c_2^2\sigma_{\phi}^2l_{g,1}^2}{\mu^2G^2}\right) \frac{17c_1\lambda\sigma_{g,1}^2\epsilon^2}{\mu^2\sigma_{\phi}^2G} + \left(\frac{16c_1\lambda\epsilon^2}{\mu^2\sigma_{\phi}^2G} + \frac{16c_1c_2^2\lambda l_{g,1}^2\epsilon^2}{\mu^4G^3}\right)\sigma_{g,1}^2\right) \\
        &= \left(\frac{16}{C_2}+7168c_1+67c_3+\frac{33c_1L^2G}{\lambda}\left(\left(1 + \frac{2c_2^2\sigma_{\phi}^2l_{g,1}^2}{\mu^2G^2}\right) \frac{17\lambda\sigma_{g,1}^2}{\mu^2\sigma_{\phi}^2G} + \left(\frac{16\lambda}{\mu^2\sigma_{\phi}^2G} + \frac{16c_2^2\lambda l_{g,1}^2}{\mu^4G^3}\right)\sigma_{g,1}^2\right)\right)\epsilon^2 \\
        &\leq \epsilon^2,
    \end{aligned}
\end{equation*}
where the first inequality uses $\sqrt{a+b}\leq \sqrt{a}+\sqrt{b}$ for $a,b\geq0$, the fact that $\ln(4eT/\delta)\leq \ln(C_{\beta})$ and $\|y_1-y_1^*\|^2 \leq 17\beta\sigma_{g,1}^2\ln(C_{\beta}) / \mu^2$, and the choice of $T,Q,\gamma$ that
\begin{equation*}
    T \geq \frac{C_2\Delta_1G}{\eta\epsilon^2},
    \quad
    Q\geq \frac{1}{2}\ln\left(\frac{c_3\lambda\mu^2\epsilon^2}{Gl_{g,1}^2l_{f,0}^2}\right) \Big/ \ln\left(1-\frac{\mu}{l_{g,1}}\right),
    \quad
    \gamma = \frac{2\beta}{\mu};
\end{equation*}
the second inequality uses the choice of $T,\eta,\beta$ that
\begin{equation*}
    T \geq \frac{1}{\beta^2},
    \quad
    \eta \leq \frac{c_2\sigma_{\phi}\lambda\beta}{LG},
    \quad
    \beta \leq \frac{c_1\lambda\epsilon^2}{\sigma_{\phi}^2G\max\{1,\sqrt{\iota},\ln(C_{\beta})\}};
\end{equation*}
and in the last inequality we choose small enough $c_1,c_2,c_3$ and large enough $C_2$.
\end{proof}

\section{More Experimental Details} 
\label{app:hyper_param}

\subsection{AUC Maximization on RNNs}\label{app:rnn_results}
The results with RNNs for both training and testing over 25 epochs are presented in \cref{fig:auc_rnn} (a) and (b), while the corresponding running times are shown in \cref{fig:auc_rnn} (c) and (d). Our proposed Adam-type algorithm, AdamBO, shows the faster convergence rate and significantly outperform other baselines during training process.

\begin{figure*}[!t]
\begin{center}
\subfigure[\scriptsize Train AUC]{\includegraphics[width=0.24\linewidth]{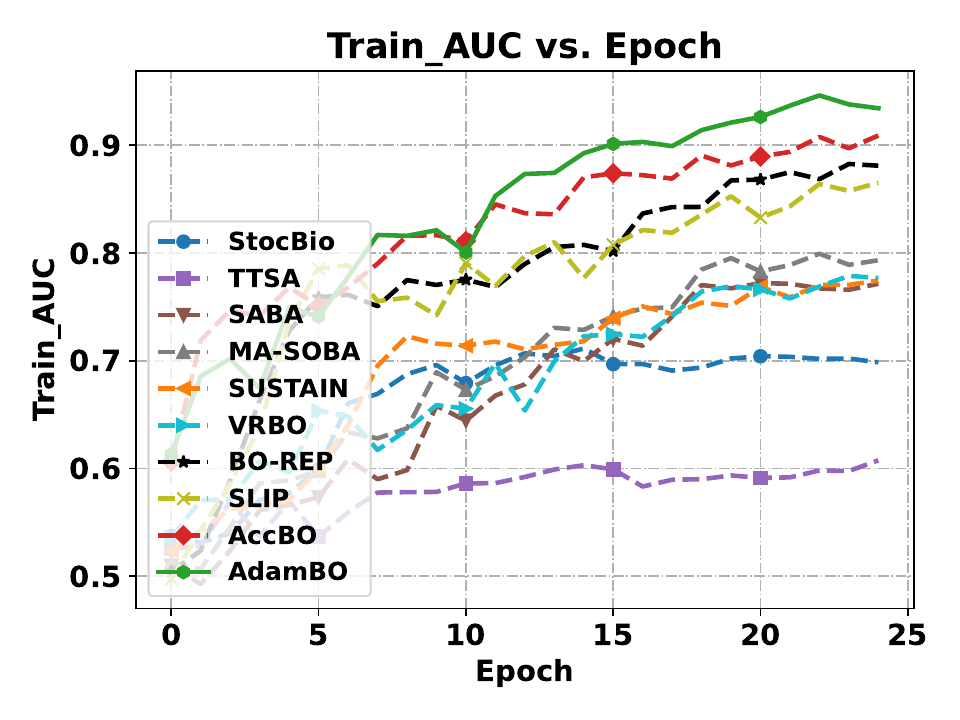}}   
\subfigure[\scriptsize Test AUC]{\includegraphics[width=0.24\linewidth]{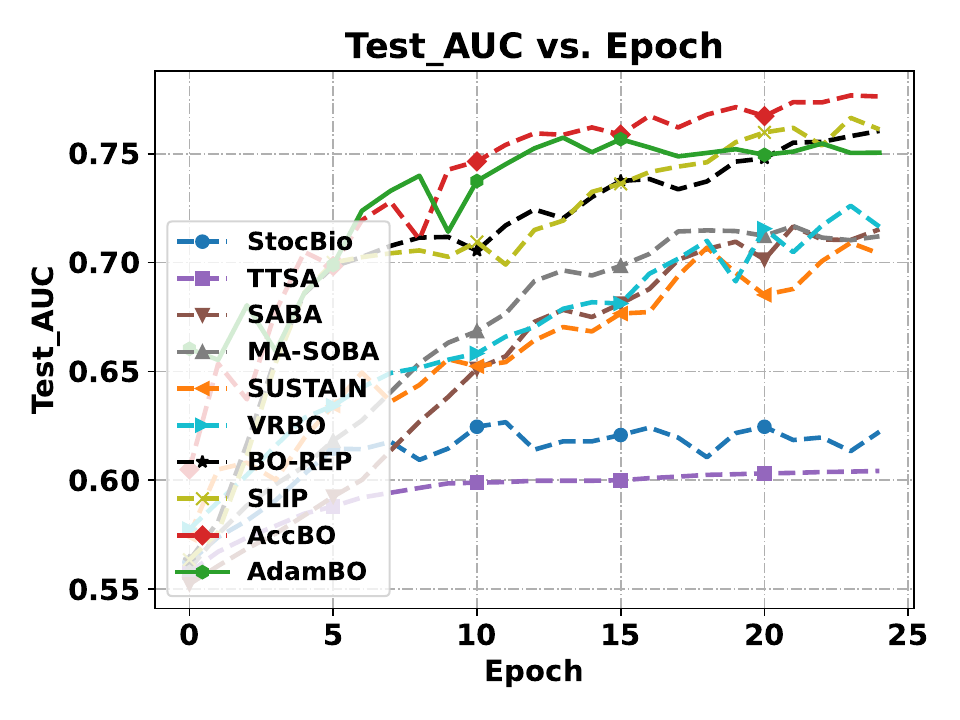}}   
\subfigure[\scriptsize Train AUC vs. Time]{\includegraphics[width=0.24\linewidth]{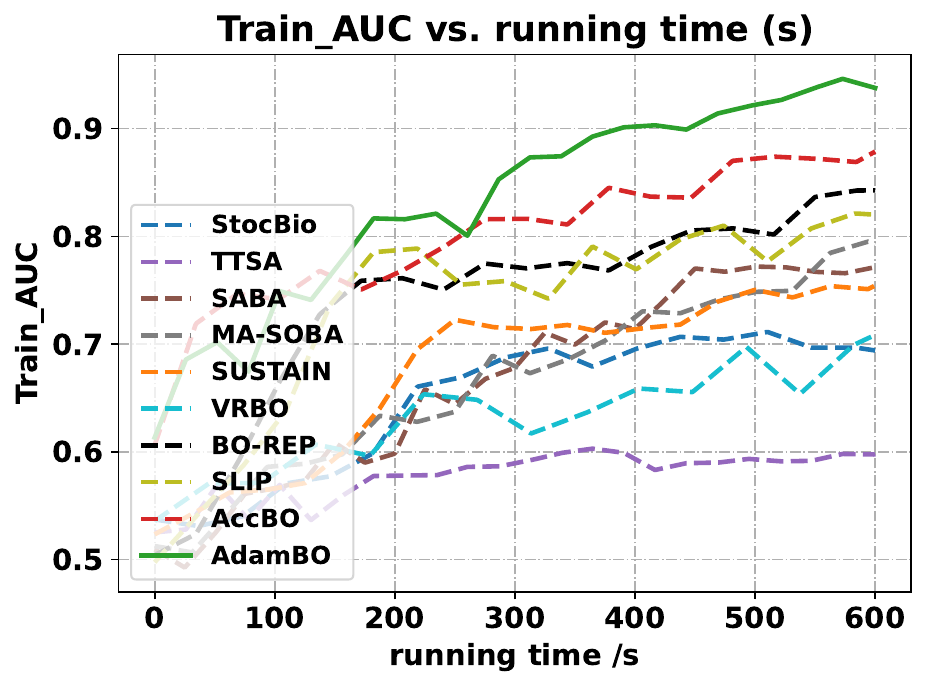}}  
\subfigure[\scriptsize Test AUC vs. Time]{\includegraphics[width=0.24\linewidth]{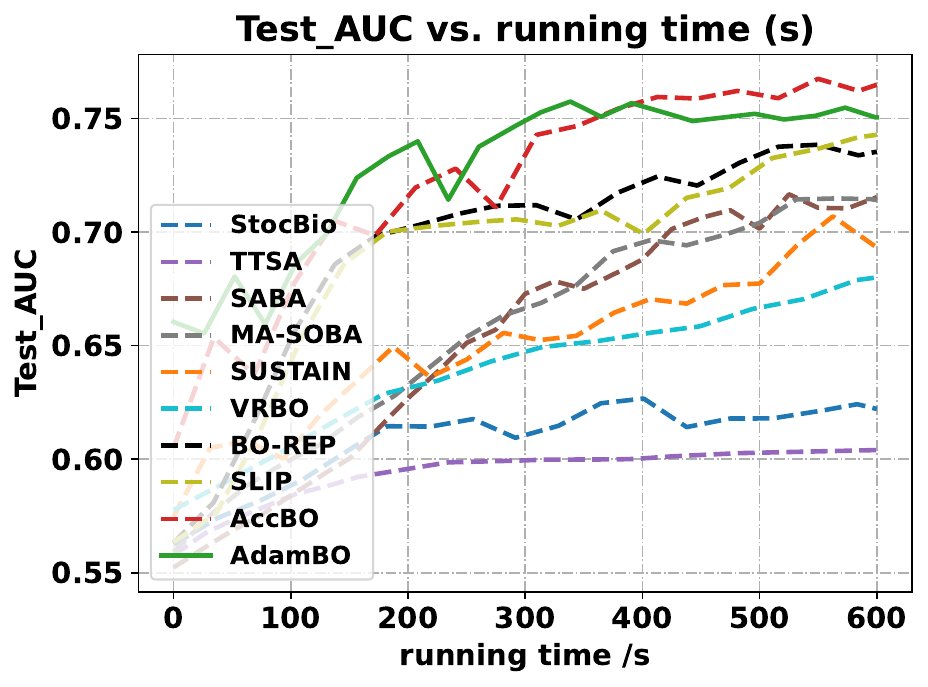}}
\end{center}
\vspace*{-0.15in}
\caption{RNN for AUC maximization on Sentiment140 dataset with imbalance ratio of 0.8. Figures (a), (b) are the results over epochs, Figures (c), (d) are the results over running time.}
\label{fig:auc_rnn}
\end{figure*}

\begin{figure*}[t]
\begin{center}
\subfigure[\scriptsize RNN on AUC maximization]{\includegraphics[width=0.3\linewidth]{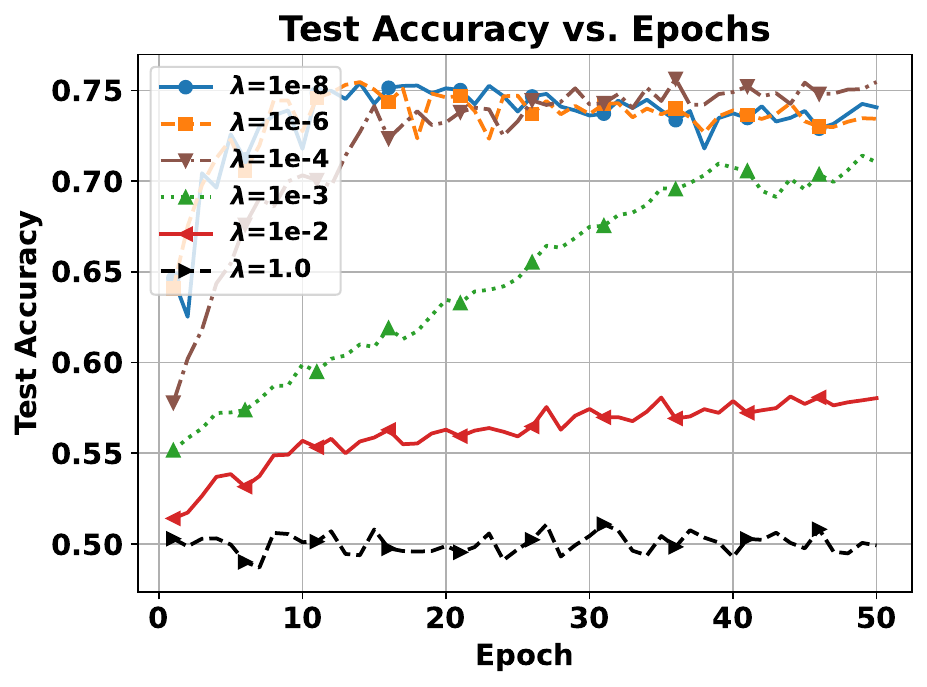}}   \   
\subfigure[\scriptsize Transformer on AUC maximization]{\includegraphics[width=0.3\linewidth]{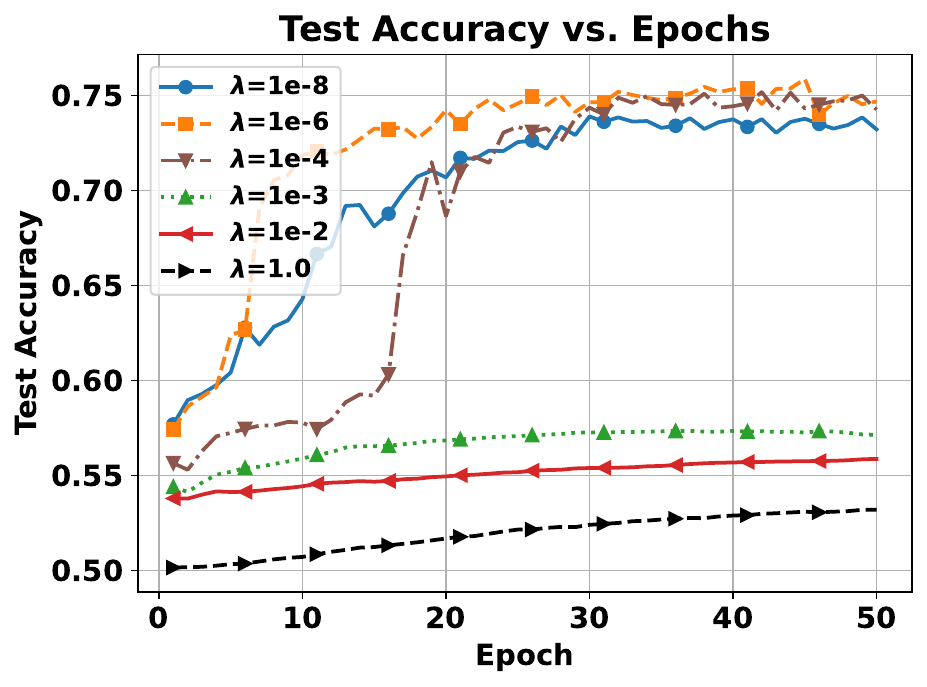}} \ 
\subfigure[\scriptsize BERT on hyper-representation]{\includegraphics[width=0.3\linewidth]{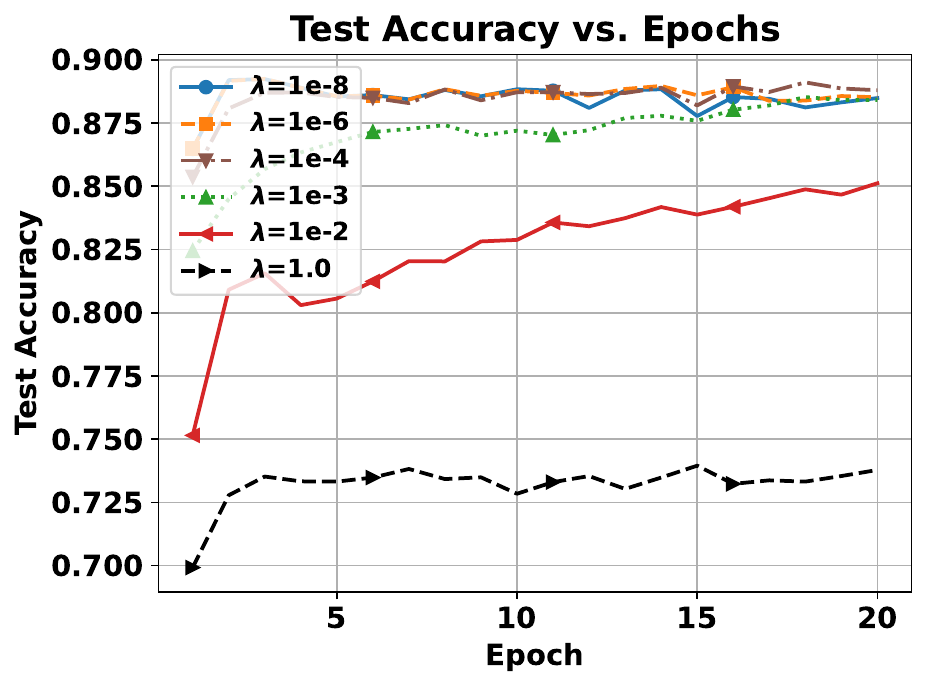}} 
\end{center}
\vspace*{-0.15in}
\caption{Test accuracy of different models on AUC maximization and hyper-representaion using AdamBO with $\beta=0.1, \betasq=0.001$ and different $\lambda$s. (a) a 2-layer RNN model on AUC maximization (data imbalanced ratio = 0.8);  (b) a 2-layer Transformer model on AUC maximization (data imbalanced ratio = 0.9); (c) an 8-layer BERT model on hyper-representation.}
\label{fig:lambda_sensitivity}
\end{figure*}

\subsection{Sensitivity to the Choice of $\lambda$}



\cref{fig:lambda_sensitivity} shows the empirical performance of our algorithm is not very sensitive to the choice of $\lambda$. Although the default choice of $\lambda$ is $10^{-8}$ \citep{kingma2014adam}, increasing it up to $10^{-4}$ only causes minor differences in AUC maximization, and increasing it up to $10^{-3}$ leads to minor changes in hyper-representation performance with BERT \citep{devlin2018bert}.

\subsection{Hyerparameter Settings for Hyper-representation}

\textbf{Hyper-representation on BERT.} 
The upper-level learning rate $\eta$ and the lower-level learning rate $\gamma$ are tuned in a range of $[1.0\times 10^{-4}, 0.1]$ for all the baselines. The optimal learning rate pairs $(\eta, \gamma)$ are, $(0.01, 0.001)$ for MAML, $(0.01, 0.02)$ for ANIL, $(0.01, 0.002)$ for StocBio, $(0.01, 0.001)$ for TTSA, $(0.01, 0.01)$ for SABA, $(0.01, 0.01)$ for MA-SOBA, and $(0.1, 0.05)$ for both BO-REP and SLIP, $(1.0\times 10^{-4}, 5.0\times 10^{-3})$ for AdamBO. 

\textbf{Hyper-representation on RNN.} 
The upper-level learning rate $\eta$ and the lower-level learning rate $\gamma$ are tuned in a range of $[1.0\times 10^{-4}, 0.1]$ for all the baselines. The optimal learning rate pairs are listed as follows, $(0.01, 0.01)$ for MAML, $(0.01, 0.05)$ for ANIL, $(0.01, 0.01)$ for StocBio, $(0.02, 0.05)$ for TTSA, $(0.01, 0.05)$ for SABA, $(0.05, 0.05)$ for MA-SOBA, and $(0.1, 0.05)$ for both BO-REP and SLIP, $(1.0\times 10^{-4}, 1.0\times 10^{-3})$ for AdamBO.

Other hyper-parameter settings are summarized as follows. The steps for neumann series estimation in StocBiO, AdamBO is set to 3, while it is uniformly sampled from $\{1,2,3\}$ in TTSA. The momentum parameter $\beta=0.1$ is fixed in SLIP, MA-SOBA, BO-REP, AdamBO, and $\betasq=0.001$ in AdamBO. The warm start steps for the lower level variable in BO-REP, SLIP, AdamBO are set to $3$. The number of inner loops for StocBio is set to $3$. BO-REP uses the periodic update for the low-level variable, and sets the iterations $N=3$ and the update interval $I=2$. The hyperparameter $\lambda$ in the Adam update is fixed as $1.0\times 10^{-8}$ in AdamBO.

\subsection{Hyerparameter Settings for Deep AUC Maxmization}
We tune the best hyperparameters for each algorithm, including upper-/lower-level step size, the number of inner loops, momentum parameters, etc. The upper-level learning rate $\eta$ and the lower-level learning rate $\gamma$ are tuned in a wide range of $[1.0\times 10^{-6}, 0.1]$ for all the baselines on experiments of AUC maximization. 

\textbf{AUC maximization on Transformer}. 
The best learning rates $(\eta, \gamma)$ are summarized as follows: Stocbio: $(0.005, 0.0001)$, TTSA: $(0.0005, 0.001)$, SABA: $(0.001, 0.005)$, MA-SOBA: $(0.0005, 0.005)$, SUSTAIN: $(0.005, 0.001)$, VRBO: $(0.005, 0.0005)$, BO-REP: $(0.0001, 0.0001)$, SLIP: $(0.0001, 0.001)$, AccBO: $(0.0005, 0.0001)$,   AdamBO: $(5.0\times 10^{-6}, 0.005)$. Note that SUSTAIN decays its upper-/lower-level step size with epoch ($t$) by $\eta= \eta/(t+2)^{1/3}, \eta_{low}= \gamma /(t+2)^{1/3}$. Other algorithms use a constant learning rate. 

\textbf{AUC maximization on RNN}. 
The best learning rates $(\eta, \gamma)$ are summarized as follows: StocBio: ($0.01, 0.001$), TTSA: $(0.005, 0.01)$, SABA: $(0.01, 0.005)$, MA-SOBA: $(0.01, 0.005)$, SUSTAIN: $(0.03, 0.01)$, VRBO: $(0.05, 0.01)$, BO-REP: $(0.001, 0.001)$, SLIP: $(0.001, 0.001)$, AccBO: $(0.005, 0.005)$, AdamBO: $(1.0\times 10^{-5}, 0.001)$.

Other hyper-parameter settings are summarized as follows. The steps for neumann series estimation in StocBiO, VRBO, and AdamBO is set to 3, while it is uniformly sampled from $\{1,2,3\}$ in TTSA, SUSTAIN, and AccBO. AccBO uses the Nesterov accelerated gradient descent for the lower-level update, the momentum parameter $\alpha=0.5$ for AccBO, the averaging parameter $\nu=0.5$ for AccBO. The batch size is set to $32$ for all algorithms except VRBO, which uses a larger batch size of $64$ (tuned in the range of $\{32, 64, 128, 256, 512\}$) at the checkpoint (snapshot) step and $32$ otherwise. The momentum parameter $\beta=0.1$ is fixed in SLIP, AccBO, MA-SOBA, BO-REP, and AdamBO, and $\betasq=0.001$ in AdamBO. The warm start steps for the lower level variable in BO-REP, SLIP, AccBO, and AdamBO are set to $3$. The number of inner loops for StocBio is set to $3$. BO-REP uses the periodic updates for low-level variable, and sets the iterations $N=3$ and the update interval $I=2$. The hyperparameter $\lambda$ in the Adam update is fixed as $1.0\times 10^{-8}$ for AdamBO.

\section{Comparison Tables}
\label{app:table}

\begin{assumption} \label{ass:adam-smoothness}
Consider the following smoothness assumptions:
\begin{enumerate}[(A)]
    \item The objective function is $L$-smooth. \label{ass:adam-s1}
    \item The objective function is $(L_0,L_1)$-smooth \citep[Definition 1.1, Remark 2.3]{zhang2020improved}. \label{ass:adam-s2}
    \item The objective function is $(\rho, L_0, L_{\rho})$-smooth with $0\leq \rho < 2$ \citep[Definition 3.2]{li2023convergence}. \label{ass:adam-s3}
\end{enumerate}
The above assumptions satisfy:
\crefdefpart{ass:adam-smoothness}{ass:adam-s1} $\Longrightarrow$ \crefdefpart{ass:adam-smoothness}{ass:adam-s2} $\Longrightarrow$ \crefdefpart{ass:adam-smoothness}{ass:adam-s3}.
In other words, \crefdefpart{ass:adam-smoothness}{ass:adam-s1} is the strongest, and \crefdefpart{ass:adam-smoothness}{ass:adam-s3} is the weakest.
\end{assumption}

\begin{assumption} \label{ass:adam-bounded-gradient}
The (stochastic) gradient norm of the objective function is (almost surely) bounded.
\end{assumption}

\begin{assumption} \label{ass:prior-noise}
Suppose the following stochastic estimators are unbiased and satisfy:
\begin{equation*}
    \E_{\xi\sim\gD_f}[\|\gdx F(x,y;\xi)-\gdx f(x,y)\|^2] \leq \sigma_{f,1}^2, \quad
    \E_{\xi\sim\gD_f}[\|\gdy F(x,y;\xi)-\gdy f(x,y)\|^2] \leq \sigma_{f,1}^2,
\end{equation*}
\begin{equation*}
    \Pr\{\|\gdy G(x,y;\xi)-\gdy g(x,y)\| \geq \lambda\} \leq 2\exp(-2\lambda^2/\sigma_{g,1}^2) \quad \forall \lambda>0,
\end{equation*}
\begin{equation*}
    \E_{\zeta\sim\gD_g}[\|\gdxy G(x,y;\zeta)-\gdxy g(x,y)\|^2] \leq \sigma_{g,2}^2, \quad
    \E_{\zeta\sim\gD_g}[\|\gdyy G(x,y;\zeta)-\gdyy g(x,y)\|^2] \leq \sigma_{g,2}^2.
\end{equation*}
\end{assumption}

\textbf{Remark.} 
Existing convergence analyses of (single-level) Adam that do not need such choice of $\beta$ require other strong assumptions for the objective function, which is incompatible to our setting. They either rely on the bounded gradient assumption \citep{de2018convergence,defossez2020simple}, or they only prove convergence to some neighborhood of stationary points with a constant radius unless assuming the strong growth condition under the finite sum setting \citep{zhang2022adam,wang2022provable}. Please see \cref{tab:adam} in \cref{app:table} for more details.

\begin{table}[!h]
    \centering
    \caption{Comparison of Adam-related papers under different settings and assumptions. \halfcheck \ represents dropping the bias correction term for the first-order momentum while keeping it for the second-order momentum. $d$ denotes the dimension. Only the key assumptions are listed here.}
    \label{tab:adam}
    \renewcommand{\arraystretch}{0.7}
    \setlength{\tabcolsep}{6pt}
    \resizebox{\textwidth}{!}{
    \begin{tabular}{ccccccc}
    \toprule[2pt]
    Adam Paper & Problem & Stochastic Setting & Assumptions & Bias Correction & Complexity  \\
    \midrule[1pt]
    \cite{de2018convergence} & Single-Level & Deterministic & \refdefpart{ass:adam-smoothness}{ass:adam-s1} + \ref{ass:adam-bounded-gradient} & \ding{55} & $O(\epsilon^{-6})$  \\ \midrule
    \cite{defossez2020simple} & Single-Level & Stochastic (Expectation) & \refdefpart{ass:adam-smoothness}{ass:adam-s1} + \ref{ass:adam-bounded-gradient} & \halfcheck & $\widetilde{O}(d\epsilon^{-4})$ \\ \midrule
    \cite{guo2021novel} & Single-Level & Stochastic (Expectation) & \refdefpart{ass:adam-smoothness}{ass:adam-s1} + \ref{ass:adam-bounded-gradient} \tablefootnote{\citep[Assumption 2]{guo2021novel} can be implied by \cref{ass:adam-bounded-gradient}, although it is weaker.} & \ding{55} & $O(\epsilon^{-4})$ \\ \midrule
    \cite{zhang2022adam} & Single-Level & Stochastic (Finite Sum) & \refdefpart{ass:adam-smoothness}{ass:adam-s1} & \ding{51} (Randomly Reshuffled) & Not Converge \tablefootnote{Adam can converge with an additional strong growth condition \citep{zhang2022adam,wang2022provable}.} \\ \midrule
    \cite{wang2022provable} & Single-Level & Stochastic (Finite Sum) & \refdefpart{ass:adam-smoothness}{ass:adam-s2} & \ding{55} (Randomly Reshuffled) & Not Converge \\ \midrule
    \cite{li2023convergence} & Single-Level & Stochastic (Expectation) & \refdefpart{ass:adam-smoothness}{ass:adam-s3} & \ding{51} & $O(\epsilon^{-4})$ \\ \midrule
    \rowcolor[rgb]{ .741,  .843,  .933} \Gape[1pt][1pt]{\makecell{AdamBO \\ (This work, \cref{thm:main})}} & Bilevel & Stochastic (Expectation) & \refdefpart{ass:adam-smoothness}{ass:adam-s2} \tablefootnote{Under Assumption 3.2, the objective function $\Phi$ is $(L_0,L_1)$-smooth, see Lemma B.10 for details.} & \ding{51} & $\widetilde{O}(\epsilon^{-4})$ \\  
    \bottomrule[2pt]
    \end{tabular}}%
\end{table}%

\begin{table}[!h]
    \centering
    \caption{Comparison of bilevel optimization algorithms under the unbounded smoothness setting.}
    \label{tab:bilevel}
    \renewcommand{\arraystretch}{0.7}
    \setlength{\tabcolsep}{6pt}
    \resizebox{\textwidth}{!}{
    \begin{tabular}{cccccccc}
    \toprule[2pt]
    Method & Problem & Stochastic Setting & Loop Style & Assumptions & Adam-Type & Learning Rate $\eta$ & Complexity  \\
    \midrule[1pt]
    BO-REP \citep{hao2024bilevel} & Bilevel & Stochastic (Expectation) & Double & \cref{ass:bilevel-assumption,ass:prior-noise} & \ding{55} & $O(\epsilon^3)$ & $\widetilde{O}(\epsilon^{-4})$  \\ \midrule
    SLIP \citep{gong2024a} & Bilevel & Stochastic (Expectation) & Single & \cref{ass:bilevel-assumption,ass:prior-noise} & \ding{55} & $\widetilde{\Theta}(\epsilon^3)$ & $\widetilde{O}(\epsilon^{-4})$  \\ \midrule
    \rowcolor[rgb]{ .741,  .843,  .933} \Gape[1pt][1pt]{\makecell{AdamBO \\ (This work, \cref{thm:main})}} & Bilevel & Stochastic (Expectation) & Single & \cref{ass:bilevel-assumption,ass:noise,ass:bilevel-additional} & \ding{51} & $\widetilde{\Theta}(\epsilon^2)$ & $\widetilde{O}(\epsilon^{-4})$ \\  
    \bottomrule[2pt]
    \end{tabular}}%
\end{table}%




\end{document}